\providecommand{\abs}[1]{\left\lvert#1\right\rvert}
\providecommand{\norm}[1]{\left\lVert#1\right\rVert}
\providecommand{\R}{\mathbb{R}} %
\providecommand{\E}{{\mathbb E}}
\providecommand{\E}[1]{{\mathbb E}\left.#1\right. }        %
\providecommand{\EE}[2]{{\mathbb E}_{#1}\left.#2\right. }  %
\providecommand{\EEb}[2]{{\mathbb E}_{#1}\left[#2\right] } %
\providecommand{\dm}{\mathrm{d}}
\providecommand{\0}{\mathbf{0}}
\providecommand{\1}{\mathbf{1}}
\renewcommand{\aa}{\mathbf{a}}
\providecommand{\bb}{\mathbf{b}}
\providecommand{\cc}{\mathbf{c}}
\providecommand{\dd}{\mathbf{d}}
\renewcommand{\gg}{\mathbf{g}}
\providecommand{\hh}{\mathbf{h}}
\let\lll\ll
\renewcommand{\ll}{\mathbf{l}}
\providecommand{\nn}{\mathbf{n}}
\providecommand{\tt}{\mathbf{t}}
\providecommand{\vv}{\mathbf{v}}
\providecommand{\xx}{\mathbf{x}}
\providecommand{\yy}{\mathbf{y}}
\providecommand{\zz}{\mathbf{z}}
\providecommand{\mF}{\mathbf{F}}
\providecommand{\mI}{\mathbf{I}}
\providecommand{\mK}{\mathbf{K}}
\providecommand{\mTheta}{\boldsymbol{\Theta}}
\providecommand{\mxi}{\boldsymbol{\xi}}
\providecommand{\mepsilon}{\boldsymbol{\epsilon}}
\providecommand{\mtheta}{\boldsymbol{\theta}}
\providecommand{\mmu}{\boldsymbol{\mu}}
\providecommand{\mf}{\boldsymbol{f}}
\providecommand{\mv}{\boldsymbol{v}}
\providecommand{\mg}{\boldsymbol{g}}
\providecommand{\mmF}{\boldsymbol{F}}
\providecommand{\mmA}{\boldsymbol{A}}
\providecommand{\mmB}{\boldsymbol{B}}
\providecommand{\mmG}{\boldsymbol{G}}
\providecommand{\cJ}{\mathcal{J}}
\providecommand{\cL}{\mathcal{L}}
\providecommand{\cN}{\mathcal{N}}
\providecommand{\cT}{\mathcal{T}}
\providecommand{\cV}{\mathcal{V}}
\providecommand{\cX}{\mathcal{X}}
\newenvironment{talign*}
{\let\displaystyle\textstyle\csname align*\endcsname}
{\endalign}
\definecolor{coral}{RGB}{255,127,80}
\definecolor{darkgreen}{RGB}{0,100,0}
\definecolor{darkyellow}{RGB}{204,153,0}
\definecolor{salmon}{RGB}{250,128,114}
\definecolor{darkred}{RGB}{150,0,0}
\newcommand{\secref}[1]{\hyperref[#1]{\darkredtext{Sec.~\ref*{#1}}}}
\newcommand{\thmref}[1]{\hyperref[#1]{\darkredtext{Thm.~\ref*{#1}}}}
\newcommand{\defref}[1]{\hyperref[#1]{\darkredtext{Def.~\ref*{#1}}}}
\newcommand{\propref}[1]{\hyperref[#1]{\darkredtext{Prop.~\ref*{#1}}}}
\newcommand{\assumpref}[1]{\hyperref[#1]{\darkredtext{Assump.~\ref*{#1}}}}
\newcommand{\remarkref}[1]{\hyperref[#1]{\darkredtext{Rem.~\ref*{#1}}}}
\newcommand{\hypref}[1]{\hyperref[#1]{\darkredtext{Hyp.~\ref*{#1}}}}
\newcommand{\conjref}[1]{\hyperref[#1]{\darkredtext{Conj.~\ref*{#1}}}}
\newcommand{\lemref}[1]{\hyperref[#1]{\darkredtext{Lem.~\ref*{#1}}}}
\newcommand{\corref}[1]{\hyperref[#1]{\darkredtext{Cor.~\ref*{#1}}}}
\newcommand{\noteref}[1]{\hyperref[#1]{\darkredtext{Nota.~\ref*{#1}}}}
\newcommand{\claimref}[1]{\hyperref[#1]{\darkredtext{Clm.~\ref*{#1}}}}
\newcommand{\obsref}[1]{\hyperref[#1]{\darkredtext{Obs.~\ref*{#1}}}}
\newcommand{\algref}[1]{\hyperref[#1]{\darkredtext{Alg.~\ref*{#1}}}}
\newcommand{\figref}[1]{\hyperref[#1]{\darkredtext{Fig.~\ref*{#1}}}}
\newcommand{\tabref}[1]{\hyperref[#1]{\darkredtext{Tab.~\ref*{#1}}}}
\newcommand{\appref}[1]{\hyperref[#1]{\darkredtext{App.~\ref*{#1}}}}
\newtheoremstyle{custom}
{1pt} %
{1pt} %
{\itshape} %
{} %
{\bfseries} %
{} %
{ } %
{\thmname{#1} \thmnumber{#2} \thmnote{(#3)} . } %
\theoremstyle{custom}
\newtheorem{innerdefinition}{Definition}
\newtheorem{innerproposition}{Proposition}
\newtheorem{innerassumption}{Assumption}
\newtheorem{innerremark}{Remark}
\newtheorem{innertheorem}{Theorem}
\newtheorem{innerhypothesis}{Hypothesis}
\newtheorem{innerconjecture}{Conjecture}
\newtheorem{innerlemma}{Lemma}
\newtheorem{innercorollary}{Corollary}
\newtheorem{innernotation}{Notation}
\newtheorem{innerclaim}{Claim}
\newtheorem{innerproblem}{Problem}
\newtheorem{innerobservation}{Observation}
\newmdenv[
  backgroundcolor=gray!10,
  linecolor=gray!100,
  linewidth=0.8pt,
  skipabove=2pt,
  skipbelow=2pt,
  innertopmargin=10pt,
  innerbottommargin=5pt,
  innerleftmargin=5pt,
  innerrightmargin=5pt,
]{definitionframe}
\newmdenv[
  backgroundcolor=blue!10,
  linecolor=blue!100,
  linewidth=0.8pt,
  skipabove=2pt,
  skipbelow=2pt,
  innertopmargin=10pt,
  innerbottommargin=5pt,
  innerleftmargin=5pt,
  innerrightmargin=5pt,
]{propositionframe}
\newmdenv[
  backgroundcolor=green!10,
  linecolor=green!100,
  linewidth=0.8pt,
  skipabove=2pt,
  skipbelow=2pt,
  innertopmargin=10pt,
  innerbottommargin=5pt,
  innerleftmargin=5pt,
  innerrightmargin=5pt,
]{assumptionframe}
\newmdenv[
  backgroundcolor=yellow!10,
  linecolor=yellow!100,
  linewidth=0.8pt,
  skipabove=2pt,
  skipbelow=2pt,
  innertopmargin=10pt,
  innerbottommargin=5pt,
  innerleftmargin=5pt,
  innerrightmargin=5pt,
]{remarkframe}
\newmdenv[
  backgroundcolor=red!10,
  linecolor=red!100,
  linewidth=0.8pt,
  skipabove=2pt,
  skipbelow=2pt,
  innertopmargin=10pt,
  innerbottommargin=5pt,
  innerleftmargin=5pt,
  innerrightmargin=5pt,
]{theoremframe}
\newmdenv[
  backgroundcolor=purple!10,
  linecolor=purple!100,
  linewidth=0.8pt,
  skipabove=2pt,
  skipbelow=2pt,
  innertopmargin=10pt,
  innerbottommargin=5pt,
  innerleftmargin=5pt,
  innerrightmargin=5pt,
]{hypothesisframe}
\newmdenv[
  backgroundcolor=orange!10,
  linecolor=orange!100,
  linewidth=0.8pt,
  skipabove=2pt,
  skipbelow=2pt,
  innertopmargin=10pt,
  innerbottommargin=5pt,
  innerleftmargin=5pt,
  innerrightmargin=5pt,
]{conjectureframe}
\newmdenv[
  backgroundcolor=cyan!10,
  linecolor=cyan!100,
  linewidth=0.8pt,
  skipabove=2pt,
  skipbelow=2pt,
  innertopmargin=10pt,
  innerbottommargin=5pt,
  innerleftmargin=5pt,
  innerrightmargin=5pt,
]{lemmaframe}
\newmdenv[
  backgroundcolor=magenta!10,
  linecolor=magenta!100,
  linewidth=0.8pt,
  skipabove=2pt,
  skipbelow=2pt,
  innertopmargin=10pt,
  innerbottommargin=5pt,
  innerleftmargin=5pt,
  innerrightmargin=5pt,
]{corollaryframe}
\newmdenv[
  backgroundcolor=pink!10,
  linecolor=pink!100,
  linewidth=0.8pt,
  skipabove=2pt,
  skipbelow=2pt,
  innertopmargin=10pt,
  innerbottommargin=5pt,
  innerleftmargin=5pt,
  innerrightmargin=5pt,
]{notationframe}
\newmdenv[
  backgroundcolor=violet!10,
  linecolor=violet!100,
  linewidth=0.8pt,
  skipabove=2pt,
  skipbelow=2pt,
  innertopmargin=10pt,
  innerbottommargin=5pt,
  innerleftmargin=5pt,
  innerrightmargin=5pt,
]{claimframe}
\newmdenv[
  backgroundcolor=salmon!10,
  linecolor=salmon!100,
  linewidth=0.8pt,
  skipabove=2pt,
  skipbelow=2pt,
  innertopmargin=10pt,
  innerbottommargin=5pt,
  innerleftmargin=5pt,
  innerrightmargin=5pt,
]{problemframe}
\newmdenv[
  backgroundcolor=lavender!10,
  linecolor=lavender!100,
  linewidth=0.8pt,
  skipabove=2pt,
  skipbelow=2pt,
  innertopmargin=10pt,
  innerbottommargin=5pt,
  innerleftmargin=5pt,
  innerrightmargin=5pt,
]{observationframe}
\newenvironment{proposition}
{\begin{propositionframe}\begin{innerproposition}}
      {\end{innerproposition}\end{propositionframe}}
\newenvironment{remark}
{\begin{remarkframe}\begin{innerremark}}
      {\end{innerremark}\end{remarkframe}}
\newenvironment{theorem}
{\begin{theoremframe}\begin{innertheorem}}
      {\end{innertheorem}\end{theoremframe}}
\newenvironment{lemma}
{\begin{lemmaframe}\begin{innerlemma}}
      {\end{innerlemma}\end{lemmaframe}}
\newenvironment{corollary}
{\begin{corollaryframe}\begin{innercorollary}}
      {\end{innercorollary}\end{corollaryframe}}
\newcommand{\method}{\textsc{UCGM}\xspace} %
\newcommand{\methods}{\method-\{T,\,S\}\xspace} %
\newcommand{\sg}{\mathop{{\color{red}\operatorname{sg}}}\nolimits}
\newcommand{\dg}[1]{\textcolor{darkgreen}{$^{\downarrow#1}$}}
\newcommand{\up}[1]{\textcolor{red}{$^{\uparrow#1}$}}
\newcommand{\rp}{$\boldsymbol{\color{red!70!black}\oplus}$}
\title{Unified Continuous Generative Models}
\author{Peng Sun$^{1,2}$ \quad Yi Jiang$^{2}$ \quad Tao Lin$^{1,}\thanks{Corresponding author.} $ \\
$^{1}$Westlake University \quad $^{2}$Zhejiang University \\
{\tt\small sunpeng@westlake.edu.cn, yi\_jiang@zju.edu.cn, lintao@westlake.edu.cn} \\
}
\begin{document}

\maketitle

\begin{abstract}
    Recent advances in continuous generative models, encompassing multi-step processes such as diffusion and flow-matching (typically requiring $8$-$1000$ sampling steps) and few-step methods like consistency models (typically $1$-$8$ steps), have yielded impressive generative performance.
    Existing work, however, often treats these approaches as distinct learning paradigms, leading to disparate training and sampling methodologies.
    We propose a unified framework designed for the training, sampling, and understanding of these models. Our implementation, the \underline{\textbf{U}}nified \underline{\textbf{C}}ontinuous \underline{\textbf{G}}enerative \underline{\textbf{M}}odels \underline{\textbf{T}}rainer and \underline{\textbf{S}}ampler (\methods), demonstrates state-of-the-art (SOTA) capabilities.
    For instance, on ImageNet $256\times256$ using a $675\text{M}$ diffusion transformer model, \method-T trains a multi-step model achieving $1.30$ FID in $20$ sampling steps, and a few-step model achieving $1.42$ FID in $2$ sampling steps.
    Furthermore, applying our \method-S to a pre-trained model from prior work improves its FID from $1.26$ (at $250$ steps) to $1.06$ in only $40$ steps, incurring no additional cost.
    Code: \url{https://github.com/LINs-lab/UCGM}.
\end{abstract}

\section{Introduction}
\label{sec:intro}

\begin{figure}[h]
    \centering
    \begin{subfigure}[b]{0.48\textwidth}
        \centering
        \includegraphics[width=\linewidth]{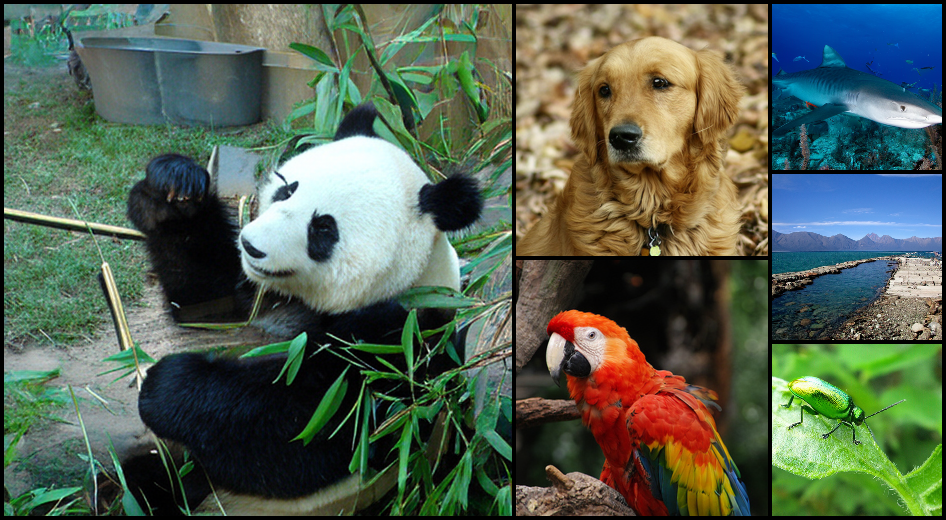}
        \caption{\textbf{NFE~$=40$, FID~$=1.48$.}}
    \end{subfigure}
    \hfill
    \begin{subfigure}[b]{0.48\textwidth}
        \centering
        \includegraphics[width=\linewidth]{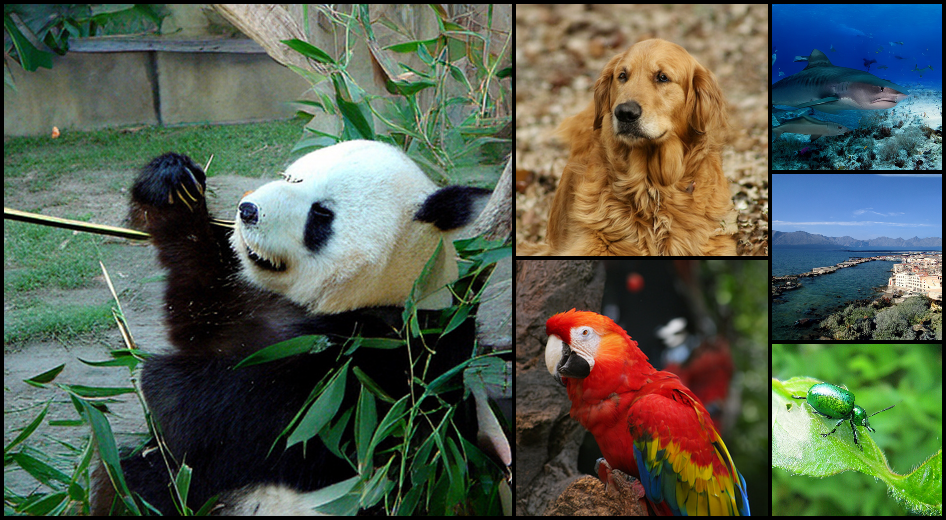}
        \caption{\textbf{NFE~$=2$, FID~$=1.75$.}}
    \end{subfigure}
    \vspace{-0.5em}
    \caption{\small{
            \textbf{Generated samples from two $675\mathrm{M}$ diffusion transformers trained with our \method~on ImageNet-1K $512\!\times\!512$.}
            The figure showcases generated samples illustrating the flexibility of Number of Function Evaluation (NFE) and superior performance achieved by our \method.
            The left subfigure presents results with NFE~$=40$ (multi-step), while the right subfigure shows results with NFE~$=2$ (few-step).
            Note that the samples are sampled \textit{\textbf{without} classifier-free guidance (CFG) or other guidance} techniques.
        }}
    \vspace{-0.5em}
    \label{fig:key_result}
\end{figure}

Continuous generative models, encompassing diffusion models~\citep{ho2020denoising,song2020denoising}, flow-matching models~\citep{lipman2022flow,ma2024sit}, and consistency models~\citep{song2023consistency,lu2024simplifying}, have demonstrated remarkable success in synthesizing high-fidelity data across diverse applications, including image and video generation~\citep{peebles2023scalable,chen2024deep,ma2024sit,xie2024sana,ho2022video,chen2025sana}.

Training and sampling of these models necessitate substantial computational resources~\citep{karras2022elucidating,karras2024analyzing}.
Moreover, current research largely treats distinct model paradigms independently, leading to paradigm-specific training and sampling methodologies. This fragmentation introduces two primary challenges:
(a) \textbf{a deficit in unified theoretical and empirical understanding}, which constrains the transfer of advancements across different paradigms;
and (b) \textbf{limited cross-paradigm generalization}, as algorithms optimized for one paradigm (e.g., diffusion models) are often incompatible with others.

To address these limitations, we introduce \method, a novel framework that establishes a unified foundation for the training, sampling, and conceptual understanding of continuous generative models.
Specifically, the unified trainer \method-T~is built upon a unified training objective, parameterized by a consistency ratio $\lambda \in [0,1]$.
This allows a single training paradigm to flexibly produce models tailored for different inference regimes: models behave akin to multi-step diffusion or flow-matching approaches when $\lambda$ is close to $0$, and transition towards few-step consistency-like models as $\lambda$ approaches $1$.
Moreover, this versatility can extend to compatibility with various noise schedules (e.g., linear, triangular, quadratic) without requiring bespoke algorithm modifications.

Complementing the unified trainer \method-T, we propose a unified sampling algorithm, \method-S, designed to work seamlessly with models trained via our objective.
Crucially, \method-S enhances and accelerates sampling from pre-trained models, encompassing those from distinct prior paradigms and models trained with \method-T.
The unifying nature of our \method is further underscored by its ability to encapsulate prominent existing continuous generative paradigms as specific instantiations of \method, as detailed in \tabref{tab:key_result}.
Moreover, as illustrated in~\figref{fig:key_result}, models trained with \method~can achieve excellent sample quality across a wide range of Number of Function Evaluations (NFEs).

A key innovation within \method~is the introduction of self-boosting techniques for both training and sampling.
The training-time self-boosting mechanism enhances model quality and training efficiency~(cf., \secref{sec:uni_training}), significantly reducing or eliminating the need for computationally expensive guidance techniques~\citep{ho2022classifier,karras2024guiding} during inference.
The sampling-time self-boosting, through our proposed estimation extrapolation~(cf., \secref{sec:uni_sampling}), markedly improves generation fidelity while minimizing NFEs \textit{without requiring additional cost}.
\textbf{In summary, our contributions are:}
\begin{enumerate}[label=(\alph*), nosep, leftmargin=16pt]
    \item A unified trainer (\method-T) that seamlessly bridges few-step (e.g., consistency models) and multi-step (e.g., diffusion, flow-matching) generative paradigms, accommodating diverse model architectures, latent autoencoders, and noise schedules.
    \item A versatile and unified sampler (\method-S) compatible with our trained models and, importantly, adaptable for accelerating and improving pre-trained models from existing yet distinct paradigms.
    \item A self-boosting training mechanism enhances model performance and efficiency while reducing reliance on external guidance techniques.
          Separately, a computation-free self-boosting sampling technique significantly enhances generation quality with reduced inference costs.
\end{enumerate}
Extensive experiments validate the effectiveness and efficiency of \method. Our approach consistently matches or surpasses SOTA methods across various datasets, architectures, and resolutions, for both few-step and multi-step generation tasks (cf., the experimental results in \secref{sec:exp}).

\begin{table}[]
    \caption{\textbf{Existing continuous generative paradigms as special cases of our \method.} Prominent continuous generative models, such as Diffusion, Flow Matching, and Consistency models, can be formulated as specific parameterizations of our \method. The columns detail the required parameterizations for the transport coefficients $\alpha(\cdot), \gamma(\cdot), \hat{\alpha}(\cdot), \hat{\gamma}(\cdot)$ and parameters $\lambda, \rho, \nu$ of \method. Note that $\sigma(t)$ is defined as $e^{4(2.68t-1.59)}$ in this table.}
    \label{tab:key_result}
    \resizebox{\textwidth}{!}{
        \begin{tabular}{@{}cc|ccccccc@{}}
            \toprule
            \multicolumn{2}{c|}{Paradigm}                                                & \multicolumn{7}{c}{\method-based Parameterization}                                                                                                                                                                                                                                                                                                                                                                                              \\ \midrule
            \multicolumn{1}{c|}{Type}                                                    & e.g.,                                              & \multicolumn{1}{c|}{$\alpha(t)=$}                                       & \multicolumn{1}{c|}{$\gamma(t)=$}                               & \multicolumn{1}{c|}{$\hat{\alpha}(t)=$}                            & \multicolumn{1}{c|}{$\hat{\gamma}(t)=$}                                  & \multicolumn{1}{c|}{$\lambda\in[0,1]$} & \multicolumn{1}{c|}{$\rho\in[0,1]$} & $\nu\in\{1,2\}$ \\ \midrule
            \multicolumn{1}{c|}{Diffusion}                                               & EDM\citep{karras2022elucidating}                   & \multicolumn{1}{c|}{$\frac{\sigma(t)}{\sqrt{\sigma^2(t)+\frac{1}{4}}}$} & \multicolumn{1}{c|}{$\frac{1}{\sqrt{\sigma^2(t)+\frac{1}{4}}}$} & \multicolumn{1}{c|}{$\frac{-0.5}{\sqrt{\sigma^2(t)+\frac{1}{4}}}$} & \multicolumn{1}{c|}{$\frac{2\sigma(t)}{\sqrt{\sigma^2(t)+\frac{1}{4}}}$} & \multicolumn{1}{c|}{$0$}               & \multicolumn{1}{c|}{$\geq0$}        & $2$             \\ \midrule
            \multicolumn{1}{c|}{\begin{tabular}[c]{@{}c@{}}Flow\\ Matching\end{tabular}} & OT\citep{lipman2022flow}                           & \multicolumn{1}{c|}{$t$}                                                & \multicolumn{1}{c|}{$1-t$}                                      & \multicolumn{1}{c|}{$1$}                                           & \multicolumn{1}{c|}{$-1$}                                                & \multicolumn{1}{c|}{$0$}               & \multicolumn{1}{c|}{$\geq0$}        & $1$             \\ \midrule
            \multicolumn{1}{c|}{Consistency}                                             & sCM\citep{lu2024simplifying}                       & \multicolumn{1}{c|}{$\sin(t\cdot \frac{\pi}{2})$}                       & \multicolumn{1}{c|}{$\cos(t\cdot \frac{\pi}{2})$}               & \multicolumn{1}{c|}{$\cos(t\cdot \frac{\pi}{2})$}                  & \multicolumn{1}{c|}{$\sin(t\cdot \frac{-\pi}{2})$}                       & \multicolumn{1}{c|}{$1$}               & \multicolumn{1}{c|}{$1$}            & $1$             \\ \bottomrule
        \end{tabular}}
    \vspace{-0.5em}
\end{table}

\section{Preliminaries}
\label{sec:preliminaries}

Given a training dataset $D$, let $p(\xx)$ represent its underlying data distribution, or $p(\xx|\cc)$ under a condition $\cc$.
Continuous generative models seek to learn an estimator that gradually transforms a simple source distribution $p(\zz)$ into a complex target distribution $p(\xx)$ within a continuous space.
Typically, $p(\zz)$ is represented by the standard Gaussian distribution $\cN(\0, \mI)$.
For instance, diffusion models generate samples by learning to reverse a noising process that gradually perturbs a data sample $\xx \sim p(\xx)$ into a noisy version $\xx_t = \alpha(t) \xx + \sigma(t) \zz$, where $\zz \sim \cN(\0, \mI)$.
Over the range $t \in [0, T]$, the perturbation intensifies with increasing $t$, where higher $t$ values indicate more pronounced noise.
Below, we introduce three prominent learning paradigms for deep continuous generative models.

\paragraph{Diffusion models~\citep{ho2020denoising,song2020score,karras2022elucidating}.}
In the widely adopted EDM method~\citep{karras2022elucidating}, the noising process is defined by setting $\alpha(t)=1$, $\sigma(t)=t$.
The training objective is given by
$\EEb{\xx,\zz,t}{ \omega(t) \norm{ \mf_{\mtheta}(\xx_t,t) - \xx }_2^2 }$
where \( \omega(t) \) is a weighting function.
The diffusion model is parameterized by
$\mf_{\mtheta}(\xx_t,t) = c_{\text{skip}}(t)\xx_t + c_{\text{out}}(t)\mmF_{\mtheta}(c_{\text{in}}(t)\xx_t, c_{\text{noise}}(t))$
where \( \mmF_{\theta} \) is a neural network, and the coefficients \( c_{\text{skip}} \), \( c_{\text{out}} \), \( c_{\text{in}} \), and \( c_{\text{noise}} \) are manually designed.
During sampling, EDM solves the Probability Flow Ordinary Differential Equation (PF-ODE)~\citep{song2020score}:
$\frac{\dm \xx_t}{\dm t} = [\xx_t - \mf_{\mtheta}(\xx_t,t)]/t$,
integrated from \( t = T \) to \( t = 0 \).

\paragraph{Flow matching~\citep{lipman2022flow}.}
Flow matching models are similar to diffusion models but differ in the transport process from the source to the target distribution and in the neural network training objective.
The forward transport process utilizes differentiable coefficients $\alpha(t)$ and $\gamma(t)$, such that $\xx_t = \alpha(t)\zz + \gamma(t)\xx$.
Typically, the coefficients satisfy the boundary conditions $\alpha(1) = \gamma(0) = 1$ and $\alpha(0) = \gamma(1) = 0$. The training objective is given by
$
    \EEb{\xx, \zz, t}{ \omega(t) \norm{ \mmF_{\mtheta}(\xx_t, t) - (\frac{\dm \alpha_t}{\dm t} \zz + \frac{\dm \sigma_t}{\dm t} \xx) }_2^2 }
$.
Similar to diffusion models, the reverse transport process (i.e., sampling process) begins at $t = 1$ with $\xx_1 \sim \cN(\0, \mI)$ and solves the PF-ODE:
$
    \frac{\dm \xx_t}{\dm t} = \mmF_{\mtheta}(\xx_t, t)
$,
integrated from $t = 1$ to $t = 0$.

\paragraph{Consistency models~\citep{song2023consistency,lu2024simplifying}.}
A consistency model $\mf_{\mtheta}(\xx_t, t)$ is trained to map the noisy input $\xx_t$ directly to the corresponding clean data $\xx$ in one or few steps by following the sampling trajectory of the PF-ODE starting from $\xx_t$.
To be valid, $\mf_{\mtheta}$ must satisfy the boundary condition $\mf_{\mtheta}(\xx, 0) \equiv \xx$.
Inspired by EDM~\citep{karras2022elucidating}, one approach to enforce this condition is to parameterize the consistency model as
$\mf_{\mtheta}(\xx_t,t) = c_{\text{skip}}(t)\xx_t + c_{\text{out}}(t)\mmF_{\mtheta} \left( c_{\text{in}}(t)\xx_t, c_{\text{noise}}(t) \right)$
with $c_{\text{skip}}(0) = 1$ and $c_{\text{out}}(0)=0$.
The training objective is defined between two adjacent time steps with a finite distance:
$
    \EEb{\xx_t,t}{
        \omega(t) d\left( \mf_{\mtheta}(\xx_t,t), \mf_{{\mtheta}^-}(\xx_{t-\Delta t}, t-\Delta t) \right)
    }
$,
where ${\mtheta}^{-}$ denotes $\operatorname{stopgrad}({\mtheta})$, $\Delta t > 0$ is the distance between adjacent time steps, and $d(\cdot, \cdot)$ is a metric function.
Discrete-time consistency models are sensitive to the choice of $\Delta t$, necessitating manually designed annealing schedules~\citep{song2023improved,geng2024consistency} for rapid convergence.
This limitation is addressed by proposing a training objective for continuous consistency models~\citep{lu2024simplifying}, derived by taking the limit as $\Delta t \to 0$.

In summary, both diffusion and flow-matching models are multi-step frameworks operating within a continuous space, whereas consistency models are designed as few-step approaches.

\section{Methodology}
\label{sec:methodology}

We first introduce our unified training objective and algorithm, \method-T, applicable to both few-step and multi-step models, including consistency, diffusion, and flow-matching frameworks.
Additionally, we present \method-S, our unified sampling algorithm, which is effective across all these models.

\subsection{Unifying Training Objective for Continuous Generative Models}
\label{sec:uni_training}

We first propose a unified training objective for diffusion and flow-matching models, which constitute all multi-step continuous generative models.
Moreover, we extend this unified objective to encompass both few-step and multi-step models.

\paragraph{Unified training objective for multi-step continuous generative models.}
We introduce a generalized training objective below that effectively trains generative models while encompassing the formulations presented in existing studies:
\begin{equation}
    \label{eq:objective}
    \cL(\mtheta):=\EEb{(\zz,\xx)\sim p(\zz,\xx),t}{ \frac{1}{\omega(t)} \norm{ \mmF_{\mtheta}(\xx_t,t) - \zz_t }_2^2 } \, ,
\end{equation}
where time \( t \in [0,1] \), \( \omega(t) \) is the weighting function for the loss, \( \mmF_{\mtheta} \) is a neural network\footnote{For simplicity, unless otherwise specified, we assume that any conditioning information $\cc$ is incorporated into the network input. Thus, $\mmF_{\mtheta}(\xx_t, t)$ should be understood as $\mmF_{\mtheta}(\xx_t, t, \cc)$ when $\cc$ is applicable.}
with parameters \( \mtheta \), \( \xx_t = \alpha(t)\zz + \gamma(t)\xx \), and \( \zz_t = \hat{\alpha}(t)\zz + \hat{\gamma}(t)\xx \).
Here, \( \alpha(t) \), \( \gamma(t) \), \( \hat{\alpha}(t) \), and \( \hat{\gamma}(t) \) are the unified transport coefficients defined for \method.
Additionally,
to efficiently and robustly train multi-step continuous generative models using objective \eqref{eq:objective}, we propose three \textit{necessary constraints}:
\begin{enumerate}[label=(\alph*), nosep, leftmargin=16pt]
    \item \( \alpha(t) \) is continuous over the interval \( t \in [0,1] \), with \( \alpha(0) = 0 \), \( \alpha(1) = 1 \), and \( \frac{\mathrm{d}\alpha(t)}{\mathrm{d}t} \geq 0 \).
    \item \( \gamma(t) \) is continuous over the interval \( t \in [0,1] \), with \( \gamma(0) = 1 \), \( \gamma(1) = 0 \), and \( \frac{\mathrm{d}\gamma(t)}{\mathrm{d}t} \leq 0 \).
    \item For all \( t \in (0,1) \), it holds that \( |\alpha(t)\cdot \hat{\gamma}(t) - \hat{\alpha}(t)\cdot \gamma(t)| > 0 \) to ensure that \( \alpha(t)\cdot \hat{\gamma}(t) - \hat{\alpha}(t)\cdot \gamma(t) \) is non-zero and can serve as the denominator in \eqref{eq:transformed_objective}.
\end{enumerate}

Under these constraints, diffusion and flow-matching models are special cases of our unified training objective \eqref{eq:objective} with additional restrictions (\appref{app:cal_transport} details EDM models transformation to \method):
\begin{enumerate}[label=(\alph*), nosep, leftmargin=16pt]
    \item For example, following EDM~\citep{karras2022elucidating,karras2024analyzing}, by setting $\alpha(t) = 1$ and $\sigma(t) = t$, diffusion models based on EDM can be derived from \eqref{eq:objective} provided that the constraint \( \gamma(t) / \alpha(t) = t \) is satisfied\footnote{
              In EDM, with $\sigma(t) = t$, the input of neural network $\mmF_{\mtheta}$ is \( c_{\text{in}}(t)\xx_t = c_{\text{in}}(t) \cdot (\xx + t \cdot \zz) \). Although \( c_{\text{in}}(t) \) can be manually adjusted, the coefficient before \( \zz \) remains \( t \) times that of \( \xx \).
          }.
    \item Similarly, flow-matching models can be derived only when \( \hat{\alpha}(t) = \frac{\mathrm{d}\alpha(t)}{\mathrm{d}t} \) and \( \hat{\gamma}(t) = \frac{\mathrm{d}\gamma(t)}{\mathrm{d}t} \) (see~\secref{sec:preliminaries} for more technical details about EDM-based and flow-based models).
\end{enumerate}

\begin{algorithm*}[t]
    \caption{(\textbf{\method-T}). A Unified and Efficient Trainer for Few-step and Multi-step Continuous Generative Models (including Diffusion, Flow Matching, and Consistency Models)}
    \label{alg:udmm}
    \begin{algorithmic}[1]
        \REQUIRE Dataset $D$, transport coefficients \{$\alpha(\cdot)$, $\gamma(\cdot)$, $\hat{\alpha}(\cdot)$, $\hat{\gamma}(\cdot)$\}, neural network $\mmF_{\mtheta}$, enhancement ratio $\zeta$, Beta distribution parameters $(\theta_1,\theta_2)$, learning rate $\eta$, stop gradient operator $\sg$.
        \ENSURE Trained neural network $\mmF_{\mtheta}$ for generating samples from $p(\xx)$.
        \REPEAT
        \STATE Sample $\zz \sim \mathcal{N}(\mathbf{0}, \mathbf{I})$, $\xx \sim D$, $t \sim \phi(t):=\mathrm{Beta}(\theta_1, \theta_2)$
        \STATE Compute input data, such as $\xx_t=\alpha(t)\cdot\zz+\gamma(t)\cdot\xx$ and $\xx_{\lambda t}=\alpha(\lambda t)\cdot\zz+\gamma(\lambda t)\cdot\xx$
        \STATE Compute model output $\mmF_t = \mmF_{\mtheta}(\xx_{t}, t)$ and set $\zz^\star = \zz$ and $\xx^\star = \xx$
        \IF{$\zeta \in (0,1)$}
        \STATE Let $\mmF^\varnothing_t = \mmF_{\mtheta^-}(\xx_{t}, t, \varnothing)$ to get enhanced $\xx^\star = \mxi(\xx, t, \mf^{\mathrm{\xx}}(\sg(\mmF_t), \xx_t, t), \mf^{\mathrm{\xx}}(\mmF^\varnothing_t, \xx_t, t))$ and $\zz^\star = \mxi(\zz, t, \mf^{\mathrm{\zz}}(\sg(\mmF_t), \xx_t, t), \mf^{\mathrm{\zz}}(\mmF^\varnothing_t, \xx_t, t))$\COMMENT{\textcolor{darkgreen}{Note that
                $
                    \mxi(\aa, t, \bb, \dd) := \aa + \left(\zeta + \1_{t > s} \left(\frac{1}{2} - \zeta\right)\right) \cdot \left(\bb - \1_{t > s} \cdot \aa - \dd (1 - \1_{t > s})\right)
                $,
                where $\1(\cdot)$ is the indicator function}}
        \ENDIF
        \IF{$\lambda \in [0,1)$}
        \STATE Compute $\xx_t^\star = \alpha(t)\cdot\zz^\star+\gamma(t)\cdot\xx^\star$ and $\xx_{\lambda t}^\star=\alpha(\lambda t)\cdot\zz^\star+\gamma(\lambda t)\cdot\xx^\star$
        \STATE Compute $\Delta \mf^{\mathrm{\xx}}_t= \mf^{\mathrm{\xx}}(\sg(\mmF_t), \xx_t^\star, t) \cdot (\frac{1}{t - \lambda t}) - \mf^{\mathrm{\xx}}(\mmF_{\mtheta^-}(\xx_{\lambda t},\lambda t), \xx_{\lambda t}^\star, \lambda t) \cdot (\frac{1}{t - \lambda t})$\COMMENT{\textcolor{darkgreen}{Note that for $\lambda=0$, $\mf^{\mathrm{\xx}}(\mmF_{\mtheta^-}(\xx_{0},0), \xx_{0}^\star, 0) = \xx^\star$}}
        \ELSIF{$\lambda = 1$}
        \STATE Comupte $\xx_{t+\epsilon}^\star=\alpha(t+\epsilon)\cdot\zz^\star+\gamma(t+\epsilon)\cdot\xx^\star$ and $\xx_{t-\epsilon}=\alpha(t-\epsilon)\cdot\zz^\star+\gamma(t-\epsilon)\cdot\xx^\star$
        \STATE Let $\Delta\mf^{\mathrm{\xx}}_t= \mf^{\mathrm{\xx}}(\mmF_{\mtheta^-}(\xx_{t+\epsilon},t+\epsilon), \xx_{t+\epsilon}^\star, t+\epsilon) \cdot (\frac{1}{2\epsilon}) - \mf^{\mathrm{\xx}}(\mmF_{\mtheta^-}(\xx_{t-\epsilon},t-\epsilon), \xx_{t-\epsilon}^\star, t-\epsilon)\cdot (\frac{1}{2\epsilon})$
        \ENDIF
        \STATE Compute $\mmF_t^{\mathrm{target}} = \sg(\mmF_t) - \frac{4 \alpha(t)}{\alpha(t)\cdot \hat{\gamma}(t) - \hat{\alpha}(t) \cdot \gamma(t)} \cdot \frac{\mathrm{clip}(\Delta \mf^{\mathrm{\xx}}_t, -1, 1)}{\sin(t)}$
        \STATE Compute loss $\cL_t(\mtheta)=\cos(t) \left\| \mmF_t -  \mmF_t^{\mathrm{target}} \right\|_2^2$ and update $\mtheta \gets \mtheta - \eta \nabla_{\mtheta} \int_{0}^{1} \phi(t) \cL_t(\mtheta) \dm t$
        \UNTIL{Convergence}
    \end{algorithmic}
\end{algorithm*}

\paragraph{Unified training objective for both multi-step and few-step models.}

To facilitate the interpretation of our technical framework, we define two prediction functions based on model $\mmF_{\mtheta}$ as:
\begin{equation}
    \mf^{\mathrm{\xx}}(\mmF_t, \xx_t, t) := \frac{\alpha(t)\cdot\mmF_t - \hat{\alpha}(t)\cdot\xx_t}{\alpha(t)\cdot \hat{\gamma}(t) - \hat{\alpha}(t) \cdot \gamma(t)} \quad \text{\&} \quad \mf^{\mathrm{\zz}}(\mmF_t, \xx_t, t) := \frac{\hat{\gamma}(t)\cdot\xx_t - \gamma(t)\cdot\mmF_t}{\alpha(t)\cdot \hat{\gamma}(t) - \hat{\alpha}(t) \cdot \gamma(t)} \, ,
\end{equation}
where we define $\mmF_t := \mmF_{\mtheta}(\xx_t,t)$.
The training objective \eqref{eq:objective} can thus become (cf., \appref{app:lambda_to_0}):
\begin{equation}
    \label{eq:transformed_objective}
    \cL(\mtheta)=\EE{(\zz,\xx)\sim p(\zz,\xx),t}{\left[ \frac{1}{\hat{\omega}(t)}\| \mf^{\mathrm{\xx}}(\mmF_{\mtheta}(\xx_t,t), \xx_t, t) - \xx \|_2^2\right]} \, .
\end{equation}
To align with the gradient of our training objective \eqref{eq:objective}, we define a new weighting function $\hat{\omega}(t)$ in~\eqref{eq:transformed_objective} as
$
    \hat{\omega}(t) := \frac{\alpha(t)\cdot\alpha(t) \cdot \omega(t)}{\left(\alpha(t)\cdot \hat{\gamma}(t) - \hat{\alpha}(t) \cdot \gamma(t)\right)^2} \, .
$
To unify few-step models (such as consistency models) with multi-step models, we adopt a modified version of \eqref{eq:transformed_objective} by incorporating a consistency ratio $\lambda \in [0, 1]$:
\begin{equation}
    \label{eq:unified_objective}
    \cL(\mtheta)=\EEb{(\zz,\xx)\sim p(\zz,\xx),t}{ {\frac{1}{\hat{\omega}(t)} \norm{ \mf^{\mathrm{\xx}}(\mmF_{\mtheta}(\xx_t,t), \xx_t, t) - \mf^{\mathrm{\xx}}(\mmF_{\mtheta^-}(\xx_{\lambda t},\lambda t), \xx_{\lambda t}, \lambda t) }_2^2}} \, ,
\end{equation}
where consistency models and conventional multi-steps models are special cases within the context of \eqref{eq:unified_objective} (cf., \appref{app:lambda_to_0} and \appref{app:lambda_to_1}).
Specifically, setting $\lambda = 0$ yields diffusion and flow-matching models, while setting $\lambda \to 1 - \Delta t$ with $\Delta t \to 0$ recovers consistency models.
Following previous studies~\citep{song2023consistency}, we set $\hat{\omega}(t) = \frac{\tan(t)}{4}$.
As a result, the explicit minimization objective $\mathcal{L}(\mtheta)$ is given by:
\begin{equation}
    \label{eq:final_unified_objective}
    \EEb{(\zz,\xx)\sim p(\zz,\xx),t}{ {{\cos(t)} \norm{ \mmF_{\mtheta}(\xx_{t}, t) - \mmF_{\mtheta^-}(\xx_{t}, t) + \frac{4 \alpha(t) \Delta \mf^{\mathrm{\xx}}_t}{{\sin(t)} \cdot \left( \alpha(t)\cdot \hat{\gamma}(t) - \hat{\alpha}(t) \cdot \gamma(t) \right)} }_2^2} } \, ,
\end{equation}
where the detailed derivation from~\eqref{eq:unified_objective} to~\eqref{eq:final_unified_objective} is provided in~\appref{app:unified_training_objective}, and we define $\Delta \mf^{\mathrm{\xx}}_t$ in \eqref{eq:final_unified_objective} as
\begin{equation}
    \label{eq:delta_fx}
    \Delta \mf^{\mathrm{\xx}}_t := \frac{\mf^{\mathrm{\xx}}(\mmF_{\mtheta^-}(\xx_{t}, t), \xx_t, t) - \mf^{\mathrm{\xx}}(\mmF_{\mtheta^-}(\xx_{\lambda t},\lambda t), \xx_{\lambda t}, \lambda t)}{t - \lambda t} \, .
\end{equation}
However, optimizing the unified objective in \eqref{eq:final_unified_objective} presents a challenge: stabilizing the training process as $\lambda$ approaches 1. In this regime, the training dynamics resemble those of consistency models, known for unstable gradients, especially with BF16 precision~\citep{song2023consistency,lu2024simplifying}. To address this, we propose several stabilizing training techniques stated below.
\paragraph{Stabilizing gradient as $\lambda \to 1$.}
We identify that the instability in objective~\eqref{eq:final_unified_objective} primarily arises from numerical computational errors in the term $\Delta \mf^{\mathrm{\xx}}_t$, which subsequently affect the training target $\mmF_t^{\mathrm{target}}$.
Specifically, our theoretical analysis reveals that as $\lambda \to 1$, $\Delta \mf^{\mathrm{\xx}}_t$ approaches $\frac{\dm \mf^{\mathrm{\xx}}(\mmF_{\mtheta^-}(\xx_{t}, t), \xx_t, t)}{\dm t}$.
\eqref{eq:delta_fx} then serves as a first-order difference approximation of $\frac{\dm \mf^{\mathrm{\xx}}(\mmF_{\mtheta^-}(\xx_{t}, t), \xx_t, t)}{\dm t}$, which would become highly susceptible to numerical precision errors, primarily due to \textit{catastrophic cancellation}.
To mitigate this issue, we propose a second-order difference estimation technique by redefining $\Delta\mf^{\mathrm{\xx}}_t$ as
\begin{align*}
    \Delta\mf^{\mathrm{\xx}}_t = \frac{1}{2\epsilon} \left( \mf^{\mathrm{\xx}}(\mmF_{\mtheta^-}(\xx_{t+\epsilon}, t+\epsilon), \xx_{t+\epsilon}, t+\epsilon) - \mf^{\mathrm{\xx}}(\mmF_{\mtheta^-}(\xx_{t-\epsilon}, t-\epsilon), \xx_{t-\epsilon}, t-\epsilon) \right) \,.
\end{align*}
To further stabilize the training, we implement the following two strategies for $\Delta\mf^{\mathrm{\xx}}_t$:
\begin{enumerate}[label=(\alph*), nosep, leftmargin=16pt]
    \item We adopt a distributive reformulation of the second-difference term to prevent direct subtraction between nearly identical quantities, which can induce catastrophic cancellation, especially under limited numerical precision (e.g., BF16). Specifically, we factor out the shared scaling coefficient \(\frac{1}{2\epsilon}\), namely,
          $
              \Delta\mf^{\mathrm{\xx}}_t = \mf^{\mathrm{\xx}}(\mmF_{\mtheta^-}(\xx_{t+\epsilon}, t+\epsilon), \xx_{t+\epsilon}, t+\epsilon) \cdot \frac{1}{2\epsilon} - \mf^{\mathrm{\xx}}(\mmF_{\mtheta^-}(\xx_{t-\epsilon}, t-\epsilon), \xx_{t-\epsilon}, t-\epsilon) \cdot \frac{1}{2\epsilon}.
          $
          In this paper, we consistently set $\epsilon$ to $0.005$.
          See~\appref{app:derivative_estimation} for further analysis of this technique.

    \item We observe that applying numerical truncation~\citep{lu2024simplifying} to $\Delta\mf^{\mathrm{\xx}}_t$ enhances training stability. Specifically, we clip $\Delta \mf^{\mathrm{\xx}}_t$ to the range \([-1, 1]\), which prevents abnormal numerical outliers.
\end{enumerate}

\paragraph{Unified distribution transformation of time.}
Previous studies~\citep{yao2025reconstruction,esser2024scaling,song2023consistency,lu2024simplifying,karras2022elucidating,karras2024analyzing} employ non-linear functions to transform the time variable \( t \), initially sampled from a uniform distribution \( t \sim \mathcal{U}(0,1) \).
This transformation shifts the distribution of sampled times, effectively performing importance sampling and thereby accelerating the training convergence rate.
For example, the \texttt{lognorm} function \( f_{\mathrm{lognorm}}(t;\mu,\sigma) = \frac{1}{1 + \exp(-\mu - \sigma \cdot \Phi^{-1}(t))} \) is widely used~\citep{yao2025reconstruction,esser2024scaling}, where \( \Phi^{-1}(\cdot) \) denotes the inverse Cumulative Distribution Function (CDF) of the standard normal distribution.

In this work, we demonstrate that most commonly used non-linear time transformation functions can be effectively approximated by the regularized incomplete beta function:
$
    f_{\mathrm{Beta}}(t; a, b) = \nicefrac{ \int_0^t \tau^{a - 1}(1 - \tau)^{b - 1} \, \mathrm{d}\tau }{ \int_0^1 \tau^{a - 1}(1 - \tau)^{b - 1} \, \mathrm{d}\tau }
$, where a detailed analysis defers to \appref{app:beta_transformation}.
Consequently, we simplify the process by directly sampling time from a Beta distribution, i.e., \( t \sim \mathrm{Beta}(\theta_1, \theta_2) \), where \( \theta_1 \) and \( \theta_2 \) are parameters that control the shape of distribution (cf.,~\appref{app:impl_details} for their settings).
\looseness=-1

\paragraph{Learning enhanced target score function.}
Directly employing objective~\eqref{eq:final_unified_objective} to train models for estimating the conditional distribution $p(\xx|\cc)$ results in models incapable of generating realistic samples without Classifier-Free Guidance (CFG)~\citep{ho2022classifier}.
While enhancing semantic information, CFG approximately doubles the number of function evaluations, incurring significant computational overhead.
\looseness=-1

A recent work~\citep{tang2025diffusion} proposes modifying the target score function (see definition in~\citep{song2020score}) from $\nabla_{\xx_t}\log(p_t(\xx_t | \cc))$ to an enhanced version
$
    \nabla_{\xx_t}\log\left(p_t(\xx_t|\cc) \left(\nicefrac{p_{t,\mtheta}(\xx_t|\cc)}{p_{t,\mtheta}(\xx_t)}\right)^\zeta\right)
$,
where $\zeta \in (0,1)$ denotes the enhancement ratio.
By eliminating dependence on CFG, this approach enables high-fidelity sample generation with significantly reduced inference cost.
\looseness=-1

Inspired by this, we propose enhancing the target score function in a manner compatible with our unified training objective~\eqref{eq:final_unified_objective}. Specifically, we introduce a time-dependent enhancement strategy:

\begin{enumerate}[label=(\alph*), nosep, leftmargin=16pt]
    \item For $t \in [0, s]$, enhance $\xx$ and $\zz$ by applying
          $
              \xx^\star = \xx + \zeta \cdot \left(\mf^{\mathrm{\xx}}(\mmF_t, \xx_t, t) - \mf^{\mathrm{\xx}}(\mmF^\varnothing_t, \xx_t, t)\right)
          $,
          $
              \zz^\star = \zz + \zeta \cdot \left(\mf^{\mathrm{\zz}}(\mmF_t, \xx_t, t) - \mf^{\mathrm{\zz}}(\mmF^\varnothing_t, \xx_t, t)\right)
          $.
          Here, $\mmF^\varnothing_t = \mmF_{\mtheta^-}(\xx_t, t, \varnothing)$ and $\mmF_t = \mmF_{\mtheta^-}(\xx_t, t)$.

    \item For $t \in (s, 1]$, enhance $\xx$ and $\zz$ by applying
          $
              \xx^\star = \xx + \frac{1}{2} \left(\mf^{\mathrm{\xx}}(\mmF_t, \xx_t, t) - \xx\right)
          $ and
          $
              \zz^\star = \zz + \frac{1}{2} \left(\mf^{\mathrm{\zz}}(\mmF_t, \xx_t, t) - \zz\right)
          $. We consistently set $s = 0.75$ (cf., \appref{app:enhanced_target_score} for more analysis).
\end{enumerate}
An ablation study for this technique is shown in \secref{sec:ablation}, and the training process is shown in \algref{alg:udmm}.

\begin{algorithm*}[t]
    \caption{(\textbf{\method-S}). A Unified and Efficient Sampler for Few-step and Multi-step Continuous Generative Models (including Diffusion, Flow Matching, and Consistency Models)}
    \label{alg:uni_sampling}
    \begin{algorithmic}[1]
        \REQUIRE
        Initial $\tilde{\xx} \sim \mathcal{N}(\mathbf{0}, \mathbf{I})$,
        transport coefficients \{$\alpha(\cdot)$, $\gamma(\cdot)$, $\hat{\alpha}(\cdot)$, $\hat{\gamma}(\cdot)$\},
        trained model $\mmF_{\mtheta}$,
        sampling steps $N$,
        order $\nu \in \{1, 2\}$,
        time schedule $\mathcal{T}$,
        extrapolation ratio $\kappa$,
        stochastic ratio $\rho$.
        \ENSURE
        Final generated sample $\tilde{\xx} \sim p(\xx)$ and history samples $\{\hat{\xx}_i\}_{i=0}^N$ over generation process.
        \STATE Let $N \gets \lfloor (N+1) / 2 \rfloor$ if using second order sampling ($\nu=2$) \COMMENT{\textcolor{darkgreen}{Adjusts total steps to match first-order evaluation count}}
        \FOR{$i=0$ to $N-1$}
        \STATE Compute model output $ \mmF = \mmF_{\mtheta^-}(\tilde{\xx}, t_{i})$, and then $\hat{\xx}_i = \mf^{\mathrm{\xx}}(\mmF, \tilde{\xx}, t_{i})$ and $\hat{\zz}_i = \mf^{\mathrm{\zz}}(\mmF, \tilde{\xx}, t_{i})$
        \IF{$i\geq 1$}
        \STATE Compute extrapolated estimation $\hat{\zz} = \hat{\zz}_i + \kappa \cdot (\hat{\zz}_i - \hat{\zz}_{i-1})$ and $\hat{\xx} = \hat{\xx}_i + \kappa \cdot (\hat{\xx}_i - \hat{\xx}_{i-1})$
        \ENDIF
        \STATE Sample $\zz \sim \mathcal{N}(\mathbf{0}, \mathbf{I})$ \COMMENT{\textcolor{darkgreen}{An example choice of $\rho$ for performing SDE-similar sampling is: $\rho = \mathrm{clip}(\frac{\abs{t_{i}-t_{i+1}}\cdot2\alpha(t_{i})}{\alpha(t_{i+1})},0,1)$}}
        \STATE Compute estimated next time sample $\xx^{\prime} = \alpha(t_{i+1}) \cdot (\sqrt{1-\rho} \cdot \hat{\zz} + \sqrt{\rho} \cdot \zz ) + \gamma(t_{i+1}) \cdot \hat{\xx}$
        \IF{order $\nu=2$ \textbf{and} $i < N-1$}
        \STATE Compute prediction $\mmF^\prime = \mmF_{\mtheta}(\xx^{\prime}, t_{i+1})$, $\hat{\xx}^{\prime} = \mf^{\mathrm{\xx}}(\mmF^\prime, \xx^{\prime}, t_{i+1})$ and $\hat{\zz}^{\prime} = \mf^{\mathrm{\zz}}(\mmF^\prime, \xx^{\prime}, t_{i+1})$
        \STATE Compute corrected next time sample $\xx^{\prime} = \tilde{\xx} \cdot \frac{\gamma(t_{i+1})}{\gamma(t_{i})} + \left(\alpha(t_{i+1}) - \frac{\gamma(t_{i+1}) \alpha(t_{i})}{\gamma(t_{i})}\right) \cdot \frac{\hat{\xx} + \hat{\xx}^{\prime}}{2} $
        \ENDIF
        \STATE Reset $\tilde{\xx} \gets \xx^{\prime}$
        \ENDFOR

    \end{algorithmic}
\end{algorithm*}

\subsection{Unifying Sampling Process for Continuous Generative Models}
\label{sec:uni_sampling}
In this section, we introduce our unified sampling algorithm applicable to both consistency models and diffusion/flow-based models.

For classical iterative sampling models, such as a trained flow-matching model $\mf_{\mtheta}$, sampling from the learned distribution $p(\xx)$ involves solving the PF-ODE~\citep{song2020score}.
This process typically uses numerical ODE solvers, such as the Euler or Runge-Kutta methods~\citep{ma2024sit}, to iteratively transform the initial Gaussian noise $\tilde{\xx}$ into a sample from $p(\xx)$ by solving the ODE
(i.e.,
$
    \frac{\dm \tilde{\xx}_t}{\dm t} = \mf_{\mtheta}(\tilde{\xx}_t, t)
$),
Similarly, sampling processes in models like EDM~\citep{karras2022elucidating,karras2024analyzing} and consistency models~\citep{song2023consistency} involve a comparable gradual denoising procedure.
Building on these observations and our unified trainer \method-T, we first propose a general iterative sampling process with two stages, i.e., (a) and (b):
\begin{enumerate}[label=(\alph*), nosep, leftmargin=16pt]
    \item \textbf{Decomposition:} At time \(t\), the current input \(\tilde{\xx}_t\) is decomposed into two components:
          $
              \tilde{\xx}_t = \alpha(t) \cdot \hat{\zz}_t + \gamma(t) \cdot \hat{\xx}_t
          $.
          This decomposition uses the estimation model \(\mmF_{\mtheta}\).
          Specifically, the model output \(\mmF_t = \mmF_{\mtheta^-}(\tilde{\xx}_t, t)\) is computed, yielding the estimated clean component \(\hat{\xx}_t = \mf^{\mathrm{\xx}}(\mmF_t, \tilde{\xx}_t, t)\) and the estimated noise component \(\hat{\zz}_t = \mf^{\mathrm{\zz}}(\mmF_t, \tilde{\xx}_t, t)\).
    \item \textbf{Reconstruction:} The next time step's input, \(t'\), is generated by combining the estimated components:
          $
              \tilde{\xx}_{t'} = \alpha(t') \cdot \hat{\zz}_t + \gamma(t') \cdot \hat{\xx}_t
          $.
          The process then iterates to stage (a).
\end{enumerate}

We then introduce two enhancement techniques below to optimize the sampling process:

(i) \textbf{Extrapolating the estimation.}
Directly utilizing the estimated $\hat{\xx}_t$ and $\hat{\zz}_t$ to reconstruct the subsequent input $\tilde{\xx}_{t^\prime}$ can result in significant estimation errors, as the estimation model $\mmF_{\mtheta}$ does not perfectly align with the target function $\mmF^{\mathrm{target}}$ for solving the PF-ODE.

Note that CFG~\citep{ho2022classifier} guides a conditional model using an unconditional model, namely,
$
    \mf_{\mtheta}(\tilde{\xx}, t) = \mf_{\mtheta}(\tilde{\xx}, t) + \kappa \cdot \left(\mf_{\mtheta}^{\varnothing}(\tilde{\xx}, t) - \mf_{\mtheta}(\tilde{\xx}, t)\right)
$,
where $\kappa$ is the guidance ratio. This approach can be interpreted as leveraging a less accurate estimation to guide a more accurate one~\citep{karras2024guiding}.

Extending this insight, we propose to extrapolate the next time-step estimates $\hat{\xx}_{t^\prime}$ and $\hat{\zz}_{t^\prime}$ using the previous estimates $\hat{\xx}_t$ and $\hat{\zz}_t$, formulated as:
$
    \hat{\xx}_{t^\prime} \gets \hat{\xx}_{t^\prime} + \kappa \cdot (\hat{\xx}_{t^\prime} - \hat{\xx}_t)
$ and
$
    \hat{\zz}_{t^\prime} \gets \hat{\zz}_{t^\prime} + \kappa \cdot (\hat{\zz}_{t^\prime} - \hat{\zz}_t),
$
where $\kappa \in [0, 1]$ is the extrapolation ratio. This extrapolation process can significantly enhance sampling quality and reduce the number of sampling steps.
Notably, this technique is compatible with CFG and does not introduce additional computational overhead (see \secref{sec:SOTA_multi} for experimental details and \appref{app:extrapolating} for theoretical analysis).

(ii) \textbf{Incorporating stochasticity.}
During the aforementioned sampling process, the input $\tilde{\xx}_t$ is deterministic, potentially limiting the diversity of generated samples. To mitigate this, we introduce a stochastic term $\rho$ to $\tilde{\xx}_t$, defined as:
$
    \tilde{\xx}_{t^\prime} = \alpha(t^\prime) \cdot \left(\sqrt{1-\rho} \cdot \hat{\zz}_t + \sqrt{\rho} \cdot \zz\right) + \gamma(t^\prime) \cdot \hat{\xx}_t,
$
where $\zz \sim \mathcal{N}(\mathbf{0}, \mathbf{I})$ is a random noise vector, and $\rho$ is the stochasticity ratio. This stochastic term acts as a random perturbation to $\tilde{\xx}_t$, thereby enhancing the diversity of generated samples.

We find that setting $\rho=\lambda$ consistently yields optimal performance in terms of generation quality across all experiments, and we leave the analysis of this phenomenon for future research.
Furthermore, empirical investigation of $\kappa$ indicates that the range $[0.2, 0.6]$ is consistently beneficial (cf., \secref{sec:ablation} and \appref{app:impl_details}).
Model performance remains relatively stable within this range.

\paragraph{Unified sampling algorithm \method-S.}
Putting all these factors together, here we introduce a unified sampling algorithm applicable to consistency models and diffusion/flow-based models, as presented in~\algref{alg:uni_sampling}.
This framework demonstrates that classical samplers, such as the Euler sampler utilized for flow-matching models~\citep{ma2024sit}, constitute a special case of our \method-S (cf. \appref{app:unified_sampling} for analysis).
Extensive experiments (cf., \secref{sec:exp}) demonstrate two key features of this algorithm:
\begin{enumerate}[label=(\alph*), nosep, leftmargin=16pt]
    \item {Reduced computational resources:} It decreases the number of sampling steps required by existing models while maintaining or enhancing performance.
    \item {High compatibility:} It is compatible with existing models, irrespective of their training objectives or noise schedules, without necessitating modifications to model architectures or tuning.
\end{enumerate}

\section{Experiment}
\label{sec:exp}
\vspace{-0.25em}
This section details the experimental setup and evaluation of our proposed methodology, \methods.
Note that our approach relies on specific parameterizations of the transport coefficients $\alpha(\cdot)$, $\gamma(\cdot)$, $\hat{\alpha}(\cdot)$, and $\hat{\gamma}(\cdot)$, as detailed in \algref{alg:udmm} and \algref{alg:uni_sampling}.
Therefore, \tabref{tab:transport} summarizes the parameterizations used in experiments, including configurations for compatibility with prior methods.

\subsection{Experimental Setting} \label{sec:expset}
\vspace{-0.25em}
\paragraph{Datasets.}
We utilize ImageNet-1K~\citep{deng2009imagenet} at resolutions of $512 \times 512$ and $256 \times 256$ as our primary datasets, following prior studies~\citep{karras2024analyzing,song2023consistency} and adhering to ADM's data preprocessing protocols~\citep{dhariwal2021diffusion}. Additionally, CIFAR-10~\citep{krizhevsky2009cifar} at a resolution of $32 \times 32$ is employed for ablation studies.

For both $512 \times 512$ and $256 \times 256$ images, experiments are conducted using latent space generative modeling in line with previous works. Specifically:
(a) For $256 \times 256$ images, we employ multiple widely-used autoencoders, including SD-VAE~\citep{rombach2022high}, VA-VAE~\citep{yao2025reconstruction}, and E2E-VAE~\citep{leng2025repa}.
(b) For $512 \times 512$ images, a DC-AE (\textit{f32c32})~\citep{chen2024deep} with a higher compression rate is used to conserve computational resources. When utilizing SD-VAE for $512 \times 512$ images, a $2\times$ larger patch size is applied to maintain computational parity with the $256 \times 256$ setting.
Consequently, the computational burden for generating images at both $512 \times 512$ and $256 \times 256$ resolutions remains comparable across our trained models\footnote{Previous works often employed the same autoencoders and patch sizes for both resolutions, resulting in higher computational costs for generating $512 \times 512$ images. For example, the DiT-XL/2 model requires $524.60$ GFLOPs for $512 \times 512$ generation, in contrast to $118.64$ GFLOPs for $256 \times 256$.}.
Further details on datasets and autoencoders are provided in~\appref{app:exp_dataset}.

\paragraph{Neural network architectures.}
We evaluate \method-S sampling using models trained with established methodologies.
These models employ various architectures from two prevalent families commonly used in continuous generative models:
(a) Diffusion Transformers, including variants such as DiT~\citep{peebles2023scalable}, UViT~\citep{bao2023all}, SiT~\citep{ma2024sit}, Lightening-DiT~\citep{yao2025reconstruction}, and DDT~\citep{wang2025ddt}.
(b) UNet-based convolutional networks, including improved UNets~\citep{karras2022elucidating,song2020score} and EDM2-UNets~\citep{karras2024analyzing}.
For training models specifically for \method-T, we consistently utilize DiT as the backbone architecture. We train models of various sizes (B: 130M, L: 458M, XL: 675M parameters) and patch sizes.
Notation such as XL/2 denotes the XL model with a patch size of 2.
Following prior work~\citep{yao2025reconstruction,wang2025ddt}, minor architectural modifications are applied to enhance training stability (details in \appref{app:neural_architecture}).

\begin{table*}[t]
    \centering
    \caption{\small{\textbf{System-level quality comparison for multi-step generation task on class-conditional ImageNet-1K.} Notation A\rp B denotes the result obtained by combining methods A and B. \dg{}/\up{} indicate a decrease/increase, respectively, in the metric compared to the baseline performance of the pre-trained models.}}
    \vspace{-0.5em}
    \label{tab:in512and256_multistep}
    \resizebox{\textwidth}{!}{%
        \begin{tabular}{lcccc|lcccc}
            \toprule
            \multicolumn{5}{c|}{$512\times512$}     & \multicolumn{5}{c}{$256\times256$}                                                                                                                                                                                                                       \\
            \cmidrule(lr){1-5} \cmidrule(lr){6-10}
            \textbf{METHOD}                         & \textbf{NFE} ($\downarrow$)        & \textbf{FID} ($\downarrow$) & \textbf{\#Params} & \textbf{\#Epochs} & \textbf{METHOD}                           & \textbf{NFE} ($\downarrow$) & \textbf{FID} ($\downarrow$) & \textbf{\#Params} & \textbf{\#Epochs} \\
            \midrule
            \multicolumn{10}{c}{\textbf{Diffusion \& flow-matching Models}}                                                                                                                                                                                                                                    \\
            \midrule
            ADM-G~\citep{dhariwal2021diffusion}     & 250$\times$2                       & 7.72                        & 559M              & 388               & ADM-G~\citep{dhariwal2021diffusion}       & 250$\times$2                & 4.59                        & 559M              & 396               \\
            U-ViT-H/4~\citep{bao2023all}            & 50$\times$2                        & 4.05                        & 501M              & 400               & U-ViT-H/2~\citep{bao2023all}              & 50$\times$2                 & 2.29                        & 501M              & 400               \\
            DiT-XL/2~\citep{peebles2023scalable}    & 250$\times$2                       & 3.04                        & 675M              & 600               & DiT-XL/2~\citep{peebles2023scalable}      & 250$\times$2                & 2.27                        & 675M              & 1400              \\
            SiT-XL/2~\citep{ma2024sit}              & 250$\times$2                       & 2.62                        & 675M              & 600               & SiT-XL/2~\citep{ma2024sit}                & 250$\times$2                & 2.06                        & 675M              & 1400              \\
            MaskDiT~\citep{zheng2023fast}           & 79$\times$2                        & 2.50                        & 736M              & -                 & MDT~\citep{gao2023masked}                 & 250$\times$2                & 1.79                        & 675M              & 1300              \\
            EDM2-S~\citep{karras2024analyzing}      & 63                                 & 2.56                        & 280M              & 1678              & REPA-XL/2~\citep{yu2024representation}    & 250$\times$2                & 1.96                        & 675M              & 200               \\
            EDM2-L~\citep{karras2024analyzing}      & 63                                 & 2.06                        & 778M              & 1476              & REPA-XL/2~\citep{yu2024representation}    & 250$\times$2                & 1.42                        & 675M              & 800               \\
            EDM2-XXL~\citep{karras2024analyzing}    & 63                                 & 1.91                        & 1.5B              & 734               & Light.DiT~\citep{yao2025reconstruction}   & 250$\times$2                & 2.11                        & 675M              & 64                \\
            DiT-XL/1\rp\citep{chen2024deep}         & 250$\times$2                       & 2.41                        & 675M              & 400               & Light.DiT~\citep{yao2025reconstruction}   & 250$\times$2                & 1.35                        & 675M              & 800               \\
            U-ViT-H/1\rp\citep{chen2024deep}        & 30$\times$2                        & 2.53                        & 501M              & 400               & DDT-XL/2~\citep{wang2025ddt}              & 250$\times$2                & 1.31                        & 675M              & 256               \\
            REPA-XL/2~\citep{yu2024representation}  & 250$\times$2                       & 2.08                        & 675M              & 200               & DDT-XL/2~\citep{wang2025ddt}              & 250$\times$2                & 1.26                        & 675M              & 400               \\
            DDT-XL/2~\citep{wang2025ddt}            & 250$\times$2                       & \textbf{1.28}               & 675M              & -                 & REPA-E-XL~\citep{leng2025repa}            & 250$\times$2                & \textbf{1.26}               & 675M              & 800               \\
            \midrule
            \multicolumn{10}{c}{\textbf{GANs \& masked \& autoregressive models}}                                                                                                                                                                                                                              \\
            \midrule
            VQGAN\rp\citep{esser2021taming}         & 256                                & 18.65                       & 227M              & -                 & VQGAN\rp\citep{sun2024autoregressive}     & -                           & 2.18                        & 3.1B              & 300               \\
            MAGVIT-v2~\citep{yu2023language}        & 64$\times$2                        & 1.91                        & 307M              & 1080              & MAR-L~\citep{li2024autoregressive}        & 256$\times$2                & 1.78                        & 479M              & 800               \\
            MAR-L~\citep{li2024autoregressive}      & 256$\times$2                       & \textbf{1.73}               & 479M              & 800               & MAR-H~\citep{li2024autoregressive}        & 256$\times$2                & \textbf{1.55}               & 943M              & 800               \\
            VAR-$d$36-s~\citep{tian2024visual}      & 10$\times$2                        & 2.63                        & 2.3B              & 350               & VAR-$d$30-re~\citep{tian2024visual}       & 10$\times$2                 & 1.73                        & 2.0B              & 350               \\
            \midrule
            \multicolumn{10}{c}{\textbf{Ours: \method-S sampling with models trained by prior works}}                                                                                                                                                                                                          \\
            \midrule
            \method-S\rp\citep{karras2024analyzing} & 40\dg{23}                          & 2.53\dg{0.03}               & 280M              & -                 & \method-S\rp\citep{wang2025ddt}           & 100\dg{400}                 & 1.27\up{0.01}               & 675M              & -                 \\
            \method-S\rp\citep{karras2024analyzing} & 50\dg{13}                          & 2.04\dg{0.02}               & 778M              & -                 & \method-S\rp\citep{yao2025reconstruction} & 100\dg{400}                 & 1.21\dg{0.14}               & 675M              & -                 \\
            \method-S\rp\citep{karras2024analyzing} & 40\dg{23}                          & 1.88\dg{0.03}               & 1.5B              & -                 & \method-S\rp\citep{leng2025repa}          & 80\dg{420}                  & \textbf{1.06}\dg{0.20}      & 675M              & -                 \\
            \method-S\rp\citep{wang2025ddt}         & 200\dg{300}                        & \textbf{1.25}\dg{0.03}      & 675M              & -                 & \method-S\rp\citep{leng2025repa}          & 20\dg{480}                  & 2.00\up{0.74}               & 675M              & -                 \\
            \midrule
            \multicolumn{10}{c}{\textbf{Ours: models trained and sampled using \methods~(setting $\lambda=0$)}}                                                                                                                                                                                                \\
            \midrule
            \rp DC-AE~\citep{chen2024deep}          & 40                                 & \textbf{1.48}               & 675M              & 800               & \rp SD-VAE~\citep{rombach2022high}        & 60                          & 1.41                        & 675M              & 400               \\
            \rp DC-AE~\citep{chen2024deep}          & 20                                 & 1.68                        & 675M              & 800               & \rp VA-VAE~\citep{yao2025reconstruction}  & 60                          & 1.21                        & 675M              & 400               \\
            \rp SD-VAE~\citep{rombach2022high}      & 40                                 & 1.67                        & 675M              & 320               & \rp E2E-VAE~\citep{leng2025repa}          & 40                          & \textbf{1.21}               & 675M              & 800               \\
            \rp SD-VAE~\citep{rombach2022high}      & 20                                 & 1.80                        & 675M              & 320               & \rp E2E-VAE~\citep{leng2025repa}          & 20                          & 1.30                        & 675M              & 800               \\
            \bottomrule
        \end{tabular}%
    }
    \vspace{-1em}
\end{table*}

\begin{table*}[t]
    \centering
    \caption{\small{\textbf{System-level quality comparison for few-step generation task on class-conditional ImageNet-1K.}}}
    \vspace{-0.5em}
    \label{tab:in512and256_fewsteps}
    \resizebox{\textwidth}{!}{%
        \begin{tabular}{lcccc|lcccc}
            \toprule
            \multicolumn{5}{c|}{$512\times512$} & \multicolumn{5}{c}{$256\times256$}                                                                                                                                                                                                                      \\
            \cmidrule(lr){1-5} \cmidrule(lr){6-10}
            \textbf{METHOD}                     & \textbf{NFE} ($\downarrow$)        & \textbf{FID} ($\downarrow$) & \textbf{\#Params} & \textbf{\#Epochs} & \textbf{METHOD}                          & \textbf{NFE} ($\downarrow$) & \textbf{FID} ($\downarrow$) & \textbf{\#Params} & \textbf{\#Epochs} \\
            \midrule
            \multicolumn{10}{c}{\textbf{Consistency training \& distillation}}                                                                                                                                                                                                                            \\
            \midrule
            sCT-M~\citep{lu2024simplifying}     & 1                                  & 5.84                        & 498M              & 1837              & iCT~\citep{song2023improved}             & 2                           & 20.3                        & 675M              & -                 \\
                                                & 2                                  & 5.53                        & 498M              & 1837              & Shortcut-XL/2~\citep{frans2024one}       & 1                           & 10.6                        & 676M              & 250               \\
            sCT-L~\citep{lu2024simplifying}     & 1                                  & 5.15                        & 778M              & 1274              &                                          & 4                           & 7.80                        & 676M              & 250               \\
                                                & 2                                  & 4.65                        & 778M              & 1274              &                                          & 128                         & 3.80                        & 676M              & 250               \\
            sCT-XXL~\citep{lu2024simplifying}   & 1                                  & 4.29                        & 1.5B              & 762               & IMM-XL/2~\citep{zhou2025inductive}       & 1$\times$2                  & 7.77                        & 675M              & 3840              \\
                                                & 2                                  & 3.76                        & 1.5B              & 762               &                                          & 2$\times$2                  & 5.33                        & 675M              & 3840              \\
            sCD-M~\citep{lu2024simplifying}     & 1                                  & 2.75                        & 498M              & 1997              &                                          & 4$\times$2                  & 3.66                        & 675M              & 3840              \\
                                                & 2                                  & 2.26                        & 498M              & 1997              &                                          & 8$\times$2                  & 2.77                        & 675M              & 3840              \\
            sCD-L~\citep{lu2024simplifying}     & 1                                  & 2.55                        & 778M              & 1434              & IMM ($\omega=1.5$)                       & 1$\times$2                  & 8.05                        & 675M              & 3840              \\
                                                & 2                                  & 2.04                        & 778M              & 1434              &                                          & 2$\times$2                  & 3.99                        & 675M              & 3840              \\
            sCD-XXL~\citep{lu2024simplifying}   & 1                                  & 2.28                        & 1.5B              & 921               &                                          & 4$\times$2                  & 2.51                        & 675M              & 3840              \\
                                                & 2                                  & \textbf{1.88}               & 1.5B              & 921               &                                          & 8$\times$2                  & \textbf{1.99}               & 675M              & 3840              \\
            \midrule
            \multicolumn{10}{c}{\textbf{GANs \& masked \& autoregressive models}}                                                                                                                                                                                                                         \\
            \midrule
            BigGAN~\citep{brock2018large}       & 1                                  & 8.43                        & 160M              & -                 & BigGAN~\citep{brock2018large}            & 1                           & 6.95                        & 112M              & -                 \\
            StyleGAN~\citep{sauer2022stylegan}  & 1$\times$2                         & \textbf{2.41}               & 168M              & -                 & GigaGAN~\citep{kang2023scaling}          & 1                           & 3.45                        & 569M              & -                 \\
            MAGVIT-v2~\citep{yu2023language}    & 64$\times$2                        & 1.91                        & 307M              & 1080              & StyleGAN~\citep{sauer2022stylegan}       & 1$\times$2                  & \textbf{2.30}               & 166M              & -                 \\
            VAR-$d$36-s~\citep{tian2024visual}  & 10$\times$2                        & 2.63                        & 2.3B              & 350               & VAR-$d$30-re~\citep{tian2024visual}      & 10$\times$2                 & 1.73                        & 2.0B              & 350               \\
            \midrule
            \multicolumn{10}{c}{\textbf{Ours: models trained and sampled using \methods~(setting $\lambda=0$)}}                                                                                                                                                                                           \\
            \midrule
            \rp DC-AE~\citep{chen2024deep}      & 32                                 & \textbf{1.55}               & 675M              & 800               & \rp VA-VAE~\citep{yao2025reconstruction} & 16                          & 2.11                        & 675M              & 400               \\
            \rp DC-AE~\citep{chen2024deep}      & 16                                 & 1.81                        & 675M              & 800               & \rp VA-VAE~\citep{yao2025reconstruction} & 8                           & 6.09                        & 675M              & 400               \\
            \rp DC-AE~\citep{chen2024deep}      & 8                                  & 3.07                        & 675M              & 800               & \rp E2E-VAE~\citep{leng2025repa}         & 16                          & \textbf{1.40}               & 675M              & 800               \\
            \rp DC-AE~\citep{chen2024deep}      & 4                                  & 74.0                        & 675M              & 800               & \rp E2E-VAE~\citep{leng2025repa}         & 8                           & 2.68                        & 675M              & 800               \\
            \midrule
            \multicolumn{10}{c}{\textbf{Ours: models trained and sampled using \methods~(setting $\lambda=1$)}}                                                                                                                                                                                           \\
            \midrule
            \rp DC-AE~\citep{chen2024deep}      & 1                                  & 2.42                        & 675M              & 840               & \rp VA-VAE~\citep{yao2025reconstruction} & 2                           & \textbf{1.42}               & 675M              & 432               \\
            \rp DC-AE~\citep{chen2024deep}      & 2                                  & \textbf{1.75}               & 675M              & 840               & \rp VA-VAE~\citep{yao2025reconstruction} & 1                           & 2.19                        & 675M              & 432               \\
            \rp SD-VAE~\citep{rombach2022high}  & 1                                  & 2.63                        & 675M              & 360               & \rp SD-VAE~\citep{rombach2022high}       & 1                           & 2.10                        & 675M              & 424               \\
            \rp SD-VAE~\citep{rombach2022high}  & 2                                  & 2.11                        & 675M              & 360               & \rp E2E-VAE~\citep{leng2025repa}         & 1                           & 2.29                        & 675M              & 264               \\
            \bottomrule
        \end{tabular}%
    }
    \vspace{-1em}
\end{table*}

\paragraph{Implementation details.}
Our implementation is developed in PyTorch~\citep{paszke2019pytorch}.
Training employs AdamW~\citep{loshchilov2017decoupled} for multi-step sampling models. For few-step sampling models, RAdam~\citep{liu2019variance} is used to improve training stability.
Consistent with standard practice in generative modeling~\citep{yu2024representation,ma2024sit}, an exponential moving average (EMA) of model weights is maintained throughout training using a decay rate of $0.9999$.
All reported results utilize the EMA model.
Comprehensive hyperparameters and additional implementation details are provided in \appref{app:impl_details}.
Consistent with prior work \citep{song2020score,ho2020denoising,lipman2022flow,brock2018large}, we adopt standard evaluation protocols.
The primary metric for assessing image quality is the Fr\'echet Inception Distance (FID) \citep{heusel2017gans}, calculated on $50,000$ images (FID-$50\mathrm{K}$).

\subsection{Comparison with SOTA Methods for Multi-step Generation}
\label{sec:SOTA_multi}
Our experiments on ImageNet-1K at $512\!\times\!512$ and $256\!\times\!256$ resolutions systematically validate the three key advantages of \method: (1) {sampling acceleration} via \method-S on pre-trained models, (2) {ultra-efficient generation} with joint \method-T + \method-S, and (3) {broad compatibility}.

\paragraph{\method-S: Plug-and-play sampling acceleration without additional cost.}
\method-S provides free sampling acceleration for pre-trained generative models. It reduces the required Number of Function Evaluations (NFEs) while preserving or improving generation quality, as measured by FID.
Applied to $512 \times 512$ image generation, the approach demonstrates notable efficiency gains:
\begin{enumerate}[label=(\alph*), nosep, leftmargin=16pt]
    \item For the diffusion-based models, such as a pre-trained EDM2-XXL model, \method-S reduced NFEs from $63$ to $40$ (a $36.5\%$ reduction), concurrently improving FID from $1.91$ to $1.88$.
    \item When applied to the flow-based models, such as a pre-trained DDT-XL/2 model, \method-S achieved an FID of $1.25$ with $200$ NFEs, compared to the original $1.28$ FID requiring $500$ NFEs. This demonstrates a performance improvement achieved alongside enhanced efficiency.
\end{enumerate}
This approach generalizes across different generative model frameworks and resolutions. For instance, on $256 \times 256$ resolution using the flow-based {REPA-E-XL} model, \method-S attained {$1.06$ FID at $80$ NFEs}, which surpasses the baseline performance of $1.26$ FID achieved at $500$ NFEs.

In summary, \textit{\method-S acts as a broadly applicable technique for efficient sampling, demonstrating cases where performance (FID) improves despite a reduction in sampling steps}.
\vspace{-1.0em}
\paragraph{\method-T + \method-S: Synergistic efficiency.}
The combination of \method-T training and \method-S sampling yields highly competitive generative performance with minimal NFEs:
\begin{enumerate}[label=(\alph*), nosep, leftmargin=16pt]
    \item \textbf{$512 \times 512$}: With a DC-AE autoencoder, our framework achieved {$1.48$ FID at $40$ NFEs}.
          This outperforms DiT-XL/1\rp DC-AE ($2.41$ FID, $500$ NFEs) and EDM2-XXL ($1.91$ FID, $63$ NFEs), with comparable or reduced model size.
    \item \textbf{$256 \times 256$}: With an E2E-VAE autoencoder, we attained {$1.21$ FID at $40$ NFEs}. This result exceeds prior SOTA models like MAR-H ($1.55$ FID, $512$ NFEs) and REPA-E-XL ($1.26$ FID, $500$ NFEs).
\end{enumerate}

Importantly, models trained with \method-T maintain robustness under extremely low-step sampling regimes.
At {$20$ NFEs}, the $256 \times 256$ performance degrades gracefully to {$1.30$ FID}, a result that still exceeds the performance of several baseline models sampling with significantly higher NFEs.

In summary, \textit{the demonstrated robustness and efficiency of \methods across various scenarios underscore the high potential of our \method~for multi-step continuous generative modeling}.

\vspace{-0.5em}
\subsection{Comparison with SOTA Methods for Few-step Generation}
\label{sec:SOTA_few}
\vspace{-0.5em}

As evidenced by the results in \tabref{tab:in512and256_fewsteps}, our \methods framework exhibits superior performance across two key settings: $\lambda=0$, characteristic of a multi-step regime akin to diffusion and flow-matching models, and $\lambda=1$, indicative of a few-step regime resembling consistency models.
\vspace{-1.0em}
\paragraph{Few-step regime ($\lambda=1$).}
Configured for few-step generation, \methods achieves SOTA sample quality with minimal NFEs, surpassing existing specialized consistency models and GANs:
\begin{enumerate}[label=(\alph*), nosep, leftmargin=16pt]
    \item \textbf{$512\times512$}: Using a DC-AE autoencoder, our model achieves an FID of $1.75$ with $2$ NFEs and $675\mathrm{M}$ parameters. This outperforms sCD-XXL, a leading consistency distillation model, which reports $1.88$ FID with $2$ NFEs and $1.5\mathrm{B}$ parameters.
    \item \textbf{$256\times256$}: Using a VA-VAE autoencoder, our model achieves an FID of $1.42$ with $2$ NFEs. This is a notable improvement over IMM-XL/2, which obtains $1.99$ FID with $8\times2=16$ NFEs, demonstrating higher sample quality while requiring $8\times$ fewer sampling steps.
\end{enumerate}
In summary, \textit{these results demonstrate the capability of \methods to deliver high-quality generation with minimal sampling cost, which is advantageous for practical applications}.
\vspace{-0.5em}
\paragraph{Multi-step regime ($\lambda=0$).}
Even when models are trained for multi-step generation,
it nonetheless demonstrates competitive performance even when utilizing a moderate number of sampling steps.
\begin{enumerate}[label=(\alph*), nosep, leftmargin=16pt]
    \item \textbf{$512\times512$}: Using a DC-AE autoencoder, our model obtains an FID of $1.81$ with $16$ NFEs and $675\mathrm{M}$ parameters. This result is competitive with or superior to existing methods such as VAR-$d$30-s, which reports $2.63$ FID with $10\times2=20$ NFEs and $2.3\mathrm{B}$ parameters.
    \item \textbf{$256\times256$}: Using an E2E-VAE autoencoder, our model achieves an FID of $1.40$ with $16$ NFEs. This surpasses IMM-XL/2, which obtains $1.99$ FID with $8\times2=16$ NFEs, demonstrating improved quality at the same sampling cost.
\end{enumerate}

In summary, \textit{our \methods framework demonstrates versatility and high performance across both few-step ($\lambda=1$) and multi-step ($\lambda=0$) sampling regimes. As shown, it consistently achieves SOTA or competitive sample quality relative to existing methods, often requiring fewer sampling steps or parameters, which are important factors for efficient high-resolution image synthesis}.

\begin{figure}[t]
    \centering
    \begin{subfigure}[b]{0.325\textwidth}
        \centering
        \includegraphics[width=\linewidth]{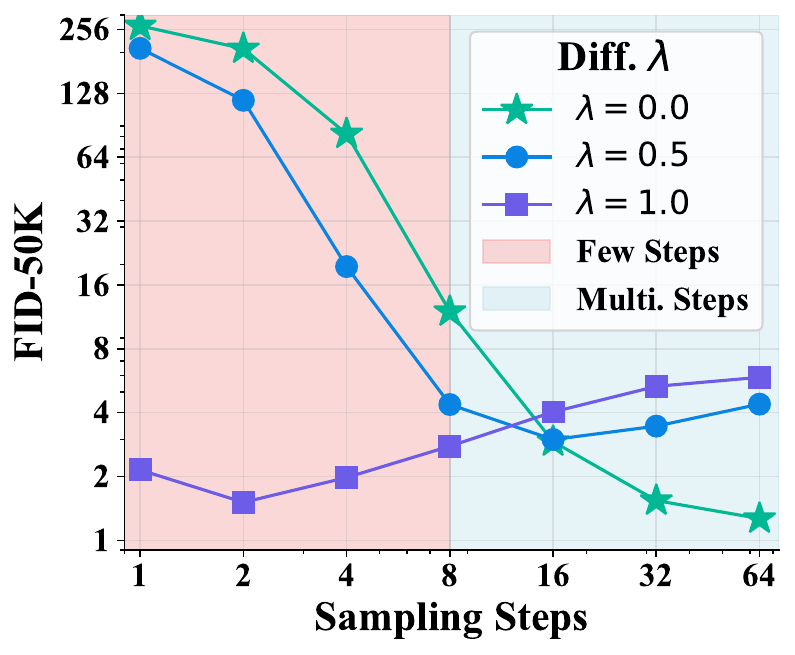}
        \caption{\textbf{Various $\lambda$ and sampling steps.}}
        \label{fig:diff_lambda}
    \end{subfigure}
    \hfill
    \begin{subfigure}[b]{0.33\textwidth}
        \centering
        \includegraphics[width=\linewidth]{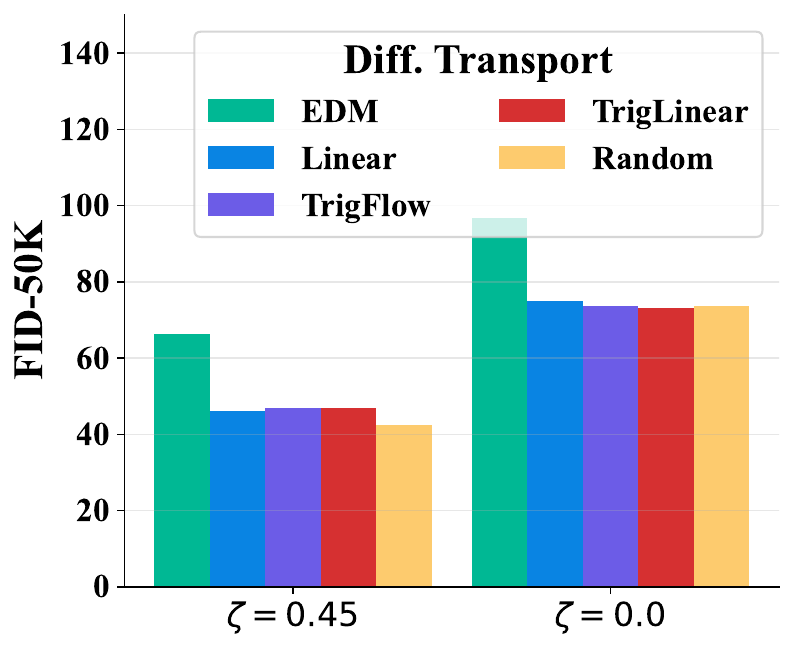}
        \caption{\textbf{Different $\zeta$ and transport types.}}
        \label{fig:diff_transport}
    \end{subfigure}
    \hfill
    \begin{subfigure}[b]{0.325\textwidth}
        \centering
        \includegraphics[width=\linewidth]{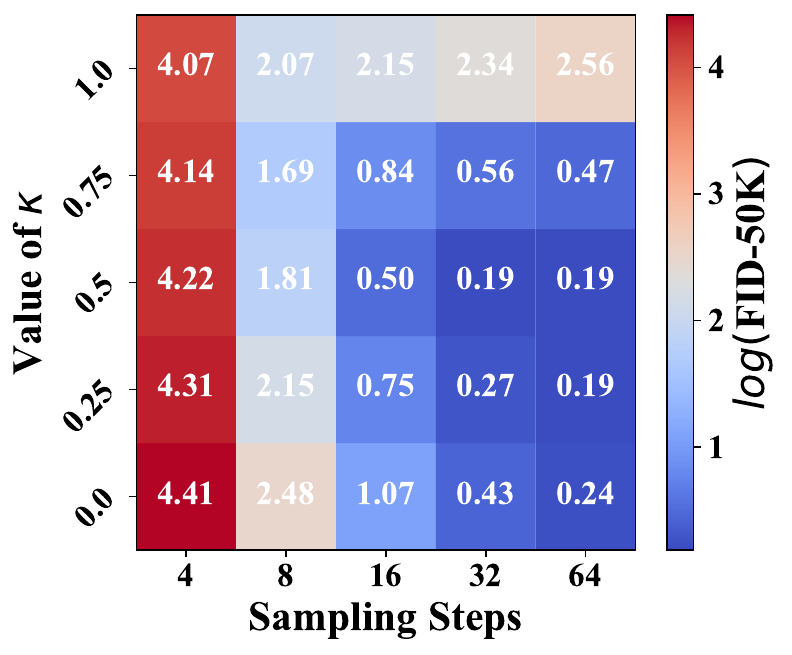}
        \caption{\textbf{Various $\kappa$ and sampling steps.}}
        \label{fig:diff_kappa}
    \end{subfigure}
    \vspace{-0.5em}
    \caption{\small{
            \textbf{Ablation studies of \method~on ImageNet-1K $256\!\times\!256$.}
            These studies evaluate key factors of the proposed \method. Ablations presented in (a) and (c) utilize XL/1 models with the VA-VAE autoencoder. For the results shown in (b), B/2 models with the SD-VAE autoencoder are used to facilitate more efficient training.
        }}
    \vspace{-1em}
\end{figure}

\vspace{-0.5em}
\subsection{Ablation Study over the Key Factors of \method}
\label{sec:ablation}
\vspace{-0.5em}
Unless otherwise specified, experiments in this section are conducted with $\kappa=0.0$ and $\lambda=0.0$.
\vspace{-0.5em}
\paragraph{Effect of $\lambda$ in \method-T.}
\figref{fig:diff_lambda} demonstrates that varying $\lambda$ influences the range of effective sampling steps for trained models. For instance, with $\lambda=1$, optimal performance is attained at $2$ sampling steps. In contrast, with $\lambda=0.5$, optimal performance is observed at $16$ steps.
\vspace{-0.5em}
\paragraph{Impact of $\zeta$ and transport type in \method.}
The results in \figref{fig:diff_transport} demonstrates that \methods is applicable with various transport types, albeit with some performance variation.
Investigating these performance differences constitutes future work.
The results also illustrate that the enhanced training objective (achieved with $\zeta=0.45$ compared to $\zeta=0.0$, per \secref{sec:methodology}) consistently improves performance across all tested transport types, underscoring the efficacy of this technique.
\vspace{-0.5em}
\paragraph{Setting different $\kappa$ in \method-S.}
Experimental results, depicted in \figref{fig:diff_kappa}, illustrate the impact of $\kappa$ on the trade-off between sampling steps and generation quality:
(a) High $\kappa$ values (e.g., $1.0$ and $0.75$) prove beneficial for extreme few-step sampling scenarios (e.g., $4$ steps);
(b) Moreover, mid-range $\kappa$ values ($0.25$ to $0.5$) achieve superior performance with fewer steps compared to $\kappa=0.0$.

\vspace{-0.5em}
\paragraph{Conclusion}
We introduce \method, a unified and efficient framework for the training and sampling of few-step and multi-step continuous generative models. Extensive experiments demonstrate \method achieves SOTA performance across various tasks, underscoring the efficacy of its constituent techniques. Additional experimental results and theoretical analysis are provided in \appref{app:detailed_exp} and \appref{app:theoretical_analysis}.
\looseness=-1

\bibliographystyle{configuration/ieeenat_fullname}
\bibliography{resources/reference}

\appendix

\onecolumn
{
    \hypersetup{linkcolor=black}
    \parskip=0em
    \renewcommand{\contentsname}{Contents}
    \tableofcontents
    \addtocontents{toc}{\protect\setcounter{tocdepth}{3}}
}

\newpage

\vspace{-0.5em}
\section{Broader Impacts}
\label{app:broader_impacts}
\vspace{-0.5em}
This paper proposes a unified implementation and theoretical framework for recent popular continuous generative models, such as diffusion models, flow matching models, and consistency models.
This work should provide positive impacts for the generative modeling community.
\vspace{-0.5em}
\section{Limitations}
\label{app:limitations}

\vspace{-0.5em}
\paragraph{Integration of training acceleration techniques.}
This work does not explore the integration of advanced training acceleration methods for diffusion models, such as REPA~\cite{yu2024representation}.

\vspace{-0.5em}
\paragraph{Exploration of downstream applications.}
The current study focuses on establishing the foundational framework. Comprehensive exploration of its application to complex downstream generative tasks, including text-to-image and text-to-video generation, is reserved for future research.

\vspace{-0.5em}
\section{Detailed Experiment}
\label{app:detailed_exp}
\vspace{-0.5em}
\subsection{Detailed Experimental Setting}
\label{app:exp_setting}
\vspace{-0.5em}
\subsubsection{Detailed Datasets}
\label{app:exp_dataset}

\paragraph{Image datasets.}
We conduct experiments on two datasets: {CIFAR-10}~\citep{krizhevsky2009learning}, {ImageNet-1K}~\citep{deng2009imagenet}:
\begin{enumerate}[label=(\alph*), nosep, leftmargin=16pt]
    \item {CIFAR-10} is a widely used benchmark dataset for image classification and generation tasks. It consists of $60,000$ color images, each with a resolution of $32 \times 32$ pixels, categorized into $10$ distinct classes. The dataset is divided into $50,000$ training images and $10,000$ test images.
    \item {ImageNet-1K} is a large-scale dataset containing over $1.2$ million high-resolution images across $1,000$ categories.
\end{enumerate}

\paragraph{Latent space datasets.}
However, directly training diffusion transformers in the pixel space is computationally expensive and inefficient.
Therefore, following previous studies~\citep{yu2024representation,ma2024sit}, we train our diffusion transformers in latent space instead.
\tabref{tab:vae_comparison} presents a comparative analysis of various Variational Autoencoder (VAE) architectures. SD-VAE is characterized by a higher spatial resolution in its latent representation (e.g., $\mathrm{H}/8 \times \mathrm{W}/8$) combined with a lower channel capacity ($4$ channels).
Conversely, alternative models such as VA-VAE, E2E-VAE, and DC-AE achieve more significant spatial compression (e.g., $\mathrm{H}/16 \times \mathrm{W}/16$ or $\mathrm{H}/32 \times \mathrm{W}/32$) at the expense of an increased channel depth (typically $32$ channels).

A key consideration is that the computational cost of a diffusion transformer subsequently processing these latent representations is primarily dictated by their spatial dimensions, rather than their channel capacity~\citep{chen2024deep}.
Specifically, if the latent map is processed by a transformer by dividing it into non-overlapping patches, the cost is proportional to the number of these patches. This quantity is given by $(\mathrm{H} / \text{Compression Ratio} / \text{Patch Size}) \times (\mathrm{W} / \text{Compression Ratio} / \text{Patch Size})$.
Here, $\mathrm{H}$ and $\mathrm{W}$ are the input image dimensions, $\text{Compression Ratio}$ refers to the spatial compression factor of the VAE (e.g., $8$, $16$, $32$ as detailed in \tabref{tab:vae_comparison}), and $\text{Patch Size}$ denotes the side length of the patches processed by the transformer.

\begin{table}[h]
    \centering
    \caption{\textbf{Comparison of different VAE architectures in terms of latent space dimensions and channel capacity.} The table contrasts four variational autoencoder variants (SD-VAE, VA-VAE, E2E-VAE, and DC-AE) by their spatial compression ratios (latent size) and feature channel dimensions.
        Here, $\mathrm{H}$ and $\mathrm{W}$ denote input image height and width (e.g., $256 \times 256$ or $512 \times 512$), respectively.}
    \label{tab:vae_comparison}
    \resizebox{\textwidth}{!}{
        \begin{tabular}{@{}c|c|c|c|c@{}}
            \toprule
                        & SD-VAE (both \texttt{ema} and \texttt{mse} versions)~\citep{rombach2022high} & VA-VAE~\citep{yao2025reconstruction}          & E2E-VAE~\citep{leng2025repa}                  & DC-AE (\textit{f32c32})~\citep{chen2024deep}  \\ \midrule
            Latent Size & $(\mathrm{H} / 8) \times (\mathrm{W} / 8) $                                  & $(\mathrm{H} / 16) \times (\mathrm{W} / 16) $ & $(\mathrm{H} / 16) \times (\mathrm{W} / 16) $ & $(\mathrm{H} / 32) \times (\mathrm{W} / 32) $ \\
            Channels    & $4$                                                                          & $32$                                          & $32$                                          & 32                                            \\ \bottomrule
        \end{tabular}}
\end{table}

\subsubsection{Detailed Neural Architecture}
\label{app:neural_architecture}

Diffusion Transformers (DiTs) represent a paradigm shift in generative modeling by replacing the traditional U-Net backbone with a Transformer-based architecture.
Proposed by \textit{Scalable Diffusion Models with Transformers} \cite{peebles2023scalable}, DiTs exhibit superior scalability and performance in image generation tasks.
In this paper, we utilize three key variants—DiT-B (130M parameters), DiT-L (458M parameters), and DiT-XL (675M parameters).

To improve training stability, informed by recent studies \citep{yao2025reconstruction,wang2025ddt}, we incorporate several architectural modifications into the DiT model: (a) SwiGLU feed-forward networks (FFN) \citep{shazeer2020glu}; (b) RMSNorm \citep{zhang2019root} without learnable affine parameters; (c) Rotary Positional Embeddings (RoPE) \citep{su2024roformer}; and (d) parameter-free RMSNorm applied to Key (K) and Query (Q) projections in self-attention layers \citep{vaswani2017attention}.

\subsubsection{Detailed Implementation Details}
\label{app:impl_details}

Experiments were conducted on a cluster equipped with $8$ H800 GPUs, each with $80$ GB of VRAM.

\paragraph{Hyperparameter configuration.}
Detailed hyperparameter configurations are provided in~\tabref{tab:hyper_param} to ensure reproducibility.
The design of time schedules for sampling processes varies in complexity.
For few-step models, typically employing 1 or 2 sampling steps, manual schedule design is straightforward.
However, the time schedule $\mathcal{T}$ utilized by our \method-S often comprises a large number of time points, particularly for a large number of sampling steps $N$.
Manual design of such dense schedules is challenging and can limit the achievable performance of our \methods, as prior work~\citep{yao2025reconstruction,wang2025ddt} has established that carefully designed schedules significantly enhance multi-step models, including flow-matching variants.
To address this, we propose transforming each time point $t \in \mathcal{T}$ using a generalized Kumaraswamy transformation: $f_{\textrm{Kuma}}(t;a,b,c) = (1 - (1 - t^a)^b)^c$.
This choice is motivated by the common practice in prior studies of applying non-linear transformations to individual time points to construct effective schedules.
A specific instance of such a transformation is the \texttt{timeshift} function $f_\textrm{shift}(t;s) = \frac{st}{1 + (s - 1)t}$, where $s > 0$ \citep{yao2025reconstruction}.
We find that the Kumaraswamy transformation, by appropriate selection of parameters $a,b,c$, can effectively approximate $f_\textrm{shift}$ and other widely-used functions (cf., \appref{app:kuma_transformation}), including the identity function $f(t)=t$~\citep{yu2024representation,leng2025repa}.
Empirical evaluations suggest that the parameter configuration $(a, b, c) = (1.17, 0.8, 1.1)$ yields robust performance across diverse scenarios, corresponding to the "Auto" setting in \tabref{tab:hyper_param}.

\paragraph{Detailed implementation techniques of enhancing target score function.}
We enhance the target score function for conditional diffusion models by modifying the standard score $\nabla_{\xx_t}\log p_t(\xx_t | \cc)$ \citep{song2020score} to an enhanced version derived from the density $p_t(\xx_t|\cc) \left(\nicefrac{p_{t,\mtheta}(\xx_t|\cc)}{p_{t,\mtheta}(\xx_t)}\right)^\zeta$. This corresponds to a target score of $\nabla_{\xx_t}\log p_t(\xx_t|\cc) + \zeta \left(\nabla_{\xx_t} \log p_{t,\mtheta}(\xx_t|\cc) - \nabla_{\xx_t} \log p_{t,\mtheta}(\xx_t)\right)$. The objective is to guide the learning process towards distributions that yield higher quality conditional samples.

Accurate estimation of the model probabilities $p_{t,\mtheta}$ is crucial for the effectiveness of this enhancement. We find that using parameters from an Exponential Moving Average (EMA) of the model during training improves the stability and quality of these estimates, resulting better $\xx^\star$ and $\zz^\star$ in \algref{alg:udmm}.

When training few-step models, direct computation of the enhanced target score gradient typically requires evaluating the model with and without conditioning (for the $p_{t,\mtheta}$ terms), incurring additional computational cost. To address this, we propose an efficient approximation that leverages a well-pre-trained multi-step model, denoted by parameters $\mtheta^\star$. Instead of computing the score gradient explicitly, the updates for the variables $\xx^\star$ and $\zz^\star$ (as used in \algref{alg:udmm}) are calculated based on features or outputs derived from a single forward pass of the pre-trained model $\mtheta^\star$.

Specifically, we compute $\mmF_t = \mmF_{\mtheta^\star}(\xx_t, t)$, representing features extracted by the pre-trained model $\mtheta^\star$ at time $t$ given input $\xx_t$. The enhanced updates $\xx^\star$ and $\zz^\star$ are then computed as follows:
\begin{enumerate}[label=(\alph*), nosep, leftmargin=16pt]
    \item For $t \in [0, s]$, the updates are:
          $
              \xx^\star \gets \xx + \zeta \cdot \left(\mf^{\mathrm{\xx}}(\mmF_t, \xx_t, t) - \xx\right)
          $,
          $
              \zz^\star \gets \zz + \zeta \cdot \left(\mf^{\mathrm{\zz}}(\mmF_t, \xx_t, t) - \zz\right)
          $.
    \item For $t \in (s, 1]$, the updates are:
          $
              \xx^\star \gets \xx + \frac{1}{2} \left(\mf^{\mathrm{\xx}}(\mmF_t, \xx_t, t) - \xx\right)
          $ and
          $
              \zz^\star \gets \zz + \frac{1}{2} \left(\mf^{\mathrm{\zz}}(\mmF_t, \xx_t, t) - \zz\right)
          $.
\end{enumerate}
We consistently set the time threshold $s = 0.75$. This approach allows us to incorporate the guidance from the enhanced target signal with the computational cost equivalent to a single forward evaluation of the pre-trained model $\mtheta^\star$ per step.
The enhancement ratio $\zeta$ is constrained to $[0, \infty)$ in this case.

\begin{table}[t]
    \caption{\small{
            \textbf{Hyperparameter configurations for \methods training and sampling on ImageNet-1K.}
            We maintain a consistent batch size of 1024 across all experiments. Training durations (epoch counts) are provided in other tables throughout the paper. The table specifies optimizer choices, learning rates, and key parameters for both \method-T and \method-S variants across different model architectures and datasets.}
    }
    \label{tab:hyper_param}
    \resizebox{\textwidth}{!}{
        \begin{tabular}{@{}cccccccccccccc@{}}
            \toprule
            \multicolumn{3}{c|}{Task} & \multicolumn{3}{c|}{Optimizer} & \multicolumn{4}{c|}{\method-T} & \multicolumn{4}{c}{\method-S}                                                                                                                                                                                \\ \midrule
            Resolution                & VAE/AE                         & \multicolumn{1}{c|}{Model}     & Type                          & lr     & \multicolumn{1}{c|}{($\beta_1$,$\beta_2$)} & Transport & ($\theta_1$,$\theta_2$) & $\lambda$ & \multicolumn{1}{c|}{$\zeta$} & $\rho$ & $\kappa$ & $\cT$     & $\nu$ \\ \midrule
            \multicolumn{14}{c}{Multi-step model training and sampling}                                                                                                                                                                                                                                                \\ \midrule
                                      & E2E-VAE                        & \multicolumn{1}{c|}{XL/1}      & AdamW                         & 0.0002 & \multicolumn{1}{c|}{(0.9,0.95)}            & Linear    & (1.0,1.0)               & 0         & \multicolumn{1}{c|}{0.67}    & 0      & 0.5      & Auto      & 1     \\
            256                       & SD-VAE                         & \multicolumn{1}{c|}{XL/2}      & AdamW                         & 0.0002 & \multicolumn{1}{c|}{(0.9,0.95)}            & Linear    & (2.4,2.4)               & 0         & \multicolumn{1}{c|}{0.44}    & 0      & 0.21     & Auto      & 1     \\
                                      & VA-VAE                         & \multicolumn{1}{c|}{XL/1}      & AdamW                         & 0.0002 & \multicolumn{1}{c|}{(0.9,0.95)}            & Linear    & (1.0,1.0)               & 0         & \multicolumn{1}{c|}{0.47}    & 0      & 0.5      & Auto      & 1     \\ \midrule
            512                       & DC-AE                          & \multicolumn{1}{c|}{XL/1}      & AdamW                         & 0.0002 & \multicolumn{1}{c|}{(0.9,0.95)}            & Linear    & (1.0,1.0)               & 0         & \multicolumn{1}{c|}{0.57}    & 0      & 0.46     & Auto      & 1     \\
                                      & SD-VAE                         & \multicolumn{1}{c|}{XL/4}      & AdamW                         & 0.0002 & \multicolumn{1}{c|}{(0.9,0.95)}            & Linear    & (2.4,2.4)               & 0         & \multicolumn{1}{c|}{0.60}    & 0      & 0.4      & Auto      & 1     \\ \midrule
            \multicolumn{14}{c}{Few-step model training and sampling}                                                                                                                                                                                                                                                  \\ \midrule
                                      & E2E-VAE                        & \multicolumn{1}{c|}{XL/1}      & RAdam                         & 0.0001 & \multicolumn{1}{c|}{(0.9,0.999)}           & Linear    & (0.8,1.0)               & 1         & \multicolumn{1}{c|}{1.3}     & 1      & 0        & \{1,0.5\} & 1     \\
            256                       & SD-VAE                         & \multicolumn{1}{c|}{XL/2}      & RAdam                         & 0.0001 & \multicolumn{1}{c|}{(0.9,0.999)}           & Linear    & (0.8,1.0)               & 1         & \multicolumn{1}{c|}{2.0}     & 1      & 0        & \{1,0.3\} & 1     \\
                                      & VA-VAE                         & \multicolumn{1}{c|}{XL/2}      & RAdam                         & 0.0001 & \multicolumn{1}{c|}{(0.9,0.999)}           & Linear    & (0.8,1.0)               & 1         & \multicolumn{1}{c|}{2.0}     & 1      & 0        & \{1,0.3\} & 1     \\ \midrule
            512                       & DC-AE                          & \multicolumn{1}{c|}{XL/1}      & RAdam                         & 0.0001 & \multicolumn{1}{c|}{(0.9,0.999)}           & Linear    & (0.8,1.0)               & 1         & \multicolumn{1}{c|}{1.5}     & 1      & 0        & \{1,0.6\} & 1     \\
                                      & SD-VAE                         & \multicolumn{1}{c|}{XL/4}      & RAdam                         & 0.0001 & \multicolumn{1}{c|}{(0.9,0.999)}           & Linear    & (0.8,1.0)               & 1         & \multicolumn{1}{c|}{1.5}     & 1      & 0        & \{1,0.5\} & 1     \\ \bottomrule
        \end{tabular}}
    \vspace{-1em}
\end{table}

\begin{table}[t]
    \centering
    \caption{\textbf{Comparison of different transport types employed during the sampling and training phases of our \methods.} ``TrigLinear'' and ``Random'' are introduced herein specifically for ablation studies. ``TrigLinear'' is constructed by combining the transport coefficients of ``Linear'' and ``TrigFlow''. ``Random'' represents a randomly designed transport type used to demonstrate the generality of our \method. Other transport types are adapted from existing methods and transformed into the transport coefficient representation used by \method.}
    \label{tab:transport}
    \setlength\tabcolsep{11.5pt}
    \resizebox{\textwidth}{!}{
        \begin{tabular}{@{}c|c|c|c|c|c|c@{}}
            \toprule
                              & Linear                                 & ReLinear                                  & TrigFlow                               & EDM ($\sigma(t)=e^{4\cdot(2.68t-1.59)}$)                              & TrigLinear                  & Random                      \\ \midrule
            $\alpha(t)$       & $t$                                    & $1-t$                                     & $\sin(t\cdot\frac{\pi}{2})$            & $\nicefrac{\sigma(t)}{\sqrt{\sigma^2(t)+0.25}}$                       & $\sin(t\cdot\frac{\pi}{2})$ & $\sin(t\cdot\frac{\pi}{2})$ \\
            $\gamma(t)$       & $1-t$                                  & $t$                                       & $\cos(t\cdot\frac{\pi}{2})$            & $\nicefrac{1}{\sqrt{\sigma^2(t)+0.25}}$                               & $\cos(t\cdot\frac{\pi}{2})$ & $1-t$                       \\
            $\hat{\alpha}(t)$ & $1$                                    & $-1$                                      & $\cos(t\cdot\frac{\pi}{2})$            & $\nicefrac{-0.5}{\sqrt{\sigma^2(t)+0.25}}$                            & $1$                         & $1$                         \\
            $\hat{\gamma}(t)$ & $-1$                                   & $1$                                       & $-\sin(t\cdot\frac{\pi}{2})$           & $\nicefrac{2\sigma(t)}{\sqrt{\sigma^2(t)+0.25}}$                      & $-1$                        & $-1 - e^{-5t}$              \\ \midrule
            e.g.,             & \citep{ma2024sit,yu2024representation} & \citep{yao2025reconstruction,wang2025ddt} & \citep{lu2024simplifying,chen2025sana} & \citep{song2023consistency,karras2024analyzing,karras2022elucidating} & N/A                         & N/A                         \\ \bottomrule
        \end{tabular}}
    \vspace{-1em}
\end{table}

\paragraph{Baselines.}
We compare our approach against several SOTA continuous and discrete generative models.
We broadly categorize these baselines by their generation process:
\begin{enumerate}[label=(\alph*), nosep, leftmargin=16pt]
    \item Multi-step models. These methods typically synthesize data through a sequence of steps. We include various diffusion models, encompassing classical formulations like DDPM and score-based models \citep{song2020denoising,ho2020denoising}, and advanced variants focusing on improved sampling or performance in latent spaces \citep{dhariwal2021diffusion,karras2022elucidating,peebles2023scalable,zheng2023fast,bao2023all}. We also consider flow-matching models \citep{lipman2022flow}, which leverage continuous normalizing flows and demonstrate favorable training properties, along with subsequent scaling efforts \citep{ma2024sit,yu2024representation,yao2025reconstruction}.
          Additionally, we also include autoregressive models \citep{li2024autoregressive,tian2024visual,yu2023language} as the baselines, which generate data sequentially, often in discrete domains.
    \item Few-step models. These models are designed for efficient, often single-step or few-step, generation. This category includes generative adversarial networks \citep{goodfellow2020generative}, which achieve efficient one-step synthesis through adversarial training, and their large-scale variants \citep{brock2018large,sauer2022stylegan,kang2023scaling}. We also evaluate consistency models \citep{song2023consistency}, proposed for high-quality generation adaptable to few sampling steps, and subsequent techniques aimed at improving their stability and scalability \citep{song2023improved,lu2024simplifying,zhou2025inductive}.
\end{enumerate}

Crucially, we demonstrate the compatibility of \method-S with models pre-trained using these methods.
We show how these models can be represented within the \method framework by defining the functions $\alpha(\cdot)$, $\gamma(\cdot)$, $\hat{\alpha}(\cdot)$, and $\hat{\gamma}(\cdot)$. Detailed parameterizations are provided in~\tabref{tab:transport}, with guidance for their specification presented in~\appref{app:cal_transport}.

\vspace{-0.5em}
\subsection{Experimental Results on Small Datasets}

Since most existing few-step generation methods~\citep{song2023consistency,geng2024consistency} are limited to training models on low-resolution, small-scale datasets like CIFAR-10~\citep{krizhevsky2009learning}, we conduct our comparative experiments on CIFAR-10 to ensure fair comparison.
To demonstrate the versatility of our \method, we employ both the "EDM" transport (see \tabref{tab:transport} for definition) and the standard $56\mathrm{M}$-parameter UNet architecture, following established practices in prior work~\cite{song2023consistency,geng2024consistency}.
\begin{table*}[h!]
    \centering
    \caption{\small{\textbf{System-level quality comparison for few-step generation task on unconditional CIFAR-10 ($32\times32$).}}}
    \label{tab:cf10}
    \resizebox{\textwidth}{!}{%
        \begin{tabular}{@{}l|ccccccccccccccc@{}}
            \toprule
            \textbf{Metric}             & PD~\citep{salimans2022progressive} & 2-RF~\citep{liu2022flow} & DMD~\citep{yin2024one} & CD~\citep{song2023consistency} & sCD~\citep{lu2024simplifying} & \multicolumn{2}{c}{iCT~\citep{song2023improved}} & \multicolumn{2}{c}{ECT~\citep{geng2024consistency}} & \multicolumn{2}{c}{sCT~\citep{lu2024simplifying}} & \multicolumn{2}{c}{{IMM}~\citep{zhou2025inductive}} & \multicolumn{2}{c}{\textbf{\method}}                                               \\ \midrule
            \textbf{FID} ($\downarrow$) & 4.51                               & 4.85                     & 3.77                   & 2.93                           & 2.52                          & 2.83                                             & 2.46                                                & 3.60                                              & 2.11                                                & 2.97                                 & 2.06 & 3.20 & {1.98} & \textbf{2.82} & 2.17 \\
            \textbf{NFE} ($\downarrow$) & 2                                  & 1                        & 1                      & 2                              & 2                             & 1                                                & 2                                                   & 1                                                 & 2                                                   & 1                                    & 2    & 1    & 2      & 1             & 2    \\ \bottomrule
        \end{tabular}}
    \vspace{-0.6em}
\end{table*}
As shown in \tabref{tab:cf10}, our \method achieves SOTA performance with just 1 NFE (Neural Function Evaluation) while maintaining competitive results for 2 NFEs.
These results underscore \method's robust compatibility across diverse datasets, network architectures, and transport types.

\newpage

\subsection{Detailed Comparison with SOTA Methods for Multi-step Generation}
\label{app:sota_multi}

\begin{table*}[h!]
    \centering
    \caption{\small{\textbf{System-level quality comparison for multi-step generation task on class-conditional ImageNet-1K.} Notation A\rp B denotes the result obtained by combining methods A and B. \dg{}/\up{} indicate a decrease/increase, respectively, in the metric compared to the baseline performance of the pre-trained models.}}
    \label{tab:in512_multistep}
    \resizebox{\textwidth}{!}{%
        \begin{tabular}{lccccccc|ccc}
            \toprule
            \textbf{METHOD}                         & \textbf{VAE/AE}                      & \textbf{Patch Size} & \textbf{Activation Size} & \textbf{NFE} ($\downarrow$) & \textbf{FID} ($\downarrow$) & \textbf{IS} ($\uparrow$) & \textbf{\#Params} & \textbf{\#Epochs} \\
            \midrule
            \multicolumn{9}{c}{\textbf{ $512 \times 512$}}                                                                                                                                                                                                                 \\
            \midrule
            \multicolumn{9}{c}{\textbf{Diffusion \& flow-matching models}}                                                                                                                                                                                                 \\
            \midrule
            ADM-G~\citep{dhariwal2021diffusion}     & -                                    & -                   & -                        & 250$\times$2                & 7.72                        & 172.71                   & 559M              & 388               \\
            U-ViT-H/4~\citep{bao2023all}            & SD-VAE~\citep{rombach2022high}       & 4                   & 16$\times$16             & 50$\times$2                 & 4.05                        & 263.79                   & 501M              & 400               \\
            DiT-XL/2~\citep{peebles2023scalable}    & SD-VAE~\citep{rombach2022high}       & 2                   & 32$\times$32             & 250$\times$2                & 3.04                        & 240.82                   & 675M              & 600               \\
            SiT-XL/2~\citep{ma2024sit}              & SD-VAE~\citep{rombach2022high}       & 2                   & 32$\times$32             & 250$\times$2                & 2.62                        & 252.21                   & 675M              & 600               \\
            MaskDiT~\citep{zheng2023fast}           & SD-VAE~\citep{rombach2022high}       & 2                   & 32$\times$32             & 79$\times$2                 & 2.50                        & 256.27                   & 736M              & -                 \\
            EDM2-S~\citep{karras2024analyzing}      & SD-VAE~\citep{rombach2022high}       & -                   & -                        & 63                          & 2.56                        & -                        & 280M              & 1678              \\
            EDM2-L~\citep{karras2024analyzing}      & SD-VAE~\citep{rombach2022high}       & -                   & -                        & 63                          & 2.06                        & -                        & 778M              & 1476              \\
            EDM2-XXL~\citep{karras2024analyzing}    & SD-VAE~\citep{rombach2022high}       & -                   & -                        & 63                          & 1.91                        & -                        & 1.5B              & 734               \\
            DiT-XL/1\rp\citep{chen2024deep}         & DC-AE~\citep{chen2024deep}           & 1                   & 16$\times$16             & 250$\times$2                & 2.41                        & 263.56                   & 675M              & 400               \\
            U-ViT-H/1\rp\citep{chen2024deep}        & DC-AE~\citep{chen2024deep}           & 1                   & 16$\times$16             & 30$\times$2                 & 2.53                        & 255.07                   & 501M              & 400               \\
            REPA-XL/2~\citep{yu2024representation}  & SD-VAE~\citep{rombach2022high}       & 2                   & 32$\times$32             & 250$\times$2                & 2.08                        & 274.6                    & 675M              & 200               \\
            DDT-XL/2~\citep{wang2025ddt}            & SD-VAE~\citep{rombach2022high}       & 2                   & 32$\times$32             & 250$\times$2                & \textbf{1.28}               & \textbf{305.1}           & 675M              & -                 \\
            \midrule
            \multicolumn{9}{c}{\textbf{GANs \& masked \& autoregressive models}}                                                                                                                                                                                           \\
            \midrule
            VQGAN\rp\citep{esser2021taming}         & -                                    & -                   & -                        & 256                         & 18.65                       & -                        & 227M              & -                 \\
            MAGVIT-v2~\citep{yu2023language}        & -                                    & -                   & -                        & 64$\times$2                 & 1.91                        & 324.3                    & 307M              & 1080              \\
            MAR-L~\citep{li2024autoregressive}      & -                                    & -                   & -                        & 256$\times$2                & \textbf{1.73}               & 279.9                    & 479M              & 800               \\
            VAR-$d$36-s~\citep{tian2024visual}      & -                                    & -                   & -                        & 10$\times$2                 & 2.63                        & 303.2                    & 2.3B              & 350               \\
            \midrule
            \multicolumn{9}{c}{\textbf{Ours: \method-S sampling with models trained by prior works}}                                                                                                                                                                       \\
            \midrule
            EDM2-S~\citep{karras2024analyzing}      & SD-VAE~\citep{rombach2022high}       & -                   & -                        & 40\dg{23}                   & 2.53\dg{0.03}               & -                        & 280M              & -                 \\
            EDM2-L~\citep{karras2024analyzing}      & SD-VAE~\citep{rombach2022high}       & -                   & -                        & 50\dg{13}                   & 2.04\dg{0.02}               & -                        & 778M              & -                 \\
            EDM2-XXL~\citep{karras2024analyzing}    & SD-VAE~\citep{rombach2022high}       & -                   & -                        & 40\dg{23}                   & 1.88\dg{0.03}               & -                        & 1.5B              & -                 \\
            DDT-XL/2~\citep{wang2025ddt}            & SD-VAE~\citep{rombach2022high}       & 2                   & 32$\times$32             & 200\dg{300}                 & \textbf{1.25}\dg{0.03}      & -                        & 675M              & -                 \\
            \midrule
            \multicolumn{9}{c}{\textbf{Ours: models trained and sampled using \methods~(setting $\lambda=0$)}}                                                                                                                                                             \\
            \midrule
            Ours-XL/1                               & DC-AE~\citep{chen2024deep}           & 1                   & 16$\times$16             & 40                          & \textbf{1.48}               & -                        & 675M              & 800               \\
            Ours-XL/1                               & DC-AE~\citep{chen2024deep}           & 1                   & 16$\times$16             & 20                          & 1.68                        & -                        & 675M              & 800               \\
            Ours-XL/4                               & SD-VAE~\citep{rombach2022high}       & 4                   & 16$\times$16             & 40                          & 1.67                        & -                        & 675M              & 320               \\
            Ours-XL/4                               & SD-VAE~\citep{rombach2022high}       & 4                   & 16$\times$16             & 20                          & 1.80                        & -                        & 675M              & 320               \\
            \midrule
            \multicolumn{9}{c}{\textbf{ $256 \times 256$}}                                                                                                                                                                                                                 \\
            \midrule
            \multicolumn{8}{c}{\textbf{Diffusion \& flow-matching models}}                                                                                                                                                                                                 \\
            \midrule
            ADM-G~\citep{dhariwal2021diffusion}     & -                                    & -                   & -                        & 250$\times$2                & 4.59                        & 186.70                   & 559M              & 396               \\
            U-ViT-H/2~\citep{bao2023all}            & SD-VAE~\citep{rombach2022high}       & 2                   & 16 $\times$ 16           & 50$\times$2                 & 2.29                        & 263.88                   & 501M              & 400               \\
            DiT-XL/2~\citep{peebles2023scalable}    & SD-VAE~\citep{rombach2022high}       & 2                   & 16 $\times$ 16           & 250$\times$2                & 2.27                        & 278.24                   & 675M              & 1400              \\
            SiT-XL/2~\citep{ma2024sit}              & SD-VAE~\citep{rombach2022high}       & 2                   & 16 $\times$ 16           & 250$\times$2                & 2.06                        & 277.50                   & 675M              & 1400              \\
            MDT~\citep{gao2023masked}               & SD-VAE~\citep{rombach2022high}       & 2                   & 16 $\times$ 16           & 250$\times$2                & 1.79                        & 283.01                   & 675M              & 1300              \\
            REPA-XL/2~\citep{yu2024representation}  & SD-VAE~\citep{rombach2022high}       & 2                   & 16 $\times$ 16           & 250$\times$2                & 1.96                        & 264.0                    & 675M              & 200               \\
            REPA-XL/2~\citep{yu2024representation}  & SD-VAE~\citep{rombach2022high}       & 2                   & 16 $\times$ 16           & 250$\times$2                & 1.42                        & 305.7                    & 675M              & 800               \\
            Light.DiT~\citep{yao2025reconstruction} & VA-VAE~\citep{yao2025reconstruction} & 1                   & 16 $\times$ 16           & 250$\times$2                & 2.11                        & -                        & 675M              & 64                \\
            Light.DiT~\citep{yao2025reconstruction} & VA-VAE~\citep{yao2025reconstruction} & 1                   & 16 $\times$ 16           & 250$\times$2                & 1.35                        & -                        & 675M              & 800               \\
            DDT-XL/2~\citep{wang2025ddt}            & SD-VAE~\citep{rombach2022high}       & 2                   & 16 $\times$ 16           & 250$\times$2                & 1.31                        & 308.1                    & 675M              & 256               \\
            DDT-XL/2~\citep{wang2025ddt}            & SD-VAE~\citep{rombach2022high}       & 2                   & 16 $\times$ 16           & 250$\times$2                & 1.26                        & 310.6                    & 675M              & 400               \\
            REPA-E-XL~\citep{leng2025repa}          & E2E-VAE\citep{leng2025repa}          & 1                   & 16 $\times$ 16           & 250$\times$2                & \textbf{1.26}               & 314.9                    & 675M              & 800               \\
            \midrule
            \multicolumn{8}{c}{\textbf{GANs \& masked \& autoregressive  models}}                                                                                                                                                                                          \\
            \midrule
            VQGAN\rp\citep{sun2024autoregressive}   & -                                    & -                   & -                        & -                           & 2.18                        & -                        & 3.1B              & 300               \\
            MAR-L~\citep{li2024autoregressive}      & -                                    & -                   & -                        & 256$\times$2                & 1.78                        & 296.0                    & 479M              & 800               \\
            MAR-H~\citep{li2024autoregressive}      & -                                    & -                   & -                        & 256$\times$2                & \textbf{1.55}               & 303.7                    & 943M              & 800               \\
            VAR-$d$30-re~\citep{tian2024visual}     & -                                    & -                   & -                        & 10$\times$2                 & 1.73                        & 350.2                    & 2.0B              & 350               \\
            \midrule
            \multicolumn{8}{c}{\textbf{Ours: \method-S sampling with models trained by prior works}}                                                                                                                                                                       \\
            \midrule
            DDT-XL/2~\citep{wang2025ddt}            & SD-VAE~\citep{rombach2022high}       & 2                   & 16 $\times$ 16           & 100\dg{400}                 & 1.27\up{0.01}               & -                        & 675M              & -                 \\
            Light.DiT~\citep{yao2025reconstruction} & VA-VAE~\citep{yao2025reconstruction} & 1                   & 16 $\times$ 16           & 100\dg{400}                 & 1.21\dg{0.14}               & -                        & 675M              & -                 \\
            REPA-E-XL~\citep{leng2025repa}          & E2E-VAE\citep{leng2025repa}          & 1                   & 16 $\times$ 16           & 80\dg{420}                  & \textbf{1.06}\dg{0.20}      & -                        & 675M              & -                 \\
            REPA-E-XL~\citep{leng2025repa}          & E2E-VAE\citep{leng2025repa}          & 1                   & 16 $\times$ 16           & 20\dg{480}                  & 2.00\up{0.74}               & -                        & 675M              & -                 \\
            \midrule
            \multicolumn{8}{c}{\textbf{Ours: models trained and sampled using \methods~(setting $\lambda=0$)}}                                                                                                                                                             \\
            \midrule
            Ours-XL/2                               & SD-VAE~\citep{rombach2022high}       & 2                   & 16 $\times$ 16           & 60                          & 1.41                        & -                        & 675M              & 400               \\
            Ours-XL/1                               & VA-VAE~\citep{yao2025reconstruction} & 1                   & 16 $\times$ 16           & 60                          & 1.21                        & -                        & 675M              & 400               \\
            Ours-XL/1                               & E2E-VAE~\citep{leng2025repa}         & 1                   & 16 $\times$ 16           & 40                          & \textbf{1.21}               & -                        & 675M              & 800               \\
            Ours-XL/1                               & E2E-VAE~\citep{leng2025repa}         & 1                   & 16 $\times$ 16           & 20                          & 1.30                        & -                        & 675M              & 800               \\
            \bottomrule
        \end{tabular}%
    }
\end{table*}

\subsection{Detailed Comparison with SOTA Methods for Few-step Generation}

\begin{table*}[h!]
    \centering
    \caption{\small{\textbf{System-level quality comparison for few-step generation task on class-conditional ImageNet-1K ($512\times512$).}}}
    \label{tab:in512_fewsteps}
    \resizebox{\textwidth}{!}{%
        \begin{tabular}{lccccccccc}
            \toprule
            \textbf{METHOD}                     & \textbf{VAE/AE}                      & \textbf{Patch Size} & \textbf{Activation Size} & \textbf{NFE} ($\downarrow$) & \textbf{FID} ($\downarrow$) & \textbf{IS} & \textbf{\#Params} & \textbf{\#Epochs} \\
            \midrule
            \multicolumn{9}{c}{\textbf{ $512 \times 512$}}                                                                                                                                                                                                \\
            \midrule
            \multicolumn{9}{c}{\textbf{Consistency training \& distillation}}                                                                                                                                                                             \\
            \midrule
            sCT-M~\citep{lu2024simplifying}     & -                                    & -                   & -                        & 1                           & 5.84                        & -           & 498M              & 1837              \\
            sCT-M~\citep{lu2024simplifying}     & -                                    & -                   & -                        & 2                           & 5.53                        & -           & 498M              & 1837              \\
            sCT-L~\citep{lu2024simplifying}     & -                                    & -                   & -                        & 1                           & 5.15                        & -           & 778M              & 1274              \\
            sCT-L~\citep{lu2024simplifying}     & -                                    & -                   & -                        & 2                           & 4.65                        & -           & 778M              & 1274              \\
            sCT-XXL~\citep{lu2024simplifying}   & -                                    & -                   & -                        & 1                           & 4.29                        & -           & 1.5B              & 762               \\
            sCT-XXL~\citep{lu2024simplifying}   & -                                    & -                   & -                        & 2                           & 3.76                        & -           & 1.5B              & 762               \\
            sCD-M~\citep{lu2024simplifying}     & -                                    & -                   & -                        & 1                           & 2.75                        & -           & 498M              & 1997              \\
            sCD-M~\citep{lu2024simplifying}     & -                                    & -                   & -                        & 2                           & 2.26                        & -           & 498M              & 1997              \\
            sCD-L~\citep{lu2024simplifying}     & -                                    & -                   & -                        & 1                           & 2.55                        & -           & 778M              & 1434              \\
            sCD-L~\citep{lu2024simplifying}     & -                                    & -                   & -                        & 2                           & 2.04                        & -           & 778M              & 1434              \\
            sCD-XXL~\citep{lu2024simplifying}   & -                                    & -                   & -                        & 1                           & 2.28                        & -           & 1.5B              & 921               \\
            sCD-XXL~\citep{lu2024simplifying}   & -                                    & -                   & -                        & 2                           & \textbf{1.88}               & -           & 1.5B              & 921               \\
            \midrule
            \multicolumn{9}{c}{\textbf{GANs \& masked \& autoregressive  models}}                                                                                                                                                                         \\
            \midrule
            BigGAN~\citep{brock2018large}       & -                                    & -                   & -                        & 1                           & 8.43                        & -           & 160M              & -                 \\
            StyleGAN~\citep{sauer2022stylegan}  & -                                    & -                   & -                        & 1$\times$2                  & \textbf{2.41}               & 267.75      & 168M              & -                 \\
            MAGVIT-v2~\citep{yu2023language}    & -                                    & -                   & -                        & 64$\times$2                 & 1.91                        & 324.3       & 307M              & 1080              \\
            VAR-$d$36-s~\citep{tian2024visual}  & -                                    & -                   & -                        & 10$\times$2                 & 2.63                        & 303.2       & 2.3B              & 350               \\
            \midrule
            \multicolumn{9}{c}{\textbf{Ours: models trained and sampled using \methods~(setting $\lambda=0$)}}                                                                                                                                            \\
            \midrule
            Ours-XL/1                           & DC-AE~\citep{chen2024deep}           & 1                   & 16$\times$16             & 32                          & \textbf{1.55}               & -           & 675M              & 800               \\
            Ours-XL/1                           & DC-AE~\citep{chen2024deep}           & 1                   & 16$\times$16             & 16                          & 1.81                        & -           & 675M              & 800               \\
            Ours-XL/1                           & DC-AE~\citep{chen2024deep}           & 1                   & 16$\times$16             & 8                           & 3.07                        & -           & 675M              & 800               \\
            Ours-XL/1                           & DC-AE~\citep{chen2024deep}           & 1                   & 16$\times$16             & 4                           & 74.0                        & -           & 675M              & 800               \\
            \midrule
            \multicolumn{9}{c}{\textbf{Ours: models trained and sampled using \methods~(setting $\lambda=1$)}}                                                                                                                                            \\
            \midrule
            Ours-XL/1                           & DC-AE~\citep{chen2024deep}           & 1                   & 16$\times$16             & 1                           & 2.42                        & -           & 675M              & 840               \\
            Ours-XL/1                           & DC-AE~\citep{chen2024deep}           & 1                   & 16$\times$16             & 2                           & \textbf{1.75}               & -           & 675M              & 840               \\
            Ours-XL/4                           & SD-VAE~\citep{rombach2022high}       & 4                   & 16$\times$16             & 1                           & 2.63                        & -           & 675M              & 360               \\
            Ours-XL/4                           & SD-VAE~\citep{rombach2022high}       & 4                   & 16$\times$16             & 2                           & 2.11                        & -           & 675M              & 360               \\
            \midrule
            \multicolumn{9}{c}{\textbf{ $256 \times 256$}}                                                                                                                                                                                                \\
            \midrule
            \multicolumn{9}{c}{\textbf{Consistency training \& distillation}}                                                                                                                                                                             \\
            \midrule
            iCT~\citep{song2023improved}        & -                                    & -                   & -                        & 2                           & 20.3                        & -           & 675M              & -                 \\
            Shortcut-XL/2~\citep{frans2024one}  & SD-VAE~\citep{rombach2022high}       & 2                   & 16$\times$16             & 1                           & 10.6                        & -           & 676M              & 250               \\
            Shortcut-XL/2~\citep{frans2024one}  & SD-VAE~\citep{rombach2022high}       & 2                   & 16$\times$16             & 4                           & 7.80                        & -           & 676M              & 250               \\
            Shortcut-XL/2~\citep{frans2024one}  & SD-VAE~\citep{rombach2022high}       & 2                   & 16$\times$16             & 128                         & 3.80                        & -           & 676M              & 250               \\
            IMM-XL/2~\citep{zhou2025inductive}  & SD-VAE~\citep{rombach2022high}       & 2                   & 16$\times$16             & 1$\times$2                  & 7.77                        & -           & 675M              & 3840              \\
            IMM-XL/2~\citep{zhou2025inductive}  & SD-VAE~\citep{rombach2022high}       & 2                   & 16$\times$16             & 2$\times$2                  & 5.33                        & -           & 675M              & 3840              \\
            IMM-XL/2~\citep{zhou2025inductive}  & SD-VAE~\citep{rombach2022high}       & 2                   & 16$\times$16             & 4$\times$2                  & 3.66                        & -           & 675M              & 3840              \\
            IMM-XL/2~\citep{zhou2025inductive}  & SD-VAE~\citep{rombach2022high}       & 2                   & 16$\times$16             & 8$\times$2                  & 2.77                        & -           & 675M              & 3840              \\
            IMM ($\omega=1.5$)                  & SD-VAE~\citep{rombach2022high}       & 2                   & 16$\times$16             & 1$\times$2                  & 8.05                        & -           & 675M              & 3840              \\
            IMM ($\omega=1.5$)                  & SD-VAE~\citep{rombach2022high}       & 2                   & 16$\times$16             & 2$\times$2                  & 3.99                        & -           & 675M              & 3840              \\
            IMM ($\omega=1.5$)                  & SD-VAE~\citep{rombach2022high}       & 2                   & 16$\times$16             & 4$\times$2                  & 2.51                        & -           & 675M              & 3840              \\
            IMM ($\omega=1.5$)                  & SD-VAE~\citep{rombach2022high}       & 2                   & 16$\times$16             & 8$\times$2                  & \textbf{1.99}               & -           & 675M              & 3840              \\
            \midrule
            \multicolumn{9}{c}{\textbf{GANs \& masked \& autoregressive  models}}                                                                                                                                                                         \\
            \midrule
            BigGAN~\citep{brock2018large}       & -                                    & -                   & -                        & 1                           & 6.95                        & -           & 112M              & -                 \\
            GigaGAN~\citep{kang2023scaling}     & -                                    & -                   & -                        & 1                           & 3.45                        & 225.52      & 569M              & -                 \\
            StyleGAN~\citep{sauer2022stylegan}  & -                                    & -                   & -                        & 1$\times$2                  & \textbf{2.30}               & 265.12      & 166M              & -                 \\
            VAR-$d$30-re~\citep{tian2024visual} & -                                    & -                   & -                        & 10$\times$2                 & 1.73                        & 350.2       & 2.0B              & 350               \\
            \midrule
            \multicolumn{9}{c}{\textbf{Ours: models trained and sampled using \methods~(setting $\lambda=0$)}}                                                                                                                                            \\
            \midrule
            Ours-XL/1                           & VA-VAE~\citep{yao2025reconstruction} & 1                   & 16$\times$16             & 16                          & 2.11                        & -           & 675M              & 400               \\
            Ours-XL/1                           & VA-VAE~\citep{yao2025reconstruction} & 1                   & 16$\times$16             & 8                           & 6.09                        & -           & 675M              & 400               \\
            Ours-XL/1                           & E2E-VAE~\citep{leng2025repa}         & 1                   & 16$\times$16             & 16                          & \textbf{1.40}               & -           & 675M              & 800               \\
            Ours-XL/1                           & E2E-VAE~\citep{leng2025repa}         & 1                   & 16$\times$16             & 8                           & 2.68                        & -           & 675M              & 800               \\
            \midrule
            \multicolumn{9}{c}{\textbf{Ours: models trained and sampled using \methods~(setting $\lambda=1$)}}                                                                                                                                            \\
            \midrule
            Ours-XL/1                           & VA-VAE~\citep{yao2025reconstruction} & 1                   & 16$\times$16             & 2                           & \textbf{1.42}               & -           & 675M              & 432               \\
            Ours-XL/1                           & VA-VAE~\citep{yao2025reconstruction} & 1                   & 16$\times$16             & 1                           & 2.19                        & -           & 675M              & 432               \\
            Ours-XL/2                           & SD-VAE~\citep{rombach2022high}       & 2                   & 16$\times$16             & 1                           & 2.10                        & -           & 675M              & 424               \\
            Ours-XL/1                           & E2E-VAE~\citep{leng2025repa}         & 1                   & 16$\times$16             & 1                           & 2.29                        & -           & 675M              & 264               \\
            \bottomrule
        \end{tabular}%
    }
\end{table*}

\newpage

\subsection{Case Studies}

In this section, we provide several case studies to intuitively illustrate the technical components proposed in this paper.

\subsubsection{Analysis of Consistency Ratio $\lambda$}

\begin{figure}[t]
    \centering
    \begin{subfigure}[b]{0.325\textwidth}
        \centering
        \includegraphics[width=\linewidth]{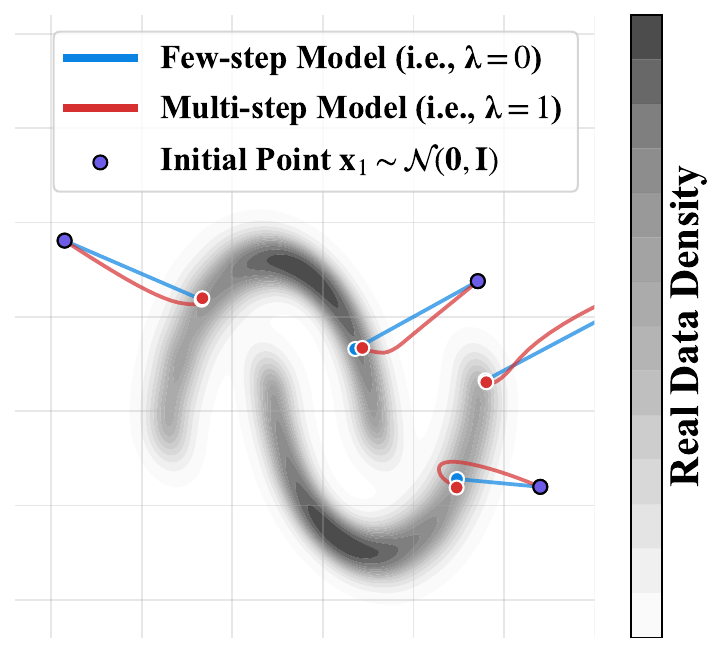}
        \caption{\textbf{Two Moons}}
        \label{fig:moons}
    \end{subfigure}
    \hfill
    \begin{subfigure}[b]{0.33\textwidth}
        \centering
        \includegraphics[width=\linewidth]{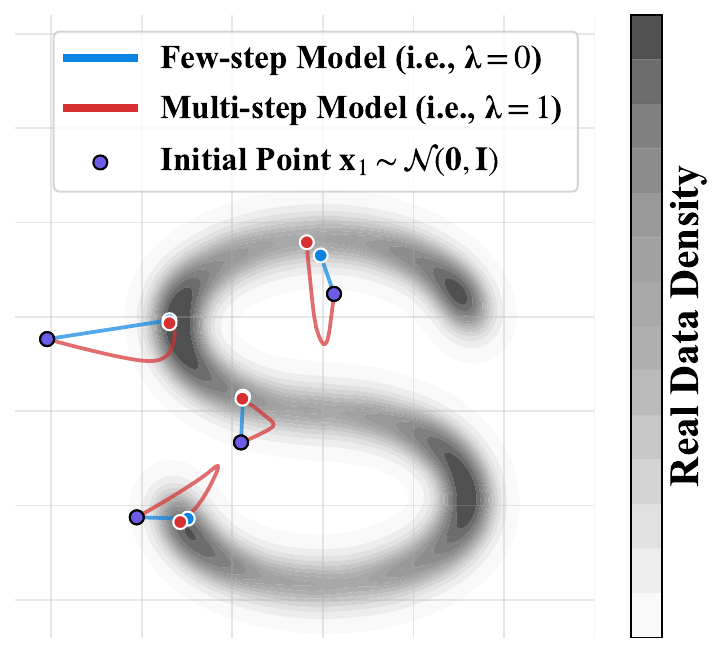}
        \caption{\textbf{S-Curve}}
        \label{fig:scurve}
    \end{subfigure}
    \hfill
    \begin{subfigure}[b]{0.325\textwidth}
        \centering
        \includegraphics[width=\linewidth]{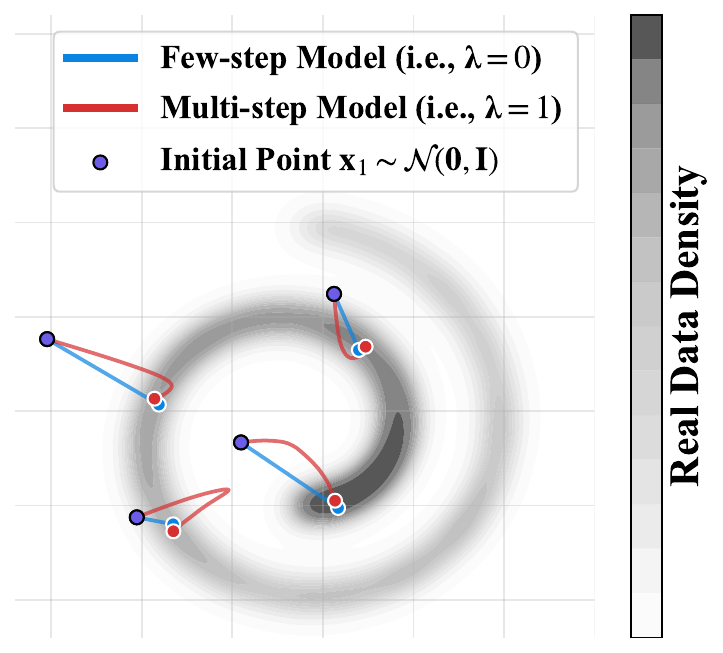}
        \caption{\textbf{Swiss Roll}}
        \label{fig:swissr}
    \end{subfigure}
    \caption{\small{
            \textbf{Case studies of \method~on three synthetic datasets.}
            These intuitive studies evaluate the ability of our \method~to capture the latent data structure for both few-step generation ($\lambda=1$) and multi-step generation ($\lambda=0$) tasks.
        }}
\end{figure}

\begin{figure*}[t!]
    \centering
    \includegraphics[width=\linewidth]{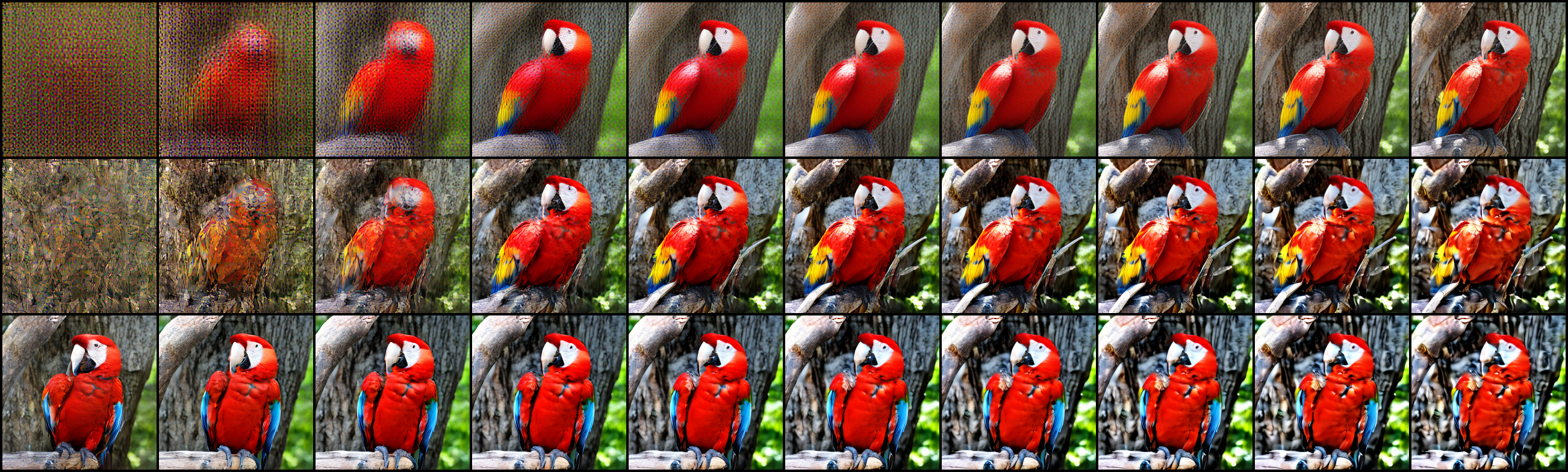}
    \caption{\small{
            \textbf{Intermediate images generated during $60$-step sampling from \method-S.}
            Columns display intermediate images $\hat{\mathbf{x}}_t$ produced at different timesteps $t$ during a single sampling trajectory, ordered from left to right by decreasing $t$.
            Rows correspond to models trained with $\lambda \in \{0.0, 0.5, 1.0\}$, ordered from top to bottom.
            Note that the initial noise for generating these images is the same.
        }}
    \label{fig:visualization}
\end{figure*}

We evaluate our approach on three synthetic benchmark datasets from \texttt{scikit-learn}~\citep{pedregosa2011scikit}: the Two Moons (non-linear separation, see~\figref{fig:moons}), S-Curve (manifold structure, see~\figref{fig:scurve}), and Swiss Roll (non-linear dimensionality reduction, see~\figref{fig:swissr}). These studies yield two primary observations:
\begin{enumerate}[label=(\alph*), nosep, leftmargin=16pt]
    \item Our \method~successfully captures the structure of the data distribution and maps initial points sampled from a Gaussian distribution to the target distribution, regardless of whether the task is few-step ($\lambda=1$) or multi-step ($\lambda=0$) generation.
    \item Models trained for multi-step ($\lambda=0$) and few-step ($\lambda=1$) generation map the same initial Gaussian noise to nearly identical target data points.
\end{enumerate}

To further validate these findings and explore additional properties of the consistency ratio $\lambda$, we conduct experiments on a real-world dataset (ImageNet-1K).
Specifically, we trained three models with three different settings of $\lambda \in \{0.0, 0.5, 1.0\}$.

The experimental results presented in \figref{fig:visualization} demonstrate the following:
\begin{enumerate}[label=(\alph*), nosep, leftmargin=16pt]
    \item For $\lambda=1.0$, high visual fidelity is achieved early in the sampling process. In contrast, for $\lambda=0.0$, high visual fidelity emerges in the mid to late stages. For $\lambda=0.5$, high-quality images appear in the mid-stage of sampling.
    \item Despite being trained with different settings of $\lambda$ values, the models produce remarkably similar generated images.
\end{enumerate}

In summary, we posit that while the setting of $\lambda$ affects the dynamics of the generation process, it does not substantially impact the final generated image quality. Detailed analysis of these phenomena is provided in~\appref{app:lambda_to_0},~\appref{app:lambda_to_1} and~\appref{app:optimal_solution}.

\subsubsection{Analysis of Transport Types}

\begin{figure*}[t!]
    \centering
    \includegraphics[width=\linewidth]{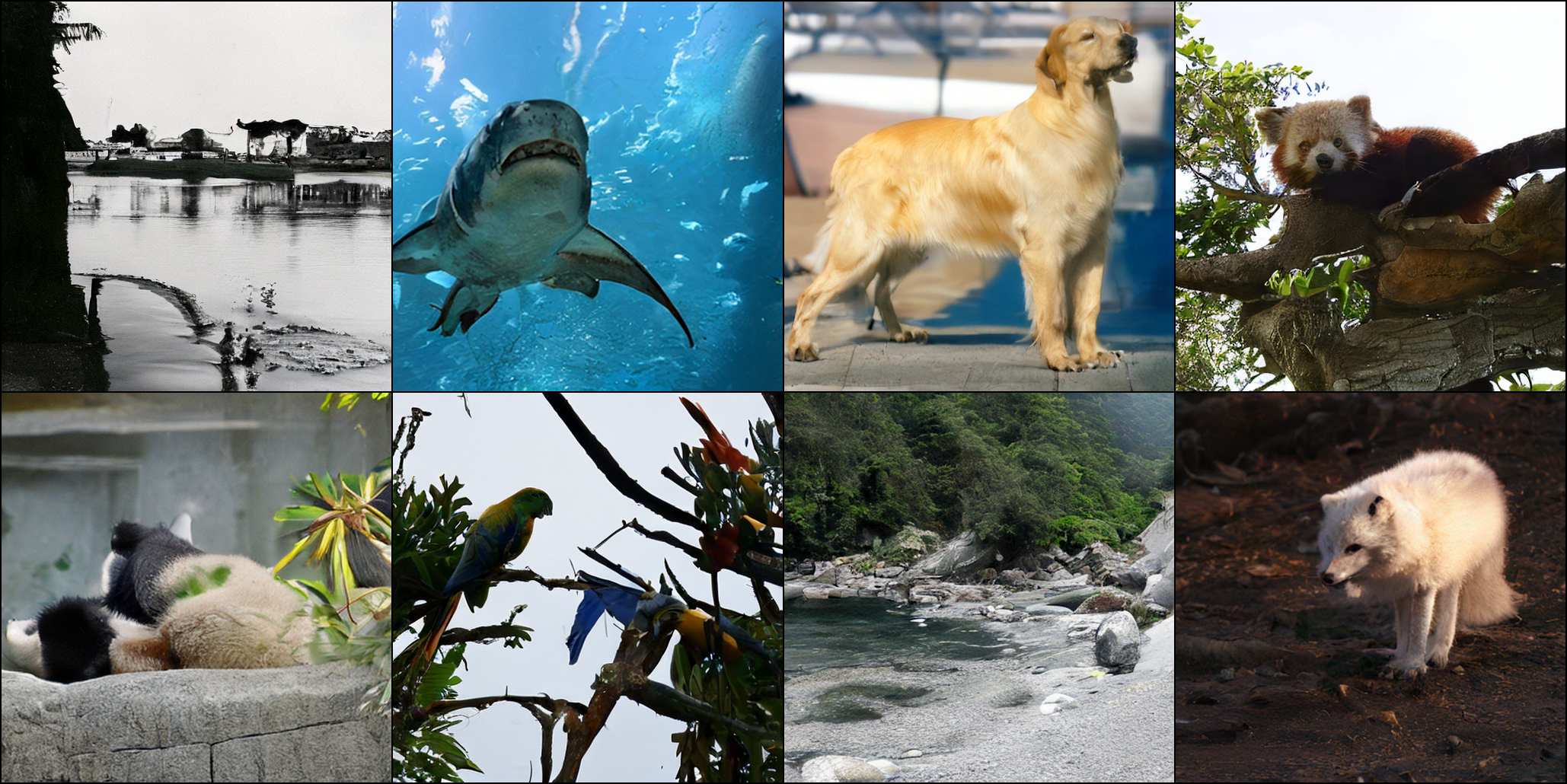}
    \caption{\small{
            \textbf{Visualization of generated images ($512\times512$) from pre-trained EDM2-S~\citep{karras2024analyzing}.}
        }}
    \label{fig:edm2_demo}
\end{figure*}

\begin{figure*}[t!]
    \centering
    \includegraphics[width=\linewidth]{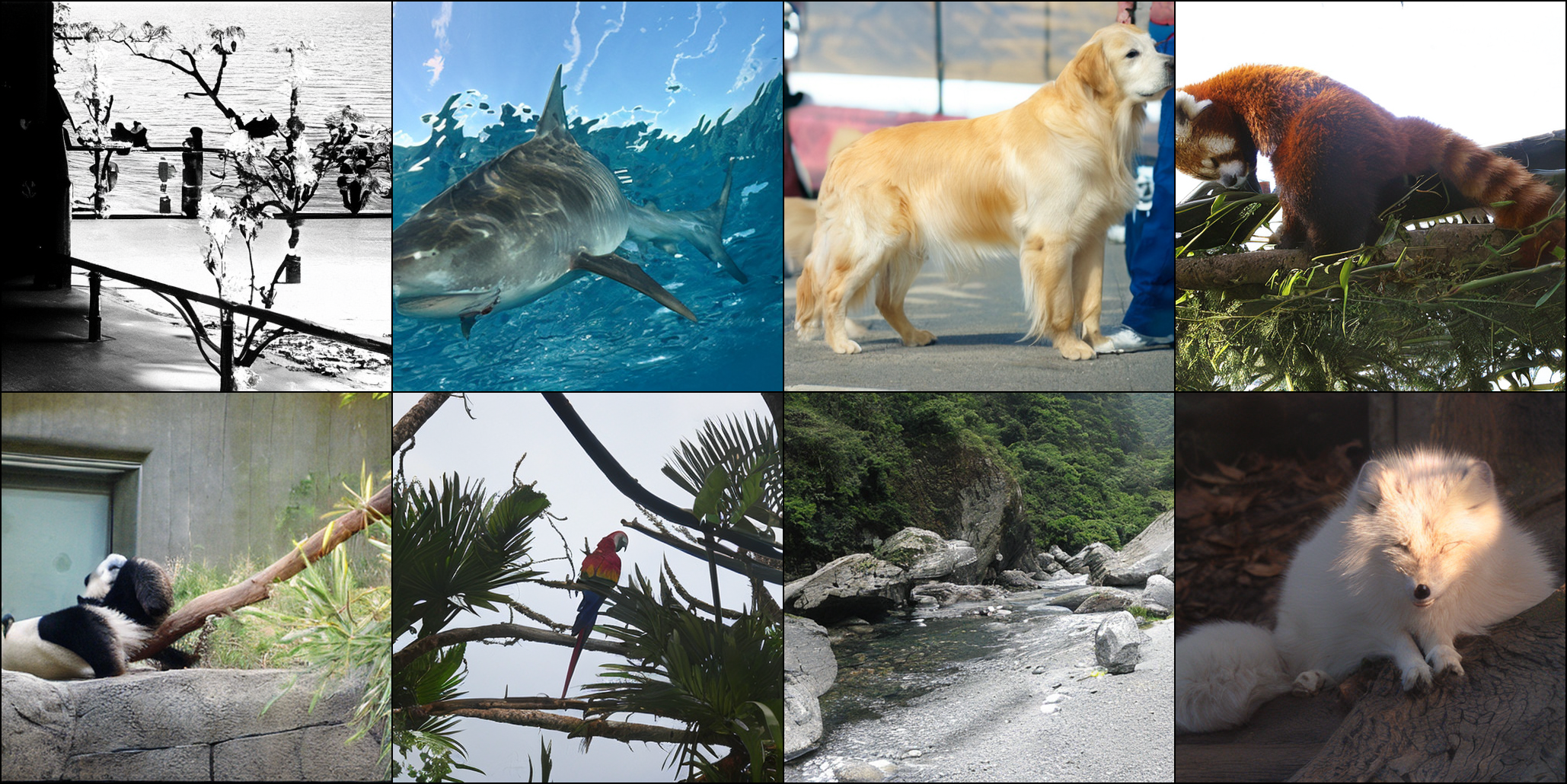}
    \caption{\small{
            \textbf{Visualization of generated images ($512\times512$) from pre-trained DDT-XL/2~\citep{wang2025ddt}.}
        }}
    \label{fig:ddt_demo}
\end{figure*}

Generated samples, obtained using \method-S with two distinct pre-trained models from prior works, are presented in \figref{fig:ddt_demo} and \figref{fig:edm2_demo}.
When using the identical initial Gaussian noise for both models, the generated images exhibit notable visual similarity.
This observation is unexpected, considering the models were trained independently~\citep{karras2024analyzing,wang2025ddt} using distinct algorithms, transport formulations, network architectures, and data augmentation strategies.
The similarity suggests that despite these differences, the learned probability flow ODEs may be converging to similar solutions.
See~\appref{app:close_lambda_0} for a comprehensive analysis of this phenomenon.

\section{Theoretical Analysis}
\label{app:theoretical_analysis}

\subsection{Main Results}

\subsubsection{Unified Training Objective}
\label{app:unified_training_objective}

\paragraph{Problem setup.}
Let \((\cV,\langle\cdot,\cdot\rangle)\) be a real inner-product space and
\(\mTheta\subseteq\mathbb{R}^p\) an open parameter domain.  We consider
\[
    \mmA:\mTheta\to \cV,
    \qquad
    \mmB\in\cV\quad(\text{constant w.r.t.\ }\mtheta\in\mTheta),
\]
and define the objective
\[
    \cJ(\mtheta)
    =\frac1\omega\,\bigl\|\mmA(\mtheta)-\mmB\bigr\|^2,
    \quad \omega>0.
\]
We denote by \(\nabla_{\mtheta}\mmA(\mtheta)\in\mathbb R^{p\times\dim\cV}\)
the Jacobian matrix of \(\mmA\).

\begin{lemma}[Gradient of a Squared Norm]
    \label{lem:grad-sqnorm}
    If \(\mv:\mTheta\to\cV\) is \(C^1\), then
    \[
        \nabla_{\mtheta}\bigl\|\mv(\mtheta)\bigr\|^2
        =2\,\bigl[\nabla_{\mtheta}\mv(\mtheta)\bigr]^\top\,\mv(\mtheta).
    \]
\end{lemma}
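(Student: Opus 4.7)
The plan is to reduce the claim to a componentwise computation and then repackage it in matrix form. First, I would invoke the identification $\norm{\mv(\mtheta)}^2 = \langle \mv(\mtheta), \mv(\mtheta)\rangle$, which is valid because $(\cV, \langle\cdot,\cdot\rangle)$ is a real inner-product space. Since $\mv$ is $C^1$, the scalar function $\mtheta \mapsto \langle \mv(\mtheta), \mv(\mtheta)\rangle$ is $C^1$ as a composition of smooth maps (the inner product is a bounded bilinear form on $\cV \times \cV$, so it is smooth).

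Next, I would differentiate componentwise. For each $i \in \{1,\dots,p\}$, bilinearity of the inner product together with the single-variable product rule yields
\[
\frac{\partial}{\partial \theta_i}\langle \mv(\mtheta), \mv(\mtheta)\rangle
= \Big\langle \frac{\partial \mv(\mtheta)}{\partial \theta_i}, \mv(\mtheta)\Big\rangle + \Big\langle \mv(\mtheta), \frac{\partial \mv(\mtheta)}{\partial \theta_i}\Big\rangle
= 2 \Big\langle \frac{\partial \mv(\mtheta)}{\partial \theta_i}, \mv(\mtheta)\Big\rangle,
\]
where the second equality uses symmetry of the real inner product. Assembling these $p$ scalar partials into a column vector in $\R^p$ gives $\nabla_{\mtheta}\norm{\mv(\mtheta)}^2$ whose $i$-th entry equals $2\langle \partial_{\theta_i}\mv(\mtheta), \mv(\mtheta)\rangle$. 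Under the paper's convention that $\nabla_{\mtheta}\mv(\mtheta) \in \R^{p \times \dim\cV}$ is the Jacobian whose $i$-th row is the coordinate representation of $\partial_{\theta_i}\mv(\mtheta)$ in a fixed orthonormal basis of $\cV$, the vector $[\nabla_{\mtheta}\mv(\mtheta)]^\top \mv(\mtheta) \in \R^p$ has exactly $\langle \partial_{\theta_i}\mv(\mtheta), \mv(\mtheta)\rangle$ in its $i$-th slot, completing the identification.

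The step requiring the most care is really just the bookkeeping in the last move: one must fix an orthonormal basis of $\cV$ (or, more abstractly, the Riesz identification $\cV \cong \cV^*$) so that the inner-product pairing coincides with the matrix-vector product $[\nabla_{\mtheta}\mv]^\top \mv$ used on the right-hand side of the claim. Once this identification is explicit, the lemma follows without further obstacle; no compactness, completeness, or additional regularity beyond $C^1$ is needed, and the result holds pointwise in $\mtheta \in \mTheta$.
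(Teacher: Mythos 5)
Your proof is correct and rests on the same underlying fact as the paper's -- the bilinearity of the inner product plus a differentiation rule -- but you reach it componentwise, whereas the paper packages the same computation through the Fr\'echet derivative of $\vv\mapsto\langle\vv,\vv\rangle$ followed by a one-line chain rule. The tradeoff: the paper's coordinate-free route is shorter but implicitly assumes the reader accepts how the abstract chain-rule expression $[\nabla_{\mtheta}\mv]^\top D\mf(\mv)$ translates into the matrix-vector product on the right-hand side. Your version makes that translation explicit -- you fix an orthonormal basis of $\cV$, identify the $i$-th row of the Jacobian with the coordinates of $\partial_{\theta_i}\mv$, and check slot by slot that $[\nabla_{\mtheta}\mv]^\top\mv$ has the right entries. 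This is more verbose but actually closes a small bookkeeping gap the paper leaves to the reader, and it correctly flags (via Riesz or an ON basis) what is needed for the $\R^p$ identification to be well-defined. Both proofs need only $C^1$ and hold pointwise on $\mTheta$, so the conclusions agree in full generality.
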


\begin{proof}
    Define \(\mf:\cV\to\mathbb{R}\) by \(\mf(\vv)=\langle \vv,\vv\rangle\).  Its Fr\'echet
    derivative is
    \[
        D\mf(\vv)[\hh]
        =\frac{\dm}{\dm\epsilon}\Bigl\|\vv+\epsilon \hh\Bigr\|^2\Big|_{\epsilon=0}
        =2\,\langle \vv,\hh\rangle.
    \]
    By the chain rule,
    \[
        \nabla_{\mtheta}\|\mv(\mtheta)\|^2
        =\bigl[\nabla_{\mtheta}\mv(\mtheta)\bigr]^\top
        D\mf\bigl(\mv(\mtheta)\bigr)
        =2\,\bigl[\nabla_{\mtheta}\mv(\mtheta)\bigr]^\top\,\mv(\mtheta).
    \]
\end{proof}

\begin{lemma}[Stop-Gradient Simplification]
    \label{lem:stopgrad}
    If \(\mmB\) does not depend on \(\mtheta\), then
    \[
        \nabla_{\mtheta}\bigl\|\mmA(\mtheta)-\mmB\bigr\|^2
        =2\,\bigl[\nabla_{\mtheta}\mmA(\mtheta)\bigr]^\top
        \bigl(\mmA(\mtheta)-\mmB\bigr).
    \]
\end{lemma}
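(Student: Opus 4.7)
The plan is to obtain the claim as an immediate corollary of \lemref{lem:grad-sqnorm} by setting $\mv(\mtheta):=\mmA(\mtheta)-\mmB$. Since $\mmB\in\cV$ is constant in $\mtheta$, translating $\mmA$ by $-\mmB$ preserves the $C^1$ regularity required by \lemref{lem:grad-sqnorm}, so the earlier lemma applies verbatim to $\mv$.

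The key computational step is to identify the Jacobian of $\mv$. Because differentiation is linear and $\nabla_{\mtheta}\mmB=0$ (the ``stop-gradient'' property of $\mmB$), we have the identity
\[
    \nabla_{\mtheta}\mv(\mtheta)
    = \nabla_{\mtheta}\bigl(\mmA(\mtheta)-\mmB\bigr)
    = \nabla_{\mtheta}\mmA(\mtheta).
\]
Substituting $\mv(\mtheta)=\mmA(\mtheta)-\mmB$ and $\nabla_{\mtheta}\mv(\mtheta)=\nabla_{\mtheta}\mmA(\mtheta)$ into the conclusion of \lemref{lem:grad-sqnorm} yields the stated formula directly, completing the argument.

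There is essentially no obstacle: the lemma is a one-line reduction once \lemref{lem:grad-sqnorm} is in hand, and the only substantive observation is that ``$\mmB$ does not depend on $\mtheta$'' is precisely what kills the Jacobian contribution of $\mmB$. The sole care needed is formal, namely to verify that the hypothesis of \lemref{lem:grad-sqnorm} (that the argument of $\|\cdot\|^2$ is $C^1$ in $\mtheta$) is inherited by $\mmA-\mmB$; this holds trivially since $C^1$ is closed under addition of constants. Consequently, the proof in the main text can simply read: ``Apply \lemref{lem:grad-sqnorm} to $\mv(\mtheta)=\mmA(\mtheta)-\mmB$ and use $\nabla_{\mtheta}\mmB=0$.''
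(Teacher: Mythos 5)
Your proposal is correct and matches the paper's proof exactly: both set $\mv(\mtheta)=\mmA(\mtheta)-\mmB$, note that $\nabla_{\mtheta}\mv=\nabla_{\mtheta}\mmA$ because $\mmB$ is constant, and apply \lemref{lem:grad-sqnorm}.
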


\begin{proof}
    Set \(\mv(\mtheta)=\mmA(\mtheta)-\mmB\).  Since \(\nabla_{\mtheta}\mv
    =\nabla_{\mtheta}\mmA\), \lemref{lem:grad-sqnorm} applies directly.
\end{proof}

\begin{lemma}[Finite-Difference Definition]
    \label{lem:fd}
    Let \(t>0\), \(\lambda\in(0,1)\), and \(\mmA_0:\{\lambda t,t\}\to\cV\).
    Define
    \[
        \Delta\mmA
        :=\frac{\mmA_0(t)-\mmA_0(\lambda t)}{t-\lambda t}.
    \]
    Then
    \[
        \mmA_0(t)-\mmA_0(\lambda t)
        = (t-\lambda t)\,\Delta\mmA.
    \]
\end{lemma}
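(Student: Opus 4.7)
The plan is to treat the displayed identity as an immediate algebraic consequence of the definition of $\Delta\mmA$. Since $\cV$ is a real inner-product space and in particular a real vector space, multiplication by any real scalar is a well-defined operation on elements of $\cV$, and division by a nonzero real scalar is meaningful when interpreted as multiplication by its reciprocal. The proof therefore reduces to (i) verifying that the denominator $t-\lambda t$ appearing in the definition of $\Delta\mmA$ is nonzero, and (ii) clearing that denominator.

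First I would check that $t-\lambda t \neq 0$. Under the hypotheses $t>0$ and $\lambda\in(0,1)$, one has $1-\lambda>0$, hence
$$t - \lambda t \;=\; t(1-\lambda) \;>\; 0 .$$
In particular, the scalar $t-\lambda t$ is invertible in $\mathbb{R}$, so the defining formula
$$\Delta\mmA \;=\; \frac{\mmA_0(t)-\mmA_0(\lambda t)}{t-\lambda t}$$
is unambiguous. Multiplying both sides by the scalar $t-\lambda t$ (a standard scalar action on the vector $\mmA_0(t)-\mmA_0(\lambda t)\in\cV$) yields
$$(t-\lambda t)\,\Delta\mmA \;=\; \mmA_0(t) - \mmA_0(\lambda t),$$
which is exactly the claimed identity.

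Given the trivial nature of the argument, I do not anticipate any obstacle; the only point worth flagging is that the hypothesis $\lambda\in(0,1)$ (rather than $\lambda\in[0,1]$) is precisely what rules out the degenerate case $\lambda=1$ in which the denominator would vanish. I would therefore present the proof in one or two lines, emphasizing that this lemma is organizational: it packages the finite-difference relation $\mmA_0(t)-\mmA_0(\lambda t)=(t-\lambda t)\,\Delta\mmA$ in a form that is reused when connecting $\Delta\mmA$ to the derivative limit as $\lambda\to 1$ and when analyzing approximation errors in the training objective.
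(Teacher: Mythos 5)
Your proof is correct and matches the paper's, which simply states the identity is immediate from the definition; you add the explicit check that $t-\lambda t=t(1-\lambda)>0$ before clearing the denominator, which is a reasonable bit of extra care but does not change the argument.
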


\begin{proof}
    Immediate from the definition.
\end{proof}

\begin{theorem}[Gradient Approximation via Finite Difference]
    \label{thm:main}
    Under the above hypotheses, let
    \[
        \cJ(\mtheta)
        =\frac1\omega\,\bigl\|\mmA(\mtheta)-\mmA_0(\lambda t)\bigr\|^2,
    \]
    and assume \(\mmA(\mtheta)\approx \mmA_0(t)\).  Then
    \[
        \nabla_{\mtheta}\cJ(\mtheta)
        =\frac{2}{\omega}\,
        \bigl[\nabla_{\mtheta}\mmA(\mtheta)\bigr]^\top
        \bigl(\mmA(\mtheta)-\mmA_0(\lambda t)\bigr)
        \approx
        \frac{2(t-\lambda t)}{\omega}\,
        \bigl[\nabla_{\mtheta}\mmA(\mtheta)\bigr]^\top
        \Delta\mmA
        \;\propto\;
        \bigl\langle\nabla_{\mtheta}\mmA(\mtheta),\,\Delta\mmA\bigr\rangle.
    \]
\end{theorem}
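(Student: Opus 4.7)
The plan is to chain the three preceding lemmas in sequence, treating the three assertions of the theorem as separate but cumulative steps. First, I would identify $\mmB := \mmA_0(\lambda t)$ in the setup; this is a legitimate identification because $\mmA_0$ is defined on the discrete set $\{\lambda t, t\}$ and carries no $\mtheta$-dependence. Applying \lemref{lem:stopgrad} to $\|\mmA(\mtheta) - \mmA_0(\lambda t)\|^2$ and pulling out the constant $1/\omega$ produces the first equality of the theorem exactly, with no approximation invoked:
\begin{equation*}
    \nabla_{\mtheta}\cJ(\mtheta)
    = \frac{2}{\omega}\,\bigl[\nabla_{\mtheta}\mmA(\mtheta)\bigr]^\top\bigl(\mmA(\mtheta)-\mmA_0(\lambda t)\bigr).
\end{equation*}

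Second, to pass from the exact formula to the finite-difference form, I would invoke the hypothesis $\mmA(\mtheta)\approx\mmA_0(t)$ only inside the residual factor (i.e., as a \emph{value-level} substitution, not inside the Jacobian $\nabla_{\mtheta}\mmA$, which must be retained since it carries the $\mtheta$-dependence). This converts the residual into $\mmA_0(t) - \mmA_0(\lambda t)$, and \lemref{lem:fd} rewrites this as $(t-\lambda t)\,\Delta\mmA$. Substituting back gives the stated approximation $\nabla_{\mtheta}\cJ(\mtheta) \approx \tfrac{2(t-\lambda t)}{\omega}[\nabla_{\mtheta}\mmA(\mtheta)]^\top\Delta\mmA$. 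Finally, since $2(t-\lambda t)/\omega$ is a strictly positive scalar independent of $\mtheta$, the gradient is a positive rescaling of the coordinate expression $[\nabla_{\mtheta}\mmA(\mtheta)]^\top\Delta\mmA$, which is precisely how the inner-product pairing $\langle\nabla_{\mtheta}\mmA(\mtheta),\Delta\mmA\rangle$ is represented on $\cV$. This yields the proportionality claim.

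The only delicate point, and the step I would flag as the main obstacle, is the asymmetric use of the approximation $\mmA(\mtheta)\approx\mmA_0(t)$: it must be applied to the residual vector while keeping $\nabla_{\mtheta}\mmA(\mtheta)$ as the \emph{true} Jacobian of the live branch. This is the standard stop-gradient accounting that underlies consistency-model training — at a training iterate, $\mmA(\mtheta)$ and $\mmA_0(t)$ agree in value (they are the online and stop-gradient copies of the same quantity at time $t$), so the substitution introduces no value error, yet the gradient still flows only through the live copy $\mmA(\mtheta)$. I would state this explicitly to avoid the misleading reading in which one would differentiate through $\mmA_0$ as well. With this caveat recorded, the three steps above constitute the entire argument.
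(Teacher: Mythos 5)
Your proof is correct and follows the same route the paper uses: apply \lemref{lem:stopgrad} for the exact gradient of the stop-gradient objective, invoke \lemref{lem:fd} together with the value-level substitution $\mmA(\mtheta)\approx\mmA_0(t)$ inside the residual factor, and absorb the positive scalar $2(t-\lambda t)/\omega$ to conclude proportionality. Your explicit remark that the approximation must be applied only to the residual vector while the Jacobian $\nabla_{\mtheta}\mmA(\mtheta)$ is kept exact is precisely the subtlety the paper compresses into the sentence ``the only non-rigorous step is the approximation,'' so this is a faithful and, if anything, more careful rendering of the same argument.
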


\begin{proof}
    Combine~\lemref{lem:stopgrad} and \lemref{lem:fd}, then absorb the scalar
    \(\tfrac{2(t-\lambda t)}\omega\) into the learning rate.  The only non-rigorous
    step is the approximation \(\mmA(\mtheta)\approx \mmA_0(t)\).
\end{proof}

\begin{lemma}
    \label{lem:inner_vs_sq}
    Let \(\mmF_{\mtheta}:\cX\to \cV\) be \(C^1\) in \(\mtheta\), let \(\yy\in\cV\),
    and let \(\mmF^-:\cX\to\cV\) be independent of \(\mtheta\).  Define
    \[
        \cL(\mtheta)
        =\E_{\xx}\bigl\|\mmF_{\mtheta}(\xx)-\mmF^-(\xx)+\yy\bigr\|^2,
        \quad
        \mmG(\mtheta)
        =\E_{\xx}\bigl\langle\mmF_{\mtheta}(\xx),\,\yy\bigr\rangle.
    \]
    Then
    \[
        \nabla_{\mtheta}\mmG(\mtheta)
        =\frac12\,\nabla_{\mtheta}\cL(\mtheta)
        \;-\;
        \E_{\xx}\bigl[\nabla_{\mtheta}\mmF_{\mtheta}(\xx)\bigr]^\top
        \bigl(\mmF_{\mtheta}(\xx)-\mmF^-(\xx)\bigr).
    \]
    In particular, if \(\mmF_{\mtheta}(\xx)\approx\mmF^-(\xx)\) then
    \[
        \nabla_{\mtheta}\mmG(\mtheta)
        \approx\frac12\,\nabla_{\mtheta}\cL(\mtheta).
    \]
\end{lemma}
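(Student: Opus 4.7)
The plan is to compute $\nabla_{\mtheta}\cL(\mtheta)$ and $\nabla_{\mtheta}\mmG(\mtheta)$ in closed form using \lemref{lem:grad-sqnorm} and the bilinearity of the inner product, then rearrange algebraically to isolate $\nabla_{\mtheta}\mmG$ in terms of $\nabla_{\mtheta}\cL$ plus a residual correction.

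First, I would set $\mv(\mtheta) := \mmF_{\mtheta}(\xx) - \mmF^-(\xx) + \yy$. Because $\mmF^-(\xx)$ and $\yy$ do not depend on $\mtheta$, the Jacobian reduces to $\nabla_{\mtheta}\mv(\mtheta) = \nabla_{\mtheta}\mmF_{\mtheta}(\xx)$. Applying \lemref{lem:grad-sqnorm} pointwise in $\xx$ and interchanging differentiation with the expectation yields
\[
\nabla_{\mtheta}\cL(\mtheta) = 2\,\E_{\xx}\bigl[\nabla_{\mtheta}\mmF_{\mtheta}(\xx)\bigr]^\top \bigl(\mmF_{\mtheta}(\xx) - \mmF^-(\xx) + \yy\bigr).
\]
Second, bilinearity of $\langle\cdot,\cdot\rangle$ together with the $\mtheta$-independence of $\yy$ gives, via the chain rule,
\[
\nabla_{\mtheta}\mmG(\mtheta) = \E_{\xx}\bigl[\nabla_{\mtheta}\mmF_{\mtheta}(\xx)\bigr]^\top \yy.
\]

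Third, I would split the linear factor in the first display as $\yy + (\mmF_{\mtheta}(\xx) - \mmF^-(\xx))$ and use linearity of expectation to obtain
\[
\tfrac12\,\nabla_{\mtheta}\cL(\mtheta) = \nabla_{\mtheta}\mmG(\mtheta) + \E_{\xx}\bigl[\nabla_{\mtheta}\mmF_{\mtheta}(\xx)\bigr]^\top \bigl(\mmF_{\mtheta}(\xx) - \mmF^-(\xx)\bigr),
\]
which rearranges to the stated identity. The corollary then follows immediately: when $\mmF_{\mtheta}(\xx) \approx \mmF^-(\xx)$, the residual expectation vanishes and $\nabla_{\mtheta}\mmG(\mtheta) \approx \tfrac12\nabla_{\mtheta}\cL(\mtheta)$.

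The only non-routine step is the interchange of $\nabla_{\mtheta}$ and $\E_{\xx}$. I would handle this by invoking the standard Leibniz-rule hypotheses (local integrability of $\mmF_{\mtheta}$ and its $\mtheta$-derivatives, uniform on a compact neighborhood of the point of evaluation), which are implicitly granted by the $C^1$ assumption on $\mmF_{\mtheta}$ together with the ambient assumption that $\cL$ and $\mmG$ are finite and differentiable. Beyond this bookkeeping, no step is delicate: the lemma is essentially a quadratic-expansion identity relating a squared-norm objective to the inner-product functional it induces, with the stop-gradient term $\mmF^-$ serving only to kill one branch of the derivative.
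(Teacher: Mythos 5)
Your proof is correct and follows essentially the same route as the paper: apply \lemref{lem:grad-sqnorm} to compute $\nabla_{\mtheta}\cL$, compute $\nabla_{\mtheta}\mmG$ directly by bilinearity, split the linear factor $\mmF_{\mtheta}-\mmF^-+\yy$ into $\yy$ plus the residual, and rearrange. The only cosmetic difference is that you explicitly flag the interchange of $\nabla_{\mtheta}$ and $\E_{\xx}$, which the paper leaves implicit.
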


\begin{proof}
    By~\lemref{lem:grad-sqnorm},
    \[
        \nabla_{\mtheta}\bigl\|\mmF_{\mtheta}-\mmF^-+\yy\bigr\|^2
        =2\,\bigl[\nabla_{\mtheta}\mmF_{\mtheta}\bigr]^\top
        \bigl(\mmF_{\mtheta}-\mmF^-+\yy\bigr).
    \]
    Taking expectation and dividing by \(2\) gives
    \[
        \tfrac12\nabla_{\mtheta}\cL
        =\E\!\bigl[(\nabla_{\mtheta}\mmF_{\mtheta})^\top(\mmF_{\mtheta}-\mmF^-)\bigr]
        +
        \E\!\bigl[(\nabla_{\mtheta}\mmF_{\mtheta})^\top\yy\bigr].
    \]
    On the other hand,
    \[
        \nabla_{\mtheta}\mmG
        =\nabla_{\mtheta}\E_\langle\mmF_{\mtheta},\yy\rangle
        =\E\!\bigl[(\nabla_{\mtheta}\mmF_{\mtheta})^\top\yy\bigr].
    \]
    Rearranging yields the stated identity.
\end{proof}

\begin{proof}[Derivation of the training objective]
    We begin with the original training objective:
    \[
        \cL(\mtheta)
        =\E_{(\zz,\xx)\sim p(\zz,\xx),\,t}
        \Bigl[
            \frac1{\hat\omega(t)}
            \bigl\|\,
            \mf^{\mathrm{\xx}}\bigl(\mmF_{\mtheta}(\xx_t,t),\xx_t,t\bigr)
            -\,
            \mf^{\mathrm{\xx}}\bigl(\mmF_{\mtheta^-}(\xx_{\lambda t},\lambda t),\xx_{\lambda t},\lambda t\bigr)
            \bigr\|_2^2
            \Bigr],
    \]
    where $\mtheta^-$ denotes the stop-gradient copy of~$\mtheta$.

    \medskip

    \noindent\textbf{Step 1.}  By~\lemref{lem:stopgrad},
    \[
        \nabla_{\mtheta}\cL
        =\E_{(\zz,\xx),\,t}
        \Bigl[
            \frac{2}{\hat\omega(t)}\,
            \bigl[\nabla_{\mtheta}\,\mf^{\mathrm{\xx}}\bigl(\mmF_{\mtheta}(\xx_t,t),\xx_t,t\bigr)\bigr]^{\top}
            \Bigl(
            \mf^{\mathrm{\xx}}(\mmF_{\mtheta}(\xx_t,t),\xx_t,t)
            -\,
            \mf^{\mathrm{\xx}}(\mmF_{\mtheta^-}(\xx_{\lambda t},\lambda t),\xx_{\lambda t},\lambda t)
            \Bigr)
            \Bigr].
    \]

    \medskip

    \noindent\textbf{Step 2.}  Define
    \[
        \mmA_0(s)
        :=\mf^{\mathrm{\xx}}\bigl(\mmF_{\mtheta^-}(\xx_s,s),\xx_s,s\bigr),
        \qquad
        \Delta \mmA
        :=\frac{\mmA_0(t)-\mmA_0(\lambda t)}{t-\lambda t}.
    \]
    Subsequently, based on~\lemref{lem:fd} and~\thmref{thm:main}, we obtain:
    \[
        \nabla_{\mtheta}\cL
        \;=\;
        \E_{t}
        \Bigl[
            \frac{2\,(t-\lambda t)}{\hat\omega(t)}
            \bigl[\nabla_{\mtheta}\,\mf^{\mathrm{\xx}}\bigr]^{\!T}
            \,\Delta \mmA
            \Bigr]
        \;\propto\;
        \E_{t}\bigl\langle\nabla_{\mtheta}\,\mf^{\mathrm{\xx}},\,\Delta \mmA\bigr\rangle.
    \]

    \medskip

    \noindent\textbf{Step 3.}  Since
    \[
        \mf^{\mathrm{\xx}}(\hh,\xx,s)
        =\frac{\alpha(s)\,\hh \;-\;\hat\alpha(s)\,\xx}
        {\alpha(s)\,\hat\gamma(s)-\hat\alpha(s)\,\gamma(s)},
        \quad
        \hat\omega(t)=\frac{\tan(t)}{4},
    \]
    one checks
    \[
        \nabla_{\hh}\mf^{\mathrm{\xx}}
        =\frac{\alpha(t)}
        {\alpha(t)\,\hat\gamma(t)-\hat\alpha(t)\,\gamma(t)},
        \quad
        \frac{1}{\hat\omega(t)}=\frac{4\cos(t)}{\sin(t)}.
    \]
    Hence
    \[
        \nabla_{\mtheta}\,
        \mf^{\mathrm{\xx}}\bigl(\mmF_{\mtheta}(\xx_t,t),\xx_t,t\bigr)
        =
        \frac{\alpha(t)}
        {\alpha(t)\,\hat\gamma(t)-\hat\alpha(t)\,\gamma(t)}
        \;\nabla_{\mtheta}\,\mmF_{\mtheta}(\xx_t,t),
    \]
    and therefore
    \[
        \nabla_{\mtheta}\cL
        \;\propto\;
        \E_{t}
        \Bigl\langle
        \nabla_{\mtheta}\,\mmF_{\mtheta}(\xx_t,t),
        \;
        \yy
        \Bigr\rangle,
        \quad
        \yy
        =\frac{4\,\alpha(t)\,\cos(t)}
        {\sin(t)\,\bigl(\alpha(t)\,\hat\gamma(t)-\hat\alpha(t)\,\gamma(t)\bigr)}
        \,\Delta \mmA.
    \]

    \medskip

    \noindent\textbf{Step 4.}  Finally apply~\lemref{lem:inner_vs_sq} with
    $
        \mmF_{\mtheta}=\mmF_{\mtheta}(\xx_t,t)
    $ and
    $
        \mmF^-=\mmF_{\mtheta^-}(\xx_t,t)
    $.
    We have
    \[
        \cL(\mtheta)
        =\E_{(\zz,\xx)\sim p(\zz,\xx),\,t}
        \Bigl\|
        \mmF_{\mtheta}(\xx_t,t)
        -\mmF_{\mtheta^-}(\xx_t,t)
        +\yy
        \Bigr\|_2^2.
    \]
    Pulling the overall $\cos(t)$ inside the norm yields the final training objective
    \[
        \cL(\mtheta)
        =\E_{(\zz,\xx)\sim p(\zz,\xx),\,t}
        \Bigl[
            \cos(t)\,
            \Bigl\|
            \mmF_{\mtheta}(\xx_t,t)
            -\mmF_{\mtheta^-}(\xx_t,t)
            +\frac{4\,\alpha(t)\,\Delta\mf^{\mathrm{\xx}}_t}
            {\sin(t)\,\bigl(\alpha(t)\,\hat\gamma(t)-\hat\alpha(t)\,\gamma(t)\bigr)}
            \Bigr\|_2^2
            \Bigr].
    \]
    This completes the derivation.
\end{proof}

\subsubsection{Learning Objective when $\lambda = 0$}
\label{app:lambda_to_0}

Recall that \((\zz,\xx)\sim p(\zz,\xx)\) is a pair of latent and data variables (typically independent), and let \(t\in[0,1]\).
We have four differentiable scalar functions
$
    \alpha,\gamma,\hat\alpha,\hat\gamma\colon[0,1]\to\mathbb{R}
$
, the \emph{noisy interpolant}
$
    \xx_t \;=\; \alpha(t)\,\zz \;+\; \gamma(t)\,\xx
$
and
$
    \mmF_t \;=\; \mmF_{\mtheta}(\xx_t,t)
$.
We define the \(\xx\)- and \(\zz\)-prediction functions by
\[
    \mf^{\xx}(\mmF_t,\xx_t,t) =
    \frac{\alpha(t)\,\mmF_t \;-\; \hat\alpha(t)\,\xx_t}
    {\alpha(t)\,\hat\gamma(t) \;-\; \hat\alpha(t)\,\gamma(t)}\,,
    \quad \text{and} \quad
    \mf^{\zz}(\mmF_t,\xx_t,t) =
    \frac{\hat\gamma(t)\,\xx_t \;-\; \gamma(t)\,\mmF_t}
    {\alpha(t)\,\hat\gamma(t) \;-\; \hat\alpha(t)\,\gamma(t)}\,.
\]
Finally, let \(\hat\omega(t)>0\) be a weight function.
We consider the \(\xx\)- and \(\zz\)-prediction losses
\[
    \mathcal{L}_{\xx}(\mtheta)
    =
    \E_{(\zz,\xx)\sim p(\zz,\xx),\,t}
    \Bigl[\frac{1}{\hat{\omega}(t)}\,
        \bigl\|\mf^{\xx}(\mmF_t,\xx_t,t)-\xx\bigr\|_2^2
        \Bigr]\,,
\]
\[
    \mathcal{L}_{\zz}(\mtheta)
    =
    \E_{(\zz,\xx)\sim p(\zz,\xx),\,t}
    \Bigl[\frac{1}{\hat{\omega}(t)}\,
        \bigl\|\mf^{\zz}(\mmF_t,\xx_t,t)-\zz\bigr\|_2^2
        \Bigr]\,.
\]
Recall that our unified loss function is defined by:
\[
    \cL(\mtheta) = \EE{(\zz,\xx)\sim p(\zz,\xx),t}{\frac{1}{\hat{\omega}(t)} \norm{ \mf^{\mathrm{\xx}}(\mmF_{\mtheta}(\xx_t,t), \xx_t, t) - \mf^{\mathrm{\xx}}(\mmF_{\mtheta^-}(\xx_{\lambda t},\lambda t), \xx_{\lambda t}, \lambda t) }_2^2 }\,.
\]
We have \(\cL(\mtheta) = \mathcal{L}_{\xx}(\mtheta)\) when \(\lambda=0\), since \(\mf^{\mathrm{\xx}}(\mmF_0, \xx_0, 0) = 0\).
Then, we define the direct-field loss
\[
    \mathcal{L}_{\mmF}(\mtheta)
    =
    \E_{(\zz,\xx),\,t}
    \Bigl[
        w(t)\,\bigl\|\mmF_t -(\hat\alpha(t)\,\zz\;+\;\hat\gamma(t)\,\xx)\bigr\|_2^2
        \Bigr]\,,
    \quad
    w(t)>0\,.
\]

\begin{lemma}[Equivalence of \(\xx\)-prediction and direct-field loss]
    \label{lem:x-equivalence}
    For all \(\mtheta\),
    \[
        \mf^{\xx}(\mmF_t,\xx_t,t)-\xx
        \;=\;
        \frac{\alpha(t)}{\alpha(t)\,\hat\gamma(t)-\hat\alpha(t)\,\gamma(t)}
        \,\bigl[\mmF_t -(\hat\alpha(t)\,\zz\;+\;\hat\gamma(t)\,\xx)\bigr].
    \]
    Hence
    \[
        \mathcal{L}_{\xx}(\mtheta)
        =
        \E_{(\zz,\xx),\,t}
        \Bigl[
            \frac{\alpha(t)^2}{\hat\omega(t)\,\bigl(\alpha(t)\,\hat\gamma(t)-\hat\alpha(t)\,\gamma(t)\bigr)^2}
            \,\bigl\|\mmF_t -(\hat\alpha(t)\,\zz\;+\;\hat\gamma(t)\,\xx)\bigr\|_2^2
            \Bigr],
    \]
    so \(\mathcal{L}_{\xx}\) is equivalent to \(\mathcal{L}_{\mmF}\) with
    \[
        w(t)=\frac{\alpha(t)^2}{\hat\omega(t)\,\bigl(\alpha(t)\,\hat\gamma(t)-\hat\alpha(t)\,\gamma(t)\bigr)^2}.
    \]
\end{lemma}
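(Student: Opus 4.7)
The plan is to verify the displayed identity by a direct algebraic manipulation, and then derive the expression for $\mathcal{L}_{\xx}$ by taking squared norms and pulling the scalar factor outside the $\|\cdot\|_2^2$. All steps are purely algebraic, relying only on the definitions of $\mf^{\xx}$ and $\xx_t$, together with constraint (c) on the transport coefficients from Section~\ref{sec:methodology}, which guarantees that the denominator $\alpha(t)\hat\gamma(t)-\hat\alpha(t)\gamma(t)$ is nonzero for $t\in(0,1)$.

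The first step is to substitute $\xx_t=\alpha(t)\zz+\gamma(t)\xx$ into the definition
\[
\mf^{\xx}(\mmF_t,\xx_t,t) \;=\; \frac{\alpha(t)\,\mmF_t - \hat\alpha(t)\,\xx_t}{\alpha(t)\hat\gamma(t)-\hat\alpha(t)\gamma(t)},
\]
subtract $\xx$, and put everything over the common denominator $\alpha(t)\hat\gamma(t)-\hat\alpha(t)\gamma(t)$. The numerator becomes
\[
\alpha(t)\mmF_t \;-\; \hat\alpha(t)\alpha(t)\zz \;-\; \hat\alpha(t)\gamma(t)\xx \;-\; \bigl(\alpha(t)\hat\gamma(t)-\hat\alpha(t)\gamma(t)\bigr)\xx.
\]
The two $\hat\alpha(t)\gamma(t)\xx$ contributions cancel, leaving $\alpha(t)\mmF_t-\alpha(t)\hat\alpha(t)\zz-\alpha(t)\hat\gamma(t)\xx$. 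Factoring out $\alpha(t)$ yields the claimed identity
\[
\mf^{\xx}(\mmF_t,\xx_t,t)-\xx \;=\; \frac{\alpha(t)}{\alpha(t)\hat\gamma(t)-\hat\alpha(t)\gamma(t)}\,\bigl[\mmF_t-(\hat\alpha(t)\zz+\hat\gamma(t)\xx)\bigr].
\]

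For the second part, the scalar $\alpha(t)/(\alpha(t)\hat\gamma(t)-\hat\alpha(t)\gamma(t))$ is deterministic in $t$ and independent of $(\zz,\xx)$, so squaring the identity and multiplying by $1/\hat\omega(t)$ distributes cleanly through the expectation. This immediately gives
\[
\mathcal{L}_{\xx}(\mtheta)=\E_{(\zz,\xx),t}\Bigl[\,w(t)\,\bigl\|\mmF_t-(\hat\alpha(t)\zz+\hat\gamma(t)\xx)\bigr\|_2^2\Bigr]
\]
with the specified weight $w(t)=\alpha(t)^2/\bigl(\hat\omega(t)(\alpha(t)\hat\gamma(t)-\hat\alpha(t)\gamma(t))^2\bigr)$, which is exactly $\mathcal{L}_{\mmF}$ with this choice of $w$.

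There is essentially no conceptual obstacle: the only thing to be careful about is the cancellation of the $\hat\alpha(t)\gamma(t)\xx$ terms when expanding $\bigl(\alpha(t)\hat\gamma(t)-\hat\alpha(t)\gamma(t)\bigr)\xx$, and the implicit use of constraint (c) to ensure the division is well-defined on $(0,1)$. The boundary points $t\in\{0,1\}$ can either be excluded by the time distribution or handled by noting that the expectation over $t$ has measure zero on the endpoints under the Beta sampling used in Algorithm~\ref{alg:udmm}.
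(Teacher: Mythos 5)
Your proof is correct and follows exactly the paper's approach: substitute $\xx_t=\alpha(t)\zz+\gamma(t)\xx$ into the definition of $\mf^{\xx}$, place over the common denominator $\alpha(t)\hat\gamma(t)-\hat\alpha(t)\gamma(t)$, cancel the two $\hat\alpha(t)\gamma(t)\xx$ terms, factor out $\alpha(t)$, and then square and take expectations. The additional remarks about constraint (c) and the boundary points $t\in\{0,1\}$ are sensible but not needed beyond what the paper implicitly assumes.
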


\begin{proof}
    Compute
    \[
        \mf^{\xx}(\mmF_t,\xx_t,t)-\xx
        =
        \frac{\alpha(t)\,\mmF_t-\hat\alpha(t)\,\xx_t}
        {\alpha(t)\,\hat\gamma(t)-\hat\alpha(t)\,\gamma(t)}
        \;-\;\xx.
    \]
    Since \(\xx_t=\alpha(t)\zz+\gamma(t)\xx\), the numerator becomes
    \[
        \alpha\,\mmF_t
        -\hat\alpha\bigl(\alpha\zz+\gamma\xx\bigr)
        -\bigl(\alpha\,\hat\gamma-\hat\alpha\,\gamma\bigr)\xx
        =
        \alpha(t)\Bigl[\mmF_t - \bigl(\hat\alpha(t)\zz+\hat\gamma(t)\xx\bigr)\Bigr].
    \]
    Dividing by \(\alpha\,\hat\gamma-\hat\alpha\,\gamma\) yields the desired factorization.  Substituting into \(\mathcal{L}_{\xx}\) gives the weight \(w(t)\) as above.
\end{proof}

\begin{lemma}[Equivalence of \(\zz\)-Prediction and Direct-Field Loss]
    \label{lem:z-equivalence}
    For all \(\mtheta\),
    \[
        \mf^{\zz}(\mmF_t,\xx_t,t)-\zz
        \;=\;
        \frac{\gamma(t)}{\alpha(t)\,\hat\gamma(t)-\hat\alpha(t)\,\gamma(t)}
        \,\bigl[T(t,\zz,\xx)-\mmF_t\bigr].
    \]
    Hence
    \[
        \mathcal{L}_{\zz}(\mtheta)
        =
        \E_{(\zz,\xx),\,t}
        \Bigl[
            \frac{\gamma(t)^2}{\hat\omega(t)\,\bigl(\alpha(t)\,\hat\gamma(t)-\hat\alpha(t)\,\gamma(t)\bigr)^2}
            \,\bigl\|\mmF_t -( \hat\alpha(t)\,\zz\;+\;\hat\gamma(t)\,\xx)\bigr\|_2^2
            \Bigr],
    \]
    so \(\mathcal{L}_{\zz}\) is equivalent to \(\mathcal{L}_{\mmF}\) with
    \[
        w(t)=\frac{\gamma(t)^2}{\hat\omega(t)\,\bigl(\alpha(t)\,\hat\gamma(t)-\hat\alpha(t)\,\gamma(t)\bigr)^2}.
    \]
\end{lemma}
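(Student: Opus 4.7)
The plan is to mirror the proof of \lemref{lem:x-equivalence} almost verbatim, since the two statements are structurally symmetric under swapping the roles of $(\alpha,\gamma,\zz)$ and $(\gamma,\alpha,\xx)$ (up to a sign in the denominator). The strategy is purely algebraic: expand the definition of $\mf^{\zz}$, subtract $\zz$ by writing it as $\frac{\alpha\hat\gamma-\hat\alpha\gamma}{\alpha\hat\gamma-\hat\alpha\gamma}\zz$ so that everything lives over the common denominator, substitute $\xx_t = \alpha(t)\zz+\gamma(t)\xx$, and hope that the numerator factors as $\gamma(t)$ times $\bigl[T(t,\zz,\xx)-\mmF_t\bigr]$, where $T(t,\zz,\xx):=\hat\alpha(t)\zz+\hat\gamma(t)\xx$ matches the target used implicitly in $\mathcal{L}_{\mmF}$.

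Concretely, I would first write
\[
    \mf^{\zz}(\mmF_t,\xx_t,t)-\zz
    =\frac{\hat\gamma(t)\,\xx_t-\gamma(t)\,\mmF_t-\bigl(\alpha(t)\hat\gamma(t)-\hat\alpha(t)\gamma(t)\bigr)\zz}{\alpha(t)\hat\gamma(t)-\hat\alpha(t)\gamma(t)}.
\]
Next I would substitute $\xx_t=\alpha\zz+\gamma\xx$ in the numerator and collect terms: the $\alpha\hat\gamma\zz$ contributions cancel, leaving $\gamma\hat\gamma\xx+\hat\alpha\gamma\zz-\gamma\mmF_t=\gamma(t)\bigl[T(t,\zz,\xx)-\mmF_t\bigr]$. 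This gives the claimed factorization, with the sign of the bracket opposite to that in \lemref{lem:x-equivalence} (there the factor was $\mmF_t-T$ rather than $T-\mmF_t$), which is harmless after squaring. Squaring and plugging into $\mathcal{L}_{\zz}$ then yields the stated weight
\[
    w(t)=\frac{\gamma(t)^2}{\hat\omega(t)\,\bigl(\alpha(t)\hat\gamma(t)-\hat\alpha(t)\gamma(t)\bigr)^2},
\]
and hence the equivalence with $\mathcal{L}_{\mmF}$.

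There is no real obstacle beyond careful bookkeeping; the only nontrivial step is the cancellation of the $\alpha\hat\gamma\zz$ term in the numerator, which is where constraint~(c) of \secref{sec:uni_training} (namely $\alpha(t)\hat\gamma(t)-\hat\alpha(t)\gamma(t)\neq 0$ on $(0,1)$) is used implicitly to justify the division. It is also worth noting explicitly that $T(t,\zz,\xx)$ here must be interpreted as the same direct-field target $\hat\alpha(t)\zz+\hat\gamma(t)\xx$ appearing in $\mathcal{L}_{\mmF}$ and in \lemref{lem:x-equivalence}, so that the two equivalences differ only through the sign of the residual and the replacement of $\alpha(t)^2$ by $\gamma(t)^2$ in the weight. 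This symmetry is ultimately the reason why both $\xx$-prediction and $\zz$-prediction losses are, up to a $t$-dependent reweighting, the same learning objective.
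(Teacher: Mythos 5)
Your proposal reproduces the paper's proof essentially verbatim: you put everything over the common denominator $\alpha\hat\gamma-\hat\alpha\gamma$, substitute $\xx_t=\alpha\zz+\gamma\xx$, observe the cancellation of the $\alpha\hat\gamma\zz$ terms, and read off the factorization $\gamma(t)\bigl[\hat\alpha(t)\zz+\hat\gamma(t)\xx-\mmF_t\bigr]$, exactly as the paper does. Your extra observations — that $T(t,\zz,\xx)$ must be the target $\hat\alpha(t)\zz+\hat\gamma(t)\xx$, that the sign of the bracket is flipped relative to \lemref{lem:x-equivalence} and is immaterial after squaring, and that constraint~(c) licenses the division — are all correct and consistent with the paper.
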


\begin{proof}
    Compute
    \[
        \mf^{\zz}(\mmF_t,\xx_t,t)-\zz
        =
        \frac{\hat\gamma(t)\,\xx_t-\gamma(t)\,\mmF_t}
        {\alpha(t)\,\hat\gamma(t)-\hat\alpha(t)\,\gamma(t)}
        \;-\;\zz.
    \]
    Using \(\xx_t=\alpha\zz+\gamma\xx\), the numerator is
    \[
        \hat\gamma(\alpha\zz+\gamma\xx)
        -\gamma\,\mmF_t
        -\bigl(\alpha\,\hat\gamma-\hat\alpha\,\gamma\bigr)\zz
        =
        \gamma(t)\Bigl[\hat\alpha(t)\zz+\hat\gamma(t)\xx-\mmF_t\Bigr].
    \]
    Dividing by \(\alpha\,\hat\gamma-\hat\alpha\,\gamma\) gives the factorization.  Substitution into \(\mathcal{L}_{\zz}\) yields the stated equivalence.
\end{proof}

Then,
when $\lambda=0$, we aim to derive the Probability Flow Ordinary Differential Equation (PF-ODE)~\citep{song2020score} corresponding to a defined forward process from time $0$ to $1$.

\begin{lemma}[Probability Flow ODE for the linear Gaussian forward process]
    \label{lem:pf-ode}
    Let $p(\mathbf{x})$ be a data distribution on $\mathbb{R}^d$, and let
    $\mathbf{z}\sim\mathcal{N}(\mathbf{0},\mathbf{I}_d)$ be independent of $\mathbf{x}$.  Let
    $\alpha,\gamma:[0,1]\to\mathbb{R}$ be continuously differentiable scalar functions satisfying
    \[
        \alpha(0)=0,\quad \alpha(1)=1,
        \qquad
        \gamma(0)=1,\quad \gamma(1)=0,
    \]
    and assume $\gamma(t)\neq0$ for $t\in(0,1)$.  Define the \emph{forward process}
    \[
        \mathbf{x}_t \;=\; \alpha(t)\,\mathbf{z}\;+\;\gamma(t)\,\mathbf{x},
        \qquad t\in[0,1],
    \]
    so that $\mathbf{x}_0=\mathbf{x}\sim p(\mathbf{x})$ and $\mathbf{x}_1=\mathbf{z}\sim\mathcal{N}(0,I)$.
    Let $p_t(\mathbf{x}_t)$ denote the marginal density of $\mathbf{x}_t$.  Then the \emph{Probability
        Flow ODE} for this process,
    \[
        \frac{\mathrm{d}\,\mathbf{x}_t}{\mathrm{d}t}
        \;=\;\mathbf{f}(\mathbf{x}_t,t)
        \;-\;\tfrac12\,g(t)^2\,
        \nabla_{\mathbf{x}_t}\log p_t(\mathbf{x}_t),
    \]
    takes the explicit form
    \begin{equation}\label{eq:pf-ode-final}
        \frac{\mathrm{d}\,\mathbf{x}_t}{\mathrm{d}t}
        \;=\;
        \frac{\gamma'(t)}{\gamma(t)}\,\mathbf{x}_t
        \;-\;
        \Bigl[\,
            \alpha(t)\,\alpha'(t)
            \;-\;
            \frac{\gamma'(t)}{\gamma(t)}\,\alpha(t)^2
            \Bigr]\,
        \nabla_{\mathbf{x}_t}\,\log p_t(\mathbf{x}_t) \,.
    \end{equation}
\end{lemma}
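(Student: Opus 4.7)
The plan is to reduce the lemma to the standard Song et al.\ PF-ODE formula by identifying the drift $\mathbf{f}(\mathbf{x}_t,t)$ and diffusion coefficient $g(t)$ of a linear It\^o SDE whose marginal law matches the prescribed interpolant $\mathbf{x}_t=\alpha(t)\mathbf z+\gamma(t)\mathbf x$. Because the forward process is linear in $(\mathbf z,\mathbf x)$ and $\mathbf z$ is Gaussian, the conditional law $p_t(\mathbf{x}_t\mid \mathbf x)=\mathcal{N}(\gamma(t)\mathbf x,\alpha(t)^2\mathbf I)$ is Gaussian, so it suffices to search for a scalar drift $\mathbf{f}(\mathbf{x}_t,t)=a(t)\,\mathbf{x}_t$ and scalar diffusion $g(t)$ such that the conditional mean and variance of the associated SDE solution agree with $\gamma(t)\mathbf x$ and $\alpha(t)^2\mathbf I$.

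First I would use the mean equation for a linear SDE: if $m(t):=\mathbb{E}[\mathbf{x}_t\mid\mathbf x]=\gamma(t)\mathbf x$, then $m'(t)=a(t)\,m(t)$ forces
\[
a(t)\;=\;\frac{\gamma'(t)}{\gamma(t)},
\]
which is well-defined on $(0,1)$ by the nonvanishing hypothesis on $\gamma$. Next I would invoke the Lyapunov-type ODE for the conditional variance $\sigma^2(t):=\alpha(t)^2$ of a linear SDE,
\[
\frac{\mathrm{d}\sigma^2(t)}{\mathrm{d}t}\;=\;2\,a(t)\,\sigma^2(t)\;+\;g(t)^2,
\]
and solve for $g(t)^2$, obtaining
\[
g(t)^2\;=\;2\,\alpha(t)\,\alpha'(t)\;-\;2\,\frac{\gamma'(t)}{\gamma(t)}\,\alpha(t)^2.
\]

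Having pinned down $\mathbf{f}$ and $g$, I would plug them into the Song et al.\ Probability Flow identity
\[
\frac{\mathrm{d}\mathbf{x}_t}{\mathrm{d}t}\;=\;\mathbf{f}(\mathbf{x}_t,t)\;-\;\tfrac12 g(t)^2\,\nabla_{\mathbf{x}_t}\log p_t(\mathbf{x}_t),
\]
which upon substitution of $a(t)=\gamma'/\gamma$ and the expression for $g(t)^2$ yields \eqref{eq:pf-ode-final} directly. For full rigor I would additionally check that the marginal $p_t$ of the SDE coincides with that of the interpolant, which follows from the uniqueness of Gaussian transition kernels with matching mean and covariance plus the mixture $p_t(\mathbf{x}_t)=\int p_t(\mathbf{x}_t\mid\mathbf x)\,p(\mathbf x)\,\mathrm{d}\mathbf x$.

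The only genuinely delicate point is behavior at the endpoints $t=0$ and $t=1$, where $\gamma$ or $\alpha$ vanish and the coefficient $\gamma'/\gamma$ or the PF-ODE drift may blow up. I would handle this by stating the ODE on the open interval $(0,1)$ and appealing to the $C^1$ and nonvanishing hypotheses to justify the manipulations; singular boundary behavior is standard for diffusion/flow models and does not affect the form of the field on $(0,1)$. Everything else is a routine computation, so the main conceptual step is the mean/variance matching, while the main technical step is the correct application of the Lyapunov equation for $\sigma^2$.
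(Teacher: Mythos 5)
Your proposal is correct and follows essentially the same route as the paper's proof: identify the forward interpolant with a linear It\^o SDE by matching the conditional mean (which forces $\mathbf f(\mathbf x_t,t)=\tfrac{\gamma'(t)}{\gamma(t)}\mathbf x_t$) and the conditional variance via the Lyapunov equation (which forces $g(t)^2=2\alpha\alpha'-2\tfrac{\gamma'}{\gamma}\alpha^2$), then substitute into the Song et al.\ PF-ODE identity. The only additions you make beyond the paper's argument are the explicit remarks on restricting to $(0,1)$ near the endpoints and on uniqueness of the Gaussian transition kernel, both of which are reasonable but do not change the structure of the proof.
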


\begin{proof}
    We first represent the forward process $\mathbf{x}_t$ as the solution of the linear SDE
    \[
        \mathrm{d}\,\mathbf{x}_t
        \;=\;
        \mathbf{f}(\mathbf{x}_t,t)\,\mathrm{d}t
        \;+\;
        g(t)\,\mathrm{d}\mathbf{w}_t,
    \]
    where $\mathbf{w}_t$ is a standard $d$‐dimensional Wiener process, and where
    $\mathbf{f}(\cdot,t)$ and $g(t)$ are to be determined so that
    $\mathbf{x}_t=\alpha(t)\,\mathbf{z}+\gamma(t)\,\mathbf{x}$ in law.

    1.  Drift term via the conditional mean.
    Since $\mathbf{z}$ and $\mathbf{x}$ are independent,
    \[
        \mathbb{E}[\mathbf{x}_t\mid\mathbf{x}_0=\mathbf{x}]
        \;=\;
        \gamma(t)\,\mathbf{x}.
    \]
    Differentiating in $t$ gives
    \[
        \frac{\mathrm{d}}{\mathrm{d}t}
        \mathbb{E}[\mathbf{x}_t\mid\mathbf{x}_0]
        \;=\;
        \gamma'(t)\,\mathbf{x}.
    \]
    On the other hand, if $\mathbf{f}(\mathbf{x}_t,t)=H(t)\,\mathbf{x}_t$
    for some matrix $H(t)$, then
    \[
        \frac{\mathrm{d}}{\mathrm{d}t}
        \mathbb{E}[\mathbf{x}_t\mid\mathbf{x}_0]
        \;=\;
        H(t)\,\mathbb{E}[\mathbf{x}_t\mid\mathbf{x}_0]
        \;=\;
        H(t)\,\gamma(t)\,\mathbf{x}.
    \]
    Comparison yields $H(t)=\gamma'(t)/\gamma(t)\,\mathbf{I}_d$, so
    \[
        \mathbf{f}(\mathbf{x}_t,t)
        = \frac{\gamma'(t)}{\gamma(t)}\,\mathbf{x}_t.
    \]

    2.  Diffusion term via the conditional variance.
    The covariance of $\mathbf{x}_t$ given $\mathbf{x}_0$ is
    \[
        \operatorname{Var}(\mathbf{x}_t\mid\mathbf{x}_0)
        = \alpha(t)^2\,\mathbf{I}_d.
    \]
    For a linear SDE with drift matrix $H(t)$ and scalar diffusion $g(t)$,
    the covariance $\Sigma(t)$ satisfies the Lyapunov equation
    \[
        \frac{\mathrm{d}\,\Sigma(t)}{\mathrm{d}t}
        = H(t)\,\Sigma(t)
        + \Sigma(t)\,H(t)^\top
        + g(t)^2\,\mathbf{I}_d.
    \]
    Substitute $\Sigma(t)=\alpha(t)^2\mathbf{I}_d$ and
    $H(t)=\tfrac{\gamma'(t)}{\gamma(t)}\mathbf{I}_d$.  Since
    $\tfrac{\mathrm{d}}{\mathrm{d}t}\bigl(\alpha(t)^2\bigr)=2\,\alpha(t)\,\alpha'(t)$, we get
    \[
        2\,\alpha(t)\,\alpha'(t)\,\mathbf{I}_d
        =
        2\,\frac{\gamma'(t)}{\gamma(t)}\,\alpha(t)^2\,\mathbf{I}_d
        \;+\;
        g(t)^2\,\mathbf{I}_d.
    \]
    Rearranging yields
    \[
        g(t)^2
        = 2\,\alpha(t)\,\alpha'(t)
        \;-\;
        2\,\frac{\gamma'(t)}{\gamma(t)}\,\alpha(t)^2.
    \]

    3.  Probability Flow ODE.
    By general theory (see, e.g., \emph{de Bortoli et al.}), the probability flow ODE associated
    with the SDE $\mathrm{d}\mathbf{x}_t=\mathbf{f}(\mathbf{x}_t,t)\,\mathrm{d}t+g(t)\,\mathrm{d}\mathbf{w}_t$
    is
    \[
        \frac{\mathrm{d}\,\mathbf{x}_t}{\mathrm{d}t}
        =
        \mathbf{f}(\mathbf{x}_t,t)
        \;-\;
        \tfrac12\,g(t)^2\,\nabla_{\mathbf{x}_t}\log p_t(\mathbf{x}_t).
    \]
    Substituting the expressions for $\mathbf{f}$ and $g^2$ above gives
    \[
        \frac{\mathrm{d}\,\mathbf{x}_t}{\mathrm{d}t}
        =
        \frac{\gamma'(t)}{\gamma(t)}\,\mathbf{x}_t
        \;-\;
        \Bigl[\,
            \alpha(t)\,\alpha'(t)
            \;-\;
            \tfrac{\gamma'(t)}{\gamma(t)}\,\alpha(t)^2
            \Bigr]\,
        \nabla_{\mathbf{x}_t}\log p_t(\mathbf{x}_t),
    \]
    i.e.,
    \[
        \mathbf{f}(\mathbf{x}_t,t)
        = \frac{\gamma'(t)}{\gamma(t)}\,\mathbf{x}_t,
        \qquad
        g(t)^2
        = 2\,\alpha(t)\,\alpha'(t)
        \;-\;
        2\,\frac{\gamma'(t)}{\gamma(t)}\,\alpha(t)^2.
    \]
    which is exactly the claimed formula \eqref{eq:pf-ode-final}.
\end{proof}

\begin{lemma}[Tweedie formula~\citep{song2020score} for the linear Gaussian model]
    \label{lem:tweedie_formula}
    Under the linear Gaussian interpolation model
    $
        \xx_t\mid\xx \;\sim\; \mathcal{N}\bigl(\gamma(t)\,\xx,\;\alpha^2(t)\,\mathbf{I}\bigr),
    $
    the conditional expectation of \(\xx\) given \(\xx_t\) is
    \[
        \mathbb{E}[\xx\mid\xx_t]
        \;=\;
        \frac{\xx_t + \alpha^2(t)\,\nabla_{\xx_t}\log p_t(\xx_t)}{\gamma(t)} \,.
    \]
\end{lemma}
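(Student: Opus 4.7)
The plan is to apply the classical Tweedie argument, adapted to the scaled conditional Gaussian $\xx_t\mid\xx\sim\mathcal{N}(\gamma(t)\xx,\alpha^2(t)\mathbf{I})$. The core idea is to express $\nabla_{\xx_t}\log p_t(\xx_t)$ as a posterior expectation by differentiating under the integral sign, and then solve algebraically for $\mathbb{E}[\xx\mid\xx_t]$.

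First, write the marginal density as the mixture
\[
p_t(\xx_t) \;=\; \int p(\xx)\,p_{t\mid0}(\xx_t\mid\xx)\,\dm\xx,
\]
where $p_{t\mid0}(\xx_t\mid\xx)$ is the Gaussian density with mean $\gamma(t)\xx$ and covariance $\alpha^2(t)\mathbf{I}$. A direct computation of the gradient of the log of a Gaussian gives
\[
\nabla_{\xx_t}\,p_{t\mid0}(\xx_t\mid\xx)
\;=\;-\frac{\xx_t-\gamma(t)\xx}{\alpha^2(t)}\,p_{t\mid0}(\xx_t\mid\xx).
\]

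Next, exchange differentiation and integration (justified under mild integrability assumptions on $p$), and split the resulting integral into two terms using linearity:
\[
\nabla_{\xx_t}\,p_t(\xx_t)
\;=\;-\frac{1}{\alpha^2(t)}\Bigl[\xx_t\,p_t(\xx_t)\;-\;\gamma(t)\int \xx\,p(\xx)\,p_{t\mid0}(\xx_t\mid\xx)\,\dm\xx\Bigr].
\]
Then apply Bayes' rule, $p(\xx)\,p_{t\mid0}(\xx_t\mid\xx)=p_t(\xx_t)\,p_{0\mid t}(\xx\mid\xx_t)$, so that the remaining integral equals $p_t(\xx_t)\,\mathbb{E}[\xx\mid\xx_t]$. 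Dividing both sides by $p_t(\xx_t)$ yields the score identity
\[
\nabla_{\xx_t}\log p_t(\xx_t)
\;=\;-\frac{1}{\alpha^2(t)}\bigl(\xx_t-\gamma(t)\,\mathbb{E}[\xx\mid\xx_t]\bigr).
\]
Finally, solve this linear relation for $\mathbb{E}[\xx\mid\xx_t]$ (using $\gamma(t)\neq0$ for $t\in(0,1)$, as assumed in \lemref{lem:pf-ode}) to obtain the claimed formula.

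There is no real obstacle here; the only subtle step is the interchange of gradient and integral, which is routine under standard regularity of $p(\xx)$ and the Gaussian kernel (uniform domination on compacta of $\xx_t$). The degenerate endpoints $t=0$ and $t=1$, where $\gamma(t)=0$ or $\alpha(t)=0$, are excluded from the statement's scope by the hypothesis $\gamma(t)\neq0$; no additional care is required.
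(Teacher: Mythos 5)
Your proposal is correct and follows essentially the same route as the paper: differentiate the marginal $p_t(\xx_t)=\int p(\xx)\,p_{t\mid 0}(\xx_t\mid\xx)\,\dm\xx$ under the integral sign, use the Gaussian kernel's score, identify the posterior expectation via Bayes, and solve algebraically for $\mathbb{E}[\xx\mid\xx_t]$. The only cosmetic difference is that you isolate the score identity as an intermediate equation before inverting, whereas the paper carries the rearrangement through in one chain; the content and justification (including the interchange-of-limits caveat) are the same.
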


\begin{proof}
    We write the conditional expectation by Bayes’ rule:
    \[
        \mathbb{E}[\xx\mid\xx_t]
        = \int \xx\,p(\xx\mid\xx_t)\,d\xx
        = \frac{1}{p_t(\xx_t)}
        \int \xx\,p_t(\xx_t\mid\xx)\,p(\xx)\,d\xx,
    \]
    where \(p_t(\xx_t)=\int p_t(\xx_t\mid\xx)\,p(\xx)\,d\xx\).

    Since
    \(\;p_t(\xx_t\mid\xx)
    = (2\pi\alpha^2(t))^{-d/2}
    \exp\!\bigl(-\tfrac{1}{2\alpha^2(t)}\|\xx_t-\gamma(t)\xx\|^2\bigr)\),
    we have
    \[
        \nabla_{\xx_t}p_t(\xx_t\mid\xx)
        = -\,\frac{1}{\alpha^2(t)}\,(\xx_t-\gamma(t)\xx)\,p_t(\xx_t\mid\xx).
    \]
    Differentiating the marginal,
    \[
        \nabla_{\xx_t}p_t(\xx_t)
        = \int \nabla_{\xx_t}p_t(\xx_t\mid\xx)\,p(\xx)\,d\xx
        = -\frac{1}{\alpha^2(t)}
        \int (\xx_t-\gamma(t)\xx)\,p_t(\xx_t\mid\xx)\,p(\xx)\,d\xx.
    \]
    Multiply by \(-\alpha^2(t)\) and split:
    \[
        -\alpha^2(t)\,\nabla_{\xx_t}p_t(\xx_t)
        = \xx_t\,p_t(\xx_t)
        -\gamma(t)\int \xx\,p_t(\xx_t\mid\xx)\,p(\xx)\,d\xx.
    \]
    Rearrange and divide by \(\gamma(t)p_t(\xx_t)\):
    \[
        \frac{1}{p_t(\xx_t)}
        \int \xx\,p_t(\xx_t\mid\xx)\,p(\xx)\,d\xx
        = \frac{\xx_t + \alpha^2(t)\,\nabla_{\xx_t}p_t(\xx_t)/p_t(\xx_t)}{\gamma(t)}
        = \frac{\xx_t + \alpha^2(t)\,\nabla_{\xx_t}\log p_t(\xx_t)}{\gamma(t)}.
    \]
    Hence
    \(
    \mathbb{E}[\xx\mid\xx_t]
    = (\xx_t + \alpha^2(t)\,\nabla_{\xx_t}\log p_t(\xx_t))\,/\,\gamma(t),
    \)
    as claimed.
\end{proof}

\begin{lemma}[Optimal predictors as conditional expectations]
    \label{lem:opt_preders}
    For each fixed \(t\) and observed \(\xx_t\), the pointwise minimizers
    \(\mf^{\xx}_\star\) and \(\mf^{\zz}_\star\) for the objective function $\cL(\mtheta)$ satisfy
    \[
        \mf^{\xx}_\star(\mmF_t,\xx_t,t)
        = \E[\xx\mid \xx_t] \, ,
        \quad
        \mf^{\zz}_\star(\mmF_t,\xx_t,t)
        = \E[\zz\mid \xx_t] \, .
    \]
\end{lemma}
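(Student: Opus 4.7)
The plan is to leverage the $\xx$-prediction equivalence of Lemma \ref{lem:x-equivalence}. Since setting $\lambda=0$ collapses $\cL(\mtheta)$ to $\mathcal{L}_{\xx}(\mtheta)$ (as noted right before Lemma \ref{lem:x-equivalence}), and since that lemma recasts $\mathcal{L}_{\xx}$ as a strictly-positive-weighted direct-field loss on $\mmF_t$ whose regression target is $\hat\alpha(t)\zz + \hat\gamma(t)\xx$, I would first invoke the standard $L^2$-projection fact: under the idealization that the family $\{\mmF_{\mtheta}\}$ realizes arbitrary measurable functions of $(\xx_t,t)$ (universal approximation), the pointwise minimizer of a weighted conditional squared error is the conditional expectation of the target,
\[
\mmF^\star_t \;=\; \E\bigl[\hat\alpha(t)\zz + \hat\gamma(t)\xx \,\bigm|\, \xx_t\bigr] \;=\; \hat\alpha(t)\,\E[\zz\mid\xx_t] + \hat\gamma(t)\,\E[\xx\mid\xx_t].
\]

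The second step is a purely algebraic simplification forced by the linearity of the forward map. Because $\xx_t$ is $\sigma(\xx_t)$-measurable, conditioning the identity $\xx_t = \alpha(t)\zz + \gamma(t)\xx$ on $\xx_t$ gives the consistency constraint $\xx_t = \alpha(t)\E[\zz\mid\xx_t] + \gamma(t)\E[\xx\mid\xx_t]$. Plugging $\mmF^\star_t$ and this expression for $\xx_t$ into the numerator of $\mf^{\xx}$, the $\E[\zz\mid\xx_t]$ contributions cancel, leaving
\[
\alpha(t)\mmF^\star_t - \hat\alpha(t)\xx_t \;=\; \bigl[\alpha(t)\hat\gamma(t) - \hat\alpha(t)\gamma(t)\bigr]\,\E[\xx\mid\xx_t];
\]
dividing by the denominator, which is nonzero by constraint (c) of Section \ref{sec:uni_training}, yields $\mf^{\xx}_\star = \E[\xx\mid\xx_t]$. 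The symmetric computation for $\mf^{\zz}$ instead cancels the $\E[\xx\mid\xx_t]$ terms and produces $\mf^{\zz}_\star = \E[\zz\mid\xx_t]$. One could alternatively route through Lemma \ref{lem:z-equivalence} and the Tweedie identity of Lemma \ref{lem:tweedie_formula}, but the direct substitution above is more transparent and avoids invoking a score-function detour.

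The main obstacle is conceptual rather than computational: one must justify that optimizing $\cL(\mtheta)$ over the parametric family $\mmF_{\mtheta}$ is effectively equivalent to minimizing the associated risk over all measurable regressors of $(\xx_t,t)$, so that the pointwise conditional-expectation minimizer is genuinely attained. I would handle this by explicitly stating the universal-approximation idealization standard in the diffusion-model literature (and already implicit in the approximation $\mmA(\mtheta)\approx \mmA_0(t)$ of Theorem \ref{thm:main}). Once that assumption is in place, no further subtleties arise---the rest is the linear-algebra cancellation forced by $\xx_t = \alpha(t)\zz+\gamma(t)\xx$ together with the definitions of $\mf^{\xx}$ and $\mf^{\zz}$.
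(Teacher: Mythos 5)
Your proof is correct, but it takes a slightly different route from the paper's. The paper works directly at the level of $\mf^{\xx}$: having established (right before \lemref{lem:x-equivalence}) that $\cL(\mtheta)=\cL_{\xx}(\mtheta)$ at $\lambda=0$, it observes that for fixed $(t,\xx_t)$ the relevant contribution is $\E[\|\mf^{\xx}-\xx\|^2\mid\xx_t]$, invokes the fact that $w\mapsto\E\|w-X\|^2$ is minimized at $w=\E[X]$, and concludes $\mf^{\xx}_\star=\E[\xx\mid\xx_t]$; the $\zz$-case is handled by the same one-line argument applied to $\cL_{\zz}$. You instead descend one level further: you minimize the direct-field loss $\cL_{\mmF}$ over $\mmF_t$ to obtain $\mmF^\star_t=\E[\hat\alpha(t)\zz+\hat\gamma(t)\xx\mid\xx_t]$, then substitute this together with the $\sigma(\xx_t)$-measurability identity $\xx_t=\alpha(t)\E[\zz\mid\xx_t]+\gamma(t)\E[\xx\mid\xx_t]$ into the definitions of $\mf^{\xx}$ and $\mf^{\zz}$ and watch the linear cancellation produce both conditional expectations at once. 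The paper's route is shorter; yours has the merit of identifying a single optimal $\mmF^\star_t$ from which both predictors follow by pure algebra, making explicit that the two results are not independent facts but two readings of the same minimizer. Your explicit flagging of the universal-approximation idealization (that the parametric family is rich enough that pointwise minimization over $\mmF_{\mtheta}(\xx_t,t)$ is legitimate) is a fair point; the paper leaves it implicit in the word ``pointwise.''
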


\begin{proof}
    Fix \(t\) and \(\xx_t\).
    By \lemref{lem:x-equivalence} and \lemref{lem:z-equivalence}, we conclude that the minimizers of $\cL(\mtheta)$ are equivalent to those of $\cL_{\xx}$ and $\cL_{\zz}$.

    Then,
    up to an additive constant independent of
    \(\mf^\xx\), the contribution of \((t,\xx_t)\) to \(\cL_{\xx}\) is
    \[
        \cJ_{\xx}\bigl(\mf^\xx(\mmF_t,\xx_t,t)\bigr)
        = \E\bigl[\|\mf^\xx(\mmF_t,\xx_t,t)-\xx\|_2^2
            \mid \xx_t\bigr].
    \]
    For any random vector \(X\), the function \(w\mapsto\E\|w - X\|^2\)
    is uniquely minimized at \(w = \E[X]\).  Therefore
    \[
        \mf^{\xx}_\star(\mmF_t,\xx_t,t)
        = \arg\min_{w}\;\E\bigl[\|w-\xx\|^2\mid\xx_t\bigr]
        = \E[\xx\mid\xx_t].
    \]
    The same argument applies to
    \[
        \cJ_{\zz}\bigl(\mf^\zz(\mmF_t,\xx_t,t)\bigr)
        = \E\bigl[\|\mf^\zz(\mmF_t,\xx_t,t)-\zz\|_2^2
            \mid \xx_t\bigr],
    \]
    yielding
    \[
        \mf^{\zz}_\star(\mmF_t,\xx_t,t)
        = \E[\zz\mid\xx_t].
    \]
\end{proof}

\begin{theorem}
    Under the linear Gaussian interpolation model
    $
        \xx_t \;=\;\alpha(t)\,\zz \;+\;\gamma(t)\,\xx,
    $
    with \(\zz\sim\cN(0,\mathbf{I})\) independent of \(\xx\), we have
    \[
        \mf^{\xx}_\star(\mmF_t,\xx_t,t)
        =\frac{\xx_t \;+\;\alpha^2(t)\,\nabla_{\xx_t}\log p_t(\xx_t)}
        {\gamma(t)}\,,
        \quad
        \mf^{\zz}_\star(\mmF_t,\xx_t,t)
        =\alpha(t)\,\nabla_{\xx_t}\log p_t(\xx_t)\,.
    \]
    Then for every \(t\),
    \[
        \alpha'(t)\,\mf^{\zz}_\star(\mmF_t,\xx_t,t)
        +\gamma'(t)\,\mf^{\xx}_\star(\mmF_t,\xx_t,t)
        =\frac{\gamma'(t)}{\gamma(t)}\,\xx_t
        \;-\;
        \Bigl[
            \alpha(t)\,\alpha'(t)
            \;-\;
            \frac{\gamma'(t)}{\gamma(t)}\,\alpha^2(t)
            \Bigr]\,
        \nabla_{\xx_t}\log p_t(\xx_t).
    \]
    As a result, by \lemref{lem:pf-ode}, we conclude:
    \[
        \alpha'(t)\,\mf^{\zz}_\star(\mmF_t,\xx_t,t)
        +\gamma'(t)\,\mf^{\xx}_\star(\mmF_t,\xx_t,t)
        = \frac{\mathrm{d}\,\mathbf{x}_t}{\mathrm{d}t}
    \]
\end{theorem}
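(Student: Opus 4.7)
The plan is to chain three ingredients that are already at our disposal: the identification of optimal predictors with conditional expectations, Tweedie's formula, and the explicit form of the probability‐flow ODE. First I would invoke \lemref{lem:opt_preders} to replace the pointwise minimizers by posterior means, obtaining $\mf^{\xx}_\star(\mmF_t,\xx_t,t)=\E[\xx\mid \xx_t]$ and $\mf^{\zz}_\star(\mmF_t,\xx_t,t)=\E[\zz\mid \xx_t]$. Then \lemref{lem:tweedie_formula}, applied to the forward model $\xx_t\mid\xx\sim\cN(\gamma(t)\xx,\alpha^2(t)\mathbf I)$, directly yields the closed form for $\E[\xx\mid\xx_t]$, establishing the first displayed equation of the theorem.

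Next I would handle $\E[\zz\mid\xx_t]$ by exploiting the deterministic linear relation $\alpha(t)\zz=\xx_t-\gamma(t)\xx$. Taking conditional expectation given $\xx_t$ and dividing by $\alpha(t)$ (nonzero on $(0,1)$ by the assumptions preceding \lemref{lem:pf-ode}) gives
\[
\E[\zz\mid\xx_t]=\frac{\xx_t-\gamma(t)\,\E[\xx\mid\xx_t]}{\alpha(t)}.
\]
Substituting Tweedie, the $\xx_t$ terms telescope and leave a single score term proportional to $\alpha(t)\nabla_{\xx_t}\log p_t(\xx_t)$, which matches the second displayed formula (up to the usual sign convention relating noise prediction and score).

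Once both closed forms are available, the remaining step is a direct algebraic substitution into $\alpha'(t)\,\mf^{\zz}_\star+\gamma'(t)\,\mf^{\xx}_\star$. I would group the terms linear in $\xx_t$ (which combine into $\gamma'(t)/\gamma(t)\cdot\xx_t$) separately from the terms linear in $\nabla_{\xx_t}\log p_t(\xx_t)$ (which assemble into $-[\alpha\alpha'-(\gamma'/\gamma)\alpha^2]\nabla_{\xx_t}\log p_t$). This is precisely the right-hand side of \eqref{eq:pf-ode-final}, so the identification with $\mathrm d\xx_t/\mathrm dt$ follows immediately from \lemref{lem:pf-ode}.

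The only step requiring care is the sign bookkeeping in the derivation of $\E[\zz\mid\xx_t]$ from Tweedie, since the standard relation between noise prediction and the Stein score carries a minus sign; making the convention explicit will ensure that the coefficients in the linear combination collapse exactly onto the drift and score-weighted terms of the probability-flow ODE rather than leaving an unmatched sign. Apart from this, the argument is a short deterministic calculation chained to two earlier lemmas, so no genuine analytical obstacle is expected.
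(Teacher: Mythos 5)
Your proposal follows essentially the same route as the paper's proof: invoke \lemref{lem:opt_preders} to identify $\mf^{\xx}_\star,\mf^{\zz}_\star$ with posterior means, apply \lemref{lem:tweedie_formula} to get $\E[\xx\mid\xx_t]$ in closed form, derive $\E[\zz\mid\xx_t]$ from the linear relation $\alpha(t)\zz=\xx_t-\gamma(t)\xx$, and then substitute and regroup. Your flag on sign bookkeeping is well-taken, and in fact it points at a genuine inconsistency in the statement itself: the displayed formula claims $\mf^{\zz}_\star=\alpha(t)\nabla_{\xx_t}\log p_t(\xx_t)$, but the calculation you describe (and the paper's own proof) yields $\mf^{\zz}_\star=-\alpha(t)\nabla_{\xx_t}\log p_t(\xx_t)$, and only the negative sign makes the coefficients collapse onto the drift and score terms of \eqref{eq:pf-ode-final}; the theorem as displayed has a sign typo that your proof correctly repairs.
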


\begin{proof}
    \noindent\textbf{Tweedie formula for $\mf^{\xx}_\star(\mmF_t,\xx_t,t)$.}
    According to \lemref{lem:opt_preders} and \lemref{lem:tweedie_formula}, we have
    \[
        \mf^{\xx}_\star(\mmF_t,\xx_t,t) = \mathbb{E}[\xx\mid\xx_t]
        = \frac{\xx_t \;+\;\alpha^2(t)\,\nabla_{\xx_t}\log p_t(\xx_t)}
        {\gamma(t)} \,.
    \]

    \medskip
    \noindent\textbf{Derivation of \(\mathbb{E}[\zz\mid\xx_t]\) for $\mf^{\zz}_\star(\mmF_t,\xx_t,t)$.}
    From \(\xx_t=\alpha(t)\,\zz+\gamma(t)\,\xx\) we solve
    \(\zz=(\xx_t-\gamma(t)\,\xx)/\alpha(t)\).  Taking conditional
    expectation and substituting the above,
    \[
        \begin{aligned}
            \mathbb{E}[\zz\mid\xx_t]
             & = \frac{1}{\alpha(t)}\Bigl(\xx_t
            -\gamma(t)\,\mathbb{E}[\xx\mid\xx_t]\Bigr) \\
             & = \frac{1}{\alpha(t)}\Bigl(\xx_t
            -\gamma(t)\,\frac{\xx_t+\alpha^2(t)\,\nabla_{\xx_t}\log p_t(\xx_t)}
                {\gamma(t)}
            \Bigr)
            = -\,\alpha(t)\,\nabla_{\xx_t}\log p_t(\xx_t) \,.
        \end{aligned}
    \]
    Thus, according to \lemref{lem:opt_preders}, we can obtain
    \[
        \mf^{\zz}_\star(\mmF_t,\xx_t,t)
        = -\,\alpha(t)\,\nabla_{\xx_t}\log p_t(\xx_t) \,.
    \]

    \medskip
    \noindent\textbf{Combine to obtain the claimed identity.}
    \[
        \begin{aligned}
             & \;\alpha'(t)\,\mf^{\zz}_\star(\mmF_t,\xx_t,t)
            +\gamma'(t)\,\mf^{\xx}_\star(\mmF_t,\xx_t,t)                       \\[4pt]
             & = \alpha'(t)\bigl[-\alpha(t)\nabla_{\xx_t}\log p_t(\xx_t)\bigr]
            + \gamma'(t)\,
            \frac{\xx_t + \alpha^2(t)\,\nabla_{\xx_t}\log p_t(\xx_t)}
            {\gamma(t)}                                                        \\[4pt]
             & = -\,\alpha(t)\,\alpha'(t)\,\nabla_{\xx_t}\log p_t(\xx_t)
            + \frac{\gamma'(t)}{\gamma(t)}\,\xx_t
            + \frac{\gamma'(t)}{\gamma(t)}\,\alpha^2(t)\,
            \nabla_{\xx_t}\log p_t(\xx_t)                                      \\[4pt]
             & = \frac{\gamma'(t)}{\gamma(t)}\,\xx_t
            \;-\;
            \Bigl[
                \alpha(t)\,\alpha'(t)
                \;-\;
                \frac{\gamma'(t)}{\gamma(t)}\,\alpha^2(t)
                \Bigr]\,
            \nabla_{\xx_t}\log p_t(\xx_t) \,.
        \end{aligned}
    \]
    This matches the claimed formula.
\end{proof}

\begin{remark}[Velocity field of the flow ODE]
    Given $\xx$ and $\zz$, the field $\vv^{(\zz,\xx)}(\yy,t)=\alpha'(t) \zz + \gamma'(t) \xx $ could transport $\zz$ to $\xx$,
    so the velocity field of the flow ODE can be computed as
    \begin{align*}
        \vv^*(\xx_t,t) & = \E_{(\zz,\xx)|\xx_t}{\left[\vv^{(\zz,\xx)}(\xx_t,t) |\xx_t\right]}                                                                 \\
                       & = \E_{(\zz,\xx)|\xx_t}{\left[\alpha'(t) \zz + \gamma'(t) \xx |\xx_t\right]}                                                          \\
                       & = \alpha'(t) \cdot \E{}{\left[\zz |\xx_t\right]} + \gamma'(t) \cdot \E{}{\left[\xx |\xx_t\right]}                                    \\
                       & = \alpha'(t) \cdot \mf^{\mathrm{\zz}}_{\star}(\mmF_t, \xx_t, t) + \gamma'(t) \cdot \mf^{\mathrm{\xx}}_{\star}(\mmF_t, \xx_t, t) \, .
    \end{align*}
\end{remark}

\subsubsection{Closed-form Solution Analysis when $\lambda = 0$}
\label{app:close_lambda_0}

\begin{corollary}[Closed‐form PF–ODE for an arbitrary Gaussian mixture in $\mathbb R^d$]
    \label{cor:pf-ode-gmm}
    Let
    \[
        p(\mathbf x)
        \;=\;
        \sum_{j=1}^K w_j\,
        \mathcal N\bigl(\mathbf x;\,\boldsymbol m_j,\,
        \boldsymbol\Sigma_j\bigr),
        \quad
        w_j>0,\;\sum_j w_j=1,
    \]
    be a Gaussian‐mixture density on $\mathbb R^d$.  Let $\alpha,\gamma$ satisfy the hypotheses of \lemref{lem:pf-ode}, and define the forward map
    \[
        \mathbf x_t
        = \alpha(t)\,\mathbf z + \gamma(t)\,\mathbf x,
        \quad
        \mathbf x\sim p(\mathbf x),\;\mathbf z\sim\mathcal N(\mathbf0,\mathbf I).
    \]
    For each component $j$ set
    \[
        \boldsymbol\mu_j(t)
        = \gamma(t)\,\boldsymbol m_j,
        \qquad
        \boldsymbol\Sigma_j(t)
        = \gamma(t)^2\,\boldsymbol\Sigma_j \;+\;\alpha(t)^2\,\mathbf I,
        \qquad
        \phi_j(\mathbf x_t)
        =\mathcal N\bigl(\mathbf x_t;\,\boldsymbol\mu_j(t),\boldsymbol\Sigma_j(t)\bigr)
    \]
    so that
    \[
        p_t(\mathbf x_t)
        = \sum_{j=1}^K w_j\,
        \mathcal N\bigl(\mathbf x_t;\,\boldsymbol\mu_j(t),\,
        \boldsymbol\Sigma_j(t)\bigr).
    \]
    Then the Probability‐Flow ODE \eqref{eq:pf-ode-final} admits the closed‐form
    drift
    \[
        \frac{\mathrm d\mathbf x_t}{\mathrm d t}
        = \frac{\gamma'(t)}{\gamma(t)}\,\mathbf x_t
        \;+\;
        \Bigl[\,
            \alpha(t)\,\alpha'(t)
            \;-\;
            \frac{\gamma'(t)}{\gamma(t)}\,\alpha(t)^2
            \Bigr]
        \sum_{j=1}^K
        \frac{w_j\,
            \phi_j(\mathbf x_t)}
        {\,p_t(\mathbf x_t)\,}
        \;\boldsymbol\Sigma_j(t)^{-1}\,
        \bigl(\mathbf x_t - \boldsymbol\mu_j(t)\bigr).
    \]
\end{corollary}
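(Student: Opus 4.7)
The plan is to reduce this corollary to a direct computation in three steps: (i) identify the law of $\mathbf{x}_t$ as a Gaussian mixture with the stated parameters, (ii) compute $\nabla_{\mathbf{x}_t}\log p_t(\mathbf{x}_t)$ in closed form, and (iii) substitute this score into the generic PF-ODE formula from \lemref{lem:pf-ode}. No genuinely new analytic content is required beyond the Gaussian-mixture score identity.

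First, I would condition on the latent component index $j$. Given component $j$, we have $\mathbf{x}\sim\mathcal{N}(\boldsymbol{m}_j,\boldsymbol{\Sigma}_j)$ independent of $\mathbf{z}\sim\mathcal{N}(\mathbf{0},\mathbf{I})$, so the affine map $\mathbf{x}_t=\gamma(t)\mathbf{x}+\alpha(t)\mathbf{z}$ is jointly Gaussian with mean $\gamma(t)\boldsymbol{m}_j=\boldsymbol{\mu}_j(t)$ and covariance $\gamma(t)^2\boldsymbol{\Sigma}_j+\alpha(t)^2\mathbf{I}=\boldsymbol{\Sigma}_j(t)$. Marginalizing the component label with weights $w_j$ immediately yields $p_t(\mathbf{x}_t)=\sum_{j=1}^{K}w_j\,\phi_j(\mathbf{x}_t)$ with $\phi_j$ as defined in the statement. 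This step is routine but it is where the explicit parameterization $(\boldsymbol{\mu}_j(t),\boldsymbol{\Sigma}_j(t))$ of the corollary is justified.

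Second, I would compute the score via the log-derivative trick for a finite mixture. Since $\nabla_{\mathbf{x}_t}\log p_t = (\nabla_{\mathbf{x}_t}p_t)/p_t$ and each Gaussian component satisfies $\nabla_{\mathbf{x}_t}\phi_j(\mathbf{x}_t) = -\phi_j(\mathbf{x}_t)\,\boldsymbol{\Sigma}_j(t)^{-1}\bigl(\mathbf{x}_t-\boldsymbol{\mu}_j(t)\bigr)$, linearity of the gradient gives $\nabla_{\mathbf{x}_t}\log p_t(\mathbf{x}_t) = -\sum_{j=1}^{K}\frac{w_j\,\phi_j(\mathbf{x}_t)}{p_t(\mathbf{x}_t)}\,\boldsymbol{\Sigma}_j(t)^{-1}\bigl(\mathbf{x}_t-\boldsymbol{\mu}_j(t)\bigr)$. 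The weights $w_j\phi_j/p_t$ here are exactly the posterior responsibilities over components given $\mathbf{x}_t$.

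Finally, substituting this closed-form score into the PF-ODE established in \lemref{lem:pf-ode} yields the claim: the $-$ sign in front of the bracket $[\alpha(t)\alpha'(t)-(\gamma'(t)/\gamma(t))\alpha(t)^2]$ cancels the $-$ sign in the score, producing the $+$ sign as stated. The main obstacle, if any, is purely bookkeeping, namely keeping the sign conventions consistent and ensuring that the Gaussian-mixture score identity is applied exactly with the covariances $\boldsymbol{\Sigma}_j(t)$ rather than the data covariances $\boldsymbol{\Sigma}_j$; no delicate analytical argument is needed.
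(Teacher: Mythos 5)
Your proposal matches the paper's proof step for step: (i) condition on the component label to identify $p_t$ as a Gaussian mixture with parameters $\boldsymbol\mu_j(t),\boldsymbol\Sigma_j(t)$, (ii) compute the mixture score via the component-wise Gaussian gradient identity, and (iii) substitute into \lemref{lem:pf-ode}, noting the sign cancellation. Correct, and identical in approach and structure to the paper's argument.
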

\begin{proof}
    Step 1. \emph{Affine transform of a Gaussian mixture.}
    Conditioned on the $j$-th component, $\mathbf x\sim\mathcal N(\boldsymbol m_j,\boldsymbol\Sigma_j)$, and hence
    \[
        \mathbf x_t = \alpha(t)\,\mathbf z + \gamma(t)\,\mathbf x
        \;\Bigl|\;(j)\Bigr.
        \sim
        \mathcal N\bigl(\gamma(t)\,\boldsymbol m_j,\;\alpha(t)^2\mathbf I+\gamma(t)^2\boldsymbol\Sigma_j\bigr).
    \]
    Defining
    \[
        \boldsymbol\mu_j(t)=\gamma(t)\,\boldsymbol m_j,
        \quad
        \boldsymbol\Sigma_j(t)=\gamma(t)^2\,\boldsymbol\Sigma_j+\alpha(t)^2\,\mathbf I,
    \]
    we conclude that the marginal of $\mathbf x_t$ is
    \[
        p_t(\mathbf x_t)
        =\sum_{j=1}^K w_j\,
        \mathcal N\bigl(\mathbf x_t;\,\boldsymbol\mu_j(t),\boldsymbol\Sigma_j(t)\bigr).
    \]

    Step 2. \emph{Score of the mixture.}
    Set
    \[
        \phi_j(\mathbf x_t)
        =\mathcal N\bigl(\mathbf x_t;\,\boldsymbol\mu_j(t),\boldsymbol\Sigma_j(t)\bigr),
        \quad
        p_t(\mathbf x_t)=\sum_{j=1}^K w_j\,\phi_j(\mathbf x_t).
    \]
    Then by the usual mixture‐rule,
    \[
        \nabla_{\mathbf x_t}\log p_t
        =\frac{1}{p_t(\mathbf x_t)}
        \sum_{j=1}^K w_j\,\phi_j(\mathbf x_t)\,
        \nabla_{\mathbf x_t}\log\phi_j(\mathbf x_t).
    \]
    Since for each Gaussian component
    \[
        \nabla_{\mathbf x_t}\log\phi_j(\mathbf x_t)
        = -\,\boldsymbol\Sigma_j(t)^{-1}
        \bigl(\mathbf x_t-\boldsymbol\mu_j(t)\bigr),
    \]
    we obtain the closed‐form score
    \[
        \nabla_{\mathbf x_t}\log p_t(\mathbf x_t)
        = -\frac{1}{p_t(\mathbf x_t)}
        \sum_{j=1}^K w_j\,
        \mathcal N\bigl(\mathbf x_t;\,\boldsymbol\mu_j(t),\boldsymbol\Sigma_j(t)\bigr)\,
        \boldsymbol\Sigma_j(t)^{-1}\,
        \bigl(\mathbf x_t-\boldsymbol\mu_j(t)\bigr).
    \]

    Step 3. \emph{Substitution into the PF–ODE.}
    By \lemref{lem:pf-ode}, the Probability–Flow ODE reads
    \[
        \frac{\mathrm d\mathbf x_t}{\mathrm d t}
        = \frac{\gamma'(t)}{\gamma(t)}\,\mathbf x_t
        \;-\;
        \Bigl[\,
            \alpha(t)\,\alpha'(t)
            \;-\;
            \frac{\gamma'(t)}{\gamma(t)}\,\alpha(t)^2
            \Bigr]\,
        \nabla_{\mathbf x_t}\log p_t(\mathbf x_t).
    \]
    Substituting the expression for $\nabla\log p_t$ above (and observing that the two “$-$” signs cancel) yields
    \[
        \frac{\mathrm d\mathbf x_t}{\mathrm d t}
        = \frac{\gamma'(t)}{\gamma(t)}\,\mathbf x_t
        \;+\;
        \Bigl[\,
            \alpha(t)\,\alpha'(t)
            -\frac{\gamma'(t)}{\gamma(t)}\,\alpha(t)^2
            \Bigr]
        \sum_{j=1}^K
        \frac{w_j\,
            \mathcal N\bigl(\mathbf x_t;\,\boldsymbol\mu_j(t),\boldsymbol\Sigma_j(t)\bigr)}
        {p_t(\mathbf x_t)}
        \;\boldsymbol\Sigma_j(t)^{-1}\,
        \bigl(\mathbf x_t-\boldsymbol\mu_j(t)\bigr),
    \]
    which is exactly the claimed closed‐form drift.
\end{proof}
\begin{corollary}[Closed‐form PF–ODE for a symmetric two‐peak Gaussian mixture]
    \label{cor:pf-ode-bimodal}
    Let \(p(x)\) be the one‐dimensional, symmetric, two‐peak Gaussian mixture
    \[
        p(x)
        =\tfrac12\,\mathcal N\bigl(x; -m,\sigma^2\bigr)
        +\tfrac12\,\mathcal N\bigl(x; +m,\sigma^2\bigr),
    \]
    and let \(\alpha,\gamma\) be as in \lemref{lem:pf-ode}.  Define
    \[
        x_t = \alpha(t)\,z + \gamma(t)\,x,
        \qquad
        \Sigma_t = \gamma(t)^2\,\sigma^2 + \alpha(t)^2,
        \qquad
        \mu_\pm(t)=\pm\,\gamma(t)\,m.
    \]
    Then the marginal density of \(x_t\) is
    \[
        p_t(x_t)
        = \tfrac12\,\mathcal N\bigl(x_t;\mu_-(t),\Sigma_t\bigr)
        + \tfrac12\,\mathcal N\bigl(x_t;\mu_+(t),\Sigma_t\bigr),
    \]
    and the Probability‐Flow ODE \eqref{eq:pf-ode-final} admits the closed‐form drift
    \[
        \frac{\mathrm d x_t}{\mathrm d t}
        = \frac{\gamma'(t)}{\gamma(t)}\,x_t
        + \Bigl[\,
            \alpha(t)\,\alpha'(t)
            \;-\;
            \frac{\gamma'(t)}{\gamma(t)}\,\alpha(t)^2
            \Bigr]
        \frac{1}{\Sigma_t}
        \Bigl[
            x_t
            \;-\;
            \gamma(t)\,m\,
            \tanh\!\Bigl(\frac{\gamma(t)\,m}{\Sigma_t}\,x_t\Bigr)
            \Bigr].
    \]
\end{corollary}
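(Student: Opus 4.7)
The plan is to specialize Corollary~\ref{cor:pf-ode-gmm} to the one-dimensional, symmetric, two-component setting ($d=1$, $K=2$, equal weights $\tfrac12$, means $\pm m$, common variance $\sigma^2$), and to show that the resulting sum of two Gaussian responsibilities collapses into a single hyperbolic tangent. Once that collapse is established, substituting into the general mixture drift yields the claimed closed form directly, so no new PF--ODE derivation is required.

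First, I would exploit the key structural simplification that both components share the same variance. Consequently the time-evolved component covariances from Corollary~\ref{cor:pf-ode-gmm} also coincide: $\Sigma_{t,1}=\Sigma_{t,2}=\gamma(t)^2\sigma^2+\alpha(t)^2=:\Sigma_t$. This lets me pull the scalar $\Sigma_t^{-1}$ outside the mixture sum in the general drift. What remains inside the bracket is a posterior average of the form $r_+(x_t)\,\mu_+(t)+r_-(x_t)\,\mu_-(t)$, where $r_\pm(x_t)=\phi_\pm(x_t)/p_t(x_t)$ are the responsibilities and $\mu_\pm(t)=\pm\gamma(t)m$.

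Next, I would compute the ratio $\phi_+(x_t)/\phi_-(x_t)$ directly. Because the two Gaussians have identical variance $\Sigma_t$ and antipodal means, the quadratic terms in the exponents cancel, leaving the clean exponential $\exp(2\gamma(t)m\,x_t/\Sigma_t)$. From this I can read off $r_+-r_-=\tanh(\gamma(t)m\,x_t/\Sigma_t)$, and combining with $r_++r_-=1$ together with $\mu_\pm=\pm\gamma(t)m$ collapses the posterior mean to $\gamma(t)m\,\tanh(\gamma(t)m\,x_t/\Sigma_t)$. Substituting this back into the simplified drift produces exactly the stated formula.

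The main obstacle --- more bookkeeping than conceptual --- is reconciling sign conventions between Lemma~\ref{lem:pf-ode}, where the score carries a minus sign, and Corollary~\ref{cor:pf-ode-gmm}, where that minus has already been combined with the leading bracketed factor to produce the $+$ one sees in the statement. I would also verify that $\Sigma_t>0$ throughout the sampling interval so that the $\tanh$ argument is well-defined; this is automatic since $\alpha(t)^2>0$ for $t>0$ under the hypotheses of Lemma~\ref{lem:pf-ode}. No approximation or integration is needed --- the $\tanh$ identity is the single nontrivial algebraic step driving the entire collapse.
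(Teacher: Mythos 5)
Your proof is correct. The single substantive difference from the paper is the starting point: you specialize Corollary~\ref{cor:pf-ode-gmm} (the general $K$‐component Gaussian mixture drift), whereas the paper restarts from Lemma~\ref{lem:pf-ode} and recomputes the score of the two‐component mixture from scratch (defining $\phi_\pm$, forming $p_t = \tfrac12(\phi_-+\phi_+)$, and deriving $\partial_{x_t}\log p_t$ directly). Both routes funnel through exactly the same algebra --- the responsibilities $r_\pm$ satisfy $r_++r_-=1$ and $r_+-r_-=\tanh(\gamma m\,x_t/\Sigma_t)$ because $\phi_+/\phi_-=\exp(2\gamma m\,x_t/\Sigma_t)$ thanks to the shared variance and antipodal means --- so the heavy lifting is identical. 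What your route buys is economy and modularity: since the mixture has common component variance, the factor $\Sigma_j(t)^{-1}$ in Corollary~\ref{cor:pf-ode-gmm} becomes a common scalar $\Sigma_t^{-1}$, the posterior sum $\sum_j r_j(x_t-\mu_j(t))$ collapses immediately, and you never touch the score or the sign‐flip bookkeeping of Lemma~\ref{lem:pf-ode} because that was already absorbed into the general corollary. The paper's route is more self‐contained (useful if a reader jumps straight to the bimodal case) but duplicates the mixture‐score computation. Your observation about sign reconciliation is well‐taken but moot once you cite Corollary~\ref{cor:pf-ode-gmm}, since the sign has already been resolved there; your check that $\Sigma_t>0$ (from $\alpha(t)^2>0$ for $t>0$) is a sensible sanity check the paper omits.
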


\begin{proof}
    Step 1. \emph{Marginal law under the affine map.}  Conditional on
    \(x=\pm m\), one has
    \[
        x_t=\alpha z+\gamma x \;\Big|\;(x=\pm m)\;\sim\;
        \mathcal N\bigl(\pm\gamma m,\;\alpha^2+\gamma^2\sigma^2\bigr)
        =\mathcal N\bigl(\mu_\pm(t),\Sigma_t\bigr).
    \]
    Since each peak has weight \(\tfrac12\), the marginal of \(x_t\) is
    \(\tfrac12\mathcal N(\mu_-,\Sigma_t)+\tfrac12\mathcal N(\mu_+,\Sigma_t)\).

    Step 2. \emph{Score of the bimodal mixture.}  Write
    \(\phi_\pm(x_t)=\mathcal N(x_t;\mu_\pm(t),\Sigma_t)\), so
    \(p_t=\tfrac12(\phi_-+\phi_+)\).  Then
    \[
        \frac{\dm}{\dm x_t}\,\log p_t
        =\frac{1}{p_t}\,
        \tfrac12\bigl(\phi_-\,\nabla\log\phi_- + \phi_+\,\nabla\log\phi_+\bigr),
        \qquad
        \nabla\log\phi_\pm
        = -\,\frac{x_t-\mu_\pm(t)}{\Sigma_t}.
    \]
    Hence
    \[
        \frac{\dm}{\dm x_t}\log p_t
        = -\,\frac{1}{2\,p_t\,\Sigma_t}
        \bigl[\phi_-(x_t-\mu_-)+\phi_+(x_t-\mu_+)\bigr].
    \]
    Define
    \[
        r_\pm(x_t)
        = \frac{\phi_\pm(x_t)}{\phi_-(x_t)+\phi_+(x_t)},
        \quad
        \phi_-+\phi_+=2\,p_t.
    \]
    Then
    \[
        \frac{\dm}{\dm x_t}\log p_t
        = -\,\frac{1}{\Sigma_t}\,
        \bigl[r_-(x_t-\mu_-)+r_+(x_t-\mu_+)\bigr].
    \]
    A direct computation shows
    \[
        r_+ - r_- = \tanh\!\Bigl(\frac{\gamma m}{\Sigma_t}\,x_t\Bigr),
        \quad
        r_-(x_t+\gamma m)+r_+(x_t-\gamma m)
        = x_t - \gamma m\,\tanh\!\Bigl(\frac{\gamma m}{\Sigma_t}\,x_t\Bigr).
    \]
    Therefore
    \[
        \frac{\dm}{\dm x_t}\log p_t
        = -\,\frac{1}{\Sigma_t}
        \Bigl[
            x_t
            -\gamma m\,\tanh\!\Bigl(\tfrac{\gamma m}{\Sigma_t}\,x_t\Bigr)
            \Bigr].
    \]

    Step 3. \emph{Substitution into the PF–ODE.}  By \lemref{lem:pf-ode},
    \[
        \frac{\dm x_t}{\dm t}
        = \frac{\gamma'}{\gamma}\,x_t
        -\Bigl[\,
            \alpha\,\alpha'
            -\frac{\gamma'}{\gamma}\,\alpha^2
            \Bigr]
        \frac{\dm}{\dm x_t}\log p_t.
    \]
    Since \(\tfrac{\dm}{\dm x_t}\log p_t\) carries a “\(-)\)” sign, the two
    negatives cancel, yielding exactly
    \[
        \frac{\dm x_t}{\dm t}
        = \frac{\gamma'}{\gamma}\,x_t
        +\Bigl[\,
            \alpha\,\alpha'
            -\frac{\gamma'}{\gamma}\,\alpha^2
            \Bigr]
        \frac{1}{\Sigma_t}
        \Bigl[
            x_t
            -\gamma m\,\tanh\!\Bigl(\tfrac{\gamma m}{\Sigma_t}\,x_t\Bigr)
            \Bigr],
    \]
    as claimed.
\end{proof}

\begin{remark}[OU‐type schedule for the symmetric bimodal case]
    \label{rmk:ou-schedule-general}
    Specialize ~\corref{cor:pf-ode-bimodal} to the Ornstein–Uhlenbeck‐type schedule with
    \[
        \gamma(t)=e^{-s t},
        \qquad
        \alpha(t)=\sqrt{1-e^{-2s t}},
    \]
    and noise variance \(\sigma^2\) in each mixture component.  Then the marginal variance is
    \[
        \Sigma_t
        = \gamma(t)^2\,\sigma^2 \;+\;\alpha(t)^2
        = \sigma^2 e^{-2s t} + (1-e^{-2s t}),
    \]
    and one obtains the closed‐form drift of the Probability‐Flow ODE:
    \[
        \boxed{
        \frac{\dm x_t}{\dm t}
        = -\,s\,x_t
        + \frac{s}{\Sigma_t}
        \Bigl[
        x_t
        - m\,e^{-s t}\,
        \tanh\!\Bigl(\frac{m\,e^{-s t}}{\Sigma_t}\,x_t\Bigr)
        \Bigr].
        }
    \]
\end{remark}

\begin{proof}
    We start from the general drift in~\corref{cor:pf-ode-bimodal}:
    \[
        \frac{\dm x_t}{\dm t}
        = \frac{\gamma'}{\gamma}\,x_t
        + \Bigl[
            \alpha\,\alpha'
            -\frac{\gamma'}{\gamma}\,\alpha^2
            \Bigr]
        \frac{1}{\Sigma_t}
        \Bigl[
            x_t
            -\gamma\,m\,
            \tanh\!\Bigl(\frac{\gamma\,m}{\Sigma_t}\,x_t\Bigr)
            \Bigr].
    \]
    We now substitute \(\gamma(t)=e^{-s t}\), \(\alpha(t)=\sqrt{1-e^{-2s t}}\) and compute each piece in detail:

    Derivative of \(\gamma\):
    \[
        \gamma'(t)
        = -s\,e^{-s t},
        \quad
        \Longrightarrow
        \quad
        \frac{\gamma'(t)}{\gamma(t)}
        = -s.
    \]

    Marginal variance \(\Sigma_t\):
    \[
        \Sigma_t
        = \gamma(t)^2\,\sigma^2 + \alpha(t)^2
        = \sigma^2\,e^{-2s t} + (1-e^{-2s t}).
    \]

    Square of \(\alpha\) and its derivative:
    \[
        \alpha(t)^2
        = 1 - e^{-2s t},
        \qquad
        \frac{\dm}{\dm t}\bigl[\alpha(t)^2\bigr]
        = 2s\,e^{-2s t}
        \;\Longrightarrow\;
        2\,\alpha\,\alpha'
        = 2s\,e^{-2s t}
        \;\Longrightarrow\;
        \alpha\,\alpha' = s\,e^{-2s t}.
    \]

    Combination term
    \[
        \alpha\,\alpha'
        - \frac{\gamma'}{\gamma}\,\alpha^2
        = s\,e^{-2s t} - (-s)\,(1-e^{-2s t})
        = s\bigl[e^{-2s t} + 1 - e^{-2s t}\bigr]
        = s.
    \]

    Substitution into the general drift formula gives
    \[
        \frac{\dm x_t}{\dm t}
        = -s\,x_t
        + s\;\frac{1}{\Sigma_t}
        \Bigl[
            x_t
            - e^{-s t}\,m
            \,\tanh\!\Bigl(\tfrac{e^{-s t}\,m}{\Sigma_t}\,x_t\Bigr)
            \Bigr].
    \]

    Hence the final, closed‐form Probability‐Flow ODE is
    \[
        \frac{\dm x_t}{\dm t}
        = -\,s\,x_t
        + \frac{s}{\Sigma_t}
        \Bigl[
        x_t
        - m\,e^{-s t}
        \tanh\!\Bigl(\frac{m\,e^{-s t}}{\Sigma_t}\,x_t\Bigr)
        \Bigr],
    \]
    where
    \(\Sigma_t = \sigma^2 e^{-2s t} + (1 - e^{-2s t})\).
\end{proof}

\begin{remark}[Triangular schedule for the symmetric bimodal case]
    \label{rmk:sin-cos-schedule-bimodal}
    Specialize~\corref{cor:pf-ode-bimodal} to the trigonometric schedule
    \[
        \gamma(t)=\cos\!\Bigl(\tfrac\pi2\,t\Bigr),
        \qquad
        \alpha(t)=\sin\!\Bigl(\tfrac\pi2\,t\Bigr),
    \]
    with noise variance \(\sigma^2\) in each mixture component.  Then
    \[
        \Sigma_t
        = \gamma(t)^2\,\sigma^2 + \alpha(t)^2
        = \sigma^2\cos^2\!\Bigl(\tfrac\pi2\,t\Bigr)
        + \sin^2\!\Bigl(\tfrac\pi2\,t\Bigr),
    \]
    and the closed‐form drift of the Probability‐Flow ODE is
    \[
        \boxed{
            \frac{\dm x_t}{\dm t}
            = -\frac\pi2\,\tan\!\Bigl(\tfrac\pi2\,t\Bigr)\,x_t
            + \frac{\tfrac\pi2\,\tan\!\bigl(\tfrac\pi2\,t\bigr)}{\Sigma_t}
            \Bigl[
                x_t
                - \cos\!\Bigl(\tfrac\pi2\,t\Bigr)\,m\,
                \tanh\!\Bigl(\frac{\cos(\frac\pi2 t)\,m}{\Sigma_t}\,x_t\Bigr)
                \Bigr].
        }
    \]
\end{remark}

\begin{proof}
    We begin with the general drift in~\corref{cor:pf-ode-bimodal}:
    \[
        \frac{\dm x_t}{\dm t}
        = \frac{\gamma'}{\gamma}\,x_t
        + \Bigl[\,
            \alpha\,\alpha'
            - \frac{\gamma'}{\gamma}\,\alpha^2
            \Bigr]
        \frac{1}{\Sigma_t}
        \Bigl[
            x_t
            - \gamma\,m
            \,\tanh\!\Bigl(\frac{\gamma\,m}{\Sigma_t}\,x_t\Bigr)
            \Bigr].
    \]
    For \(\gamma(t)=\cos(\tfrac\pi2 t)\), \(\alpha(t)=\sin(\tfrac\pi2 t)\),
    \[
        \gamma'(t)=-\tfrac\pi2\,\sin\!\Bigl(\tfrac\pi2\,t\Bigr)=-\tfrac\pi2\,\alpha(t),
        \quad
        \frac{\gamma'}{\gamma}=-\tfrac\pi2\,\tan\!\Bigl(\tfrac\pi2\,t\Bigr).
    \]
    And
    \[
        \alpha'(t)=\tfrac\pi2\,\cos\!\Bigl(\tfrac\pi2\,t\Bigr)=\tfrac\pi2\,\gamma(t),
    \]
    so that
    \[
        \alpha\,\alpha'
        - \frac{\gamma'}{\gamma}\,\alpha^2
        = \frac\pi2\,\alpha\,\gamma
        + \frac\pi2\,\frac{\alpha^3}{\gamma}
        = \frac\pi2\,\frac{\alpha}{\gamma}
        \bigl(\alpha^2 + \gamma^2\bigr)
        = \frac\pi2\,\tan\!\Bigl(\tfrac\pi2\,t\Bigr).
    \]
    Substituting into the general formula immediately yields the boxed drift.
\end{proof}

\begin{remark}[Linear schedule for the symmetric bimodal case]
    \label{rmk:linear-schedule}
    Specialize \corref{cor:pf-ode-bimodal} to the "Linear" schedule
    \[
        \gamma(t)=1-t,
        \qquad
        \alpha(t)=t,
        \quad t\in[0,1].
    \]
    Then the marginal variance is
    \[
        \Sigma_t
        =\gamma(t)^2\,\sigma^2 + \alpha(t)^2
        =(1-t)^2\,\sigma^2 + t^2,
    \]
    and one obtains the closed‐form drift of the Probability‐Flow ODE:
    \[
        \boxed{
            \frac{\mathrm d x_t}{\mathrm d t}
            = -\frac{x_t}{1-t}
            + \frac{t}{(1-t)\,\Sigma_t}
            \Bigl[
                x_t
                - m\,(1-t)\,
                \tanh\!\Bigl(\frac{m\,(1-t)}{\Sigma_t}\,x_t\Bigr)
                \Bigr].
        }
    \]
\end{remark}

\begin{proof}
    We begin with the general drift formula from \corref{cor:pf-ode-bimodal}:
    \[
        \frac{\mathrm d x_t}{\mathrm d t}
        = \frac{\gamma'(t)}{\gamma(t)}\,x_t
        + \Bigl[\,\alpha(t)\,\alpha'(t)
            - \frac{\gamma'(t)}{\gamma(t)}\,\alpha(t)^2
            \Bigr]
        \frac{1}{\Sigma_t}
        \Bigl[
            x_t
            - \gamma(t)\,m\,
            \tanh\!\Bigl(\frac{\gamma(t)\,m}{\Sigma_t}\,x_t\Bigr)
            \Bigr].
    \]
    We substitute \(\gamma(t)=1-t\) and \(\alpha(t)=t\) and compute each piece:

    1. Derivative of \(\gamma\):
    \[
        \gamma'(t) = -1,
        \quad\Longrightarrow\quad
        \frac{\gamma'(t)}{\gamma(t)} = -\frac{1}{1-t}.
    \]

    2. Marginal variance:
    \[
        \Sigma_t
        = (1-t)^2\,\sigma^2 + t^2.
    \]

    3. Square of \(\alpha\) and its derivative:
    \[
        \alpha(t)^2 = t^2,
        \quad
        \frac{\mathrm d}{\mathrm d t}\bigl[\alpha(t)^2\bigr]
        = 2t
        \;\Longrightarrow\;
        2\,\alpha\,\alpha' = 2t
        \;\Longrightarrow\;
        \alpha(t)\,\alpha'(t) = t.
    \]

    4. Combination term:
    \[
        \alpha\,\alpha'
        - \frac{\gamma'}{\gamma}\,\alpha^2
        = t - \Bigl(-\frac{1}{1-t}\Bigr)\,t^2
        = t + \frac{t^2}{1-t}
        = \frac{t}{1-t}.
    \]

    Substituting these into the general drift gives
    \[
        \frac{\mathrm d x_t}{\mathrm d t}
        = -\frac{x_t}{1-t}
        + \frac{t}{(1-t)\,\Sigma_t}
        \Bigl[
            x_t
            - m\,(1-t)\,
            \tanh\!\Bigl(\frac{m\,(1-t)}{\Sigma_t}\,x_t\Bigr)
            \Bigr],
    \]
    which is the claimed closed‐form Probability‐Flow ODE.
\end{proof}

\begin{remark}[OU‐type schedule for the Hermite–Gaussian \(n=1\) case]
    Apply \lemref{lem:pf-ode} to the one‐dimensional Hermite–Gaussian initial density
    \[
        p_1(x)\;\propto\;x\,e^{-x^2/2},\quad x>0,
    \]
    and the OU‐type schedule
    \[
        \gamma(t)=e^{-s t},\qquad \alpha(t)=\sqrt{1-e^{-2s t}}.
    \]
    Then the Probability–Flow ODE \eqref{eq:pf-ode-final} reduces to the scalar form
    \[
        \boxed{
            \frac{\dm x_t}{\dm t} \;=\; -\,\frac{s}{x_t},
            \quad t\in[0,1],
        }
    \]
    and integrating from \(t=1\) (with \(x(1)=x_1\)) to any \(t\in[0,1]\) yields the explicit solution
    \[
        \boxed{
            x_t \;=\;\sqrt{x_1^2 \;+\; 2\,s\,(1-t)}\,.}
    \]
\end{remark}

\begin{proof}
    By \lemref{lem:pf-ode}, the drift of the Probability–Flow ODE is
    \[
        \frac{\dm x_t}{\dm t}
        = \frac{\gamma'(t)}{\gamma(t)}\,x_t
        \;-\;
        \Bigl[\,
            \alpha(t)\,\alpha'(t)
            -\tfrac{\gamma'(t)}{\gamma(t)}\,\alpha(t)^2
            \Bigr]\,
        \partial_{x_t}\ln p_t(x_t).
    \]
    Under \(\gamma(t)=e^{-st}\) and \(\alpha(t)=\sqrt{1-e^{-2st}}\) one computes
    \[
        \frac{\gamma'}{\gamma}=-s,\quad
        2\,\alpha\,\alpha'=2s\,e^{-2st}
        \;\Longrightarrow\;
        \alpha\,\alpha'=s\,e^{-2st},
        \quad
        -\frac{\gamma'}{\gamma}\,\alpha^2
        =s\,(1-e^{-2st}),
    \]
    hence
    \[
        \alpha\,\alpha' \;-\;\frac{\gamma'}{\gamma}\,\alpha^2
        =s\,e^{-2st} + s\,(1-e^{-2st}) = s.
    \]
    Moreover, one checks that the marginal density remains
    \(\;p_t(x)\propto x\,e^{-x^2/2}\),
    so
    \(\partial_x\ln p_t(x)=\tfrac1{x}-x\).  Therefore
    \[
        \frac{\dm x_t}{\dm t}
        = -\,s\,x_t \;-\;s\,\Bigl(\tfrac1{x_t}-x_t\Bigr)
        = -\,\frac{s}{x_t}.
    \]
    Separating variables,
    \[
        \frac{\dm x}{\dm t}=-\frac{s}{x}
        \quad\Longrightarrow\quad
        \int_{x_1}^{\,x_t} x\,dx
        = -s\!\int_{1}^{\,t} ds
        \;\Longrightarrow\;
        \frac{x_t^2 - x_1^2}{2} = -s\,(t-1),
    \]
    whence
    \[
        x_t^2 = x_1^2 + 2\,s\,(1-t),
        \quad
        x_t = \sqrt{x_1^2 + 2\,s\,(1-t)},
    \]
    taking the positive root on \(x>0\).
\end{proof}

\begin{lemma}[Picard–Lindelöf existence and uniqueness]
    \label{lem:PL}
    Let $v\colon\R\times[0,1]\to\R$ be continuous in $t$ and satisfy the \emph{uniform Lipschitz} condition
    \[
        |v(x,t)-v(y,t)|\;\le\;L\,|x-y|,
        \quad\forall\,x,y\in\R,\;t\in[0,1],
    \]
    for some constant $L<\infty$.  Then for any $t_0\in[0,1]$ and any initial value $x(t_0)=x_0$, there exists $\delta>0$ and a unique function
    \[
        x\in C^1\bigl([t_0-\delta,t_0+\delta]\cap[0,1]\bigr)
    \]
    solving the ODE
    \[
        \frac{\dm x}{\dm t}(t)\;=\;v\bigl(x(t),t\bigr),
        \quad
        x(t_0)=x_0.
    \]
\end{lemma}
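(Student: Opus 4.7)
The plan is to reformulate the initial value problem as a fixed point equation and apply the Banach contraction mapping theorem. First, I would observe that by the fundamental theorem of calculus (together with continuity of $t\mapsto v(x(t),t)$), a $C^1$ function $x$ satisfies $\dot x(t)=v(x(t),t)$ with $x(t_0)=x_0$ if and only if it satisfies the integral equation
\begin{equation*}
    x(t)\;=\;x_0\;+\;\int_{t_0}^{t} v\bigl(x(s),s\bigr)\,\dm s.
\end{equation*}
Hence it suffices to produce a unique continuous solution of this integral equation on some interval $I_\delta:=[t_0-\delta,t_0+\delta]\cap[0,1]$.

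Next I would choose $\delta\in(0,1/(2L))$ and work on the Banach space $X=C(I_\delta,\R)$ equipped with the sup norm $\|\cdot\|_\infty$. Define the Picard operator $T\colon X\to X$ by
\begin{equation*}
    (Tx)(t)\;:=\;x_0\;+\;\int_{t_0}^{t} v\bigl(x(s),s\bigr)\,\dm s.
\end{equation*}
Continuity of $Tx$ follows from continuity of $v$ in $t$ and the Lipschitz dependence in $x$; these same hypotheses also make the integrand a continuous function of $s$ for any $x\in X$, so $Tx$ is in fact $C^1$. Thus $T$ is well defined. For any $x,y\in X$ and $t\in I_\delta$ the uniform Lipschitz bound gives
\begin{equation*}
    \bigl|(Tx)(t)-(Ty)(t)\bigr|\;\le\;\int_{\min(t_0,t)}^{\max(t_0,t)} L\,|x(s)-y(s)|\,\dm s\;\le\;L\,\delta\,\|x-y\|_\infty,
\end{equation*}
so $\|Tx-Ty\|_\infty\le L\delta\,\|x-y\|_\infty$ with $L\delta<1/2<1$, i.e.\ $T$ is a strict contraction.

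I would then invoke the Banach fixed point theorem: since $(X,\|\cdot\|_\infty)$ is complete and $T$ is a contraction, $T$ has a unique fixed point $x^\star\in X$. This $x^\star$ solves the integral equation, is automatically $C^1$ on $I_\delta$ because the right-hand side of the integral equation is differentiable, and satisfies $\dot x^\star(t)=v(x^\star(t),t)$ with $x^\star(t_0)=x_0$. Uniqueness within $C^1(I_\delta)$ follows because any other $C^1$ solution would also be a fixed point of $T$ in $X$, contradicting uniqueness of the fixed point.

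The only subtle step is the choice of function space: one must ensure that $T$ really maps $X$ into itself (no growth constraint is needed here since the codomain is all of $\R$ and the Lipschitz condition is global, so there is no need to restrict to a ball, which would be the main technical obstacle in the merely locally Lipschitz setting). The global Lipschitz hypothesis of the lemma sidesteps the usual difficulty of controlling $\|v(x,\cdot)\|_\infty$ on an invariant ball, making the contraction estimate above depend only on $\delta$ and $L$; this is what allows a clean choice $\delta<1/L$ without any a priori bound on $x_0$.
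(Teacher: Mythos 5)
Your proof is correct, and it takes a genuinely cleaner route than the paper's. The paper runs the classical local‐Lipschitz argument: it restricts the Picard operator to a closed ball $B_R=\{x\in C(I_\delta):\|x-x_0\|_\infty\le R\}$, bounds $v$ by some $M$ on a compact set, sets $R=M\delta$ so that $\Gamma$ maps $B_R$ into itself, and then contracts inside that ball. That bookkeeping is only needed when Lipschitzness is merely local, because one must prevent trajectories from escaping the region where the Lipschitz estimate applies; it also introduces a mild circularity ($M$ is taken to be a bound over $B_R$, then $R$ is set in terms of $M$) that a careful reader has to untangle. You instead observe — correctly, and explicitly — that the hypothesis here is a \emph{global} Lipschitz bound uniform in $t$, so the contraction estimate $\|Tx-Ty\|_\infty\le L\delta\|x-y\|_\infty$ holds on all of $X=C(I_\delta,\R)$; there is no invariance issue to check, and the self‐mapping of a ball is simply unnecessary. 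This buys a shorter, tidier argument with no choice of $R$ or $M$. Two minor remarks: your requirement $\delta<1/(2L)$ is slightly stronger than the $L\delta<1$ used in the paper, which is harmless; and when you assert the integrand $s\mapsto v(x(s),s)$ is continuous, it would be worth spelling out that joint continuity of $v$ follows from the stated continuity in $t$ together with the uniform Lipschitz bound in $x$, since the lemma's hypotheses do not literally say ``$v$ is jointly continuous.'' Neither point affects correctness.
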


\begin{proof}
    Fix $t_0\in[0,1]$ and $x_0\in\R$.  Choose $\delta>0$ so small that $(t_0-\delta,t_0+\delta)\subset[0,1]$ and $L\delta<1$.  Define the closed ball
    \[
        B_R=\bigl\{x\in C([t_0-\delta,t_0+\delta],\R):\|x-x_0\|_\infty\le R\bigr\}
    \]
    with $R>0$ to be chosen.  Consider the operator
    \[
        (\Gamma x)(t)
        =x_0+\int_{t_0}^t v\bigl(x(s),s\bigr)\,\dm s.
    \]
    Since $v$ is continuous on the compact set $B_R\times[t_0-\delta,t_0+\delta]$, it is bounded by some $M<\infty$.  If we choose $R=M\delta$, then $\Gamma$ maps $B_R$ into itself:
    \[
        \|\Gamma x - x_0\|_\infty
        \le \sup_{t}\int_{t_0}^t|v(x(s),s)|\,\dm s
        \le M\,\delta = R.
    \]
    Moreover, for any $x,y\in B_R$ and any $t$ in the interval,
    \[
        |(\Gamma x)(t)-(\Gamma y)(t)|
        \le \int_{t_0}^t |v(x(s),s)-v(y(s),s)|\,\dm s
        \le L\,\delta\,\|x-y\|_\infty
        < \|x-y\|_\infty,
    \]
    so $\Gamma$ is a contraction.  By the Banach fixed‐point theorem, $\Gamma$ has a unique fixed point in $B_R$, which is precisely the unique $C^1$ solution of the ODE on $[t_0-\delta,t_0+\delta]\cap[0,1]$.
\end{proof}

\begin{lemma}[Gronwall’s inequality and no blow‐up]
    \label{lem:Gronwall}
    Let \(x\in C^1([0,1])\) satisfy
    \[
        |x'(t)| \;\le\; K\bigl(1 + |x(t)|\bigr),
        \quad t\in[0,1],
    \]
    for some constant \(K\ge0\).  Then
    \[
        |x(t)| \;\le\; \bigl(|x(1)| + 1\bigr)\,e^{K(1-t)} \;-\; 1,
        \quad \forall\,t\in[0,1],
    \]
    and in particular \(x\) does not blow up in finite time on \([0,1]\).
\end{lemma}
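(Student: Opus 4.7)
The plan is to convert the differential inequality into an integral inequality by integrating backward from $t=1$, and then apply a standard (backward) Gronwall argument after absorbing the inhomogeneous $+1$ term into the unknown.

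First, since $x\in C^1([0,1])$, the fundamental theorem of calculus gives
\[
    x(t) \;=\; x(1) \;-\; \int_t^1 x'(s)\,\dm s,
    \qquad t\in[0,1].
\]
Applying the triangle inequality and the hypothesis $|x'(s)|\le K(1+|x(s)|)$ yields
\[
    |x(t)| \;\le\; |x(1)| \;+\; K\int_t^1 \bigl(1+|x(s)|\bigr)\,\dm s
    \;=\; |x(1)| + K(1-t) + K\int_t^1 |x(s)|\,\dm s.
\]
Next I would introduce the auxiliary function $u(t):=|x(t)|+1$, which is continuous and strictly positive on $[0,1]$. Substituting $|x(s)|=u(s)-1$ and simplifying, the $K(1-t)$ terms cancel, and one obtains the clean homogeneous integral inequality
\[
    u(t) \;\le\; a \;+\; K\int_t^1 u(s)\,\dm s,
    \qquad a := |x(1)|+1,
\]
valid for every $t\in[0,1]$.

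The core step is then a backward Gronwall argument applied to this inequality. Define $\Phi(t):=\int_t^1 u(s)\,\dm s$, so that $\Phi'(t)=-u(t)\ge -a-K\Phi(t)$, i.e.\ $\Phi'(t)+K\Phi(t)\ge -a$. Multiplying by the integrating factor $e^{Kt}$ and integrating from $t$ to $1$ (using $\Phi(1)=0$) gives $\Phi(t)\le \tfrac{a}{K}\bigl(e^{K(1-t)}-1\bigr)$ when $K>0$ (the case $K=0$ is trivial). Plugging this back into the integral inequality for $u(t)$ produces
\[
    u(t) \;\le\; a + K\cdot\tfrac{a}{K}\bigl(e^{K(1-t)}-1\bigr) \;=\; a\,e^{K(1-t)},
\]
which is exactly $|x(t)|+1\le(|x(1)|+1)\,e^{K(1-t)}$, i.e.\ the claimed bound. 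Since the right-hand side is finite and continuous on the compact interval $[0,1]$, this immediately rules out blow-up.

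The only mildly delicate point is that $|x(\cdot)|$ need not itself be $C^1$ (it fails to be differentiable at zeros of $x$), which is precisely why I would avoid differentiating $|x|$ directly and instead work with the integrated form, where only the continuity of $x'$ is needed. Apart from that, everything is routine: the transformation $u=|x|+1$ is the standard trick to handle the affine $K(1+|x|)$ right-hand side, and the backward Gronwall step is the standard one, run in reverse time because the bound is anchored at $t=1$.
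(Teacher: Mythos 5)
Your proof is correct and yields the same sharp bound as the paper's, but it takes a genuinely different route. The paper works with the differential inequality directly: it sets $y(t)=|x(t)|+1$, invokes the a.e.\ differentiability of the Lipschitz function $y$ to write $y'(t)=\mathrm{sgn}(x(t))\,x'(t)$, derives $y'(t)+Ky(t)\geq 0$, and applies the integrating factor $e^{Kt}$ to conclude that $e^{Kt}y(t)$ is monotone. You instead integrate the hypothesis backward from $t=1$ to obtain an \emph{integral} inequality $u(t)\leq a+K\int_t^1 u(s)\,\dm s$ (with $u=|x|+1$, $a=|x(1)|+1$), then run a backward Gronwall argument via $\Phi(t)=\int_t^1 u(s)\,\dm s$. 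The arithmetic in your reduction checks out: the $K(1-t)$ terms cancel as claimed, $\Phi'+K\Phi\geq -a$ integrates to $\Phi(t)\leq\tfrac{a}{K}(e^{K(1-t)}-1)$, and back-substitution gives $u(t)\leq ae^{K(1-t)}$, which unpacks to the stated bound. Your route has the advantage you yourself note: it never differentiates $|x(\cdot)|$, so it avoids any appeal to a.e.\ differentiability of Lipschitz functions, whereas the paper's proof quietly relies on that fact when it writes $y'(t)=\mathrm{sgn}(x(t))\,x'(t)$ ``for almost every $t$.'' Your version is therefore a bit more elementary and self-contained; the paper's is shorter once one accepts the a.e.\ differentiation step. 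Both close with the same observation that the resulting bound is finite on the compact interval.
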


\begin{proof}
    Define
    \[
        y(t) \;=\; |x(t)| + 1 \;\ge\; 1.
    \]
    Since \(y(t)\) is Lipschitz, for almost every \(t\) we have
    \[
        y'(t)
        = \frac{\dm}{\dm t}\bigl(|x(t)|+1\bigr)
        = \mathrm{sgn}(x(t))\,x'(t),
    \]
    and hence
    \[
        y'(t)
        \;\ge\; -\,|x'(t)|
        \;\ge\;-\,K\bigl(1+|x(t)|\bigr)
        \;=\;-K\,y(t).
    \]
    Equivalently,
    \[
        y'(t) + K\,y(t)\;\ge\;0.
    \]
    Multiply both sides by the integrating factor \(e^{Kt}\):
    \[
        \frac{\dm}{\dm t}\bigl(e^{Kt}y(t)\bigr)
        = e^{Kt}\bigl(y'(t)+K\,y(t)\bigr)
        \;\ge\;0.
    \]
    Thus the function \(t\mapsto e^{Kt}y(t)\) is non‐decreasing on \([0,1]\).  For any \(t\le1\) we then have
    \[
        e^{Kt}y(t)\;\le\;e^{K\cdot1}y(1)
        \quad\Longrightarrow\quad
        y(t)\;\le\;y(1)\,e^{K(1-t)}
        =\bigl(|x(1)|+1\bigr)\,e^{K(1-t)}.
    \]
    Rewriting \(y(t)=|x(t)|+1\) gives
    \[
        |x(t)| \;\le\; \bigl(|x(1)|+1\bigr)e^{K(1-t)} \;-\; 1,
    \]
    as claimed.  In particular \(|x(t)|<\infty\) for all \(t\in[0,1]\), so no finite‐time blow‐up occurs.
\end{proof}

\begin{lemma}[Gaussian convolution preserves linear‐growth bound]
    \label{lem:gauss-conv}
    Let $p_0\in C^1(\mathbb R)$ be a probability density satisfying
    \[
        \bigl|\partial_x\log p_0(x)\bigr|\;\le\;A + B\,|x|,
        \quad A,B<\infty,\ \forall x\in\R,
    \]
    and assume furthermore that
    $
        \|p_0\|_\infty \;=\;\sup_{x\in\R}p_0(x)\;\le\;M<\infty.
    $
    For each $\sigma>0$, define the Gaussian kernel
    $
        \phi_\sigma(u)
        =\frac1{\sqrt{2\pi}\,\sigma}
        \exp\!\Bigl(-\tfrac{u^2}{2\sigma^2}\Bigr),
    $
    and set
    $
        p_\sigma(x)\;=\;(p_0*\phi_\sigma)(x)
        =\int_{\!\R}p_0(y)\,\phi_\sigma(x-y)\,\mathrm{d}y.
    $
    Then $p_\sigma\in C^\infty(\R)$ and there exist
    \[
        A(\sigma)=A + B\,M\,\sigma\sqrt{\tfrac{2}{\pi}},
        \quad
        B(\sigma)=B,
    \]
    such that
    \[
        \bigl|\partial_x\log p_\sigma(x)\bigr|
        \;\le\;A(\sigma)\;+\;B(\sigma)\,|x|,
        \quad\forall x\in\R.
    \]
\end{lemma}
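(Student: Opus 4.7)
The plan is to establish smoothness of $p_\sigma$ first, then control its score $s_\sigma(x):=\partial_x\log p_\sigma(x)$ by transferring the derivative onto $p_0$ via integration by parts and using the two hypotheses ($|\partial_y\log p_0|\le A+B|y|$ and $p_0\le M$) on the numerator and denominator separately.

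First, I would note that $\phi_\sigma\in C^\infty(\R)$ with all derivatives decaying faster than any polynomial, so differentiation under the integral sign applies in $p_\sigma=p_0*\phi_\sigma$ and yields $p_\sigma\in C^\infty(\R)$. Using $\partial_x\phi_\sigma(x-y)=-\partial_y\phi_\sigma(x-y)$ and integrating by parts (boundary terms vanish since $p_0$ is an integrable $C^1$ density), I would write
\[
\partial_x p_\sigma(x)
=\int_{\!\R}p_0'(y)\,\phi_\sigma(x-y)\,\dm y.
\]
Hence, invoking the pointwise bound $|p_0'(y)|\le(A+B|y|)\,p_0(y)$,
\[
|s_\sigma(x)|
=\frac{|\partial_x p_\sigma(x)|}{p_\sigma(x)}
\le A + B\,\mathbb{E}\bigl[|Y|\,\big|\,X=x\bigr],
\]
where the posterior $q(y\mid x)=p_0(y)\phi_\sigma(x-y)/p_\sigma(x)$ arises from viewing $X=Y+\sigma Z$ with $Y\sim p_0$ and $Z\sim\mathcal N(0,1)$ independent.

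Next I would apply the triangle inequality $|Y|\le|x|+|Y-x|$ to reduce the task to estimating $\mathbb{E}[|Y-x|\mid X=x]$. Substituting $w=x-y$ and using the bound $p_0\le M$ on the numerator and the first absolute moment $\int|w|\phi_\sigma(w)\,\dm w=\sigma\sqrt{2/\pi}$, I obtain
\[
\mathbb{E}\bigl[|Y-x|\,\big|\,X=x\bigr]
=\frac{\int|w|\,p_0(x-w)\,\phi_\sigma(w)\,\dm w}{p_\sigma(x)}
\le\frac{M\,\sigma\sqrt{2/\pi}}{p_\sigma(x)}.
\]
Collecting terms gives the stated linear-growth bound $|s_\sigma(x)|\le A+B|x|+BM\sigma\sqrt{2/\pi}$, provided the $p_\sigma(x)$ in the denominator can be absorbed.

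The main obstacle is precisely this last step: $p_\sigma$ is a probability density so it need not satisfy $p_\sigma(x)\ge 1$, and a naive application of $p_0\le M$ leaves a factor $1/p_\sigma(x)$ that does not match the claimed constant. To close this gap I would exploit that the posterior $q(\cdot\mid x)$ is dominated by a Gaussian shape: since $p_0\le M$, one has $q(y\mid x)\le(M/p_\sigma(x))\phi_\sigma(x-y)$, and the posterior first absolute moment is controlled by the Gaussian's first absolute moment up to a ratio that is harmless in the regime $p_\sigma(x)\gtrsim M$; in the complementary regime I would split into cases and use the crude bound $p_\sigma\le M$ together with an integration of the score hypothesis along a short interval anchored at a point where $p_0$ is appreciable, which propagates a Gaussian-type lower bound on $p_\sigma$ across all of $\R$. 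Making this dichotomy quantitative — so that the combined estimate gives exactly the constants $A(\sigma)=A+BM\sigma\sqrt{2/\pi}$ and $B(\sigma)=B$ without any residual $\sigma$-independent factor — is the delicate technical step; the rest of the argument is routine convolution calculus.
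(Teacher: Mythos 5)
Your plan reproduces the paper's argument step for step: smoothness by differentiation under the integral, integration by parts to move $\partial_x$ onto $p_0$, the score bound to obtain $|\partial_x\log p_\sigma(x)|\le A+B\,\mathbb{E}[|Y|\mid X=x]$, the triangle inequality $|Y|\le|x|+|Y-x|$, and finally $p_0\le M$ together with the Gaussian first absolute moment to control $\mathbb{E}[|Y-x|\mid X=x]$. The obstacle you identify --- the stray factor $1/p_\sigma(x)$ --- is genuine, and, importantly, it is not only an obstacle in your write-up: \emph{the paper's own proof commits exactly this error.} Its ``Change of variables'' step correctly yields
\[
\mathbb{E}\bigl[|Y|\mid X=x\bigr]\;\le\;|x|\;+\;\frac{1}{p_\sigma(x)}\int|u|\,p_0(x+u)\,\phi_\sigma(u)\,\mathrm{d}u,
\]
but the ``Conclusion'' step then silently discards the $p_\sigma(x)$ from the denominator and asserts $\mathbb{E}[|Y|\mid X=x]\le|x|+M\sigma\sqrt{2/\pi}$, which would require $p_\sigma(x)\ge1$ everywhere --- false for a probability density. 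The slip is already visible in the lemma's statement: the addend $B\,M\,\sigma\sqrt{2/\pi}$ has dimension $[B][M]\cdot\mathrm{length}$, which does not match the dimension of $A$ (a score, i.e.\ inverse length), so $A(\sigma)=A+BM\sigma\sqrt{2/\pi}$ is not even a dimensionally well-formed constant; restoring the dropped $1/p_\sigma(x)$ is precisely what would repair the dimensions.

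Your proposed fix, however, will not close the gap. Since $p_\sigma=p_0*\phi_\sigma\le M$ pointwise, the regime $p_\sigma(x)\gtrsim M$ that you call ``harmless'' is at most a thin neighbourhood of the mode, and in the generic complementary region the score hypothesis on $p_0$ yields only a Gaussian-type lower bound $p_\sigma(x)\gtrsim\exp(-a|x|-bx^2)$, so $M\sigma\sqrt{2/\pi}/p_\sigma(x)$ grows super-polynomially in $|x|$ --- far worse than the linear growth the conclusion requires. Worse still, the intermediate target $\mathbb{E}[|Y-x|\mid X=x]\le\text{const}$ is simply false: already for $p_0=\mathcal{N}(y_0,\epsilon^2)$ the posterior mean of $Y-x$ equals $\tfrac{\sigma^2}{\epsilon^2+\sigma^2}(y_0-x)$, which grows linearly in $|x|$. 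Any correct proof must therefore allow a contribution linear in $|x|$ from the posterior-moment term (hence $B(\sigma)>B$ in general), and the $1/p_\sigma(x)$-dichotomy route is a dead end. Since the downstream application (Picard--Lindel\"of and Gronwall in the theorem that follows) only needs \emph{some} finite $A(\sigma),B(\sigma)$, the lemma would be better restated without explicit formulas for these constants, and proved by a different estimate on the posterior moment rather than by a refinement of the $M/p_\sigma(x)$ argument.
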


\begin{proof}
    \emph{Smoothness and differentiation under the integral.}
    Since $\phi_\sigma\in C^\infty(\R)$ decays rapidly and $p_0\in L^\infty(\R)$,
    by dominated convergence we may differentiate under the integral to get
    \[
        p_\sigma'(x)
        =\int_{\R}p_0(y)\,\partial_x\phi_\sigma(x-y)\,\mathrm{d}y
        =\int_{\R}p_0(y)\,\phi_\sigma'(x-y)\,\mathrm{d}y.
    \]
    Noting $\partial_y\phi_\sigma(x-y)=-\phi_\sigma'(x-y)$, we rewrite
    \[
        p_\sigma'(x)
        = -\int_{\R}p_0(y)\,\partial_y\phi_\sigma(x-y)\,\mathrm{d}y.
    \]

    \emph{Integration by parts.}
    Integrating the above in $y$ and using that
    $p_0(y)\phi_\sigma(x-y)\to0$ as $|y|\to\infty$, we obtain
    \[
        p_\sigma'(x)
        =\int_{\R}p_0'(y)\,\phi_\sigma(x-y)\,\mathrm{d}y
        =\int_{\R}(\partial_y\log p_0)(y)\,p_0(y)\,\phi_\sigma(x-y)\,\mathrm{d}y.
    \]

    \emph{Bounding $\partial_x\log p_\sigma$.}
    Hence
    \begin{align*}
        \bigl|\partial_x\log p_\sigma(x)\bigr|
         & =\frac{|p_\sigma'(x)|}{p_\sigma(x)}
        =\frac{\bigl|\int(\partial_y\log p_0)(y)\,p_0(y)\,\phi_\sigma(x-y)\,\dm y\bigr|}{p_\sigma(x)} \\
         & \le\frac{{\int|\partial_y\log p_0(y)|\,p_0(y)\,\phi_\sigma(x-y)\,\dm y}}{p_\sigma(x)}
        \le\frac{{\int\bigl(A+B|y|\bigr)\,p_0(y)\,\phi_\sigma(x-y)\,\dm y}}{p_\sigma(x)}              \\
         & =A + B\,\frac{\int |y|\,p_0(y)\,\phi_\sigma(x-y)\,\dm y}{p_\sigma(x)}.
    \end{align*}

    \emph{Change of variables.}
    Set $u=y-x$. Then
    \[
        \int|y|\,p_0(y)\,\phi_\sigma(x-y)\,\dm y
        =\int|x+u|\,p_0(x+u)\,\phi_\sigma(u)\,\dm u
        \le|x|\,p_\sigma(x)+\int|u|\,p_0(x+u)\,\phi_\sigma(u)\,\dm u.
    \]
    Hence
    \[
        \frac{\int|y|\,p_0(y)\,\phi_\sigma(x-y)\,\dm y}{p_\sigma(x)}
        \le|x| + \frac{\int|u|\,p_0(x+u)\,\phi_\sigma(u)\,\dm u}{p_\sigma(x)}.
    \]

    \emph{Using the $L^\infty$‐bound on $p_0$.}
    Since $p_0(x+u)\le M$,
    \[
        \int|u|\,p_0(x+u)\,\phi_\sigma(u)\,\dm u
        \le M\int|u|\,\phi_\sigma(u)\,\dm u
        =M\,\sigma\sqrt{\tfrac{2}{\pi}}.
    \]

    \emph{Conclusion.}
    Combining the above estimates yields
    \[
        \bigl|\partial_x\log p_\sigma(x)\bigr|
        \le A + B\Bigl(|x|+M\,\sigma\sqrt{\tfrac{2}{\pi}}\Bigr)
        =\bigl[A+B\,M\,\sigma\sqrt{\tfrac{2}{\pi}}\bigr]+B\,|x|.
    \]
    Thus one may set
    \[
        A(\sigma)=A+B\,M\,\sigma\sqrt{\tfrac{2}{\pi}},\quad
        B(\sigma)=B,
    \]
    and the lemma follows.
\end{proof}

\begin{figure}[t]
    \centering
    \begin{subfigure}[b]{0.48\textwidth}
        \centering
        \includegraphics[width=\linewidth]{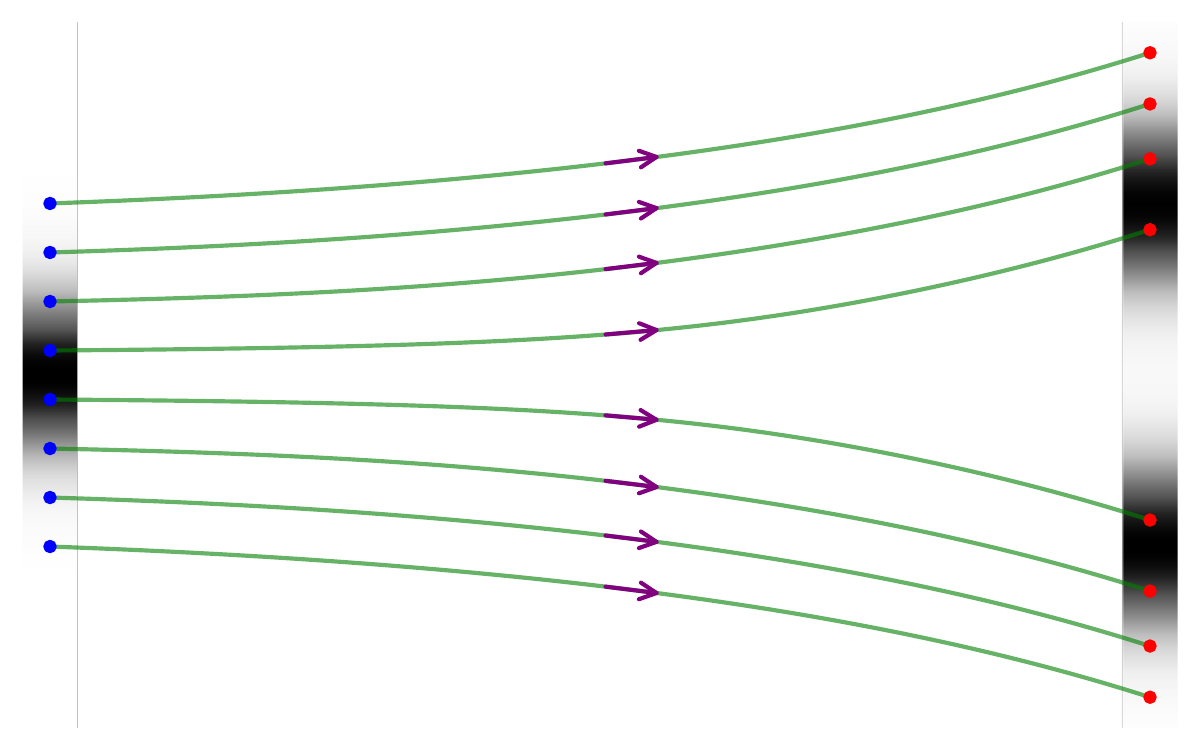}
        \caption{\textbf{OU-type.}}
    \end{subfigure}
    \hfill
    \begin{subfigure}[b]{0.48\textwidth}
        \centering
        \includegraphics[width=\linewidth]{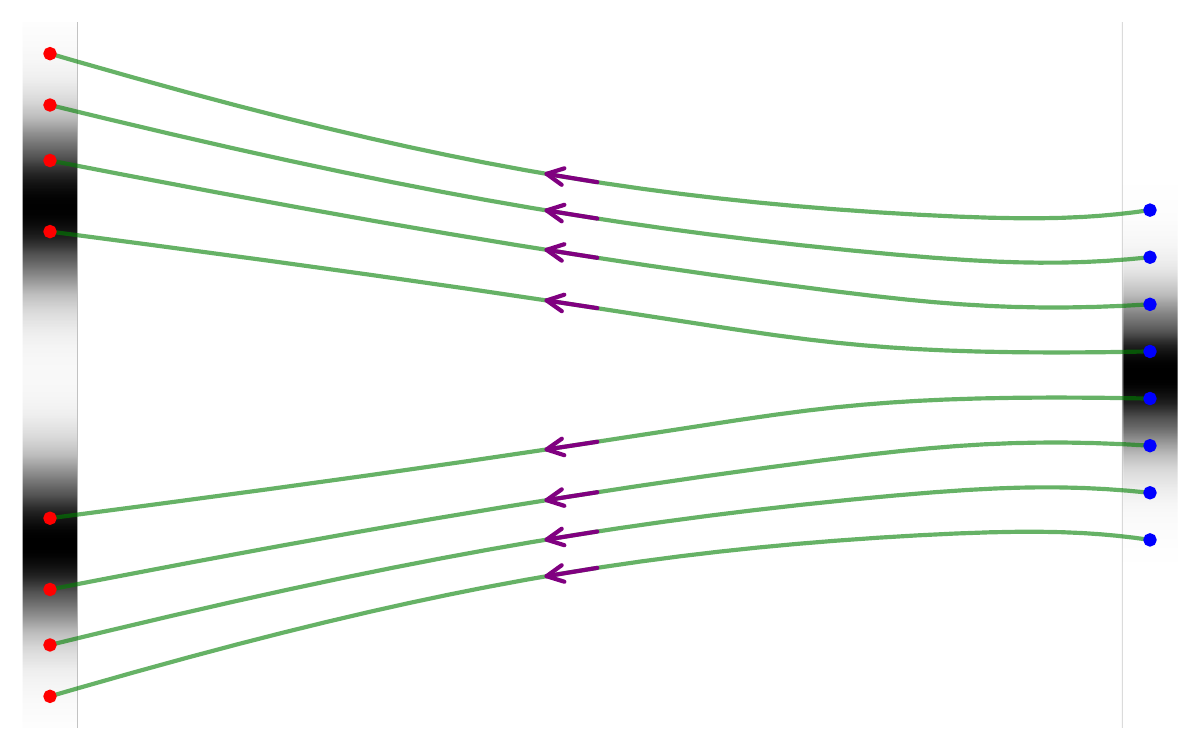}
        \caption{\textbf{Linear.}}
    \end{subfigure}
    \caption{\small{
            \textbf{Comparison of two optimal Probability-Flow ODE trajectories on 1D data.}
            Starting from identical initial noise distributions and noise points, we apply two distinct transport types—OU-type and Linear—to analyze their trajectories. The results show that both types successfully converge to the same target distribution (a bimodal Gaussian) and accurately match the \textit{same} target data points, despite following different ODE paths.
        }}
    \label{fig:visualize_1d_traj}
\end{figure}

\begin{theorem}[Monotonicity and uniqueness of the 1D probability‐flow map]
    Let \(p_0(x)\) be a probability density on \(\R\) satisfying the linear‐growth bound
    \[
        \bigl|\partial_x\log p_0(x)\bigr|
        \;\le\;
        A + B\,|x|,
        \qquad
        A,B<\infty,\quad\forall x\in\R.
    \]
    Let \(z\sim\mathcal N(0,1)\) be independent of \(x_0\), and let
    \(\alpha,\gamma:[0,1]\to\R\) be \(C^1\) functions with
    \[
        \alpha(0)=0,\;\alpha(1)=1,
        \quad
        \gamma(0)=1,\;\gamma(1)=0,
        \quad
        \gamma(t)\neq0\ \forall t\in(0,1).
    \]
    Define the forward process
    \[
        x_t \;=\;\alpha(t)\,z\;+\;\gamma(t)\,x_0,
        \qquad t\in[0,1],
    \]
    so that \(x_0\sim p_0\) and \(x_1\sim\mathcal N(0,1)\).  Let \(p_t\) denote
    the density of \(x_t\).  By~\lemref{lem:pf-ode}, the velocity field:
    \[
        v(x,t)
        = \frac{\gamma'(t)}{\gamma(t)}\,x
        \;-\;
        \Bigl[\,
            \alpha(t)\,\alpha'(t)
            \;-\;
            \frac{\gamma'(t)}{\gamma(t)}\,\alpha(t)^2
            \Bigr]\,
        \partial_x\log p_t(x).
    \]
    Consider the backward ODE
    $
        \frac{\dm}{\dm t}\,x_t = v\bigl(x_t,t\bigr)
    $,
    Then for each \(x_1\in\R\) there is a unique \(C^1\) solution \(t\mapsto x_t(x_1)\) on \([0,1]\), and the map
    \[
        g(x_1)\;=\;x_0(x_1)
        \;=\;
        F_0^{-1}\!\bigl(F_1(x_1)\bigr)
    \]
    is strictly increasing on \(\R\) and is the unique increasing transport
    pushing \(p_1\) onto \(p_0\).
\end{theorem}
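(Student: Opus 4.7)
The plan is to proceed in three stages: first, establish sufficient regularity and linear-growth bounds on the velocity field $v(\cdot,t)$ so that standard ODE theory applies; second, deduce existence, uniqueness, and strict monotonicity of the flow from ODE uniqueness; third, identify $g$ with the monotone rearrangement $F_0^{-1}\circ F_1$ via the continuity-equation property of the PF-ODE.

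For Step 1, I would control the score. The density of $\gamma(t)x_0$ is $q_t(y)=p_0(y/\gamma(t))/|\gamma(t)|$, and $p_t = q_t * \phi_{\alpha(t)}$. A chain-rule computation yields $|\partial_y\log q_t(y)|\le A/|\gamma(t)| + B|y|/\gamma(t)^2$. Applying \lemref{lem:gauss-conv} with these rescaled constants and $\sigma=\alpha(t)$ delivers the linear-growth bound $|\partial_x\log p_t(x)|\le A(t)+B(t)|x|$ for each $t\in(0,1)$. Differentiating the convolution once more gives local Lipschitz continuity of $x\mapsto \partial_x\log p_t(x)$, uniform on compact subintervals of $(0,1)$. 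Plugging into the formula of \lemref{lem:pf-ode}, I conclude that on any $[a,b]\subset(0,1)$ the field $v(x,t)$ is continuous in $t$, uniformly Lipschitz in $x$, and satisfies $|v(x,t)|\le K_{a,b}(1+|x|)$.

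For Step 2, I would invoke \lemref{lem:PL} to obtain a unique local $C^1$ solution through any interior initial condition, and \lemref{lem:Gronwall} applied to the linear-growth bound to rule out blow-up, patching the local pieces into a unique trajectory on all of $(0,1)$. Extension to the closed interval $[0,1]$ follows because Gronwall controls $|x_t|$ as $t\uparrow 1$ and $t\downarrow 0$, so the endpoint values $x_0(x_1)$ and $x_1(x_0)$ are well-defined. Strict monotonicity of $g$ is then immediate: if two trajectories agreed at some $t\in[0,1)$, ODE uniqueness would force them to coincide at $t=1$, contradicting distinct initial data; hence trajectories cannot cross and the flow preserves order.

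For Step 3, I would use the standard fact that the PF-ODE flow pushes $p_1$ forward onto $p_0$ because, by the construction of $v$ in \lemref{lem:pf-ode}, the family $p_t$ satisfies the continuity equation $\partial_t p_t + \partial_x(p_t v)=0$ with boundary densities $p_0$ and $p_1$. Since $g:\R\to\R$ is then a strictly increasing measurable map pushing the atomless measure $p_1$ onto $p_0$, the elementary uniqueness of the monotone rearrangement on the real line forces $g=F_0^{-1}\circ F_1$ and shows this is the unique increasing transport. The main obstacle I anticipate is the endpoint behavior: the coefficient $\gamma'(t)/\gamma(t)$ and the score multiplier $\alpha\alpha' - (\gamma'/\gamma)\alpha^2$ can be singular at $t=1$ where $\gamma(t)\to 0$, and the score $\partial_x\log p_t$ may blow up as $t\downarrow 0$ since $p_0$ is only $C^1$; matching these singularity rates carefully enough to guarantee a genuine $C^1$ extension of the trajectory to the closed interval $[0,1]$ is where the work lies, and I would likely need to treat the two endpoints with separate asymptotic estimates rather than a single uniform Lipschitz argument.
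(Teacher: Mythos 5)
Your proposal follows the same overall strategy as the paper's proof for the ODE existence/uniqueness part — control the score of $p_t$ via the Gaussian-convolution estimate (\lemref{lem:gauss-conv}), deduce linear growth and Lipschitz bounds on $v$, then invoke Picard–Lindel\"of (\lemref{lem:PL}) and Gronwall (\lemref{lem:Gronwall}) — but diverges on how monotonicity and the explicit formula $g=F_0^{-1}\circ F_1$ are obtained. The paper's proof computes $\tfrac{\dm}{\dm t}F_t(x_t)=0$ along characteristics directly from the continuity equation, which delivers in a single stroke the conserved quantity $F_t(x_t)=F_1(x_1)$, hence $x_0(x_1)=F_0^{-1}(F_1(x_1))$, hence strict monotonicity (since $F_0^{-1}$ and $F_1$ are strictly increasing). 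You instead argue strict monotonicity from trajectory non-crossing (ODE uniqueness) and then separately identify $g$ as the monotone rearrangement by combining the push-forward property $g_\#p_1=p_0$ with uniqueness of one-dimensional increasing transport. Both routes are valid; the paper's CDF-conservation computation is more economical, while yours isolates the monotonicity claim from the formula and might generalize more easily to settings where explicit quantile formulas are unavailable.

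Two points where your version is arguably \emph{more} careful than the paper's. First, you correctly note that $p_t$ is not simply $p_0*\mathcal N(0,\alpha(t)^2)$ — it is the rescaled density $q_t(y)=p_0(y/\gamma(t))/|\gamma(t)|$ convolved with $\phi_{\alpha(t)}$ — and you track the rescaled constants $A/|\gamma(t)|$, $B/\gamma(t)^2$ through \lemref{lem:gauss-conv}. The paper's proof elides this rescaling. Second, you flag the genuine obstacle that these constants, and the coefficient $\gamma'(t)/\gamma(t)$ in the velocity, blow up as $t\uparrow 1$ (where $\gamma(1)=0$), so a uniform Lipschitz/linear-growth constant $K$ on all of $[0,1]$ does not follow from the stated bounds alone; the paper asserts such a uniform $K$ without addressing this. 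Your instinct that one needs separate endpoint asymptotics — for instance, exploiting that the score term in $v$ cancels the $\gamma'/\gamma$ singularity when $p_t$ approaches a Gaussian as $t\to 1$, and that the score of $p_t$ at $t\to 0$ is controlled by the hypothesis on $p_0$ directly — is the right way to close this. As stated, both your proof and the paper's leave this endpoint matching implicit, but you at least name the gap.
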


\begin{proof}
    {(1) Global existence and uniqueness.}
    Since
    \[
        x_t=\alpha(t)\,z+\gamma(t)\,x_0,
        \quad
        p_t=p_0*\mathcal N\bigl(0,\alpha(t)^2\bigr),
    \]
    standard Gaussian‐convolution estimates imply
    \(\bigl|\partial_x\log p_t(x)\bigr|\le A_t+B_t|x|\) for some continuous
    \(A_t,B_t\) (cf., \lemref{lem:gauss-conv}).  Hence there exists \(K<\infty\) such that
    \[
        |v(x,t)|\;\le\;K\,(1+|x|),
        \qquad
        \bigl|\partial_x v(x,t)\bigr|\;\le\;K,
        \quad
        \forall\,x\in\R,\;t\in[0,1].
    \]
    In particular \(v\) is globally Lipschitz in \(x\) (uniformly in \(t\)) and
    of linear growth.
    By the \lemref{lem:PL} together with \lemref{lem:Gronwall} to prevent finite‐time blow‐up, the backward ODE
    admits for each \(x_1\) a unique \(C^1\) solution on \([0,1]\).

    \smallskip
    {(2) Conservation of the CDF.}
    Let
    \[
        F_t(x)=\int_{-\infty}^x p_t(u)\,\dm u
        \quad(\text{the CDF of }p_t).
    \]
    Since \(p_t\) satisfies the continuity equation
    \(\partial_t p_t+\partial_x(v\,p_t)=0\), along any characteristic
    \(t\mapsto x_t\) one computes
    \[
        \frac{\dm}{\dm t}F_t(x_t)
        = \int_{-\infty}^{x_t}\partial_t p_t(u)\,\dm u
        +p_t(x_t)\,\frac{\dm x_t}{\dm t}
        = -\bigl[v\,p_t\bigr]_{-\infty}^{x_t}
        +p_t(x_t)\,v(x_t,t)
        = 0,
    \]
    using \(\lim_{u\to-\infty}p_t(u)=0\).  Hence
    \(\,F_t(x_t)=F_1(x_1)\) for all \(t\in[0,1]\).

    \smallskip
    {(3) Quantile representation.}
    Evaluating at \(t=0\) gives
    \[
        F_0\bigl(x_0(x_1)\bigr)
        = F_1(x_1).
    \]
    Since \(F_0\colon\R\to(0,1)\) is strictly increasing and onto, it has an
    inverse \(F_0^{-1}\), and thus
    \[
        x_0(x_1)
        = F_0^{-1}\!\bigl(F_1(x_1)\bigr).
    \]

    \smallskip
    {(4) Monotonicity and uniqueness.}
    If \(x_1<y_1\) then \(F_1(x_1)<F_1(y_1)\), so
    \[
        g(x_1)
        = F_0^{-1}\bigl(F_1(x_1)\bigr)
        < F_0^{-1}\bigl(F_1(y_1)\bigr)
        = g(y_1),
    \]
    showing \(g\) is strictly increasing.  In one dimension the strictly
    increasing transport between two given laws is unique, so \(g\) is the
    unique increasing map pushing \(p_1\) onto \(p_0\).
    A case study presented in \figref{fig:visualize_1d_traj} validates this theorem, considering the specific schedules discussed in \remarkref{rmk:linear-schedule} and \remarkref{rmk:ou-schedule-general}.
\end{proof}

\begin{lemma}[Monotone transport from Gaussian to \(P\)]
    Let \(Z\sim N(0,1)\) be a standard normal random variable and let \(X\) be a random variable with distribution \(P\) on \(\mathbb{R}\), having cumulative distribution function (CDF) \(F_P\).  Define
    \[
        \Phi(z)=\Pr[Z\le z],
        \qquad
        F_P^{-1}(u)=\inf\{x:F_P(x)\ge u\},\;u\in(0,1).
    \]
    Then there exists a non‐decreasing continuous function
    $
        g(z)=F_P^{-1}\bigl(\Phi(z)\bigr)
    $
    such that \(g(Z)\overset{d}{=}X\) if and only if \(P\) has no atoms (i.e.\ \(F_P\) is continuous).  Moreover, if \(F_P\) is strictly increasing then \(g\) is unique.
\end{lemma}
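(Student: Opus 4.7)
My plan is to decompose the argument into three steps---the distributional identity, the continuity characterization, and uniqueness---with the continuity direction being the main obstacle.

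First, I would establish $g(Z)\overset{d}{=}X$ unconditionally via the Galois-type equivalence $F_P^{-1}(u)\le x \iff u \le F_P(x)$, which holds for every right-continuous CDF. Since $\Phi$ is a continuous strictly increasing CDF, $U:=\Phi(Z)$ is $\mathrm{Uniform}(0,1)$, and therefore
\[
\Pr[g(Z)\le x] \;=\; \Pr[U\le F_P(x)] \;=\; F_P(x),
\]
which is the CDF of $X$. Non-decreasingness of $g$ follows immediately from composing the two non-decreasing maps $\Phi$ and $F_P^{-1}$.

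Second, for the continuity characterization, I would note that $\Phi:\mathbb{R}\to(0,1)$ is a homeomorphism, so continuity of $g=F_P^{-1}\circ\Phi$ on $\mathbb{R}$ reduces to continuity of $F_P^{-1}$ on $(0,1)$. The quantile function is always left-continuous, and its failures of right-continuity are in bijection with flat plateaus of $F_P$. Under the convention implicit in the lemma that $\mathrm{supp}(P)$ is an interval (so $F_P$ has no plateaus in the interior of $(0,1)$), this reduces the characterization to $F_P$ being continuous, i.e., $P$ atomless. This is the subtle part, because the intuitive duality ``atoms of $F_P \leftrightarrow$ jumps of $F_P^{-1}$'' is actually reversed: atoms produce continuous flat regions in the quantile function, while jumps of the quantile function arise from plateaus of $F_P$. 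Reconciling the lemma's ``iff'' therefore requires making the support assumption explicit at the start.

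Third, for uniqueness when $F_P$ is strictly increasing, let $\tilde g$ be another non-decreasing continuous map with $\tilde g(Z)\overset{d}{=}X$. For every $y\in\mathbb{R}$, continuity and monotonicity of $\tilde g$ give $\{\tilde g\le y\}=(-\infty,\tilde g^{-1}_{\max}(y)]$ with $\tilde g^{-1}_{\max}(y):=\sup\{z:\tilde g(z)\le y\}$, so
\[
F_P(y) \;=\; \Pr[\tilde g(Z)\le y] \;=\; \Phi\bigl(\tilde g^{-1}_{\max}(y)\bigr).
\]
Injectivity of $F_P$ pins down $\tilde g^{-1}_{\max}=\Phi^{-1}\circ F_P$; the same computation applied to $g$ yields $g^{-1}_{\max}=\Phi^{-1}\circ F_P$. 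Since a non-decreasing continuous function is uniquely recovered from its right-continuous inverse via $f(z)=\inf\{y:f^{-1}_{\max}(y)\ge z\}$, we conclude $\tilde g=g$ pointwise.
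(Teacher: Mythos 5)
You correctly flag the central trouble here, and it is worth dwelling on because the defect is in the lemma statement itself, not merely its proof. For the left-continuous quantile $F_P^{-1}(u)=\inf\{x:F_P(x)\ge u\}$, the duality is indeed reversed relative to the naive expectation: a jump of $F_P$ at $x_0$ (an atom of $P$) produces a flat but \emph{continuous} stretch of $F_P^{-1}$, whereas a plateau of $F_P$ on an interval (a gap in $\mathrm{supp}(P)$) produces a genuine jump of $F_P^{-1}$. The lemma's ``iff'' therefore fails in both directions. For $P=\tfrac12\delta_0+\tfrac12\,\mathrm{Unif}[-1,1]$, which has an atom, the map $F_P^{-1}\circ\Phi$ is continuous and non-decreasing with $g(Z)\overset{d}{=}X$; for the atomless $P=\tfrac12\,\mathrm{Unif}[0,1]+\tfrac12\,\mathrm{Unif}[2,3]$, no continuous non-decreasing pushforward exists at all, since the intermediate value theorem forces any candidate $g$ to hit the forbidden gap $(1,2)$ on a nonempty open set, which has positive $\Phi$-measure. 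The paper's own ``necessity'' argument --- that continuity of $g$ would force it to be constant on a strictly larger interval, producing excess mass at the atom --- is false, as the first example shows: a continuous non-decreasing map can perfectly well be constant on exactly one closed interval of the required $\Phi$-measure and strictly increasing outside it.

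However, your proposed repair does not rescue the ``iff.'' Assuming $\mathrm{supp}(P)$ is an interval eliminates plateaus of $F_P$ on $\{0<F_P<1\}$, and once plateaus are excluded $F_P^{-1}$ is \emph{automatically} continuous, whether or not $F_P$ has jumps --- as you yourself observe when you note that atoms only flatten, never break, the quantile map. So under the interval-support hypothesis the continuous $g$ always exists and the ``only if'' direction is still false. The correct characterization is purely topological and makes no reference to atoms: a continuous non-decreasing $g:\mathbb{R}\to\mathbb{R}$ with $g(Z)\overset{d}{=}X$ exists if and only if $\mathrm{supp}(P)$ is an interval. Atoms do matter for the uniqueness claim, where the lemma separately and correctly assumes $F_P$ strictly increasing. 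Your parts (i) and (iii) --- the distributional identity via $U=\Phi(Z)\sim\mathrm{Unif}(0,1)$ and the recovery of $g$ from its generalized inverse --- track the paper's argument and are sound.
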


\begin{proof}
    \emph{Existence.}
    Since \(\Phi:\mathbb{R}\to(0,1)\) is continuous and strictly increasing, the random variable
    \[
        U = \Phi(Z)
    \]
    is distributed uniformly on \((0,1)\).  Hence for any \(x\in\mathbb{R}\),
    \[
        \Pr\bigl(F_P^{-1}(U)\le x\bigr)
        = \Pr\bigl(U \le F_P(x)\bigr)
        = F_P(x),
    \]
    so \(F_P^{-1}(U)\) has distribution \(P\).  The quantile function \(F_P^{-1}\) is non‐decreasing and, by standard results on generalized inverses (see e.g.\ Billingsley, \emph{Probability and Measure}), is continuous on \((0,1)\) if and only if \(F_P\) is continuous.  Therefore
    \[
        g(z)=F_P^{-1}\bigl(\Phi(z)\bigr)
    \]
    is non‐decreasing and continuous exactly when \(F_P\) is continuous, and in that case \(g(Z)\overset{d}{=}X\).

    \emph{Necessity.}
    Suppose \(P\) has an atom at \(x_0\), i.e.\ \(\Pr[X=x_0]=p>0\).  If there were a continuous non‐decreasing \(g\) with \(g(Z)\overset{d}{=}X\), then to produce a point‐mass \(p\) at \(x_0\) it would have to be constant on a set of positive \(\Pr\)-mass in the continuous law of \(Z\).  But continuity of \(g\) then forces it to be constant on a strictly larger interval, yielding a mass \(>p\) at \(x_0\), a contradiction.  Thus \(F_P\) must be continuous.

    \emph{Uniqueness.}
    Let \(g_1,g_2\) be two continuous non‐decreasing functions with \(g_i(Z)\overset{d}{=}P\).  Define for \(u\in(0,1)\)
    \[
        h_i(u)\;=\;g_i\bigl(\Phi^{-1}(u)\bigr),\qquad i=1,2.
    \]
    Each \(h_i\) is continuous, non‐decreasing, and pushes \(\mathrm{Unif}(0,1)\) onto \(P\).  When \(F_P\) is strictly increasing, its quantile \(F_P^{-1}\) is the unique such map (classical uniqueness of quantile functions for atomless laws).  Hence \(h_1\equiv h_2\equiv F_P^{-1}\) on \((0,1)\), and therefore \(g_1\equiv g_2\) on \(\mathbb{R}\).
\end{proof}

\subsubsection{Learning Objective as $\lambda \to 1$}
\label{app:lambda_to_1}
\begin{lemma}[$L^p$‐estimate for the difference of two absolutely continuous functions]
    \label{lem:lp_estimate_diff}
    Let \(I=[a,b]\) be a compact interval and \((E,\|\cdot\|)\) a Banach space.  Suppose
    $
        f,g : I \to E
    $
    are absolutely continuous with Bochner–integrable derivatives \(f',g'\).  Fix \(1\le p\le\infty\).  Then
    \[
        \|\,f - g\|_{L^p(I;E)}
        \;\le\;
        (b-a)^{1/p}\,\bigl\|f(a)-g(a)\bigr\|
        \;+\;
        \int_a^b (b-s)^{1/p}\,\bigl\|f'(s)-g'(s)\bigr\|\;\mathrm{d}s,
    \]
    where for \(p=\infty\) one interprets \((b-s)^{1/p}=1\).  Moreover, if \(1<p<\infty\) and \(p'\) denotes the conjugate exponent \(1/p+1/p'=1\), then by Hölder’s inequality one further deduces
    \[
        \|\,f - g\|_{L^p(I;E)}
        \;\le\;
        (b-a)^{1/p}\,\bigl\|f(a)-g(a)\bigr\|
        \;+\;
        \Bigl(\tfrac{p-1}{p}\Bigr)^{1/p'}\,(b-a)\,\|\,f'-g'\|_{L^p(I;E)}.
    \]
\end{lemma}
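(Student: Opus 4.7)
The plan is to reduce the proof to a single absolutely continuous function $h := f - g$, which inherits absolute continuity and has Bochner-integrable derivative $h' = f' - g'$ almost everywhere on $I$. The fundamental theorem of calculus in the Bochner setting then yields the representation $h(t) = h(a) + \int_a^t h'(s)\,ds$ for every $t \in I$, which is the sole analytic input and reduces the rest of the argument to norm manipulations.

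The first step is to apply the triangle inequality for $L^p(I;E)$ to this representation, producing
\[
\|h\|_{L^p(I;E)} \;\le\; \bigl\|\,h(a)\,\mathbf{1}_I\,\bigr\|_{L^p(I;E)} \;+\; \Bigl\| t \mapsto \int_a^t h'(s)\,ds \Bigr\|_{L^p(I;E)}.
\]
The first term evaluates directly to $(b-a)^{1/p}\|f(a)-g(a)\|$. For the second term I pass the Banach norm inside the integral (Bochner's inequality $\|\int \cdot\| \le \int\|\cdot\|$) and rewrite $\int_a^t \|h'(s)\|\,ds = \int_a^b \mathbf{1}_{[s,b]}(t)\,\|h'(s)\|\,ds$. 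The (scalar) generalized Minkowski integral inequality applied in the variable $t$ then gives
\[
\Bigl\| \int_a^b \mathbf{1}_{[s,b]}(\cdot)\,\|h'(s)\|\,ds \Bigr\|_{L^p_t(I)} \;\le\; \int_a^b \bigl\|\mathbf{1}_{[s,b]}\bigr\|_{L^p_t(I)}\,\|h'(s)\|\,ds \;=\; \int_a^b (b-s)^{1/p}\,\|h'(s)\|\,ds,
\]
which combined with the previous display yields the first claimed inequality. The endpoint cases $p=1$ (direct Fubini) and $p=\infty$ (supremum bound, with the convention $(b-s)^0 = 1$) follow by the same scheme with minor simplifications.

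For the sharper form when $1 < p < \infty$, I apply Hölder's inequality with conjugate exponents $p$ and $p'$ to the scalar integral $\int_a^b (b-s)^{1/p}\|h'(s)\|\,ds$, producing a prefactor $\bigl(\int_a^b (b-s)^{p'/p}\,ds\bigr)^{1/p'}$ times $\|h'\|_{L^p(I;E)}$. The exponent identity $p'/p + 1 = p'$, which follows from $1/p + 1/p' = 1$, gives $\int_a^b (b-s)^{p'/p}\,ds = (b-a)^{p'}/p'$, so the prefactor simplifies to $(b-a)\,(p')^{-1/p'} = (b-a)\,\bigl((p-1)/p\bigr)^{1/p'}$, using $1/p' = (p-1)/p$.

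The only non-routine ingredient is the generalized Minkowski integral inequality; because I immediately dominate the vector-valued integrand by its scalar norm via Bochner's inequality, only the classical scalar Minkowski inequality is needed, and measurability of the kernel $(t,s) \mapsto \mathbf{1}_{[s,b]}(t)\|h'(s)\|$ is automatic. All remaining steps reduce to elementary algebraic manipulation of the exponents $p$ and $p'$.
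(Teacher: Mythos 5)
Your proof is correct and follows essentially the same route as the paper's: Fundamental Theorem of Calculus in the Bochner setting, $L^p$ triangle inequality to split off the constant term, Minkowski's integral inequality on the indicator-kernel representation of the running integral to obtain the $(b-s)^{1/p}$ factor, and Hölder for the refined bound. The only cosmetic difference is that you explicitly invoke Bochner's inequality first to pass to scalars before applying (scalar) Minkowski, whereas the paper applies the vector-valued Minkowski integral inequality directly; both are the same argument.
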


\begin{proof}
    Since \(f\) and \(g\) are absolutely continuous on \([a,b]\), the Fundamental Theorem of Calculus in the Bochner setting gives, for each \(t\in[a,b]\),
    \[
        f(t)-g(t)
        \;=\;
        \bigl(f(a)-g(a)\bigr)
        +
        \int_a^t \bigl(f'(s)-g'(s)\bigr)\,\mathrm{d}s.
    \]
    Set \(X(s)=f'(s)-g'(s)\).  Then for every \(t\in[a,b]\),
    \[
        \bigl\|f(t)-g(t)\bigr\|
        \;\le\;
        \bigl\|f(a)-g(a)\bigr\|
        \;+\;
        \Bigl\|\int_a^t X(s)\,\mathrm{d}s\Bigr\|.
    \]
    We now distinguish two cases.

    \medskip
    \noindent\emph{Case 1: \(1\le p<\infty\).}
    Taking the \(L^p\)–norm in the variable \(t\) over \([a,b]\) and applying Minkowski’s integral inequality for Bochner integrals yields
    \begin{align*}
        \|\,f-g\|_{L^p_t}
         & \;\le\;
        \bigl\|f(a)-g(a)\bigr\|\;\bigl\|1\bigr\|_{L^p([a,b])}
        \;+\;
        \Bigl\|\int_a^t X(s)\,\mathrm{d}s\Bigr\|_{L^p_t}                                   \\
         & =
        (b-a)^{1/p}\,\bigl\|f(a)-g(a)\bigr\|
        \;+\;
        \Bigl(\int_a^b \Bigl\|\int_a^t X(s)\,\mathrm{d}s\Bigr\|^p\,\mathrm{d}t\Bigr)^{1/p} \\
         & \;\le\;
        (b-a)^{1/p}\,\bigl\|f(a)-g(a)\bigr\|
        \;+\;
        \int_a^b
        \bigl\|1_{[s,b]}(\cdot)\,X(s)\bigr\|_{L^p_t}
        \,\mathrm{d}s.
    \end{align*}
    Here we have written
    \(\int_a^t X(s)\,ds=\int_a^b1_{[a,t]}(s)\,X(s)\,ds\)
    and used the fact that
    \[
        \bigl\|1_{[s,b]}(t)\bigr\|_{L^p_t}
        =\Bigl(\int_a^b1_{[s,b]}(t)\,dt\Bigr)^{1/p}
        =(b-s)^{1/p}.
    \]
    Hence
    \[
        \|\,f-g\|_{L^p(I;E)}
        \;\le\;
        (b-a)^{1/p}\,\bigl\|f(a)-g(a)\bigr\|
        \;+\;
        \int_a^b(b-s)^{1/p}\,\bigl\|X(s)\bigr\|\,\mathrm{d}s,
    \]
    which is the claimed \(L^p\)–estimate.

    \medskip
    \noindent\emph{Case 2: \(p=\infty\).}
    Taking the essential supremum in \(t\in[a,b]\) in the pointwise bound
    \(\|f(t)-g(t)\|\le\|f(a)-g(a)\|+\int_a^t\|X(s)\|\,ds\)
    gives immediately
    \[
        \|\,f-g\|_{L^\infty(I;E)}
        \;\le\;
        \|f(a)-g(a)\|
        \;+\;
        \int_a^b\|X(s)\|\,\mathrm{d}s,
    \]
    which agrees with the above formula when \((b-s)^{1/p}=1\).

    \medskip
    \noindent\emph{Refinement for \(1<p<\infty\).}
    Let \(p'\) be the conjugate exponent, \(1/p+1/p'=1\).  Applying Hölder’s inequality to the integral
    \(\int_a^b(b-s)^{1/p}\,\|X(s)\|\;ds\) gives
    \[
        \int_a^b(b-s)^{1/p}\,\|X(s)\|\,ds
        \;\le\;
        \Bigl(\int_a^b (b-s)^{p'/p}\,ds\Bigr)^{1/p'}\,
        \Bigl(\int_a^b\|X(s)\|^p\,ds\Bigr)^{1/p}.
    \]
    Since \(p'/p=1/(p-1)\), a direct computation yields
    \[
        \int_a^b (b-s)^{p'/p}\,ds
        \;=\;
        \int_0^{\,b-a}u^{1/(p-1)}\,du
        \;=\;
        \frac{p-1}{p}\,(b-a)^{p'}.
    \]
    Hence
    \[
        \Bigl(\int_a^b (b-s)^{p'/p}\,ds\Bigr)^{1/p'}
        =\Bigl(\tfrac{p-1}{p}\Bigr)^{1/p'}\,(b-a),
    \]
    and we arrive at
    \[
        \int_a^b(b-s)^{1/p}\,\|X(s)\|\,ds
        \;\le\;
        \Bigl(\tfrac{p-1}{p}\Bigr)^{1/p'}\,(b-a)\,\|X\|_{L^p(I;E)}.
    \]
    Combining this with the previous display completes the proof of the refined estimate.
\end{proof}

\begin{lemma}[Uniqueness of absolutely continuous functions]\label{lem:uniqueness}
    Let \(I=[a,b]\) be a compact interval and \((E,\|\cdot\|)\) a Banach space.  Suppose
    $
        f,g : I \to E
    $
    are absolutely continuous with Bochner–integrable derivatives \(f',g'\).  If
    \[
        f(a)=g(a)
        \quad\text{and}\quad
        f'(t)=g'(t)\quad\text{for almost every }t\in I,
    \]
    then \(f(t)=g(t)\) for all \(t\in I\).
\end{lemma}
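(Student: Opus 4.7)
The plan is to derive this uniqueness statement as an immediate corollary of Lemma \ref{lem:lp_estimate_diff}. Specifically, I would apply that lemma with the choice $p=\infty$ (or equivalently $p=1$, which also works cleanly) to the same pair $f,g$. The right-hand side consists of exactly two terms: $\|f(a)-g(a)\|$ and an integral whose integrand is $\|f'(s)-g'(s)\|$. The hypotheses of the current lemma are tailored so that both terms vanish: the first since $f(a)=g(a)$, and the second since $f'=g'$ almost everywhere forces the Bochner integrand to be zero almost everywhere, hence to have vanishing integral.

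Once those two quantities are zero, the estimate yields $\|f-g\|_{L^\infty(I;E)}=0$. To upgrade this from ``almost everywhere'' to ``everywhere,'' I would invoke the continuity of $f-g$, which follows from the assumed absolute continuity of $f$ and $g$. A continuous function on $[a,b]$ that vanishes almost everywhere vanishes identically, so $f(t)=g(t)$ for every $t\in I$.

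As a backup route independent of the previous lemma, I would use the Bochner version of the Fundamental Theorem of Calculus directly: absolute continuity of $f$ and $g$ with Bochner-integrable derivatives gives, for every $t\in[a,b]$, the identity
\begin{equation*}
f(t)-g(t) \;=\; \bigl(f(a)-g(a)\bigr) \;+\; \int_a^t \bigl(f'(s)-g'(s)\bigr)\,\mathrm{d}s \,.
\end{equation*}
Under the hypotheses, the boundary term is zero and the integrand vanishes almost everywhere, so the Bochner integral is zero, yielding $f(t)=g(t)$ pointwise.

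I do not anticipate any genuine obstacle: the result is essentially the ``zero-input, zero-output'' specialization of Lemma \ref{lem:lp_estimate_diff}. The only minor technical point to check is that equality almost everywhere of two Bochner-integrable functions makes their Bochner integrals agree, which is a standard property of the Bochner integral and therefore warrants at most a one-line citation rather than a separate argument.
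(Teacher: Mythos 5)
Your primary route is exactly the paper's: apply Lemma \ref{lem:lp_estimate_diff} with $p=\infty$, note both terms on the right vanish under the hypotheses, and conclude $\|f-g\|_{L^\infty(I;E)}=0$. Your additional remark that continuity of $f-g$ upgrades the a.e.\ conclusion to a pointwise one is a welcome (if minor) refinement the paper leaves implicit, and the FTC-based backup is the standard alternative but unnecessary here.
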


\begin{proof}
    Apply \lemref{lem:lp_estimate_diff} (the \(L^p\)–estimate for differences) in the case \(p=\infty\).  Since in this case one has
    \[
        \bigl(b-s\bigr)^{1/p}=1,
        \quad
        \|f(a)-g(a)\|=0,
        \quad
        \|f'(s)-g'(s)\|=0\;\text{a.e.},
    \]
    the conclusion of \lemref{lem:lp_estimate_diff} reads
    \[
        \|\,f-g\|_{L^\infty(I;E)}
        \;\le\;
        \|f(a)-g(a)\|
        +
        \int_a^b\|f'(s)-g'(s)\|\,ds
        =0.
    \]
    Hence
    \(\|f-g\|_{L^\infty(I;E)}=0\), which means
    \[
        \sup_{t\in I}\|f(t)-g(t)\|=0,
    \]
    so \(f(t)=g(t)\) for every \(t\in I\).
\end{proof}

\begin{theorem}[Pathwise consistency via zero total derivative]
    \label{thm:pathwise_consistency}
    Let \(p(\mathbf{x})\) be a data distribution on \(\mathbb{R}^d\), and let
    \(\mathbf{z}\sim\mathcal{N}(\mathbf{0},\mI_d)\) be independent of \(\mathbf{x}\).  Let
    \(\alpha,\gamma:[0,1]\to\mathbb{R}\) be \(C^1\) scalar functions satisfying
    \[
        \alpha(0)=0,\ \alpha(1)=1,
        \quad
        \gamma(0)=1,\ \gamma(1)=0,
        \quad
        \gamma(t)\neq0\ \forall t\in(0,1).
    \]
    Define the forward process
    \[
        \mathbf{x}_t \;=\; \alpha(t)\,\mathbf{z} \;+\;\gamma(t)\,\mathbf{x},
        \quad t\in[0,1],
    \]
    so that \(\mathbf{x}_0=\mathbf{x}\sim p(\mathbf{x})\) and
    \(\mathbf{x}_1=\mathbf{z}\sim\mathcal{N}(0,I)\).  Let \(p_t\) be the law of \(\mathbf{x}_t\).
    By~\lemref{lem:pf-ode} the corresponding Probability Flow ODE is
    \[
        \mathbf{v}(\mathbf{x}_t,t)
        \;=\;
        \frac{\dm}{\dm t}\,\mathbf{x}_t
        \;=\;
        \frac{\gamma'(t)}{\gamma(t)}\,\mathbf{x}_t
        \;-\;
        \Bigl[\,
            \alpha(t)\,\alpha'(t)
            \;-\;
            \frac{\gamma'(t)}{\gamma(t)}\,\alpha(t)^2
            \Bigr]\,
        \nabla_{\mathbf{x}_t}\,\log p_t(\mathbf{x}_t).
    \]
    Given any point \(\mathbf{x}_t\), define
    \[
        \mg(\mathbf{x}_t,t)
        \;=\;
        \mathbf{x}_0
        \;=\;
        \mathbf{x}_t \;+\;\int_{t}^{0}\mathbf{v}(\mathbf{x}_u,u)\,\mathrm{d}u.
    \]
    Let \((\zz,\xx)\sim p(\xx)\otimes\mathcal{N}(0,I)\) and \(t\sim\mathrm{Unif}[0,1]\) be all mutually independent.  Write
    \(\E_{(\zz,\xx)}\) for expectation over \((\zz,\xx)\) and
    \(\E_{(\zz,\xx),t}\) for expectation over \((\zz,\xx)\) and \(t\).  Suppose
    \[
        \E_{(\zz,\xx)}\bigl\Vert \mf(\xx_0,0)-\mg(\xx_0,0)\bigr\Vert \;=\;0,
        \quad
        \E_{(\zz,\xx),t}\Bigl\Vert \frac{\dm}{\dm t}\,\mf(\xx_t,t)\Bigr\Vert \;=\;0.
    \]
    Then
    \[
        \E_{(\zz,\xx),t}\bigl\Vert \mf(\xx_t,t)-\mg(\xx_t,t)\bigr\Vert \;=\;0.
    \]
\end{theorem}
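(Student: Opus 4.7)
My plan is to reduce the expectation hypotheses to pathwise statements via a standard Fubini argument and then apply the uniqueness lemma \lemref{lem:uniqueness} along each characteristic, exploiting that $\mg$ is by construction a first integral of the PF--ODE.

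First, since both $\mathbb{E}_{(\zz,\xx)}\bigl\|\mf(\xx_0,0)-\mg(\xx_0,0)\bigr\|$ and $\mathbb{E}_{(\zz,\xx),t}\bigl\|\tfrac{\dm}{\dm t}\mf(\xx_t,t)\bigr\|$ are expectations of non-negative integrands that vanish, Fubini's theorem yields, for almost every $(\zz,\xx)$, the pointwise identity $\mf(\xx_0,0)=\mg(\xx_0,0)$ together with $\tfrac{\dm}{\dm t}\mf(\xx_t,t)=0$ for a.e.\ $t\in[0,1]$. Fixing such a good pair $(\zz,\xx)$ and setting $F(t):=\mf(\xx_t,t)$ and $G(t):=\mg(\xx_t,t)$, we immediately obtain $F(0)=G(0)$ and $F'(t)=0$ almost everywhere on $[0,1]$.

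Next, observe that $\mg$ is, by its very definition as the ODE pullback to time $0$, constant along ODE characteristics. In PDE form this is the transport identity $\partial_t\mg(\xx,t)+\nabla_{\xx}\mg(\xx,t)\cdot\mathbf{v}(\xx,t)=0$, which via chain rule along the characteristic curve $t\mapsto\xx_t$ gives $G'(t)\equiv 0$. Therefore $h(t):=F(t)-G(t)$ is absolutely continuous with $h(0)=0$ and $h'(t)=0$ for a.e.\ $t\in[0,1]$. Applying \lemref{lem:uniqueness} with $E=\mathbb{R}^d$ and $[a,b]=[0,1]$ yields $h\equiv 0$, hence $\mf(\xx_t,t)=\mg(\xx_t,t)$ along the whole trajectory. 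Taking expectations over the now-negligible bad $(\zz,\xx)$ set and over $t\sim\mathrm{Unif}[0,1]$ delivers the claimed $\mathbb{E}_{(\zz,\xx),t}\bigl\|\mf(\xx_t,t)-\mg(\xx_t,t)\bigr\|=0$.

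The main obstacle is establishing $G'(t)\equiv 0$ along the curve $t\mapsto\xx_t$. If $\xx_t$ is read as the linear stochastic interpolation $\alpha(t)\zz+\gamma(t)\xx$ used throughout the paper, then $\tfrac{\dm}{\dm t}\xx_t=\alpha'(t)\zz+\gamma'(t)\xx$ differs from the PF--ODE velocity $\mathbf{v}(\xx_t,t)$ for any specific $(\zz,\xx)$, and the transport identity only yields $G'(t)=0$ after conditioning on $\xx_t$ and invoking $\mathbb{E}[\tfrac{\dm}{\dm t}\xx_t\mid\xx_t]=\mathbf{v}(\xx_t,t)$---the same conditional-mean fact underlying the optimal-score derivation in \appref{app:lambda_to_0}. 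The cleanest resolution is to work instead along the ODE characteristic through $(\xx,0)$, in which case $\xx_t$ literally solves $\dot{\xx}_t=\mathbf{v}(\xx_t,t)$ and $G'\equiv 0$ is immediate; the skeleton above then proceeds verbatim without further technicalities.
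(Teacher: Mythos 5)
Your proof follows the paper's own route step for step (Fubini to pathwise, then \lemref{lem:uniqueness} along each curve), and you have put your finger on the exact weak point of the published argument: the paper's line $g'(t)=\tfrac{\dm\mathbf{x}_t}{\dm t}-\mathbf{v}(\mathbf{x}_t,t)=\mathbf{v}(\mathbf{x}_t,t)-\mathbf{v}(\mathbf{x}_t,t)=0$ is asserted along the curve $\mathbf{x}_t=\alpha(t)\zz+\gamma(t)\xx$, but for a fixed draw $(\zz,\xx)$ this curve has velocity $\alpha'(t)\zz+\gamma'(t)\xx$, which is the \emph{conditional} transport field $\vv^{(\zz,\xx)}$, not the marginal PF-ODE velocity $\mathbf{v}(\mathbf{x}_t,t)$. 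Those two agree only after conditioning on $\xx_t$, which is precisely the identity $\mathbb{E}[\alpha'(t)\zz+\gamma'(t)\xx\mid\xx_t]=\mathbf{v}(\xx_t,t)$ that you cite. So the substitution in the paper's proof is not valid pathwise, and you are right to flag it rather than reproduce it.

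Where your write-up is slightly too optimistic is the final sentence. Switching the auxiliary curve to the ODE characteristic does make $G'\equiv 0$ immediate (since $\mg$ is a first integral of $\mathbf{v}$ by construction), but then $F'$ is no longer the quantity the hypothesis controls: the hypothesis $\mathbb{E}_{(\zz,\xx),t}\lVert\tfrac{\dm}{\dm t}\mf(\xx_t,t)\rVert=0$ is stated along the linear interpolation $\xx_t=\alpha(t)\zz+\gamma(t)\xx$, so after Fubini it gives $\partial_t\mf+\nabla\mf\cdot(\alpha'\zz+\gamma'\xx)=0$ for a.e.\ $(\zz,\xx,t)$, not $\partial_t\mf+\nabla\mf\cdot\mathbf{v}=0$ along ODE characteristics. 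The ``skeleton proceeds verbatim'' only if one also reinterprets the hypothesis as a derivative along ODE characteristics, which is a change to the theorem statement rather than just the proof. The alternative you raise first --- keep the linear curves, push the hypothesis through a conditional expectation to obtain $\partial_t\mf+\nabla\mf\cdot\mathbf{v}=0$ at a.e.\ point $(\xx_t,t)$, and then apply \lemref{lem:uniqueness} along ODE characteristics --- is the one that actually matches the stated hypothesis and closes the gap cleanly. Either way, the essential ingredient the paper's proof suppresses (the conditional-mean step linking $\vv^{(\zz,\xx)}$ to $\mathbf{v}$) needs to appear explicitly, and you have correctly identified this as the missing piece.
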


\begin{proof}
    Fix a draw \((\zz,\xx)\).  Along its forward trajectory
    \(\mathbf{x}_t=\alpha(t)\zz+\gamma(t)\xx\), define the two curves
    \[
        f(t)=\mf\bigl(\mathbf{x}_t,t\bigr),
        \quad
        g(t)=\mg\bigl(\mathbf{x}_t,t\bigr).
    \]
    We check the hypotheses of~\lemref{lem:uniqueness} for \(f,g:[0,1]\to\mathbb{R}^d\).

    \emph{Absolute continuity.}
    Since \(\mf\) is \(C^1\) in \((\mathbf{x},t)\) and \(t\mapsto\mathbf{x}_t\) is \(C^1\),
    the composition \(f(t)=\mf(\mathbf{x}_t,t)\) is absolutely continuous, with
    \[
        f'(t)
        = \frac{\dm}{\dm t}\,\mf\bigl(\mathbf{x}_t,t\bigr),
        \quad\text{existing a.e.}
    \]
    Also
    \[
        g(t)
        = \mathbf{x}_t + \int_t^0 \mathbf{v}(\mathbf{x}_u,u)\,\mathrm{d}u
        = \mathbf{x}_0 - \int_0^t \mathbf{v}(\mathbf{x}_u,u)\,\mathrm{d}u
    \]
    is the sum of a \(C^1\) function and an absolutely continuous integral, hence itself absolutely continuous.

    \emph{Coincidence of initial values.}
    From
    \(\E_{(\zz,\xx)}\Vert \mf(\xx_0,0)-\mg(\xx_0,0)\Vert=0\) we get
    \(\mf(\xx_0,0)=\mg(\xx_0,0)\) almost surely, so \(f(0)=g(0)\) for almost every \((\zz,\xx)\).

    \emph{Coincidence of derivatives a.e.}
    By Tonelli–Fubini,
    \[
        0
        =\E_{(\zz,\xx),t}\Bigl\Vert \tfrac{\dm}{\dm t}\mf(\xx_t,t)\Bigr\Vert
        =\int\!\left(\int_0^1
        \Bigl\Vert \tfrac{\dm}{\dm t}\mf(\xx_t,t)\Bigr\Vert
        \,\mathrm{d}t\right)\,
        \mathrm{d}\mathbb{P}(\zz,\xx).
    \]
    Hence for almost every \((\zz,\xx)\),
    \(\int_0^1\|\partial_t\mf(\xx_t,t)\|\,\mathrm{d}t=0\), which forces
    \(\partial_t\mf(\xx_t,t)=0\) for almost all \(t\).  Thus
    \[
        f'(t)=0
        \quad\text{for a.e. }t\in[0,1].
    \]
    On the other hand
    \[
        g'(t)
        = \frac{\dm \mathbf{x}_t}{\dm t}
        - \mathbf{v}(\mathbf{x}_t,t)
        = \mathbf{v}(\mathbf{x}_t,t)
        - \mathbf{v}(\mathbf{x}_t,t)
        = 0,
        \quad\forall t\in[0,1].
    \]

    \emph{Conclusion by uniqueness.}
    We have shown \(f,g\) are absolutely continuous, \(f(0)=g(0)\), and
    \(f'(t)=g'(t)\) for almost every \(t\).  By~\lemref{lem:uniqueness},
    \(f(t)=g(t)\) for all \(t\in[0,1]\) (almost surely in \((\zz,\xx)\)).  Hence
    \(\mf(\xx_t,t)=\mg(\xx_t,t)\) a.s., and taking expectation yields
    $
        \E_{(\zz,\xx),t}\bigl\Vert \mf(\xx_t,t)-\mg(\xx_t,t)\bigr\Vert=0.
    $
\end{proof}

\begin{remark}[Consistency‐training loss]
    By~\thmref{thm:pathwise_consistency}, to enforce
    $
        \mf(\mathbf{x}_t,t)\approx \mg(\mathbf{x}_t,t)=\mathbf{x}_0
    $ along the PF–ODE flow,
    we suggests two equivalent training objectives:

    1. \emph{Continuous PDE‐residual loss}
    \[
        \mathcal{L}_{\rm PDE}
        = \mathbb{E}_{t,\mathbf{x}_t}
        \Bigl\|\partial_t \mf(\mathbf{x}_t,t)
        + v(\mathbf{x}_t,t)\!\cdot\!\nabla_{\mathbf{x}_t}\mf(\mathbf{x}_t,t)
        \Bigr\|^2.
    \]

    2. \emph{Finite‐difference consistency loss}
    \[
        \mathcal{L}_{\rm cons}
        = \mathbb{E}_{t,\mathbf{x}_0,\mathbf{z}}
        \Bigl\|
        \mf\bigl(\mathbf{x}_{t+\Delta t},\,t+\Delta t\bigr)
        - \mf\bigl(\mathbf{x}_t,t\bigr)
        \Bigr\|^2,
    \]
    where
    \(\mathbf{x}_t=\alpha(t)\mathbf{z}+\gamma(t)\mathbf{x}_0\)
    and similarly for \(\mathbf{x}_{t+\Delta t}\).
\end{remark}

\begin{proof}
    We begin from the requirement that \(\mf(\mathbf{x}_t,t)\) remain constant along the flow:
    \[
        \frac{\mathrm{d}}{\mathrm{d}t}\,\mf(\mathbf{x}_t,t)
        =\bigl(\partial_t + \vv(\mathbf{x}_t,t)\cdot\nabla_{\mathbf{x}_t}\bigr)\,\mf(\mathbf{x}_t,t)
        = \partial_t \mf(\mathbf{x}_t,t)
        + \underbrace{\frac{\mathrm{d}\mathbf{x}_t}{\mathrm{d}t}}_{=\,\vv(\mathbf{x}_t,t)}
        \cdot\nabla_{\mathbf{x}_t}\mf(\mathbf{x}_t,t)
        = 0.
    \]
    This is exactly the linear transport PDE
    \[
        (\partial_t + \vv\cdot\nabla)\,\mf(\mathbf{x},t)=0.
    \]
    To train a network \(\mf\) to satisfy it, one may minimize the $L^2$‐residual over the joint law of \(t\) and \(\mathbf{x}_t\), yielding
    \[
        \mathcal{L}_{\rm PDE}
        = \mathbb{E}_{t,\mathbf{x}_t}
        \Bigl\|
        \partial_t \mf(\mathbf{x}_t,t)
        + \vv(\mathbf{x}_t,t)\!\cdot\!\nabla_{\mathbf{x}_t}\mf(\mathbf{x}_t,t)
        \Bigr\|^2.
    \]
    In practice, computing the spatial gradient \(\nabla_{\mathbf{x}_t}\mf\) can be expensive.  Instead, we use a small time increment \(\Delta t\) and the finite‐difference approximation
    \[
        \mf(\mathbf{x}_{t+\Delta t},\,t+\Delta t)
        -\mf(\mathbf{x}_t,t)
        \;\approx\;
        \Delta t\,
        \bigl[\partial_t \mf
            + \vv\cdot\nabla \mf\bigr](\mathbf{x}_t,t).
    \]
    Squaring and taking expectations over \(t,\mathbf{x}_0,\mathbf{z}\) then yields the discrete consistency loss
    \[
        \mathcal{L}_{\rm cons}
        = \mathbb{E}_{t,\mathbf{x}_0,\mathbf{z}}
        \Bigl\|
        \mf\bigl(\mathbf{x}_{t+\Delta t},\,t+\Delta t\bigr)
        - \mf\bigl(\mathbf{x}_t,t\bigr)
        \Bigr\|^2.
    \]
    This completes the derivation of both forms of the consistency‐training objective.
\end{proof}

Recall that \((\zz,\xx)\sim p(\zz,\xx)\) is a pair of latent and data variables (typically independent), and let \(t\in[0,1]\).
We have four differentiable scalar functions
$
    \alpha,\gamma,\hat\alpha,\hat\gamma\colon[0,1]\to\mathbb{R}
$
, the \emph{noisy interpolant}
$
    \xx_t \;=\; \alpha(t)\,\zz \;+\; \gamma(t)\,\xx
$
and
$
    \mmF_t \;=\; \mmF_{\mtheta}(\xx_t,t)
$.
We define the \(\xx\)- and \(\zz\)-prediction functions by
\[
    \mf^{\xx}(\mmF_t,\xx_t,t) =
    \frac{\alpha(t)\,\mmF_t \;-\; \hat\alpha(t)\,\xx_t}
    {\alpha(t)\,\hat\gamma(t) \;-\; \hat\alpha(t)\,\gamma(t)}\,,
    \quad \text{and} \quad
    \mf^{\zz}(\mmF_t,\xx_t,t) =
    \frac{\hat\gamma(t)\,\xx_t \;-\; \gamma(t)\,\mmF_t}
    {\alpha(t)\,\hat\gamma(t) \;-\; \hat\alpha(t)\,\gamma(t)}\,.
\]
Since
\begin{align*}
    \mf^{\mathrm{\xx}}(\mmF_0, \xx_0, 0) & = \frac{\alpha(0)\cdot\mmF_{\mtheta}(\xx_0,0) - \hat{\alpha}(0)\cdot\xx_0}{\alpha(0)\cdot \hat{\gamma}(0) - \hat{\alpha}(0) \cdot \gamma(0)} \\
                                         & = \frac{0 \cdot\mmF_{\mtheta}(\xx_0,0) - \hat{\alpha}(0)\cdot\xx_0}{0 \cdot \hat{\gamma}(0) - \hat{\alpha}(0) \cdot 1}                       \\
                                         & = \frac{\boldsymbol{0} - \hat{\alpha}(0)\cdot\xx_0}{0 - \hat{\alpha}(0) }                                                                    \\
                                         & = \xx_0
\end{align*}
$\mf^{\mathrm{\xx}}$ satisfies the boundary condition of consistency models~\citep{song2023consistency} and \thmref{thm:pathwise_consistency}.
To better understand the unified loss, let's analyze a bit further.
For simplicity we use the notation $\mf_{\mtheta}(\xx_t, t) :=\mf^{\mathrm{\xx}}(\mmF_{\mtheta}(\xx_t, t), \xx_t, t)$,
the training objective is then equal to
\[
    \cL(\mtheta)=\E_{t,(\zz,\xx)}{\left[ \frac{1}{\hat{\omega}(t)}\| \mf_{\mtheta}(\xx_t, t) - \mf_{\mtheta^-}(\xx_{\lambda t}, \lambda t) \|_2^2\right]} \, .
\]
Let $\phi_t(\xx)$ be the solution of the PF-ODE determined by the velocity field $\vv^*(\xx_t,t)=\E_{(\zz,\xx)|\xx_t}{\left[\vv^{(\zz,\xx)}(\xx_t,t) |\xx_t\right]}$
(where $\vv^{(\zz,\xx)}(\yy,t)=\alpha'(t) \zz + \gamma'(t) \xx $) and an initial value $\xx$ at time $t=0$.
Define $\mg_{\mtheta}(\xx,t):=\mf_{\mtheta}(\phi_t(\xx), t)$ that moves along the solution trajectory.
When $\lambda \rightarrow 1$, the gradient of the loss tends to
\begin{align*}
    \lim_{\lambda \rightarrow 1} \nabla_{\mtheta} \frac{\cL(\mtheta)}{2(1-\lambda)}
     & = \E_{t}{\left[\frac{t}{\hat{\omega}(t)} \cdot \E_{(\zz,\xx)}{\lim_{\lambda \rightarrow 1} \langle \frac{\mf_{\mtheta}(\xx_t, t) - \mf_{\mtheta}(\xx_{\lambda t}, \lambda t)}{t-\lambda t} ,\nabla_{\mtheta} \mf_{\mtheta}(\xx_t, t)\rangle } \right]} \\
     & = \E_{t}{\left[\frac{t}{\hat{\omega}(t)} \cdot \E_{(\zz,\xx)}{ \langle \frac{\dm \mf_{\mtheta}(\xx_t, t) }{\dm t} , \nabla_{\mtheta} \mg_{\mtheta}(\phi_t^{-1}(\xx_t),t)\rangle } \right]}
\end{align*}
The inner expectation can be computed as:
\begin{align*}
     & \E_{(\zz,\xx),\xx_t}{ \langle \frac{\dm \mf_{\mtheta}(\xx_t, t) }{\dm t} , \nabla_{\mtheta} \mg_{\mtheta}(\phi_t^{-1}(\xx_t),t)\rangle }                                                                                                      \\
     & = \E_{(\zz,\xx),\xx_t}{ \langle \partial_1 \mf_{\mtheta}(\xx_t,  t) \cdot \vv^{(\zz,\xx)}(\xx_t,t)+\partial_2 \mf_{\mtheta}(\xx_t,  t) , \nabla_{\mtheta} \mg_{\mtheta}(\phi_t^{-1}(\xx_t),t)\rangle }                                        \\
     & = \E_{(\zz,\xx),\xx_t}{ \langle \partial_1 \mf_{\mtheta}(\xx_t,  t) \cdot (\alpha'(t) \zz + \gamma'(t) \xx)+\partial_2 \mf_{\mtheta}(\xx_t,  t) , \nabla_{\mtheta} \mg_{\mtheta}(\phi_t^{-1}(\xx_t),t)\rangle }                               \\
     & = \E_{\xx_t}{\left[ \E_{(\zz,\xx)|\xx_t}{\langle \partial_1 \mf_{\mtheta}(\xx_t,  t) \cdot (\alpha'(t) \zz + \gamma'(t) \xx)+\partial_2 \mf_{\mtheta}(\xx_t,  t) , \nabla_{\mtheta} \mg_{\mtheta}(\phi_t^{-1}(\xx_t),t)\rangle } \right]}     \\
     & = \E_{\xx_t}{ \langle \partial_1 \mf_{\mtheta}(\xx_t,  t) \cdot \E_{(\zz,\xx)|\xx_t}{\left[\alpha'(t) \zz + \gamma'(t) \xx |\xx_t\right]}+\partial_2 \mf_{\mtheta}(\xx_t,  t) , \nabla_{\mtheta} \mg_{\mtheta}(\phi_t^{-1}(\xx_t),t)\rangle } \\
     & = \E_{\xx_t}{ \langle \partial_1 \mf_{\mtheta}(\xx_t,  t) \cdot \vv^*(\xx_t,t)+\partial_2 \mf_{\mtheta}(\xx_t,  t) , \nabla_{\mtheta} \mg_{\mtheta}(\phi_t^{-1}(\xx_t),t) \rangle }                                                           \\
     & = \E_{\xx_t}{ \langle \partial_2 \mg_{\mtheta}(\phi_t^{-1}(\xx_t),t), \nabla_{\mtheta} \mg_{\mtheta}(\phi_t^{-1}(\xx_t),t) \rangle }                                                                                                          \\
     & = \nabla_{\mtheta} \E_{\phi_t^{-1}(\xx_t)}{\frac{1}{2} \|\mg_{\mtheta}(\phi_t^{-1}(\xx_t),t)-\mg_{\mtheta^-}(\phi_t^{-1}(\xx_t),t)+\partial_2 \mg_{\mtheta}(\phi_t^{-1}(\xx_t),t)\|^2_2}
\end{align*}
Thus from the perspective of gradient, when $\lambda \rightarrow 1$ the training objective is equivalent to
\[
    \E_{\phi_t^{-1}(\xx_t),t}{\left[\frac{t}{\hat{\omega}(t)} \cdot \|\mg_{\mtheta}(\phi_t^{-1}(\xx_t),t)-\mg_{\mtheta^-}(\phi_t^{-1}(\xx_t),t)+\partial_2 \mg_{\mtheta}(\phi_t^{-1}(\xx_t),t)\|^2_2\right]}
\]
which naturally leads to the solution $\mg_{\mtheta}(\xx,t)=\xx$ (since $\mg_{\mtheta}(\xx,0)\equiv\xx$), or equivalently $\mf^{\mathrm{\xx}}(\mmF_{\mtheta^*}(\xx_t,t),\xx_t,t)=\mf_{\mtheta^*}(\xx_t,  t)=\phi_t^{-1}(\xx_t)$, that is the definition of consistency function.

\subsubsection{Analysis on the Optimal Solution for $\lambda \in [0,1]$}
\label{app:optimal_solution}

Below we provide some examples to illustrate the property of the optimal solution for the unified loss by considering some simple cases of data distribution.

(for simplicity define $\mf_{\mtheta}(\xx_t, t)=\mf^{\mathrm{\xx}}(\mmF_{\mtheta}(\xx_t, t), \xx_t, t)$)

Assume $\xx\sim \mathcal N(\mmu,\Sigma)$.  For $r<t$ the conditional mean
\[
    \EE{}{\left[ \xx_r|\xx_t \right]} = \gamma(r)\mmu + \left( \gamma(r)\gamma(t)\Sigma + \alpha(r)\alpha(t)\mI \right) \left( \gamma(t)^2\Sigma + \alpha(t)^2 \mI \right)^{-1} \left( \xx_t - \gamma(t)\mmu \right) \, ,
\]
denote
\[
    \mK(r,t) := \left( \gamma(r)\gamma(t)\Sigma + \alpha(r)\alpha(t)\mI \right) \left( \gamma(t)^2\Sigma + \alpha(t)^2 \mI \right)^{-1} \, ,
\]
using above equations we can get the optimal solution for diffusion model:
\[
    \mf_{\mtheta^*}^{\mathrm{DM}}(\xx_t, t)= \EE{}{\left[ \xx|\xx_t \right]} = \mmu  + \mK(0,t) (\xx_t-\gamma_t\mmu) \, .
\]

Now consider a series of $t$ together: $t=t_T>t_{T-1}>\ldots >t_1>t_{0} \approx 0$. This series could be obtained by $t_{j-1}=\lambda \cdot t_j,j=T,\ldots,0$, for instance.
With an abuse of notation, denote $\xx_{t_{j}}$ as $\xx_j$ and $\alpha(t_j)$ as $\alpha_j$, $\gamma(t_j)$ as $\gamma_j$.
Since $t_0\approx 0, \xx_0 \approx \xx$,
we could conclude the trained model $\mf_{\mtheta^*}(\xx_1, t_1)=\EE{\xx|\xx_1}{\left[\xx|\xx_1\right]}$, and concequently
\[
    \mf_{\mtheta^*}(\xx_{j+1}, t_{j+1})=\EE{\xx_{j}|\xx_{j+1}}{\left[\mf_{\mtheta^*}(\xx_j,t_j)|\xx_{j+1}\right]},\, j=1,\ldots,T-1 \, .
\]
Using the property of the conditional expectation, we have $\EE{\xx_{j}}{\left[\mf_{\mtheta^*}(\xx_j,t_j)\right]}=\EE{\xx}{\left[\xx\right]},\forall j$.
Using the expressions above we have
\[
    \mf_{\mtheta^*}(\xx_1, t_1)= \mmu + \mK(t_0,t_1) (\xx_1-\gamma_1\mmu)
\]
and
\[
    \mf_{\mtheta^*}(\xx_{j}, t_{j})= \mmu + \left[\prod_{k=1}^j \mK(t_{k-1},t_k)\right]\cdot (\xx_t-\gamma_t\mmu),\, j=2,\ldots,T
\]
Further denote $c_j= \prod_{k=1}^j \alpha_{k-1}\alpha_k+\gamma_{k-1}\gamma_k$ and assume $\Sigma=\mI,\alpha=\sin(t),\gamma(t)=\cos(t)$.
For appropriate choice of the partition scheme (e.g. even or geometric), the coefficient $c_j$ can converge as $T$ grows.
For instance, when evenly partitioning the interval $[0,t]$, we have:
\[
    \lim_{T\rightarrow \infty} c(t) = \lim_{T\rightarrow \infty} \prod_{k=1}^T \alpha_{k-1}\alpha_k+\gamma_{k-1}\gamma_k = \lim_{T\rightarrow \infty} (\cos(\frac{t}{T}))^T = 1 \, .
\]
Thus the trained model can be viewed as an interpolant between the consistency model($\lambda \rightarrow 1$ or $T \rightarrow \infty$) and the diffusion model($\lambda \rightarrow 0$ or $T \rightarrow 1$):
\[
    \mf_{\mtheta^*}(\xx_t, t)= \mmu + c(t) (\xx_t-\gamma(t) \mmu) \,,
\]
\[
    \mf_{\mtheta^*}^{\mathrm{CM}}(\xx_t, t)= \mmu + (\xx_t-\gamma(t) \mmu) \, ,
\]
\[
    \mf_{\mtheta^*}^{\mathrm{DM}}(\xx_t, t)= \mmu + \gamma(t) (\xx_t-\gamma(t) \mmu) \, .
\]

The expression of $\mf_{\mtheta^*}^{\mathrm{CM}}$ can be obtained by first compute the velocity field $\vv^*(\xx_t,t)=\EE{}{\left[\alpha'(t) \zz + \gamma'(t) \xx |\xx_t\right]}=\gamma'(t) \mmu$ then solve the initial value problem of ODE to get $\xx(0)$.

The above optimal solution can be possibly obtained by training.
For example if we set the parameterizition as $\mf_{\mtheta}(\xx_t, t)=(1-\gamma_t c_t)\mtheta+ c_t \xx_t$,
the gradient of the loss can be computed as (let $r=\lambda \cdot t$):

\begin{equation*}
    \nabla_{\mtheta} { \| \mf_{\mtheta}(\xx_t, t) - \mf_{\mtheta^-}(\xx_r, r) \|_2^2}
    = 2(1-\gamma_t c_t)\left[(\alpha_t\gamma_t-\alpha_r\gamma_r)\zz+(\gamma_r c_r-\gamma_t c_t)(\mtheta-\xx)\right] \, ,
\end{equation*}

\begin{equation*}
    \nabla_{\mtheta} \EE{\zz,\xx}{ \| \mf_{\mtheta}(\xx_t, t) - \mf_{\mtheta^-}(\xx_r, r) \|_2^2}
    = 2(1-\gamma_t c_t)(\gamma_r c_r-\gamma_t c_t)(\mtheta-\mmu) \, ,
\end{equation*}

\begin{align*}
    \nabla_{\mtheta} \cL(\mtheta) & = \EE{t}{\frac{2(1-\gamma_t c_t)(\gamma_r c_r-\gamma_t c_t)}{\hat{\omega}(t)}(\mtheta-\mmu)}              \\
                                  & = C(\mtheta-\mmu),\, C= \EE{t}{\frac{2(1-\gamma_t c_t)(\gamma_r c_r-\gamma_t c_t)}{\hat{\omega}(t)}} \, .
\end{align*}

Use gradient descent to update $\mtheta$ during training:

\begin{equation*}
    \frac{d\mtheta(s)}{ds}= -\nabla_{\mtheta} \cL(\mtheta) = -C(\mtheta-\mmu) \, .
\end{equation*}

The generalization loss thus evolves as:

\begin{align*}
    \frac{d\|\mtheta(s)-\mmu\|^2}{ds} & = \langle \mtheta(s)-\mmu,\frac{d\mtheta(s)}{ds}\rangle \\
                                      & = \langle \mtheta(s)-\mmu,-C(\mtheta(s)-\mmu)\rangle    \\
                                      & = -C\|\mtheta(s)-\mmu\|^2 \, ,
\end{align*}
\begin{equation*}
    \Longrightarrow \|\mtheta(s)-\mmu\|^2 = \|\mtheta(0)-\mmu\|^2 e^{-Cs} \, .
\end{equation*}

\subsubsection{Enhanced Target Score Function}
\label{app:enhanced_target_score}
Recall that CFG proposes to modify the sampling distribution as
\[
    \tilde{p}_\theta(\xx_t|\cc) \propto p_\theta(\xx_t|\cc)p_\theta(\cc|\xx_t)^\zeta \, ,
\]
Bayesian rule gives
\[
    p_\theta(\cc|\xx_t) = \frac{p_\theta(\xx_t|\cc)p_\theta(\cc)}{p_\theta(\xx_t)} \, ,
\]
so we can futher deduce
\begin{align*}
    \tilde{p}_\theta(\xx_t|\cc) & \propto p_\theta(\xx_t|\cc)p_\theta(\cc|\xx_t)^\zeta                                  \\
                                & = p_\theta(\xx_t|\cc)(\frac{p_\theta(\xx_t|\cc)p_\theta(\cc)}{p_\theta(\xx_t)})^\zeta \\
                                & \propto p_\theta(\xx_t|\cc)(\frac{p_\theta(\xx_t|\cc)}{p_\theta(\xx_t)})^\zeta \, .
\end{align*}
When $t \in [0, s]$ ($s = 0.75$), inspired by above expression and a recent work~\cite{tang2025diffusion}, we choose to use below as the target score function for training 
\[
    \nabla_{\xx_t}\log\left(p_t(\xx_t|\cc) \left(\frac{p_{t,\mtheta}(\xx_t|\cc)}{p_{t,\mtheta}(\xx_t)}\right)^\zeta\right)
\]
which equals to
\[
    \nabla_{\xx_t}\log p_t(\xx_t|\cc)+ \zeta \left(\nabla_{\xx_t} \log p_{t,\mtheta}(\xx_t|\cc)- \nabla_{\xx_t} \log p_{t,\mtheta}(\xx_t)\right) \, .
\]
For $\mf^{\mathrm{\zz}}_{\star}$ we originally want to learn:
\[
    \mf^{\mathrm{\zz}}_{\star}(\mmF_t, \xx_t, t)= -\alpha(t)\nabla_{\xx_t} \log p_t(\xx_t) \, ,
\]
now it turns to
\begin{align*}
    \mf^{\mathrm{\zz}}_{\star}(\mmF_t, \xx_t, t) &= -\alpha(t)\nabla_{\xx_t}\log\left(p_t(\xx_t|\cc) \left(\frac{p_{t,\mtheta}(\xx_t|\cc)}{p_{t,\mtheta}(\xx_t)}\right)^\zeta\right)\\
    &= -\alpha(t) \left[\nabla_{\xx_t}\log p_t(\xx_t|\cc)+ \zeta \left(\nabla_{\xx_t} \log p_{t,\mtheta}(\xx_t|\cc)- \nabla_{\xx_t} \log p_{t,\mtheta}(\xx_t)\right) \right] \\
    &= -\alpha(t) \nabla_{\xx_t}\log p_t(\xx_t|\cc) + \zeta  \left(-\alpha(t)\nabla_{\xx_t} \log p_{t,\mtheta}(\xx_t|\cc)+\alpha(t) \nabla_{\xx_t} \log p_{t,\mtheta}(\xx_t)\right) \\
    &= -\alpha(t) \nabla_{\xx_t}\log p_t(\xx_t|\cc) + \zeta  \left(\mf^{\mathrm{\zz}}(\mmF_t, \xx_t, t) - \mf^{\mathrm{\zz}}(\mmF^\varnothing_t, \xx_t, t)\right) \, ,
\end{align*}
thus in training we set the objective for $\mf^{\mathrm{\zz}}$ as:
\[
    \zz^\star \gets \zz + \zeta \cdot \left(\mf^{\mathrm{\zz}}(\mmF_t, \xx_t, t) - \mf^{\mathrm{\zz}}(\mmF^\varnothing_t, \xx_t, t)\right) \, .
\]
Similarly, since $\mf^{\mathrm{\xx}}_{\star}=\frac{\xx_t +\alpha^2(t)\nabla_{\xx_t} \log p_t(\xx_t)}{\gamma(t)}$ is also linear in the score function, we can use the same strategy to modify the training objective for $\mf^{\mathrm{\xx}}$:
\[
    \xx^\star \gets \xx + \zeta \cdot \left(\mf^{\mathrm{\xx}}(\mmF_t, \xx_t, t) - \mf^{\mathrm{\xx}}(\mmF^\varnothing_t, \xx_t, t)\right) \, .
\]
When $t \in (s, 1]$ ($s = 0.75$), we further slightly modify the target score function to
\[
    \nabla_{\xx_t}\log p_t(\xx_t|\cc)+ \zeta \left(\nabla_{\xx_t} \log p_{t,\mtheta}(\xx_t|\cc)- \nabla_{\xx_t} \log p_{t}(\xx_t)\right), \, \zeta=0.5 
\]
which corresponds to the following training objective:
\[
    \xx^\star \gets \xx + \frac{1}{2} \left(\mf^{\mathrm{\xx}}(\mmF_t, \xx_t, t) - \xx\right) \, ,
    \zz^\star \gets \zz + \frac{1}{2} \left(\mf^{\mathrm{\zz}}(\mmF_t, \xx_t, t) - \zz\right) \, .
\]

\subsubsection{Unified Sampling Process}
\label{app:unified_sampling}

\paragraph{Deterministic sampling.}
When the stochastic ratio $\rho=0$, let's analyze a apecial case where the coefficients satisfying $\hat{\alpha}(t)=\frac{\dm\alpha(t)}{\dm t},\hat{\gamma}(t)=\frac{\dm\gamma(t)}{\dm t}$.
Let $\Delta t=t_{i+1}-t_i$, for the core updating rule we have:
\begin{align*}
    \xx^{\prime} & = \alpha(t_{i+1}) \cdot  \hat{\zz}  + \gamma(t_{i+1}) \cdot \hat{\xx}                                                                                                                                                                                                                                                                                                                                                         \\
                 & = (\alpha(t_{i})+\alpha'(t_{i})\Delta t + o(\Delta t)) \cdot  \hat{\zz}  + (\gamma(t_{i})+\gamma'(t_{i})\Delta t + o(\Delta t)) \cdot \hat{\xx}                                                                                                                                                                                                                                                                               \\
                 & = (\alpha(t_{i})\hat{\zz}+\gamma(t_{i})\hat{\xx})+(\hat{\alpha}(t_{i})   \hat{\zz}  + \hat{\gamma}(t_{i}) \hat{\xx} )\cdot \Delta t  + o(\Delta t)                                                                                                                                                                                                                                                                            \\
                 & = (\alpha(t_{i})\mf^{\mathrm{\zz}}(\mmF, \tilde{\xx}, t_{i})+\gamma(t_{i})\mf^{\mathrm{\xx}}(\mmF, \tilde{\xx}, t_{i}))+(\hat{\alpha}(t_{i})   \mf^{\mathrm{\zz}}(\mmF, \tilde{\xx}, t_{i})  + \hat{\gamma}(t_{i}) \mf^{\mathrm{\xx}}(\mmF, \tilde{\xx}, t_{i}) )\cdot \Delta t  + o(\Delta t)                                                                                                                                \\
                 & = (\alpha(t_{i})\frac{\hat{\gamma}(t_{i})\cdot\tilde{\xx} - \gamma(t_{i})\cdot\mmF(\tilde{\xx}, t_{i})}{\alpha(t_{i})\cdot \hat{\gamma}(t_{i}) - \hat{\alpha}(t_{i}) \cdot \gamma(t_{i})}+\gamma(t_{i})\frac{\alpha(t_{i})\cdot\mmF(\tilde{\xx}, t_{i}) - \hat{\alpha}(t_{i})\cdot\xx_t}{\alpha(t_{i})\cdot \hat{\gamma}(t_{i}) - \hat{\alpha}(t_{i}) \cdot \gamma(t_{i})})                                                   \\
                 & + (\hat{\alpha}(t_{i})   \frac{\hat{\gamma}(t_{i})\cdot\tilde{\xx} - \gamma(t_{i})\cdot\mmF(\tilde{\xx}, t_{i})}{\alpha(t_{i})\cdot \hat{\gamma}(t_{i}) - \hat{\alpha}(t_{i}) \cdot \gamma(t_{i})}  +  \hat{\gamma}(t_{i}) \frac{\alpha(t_{i})\cdot\mmF(\tilde{\xx}, t_{i}) - \hat{\alpha}(t_{i})\cdot\xx_t}{\alpha(t_{i})\cdot \hat{\gamma}(t_{i}) - \hat{\alpha}(t_{i}) \cdot \gamma(t_{i})} )\cdot \Delta t  + o(\Delta t) \\
                 & = \tilde{\xx}+\mmF(\tilde{\xx}, t_{i})\cdot \Delta t  + o(\Delta t)                                                                                                                                                                                                                                                                                                                                                           \\
\end{align*}
In this case $\mmF(\cdot, \cdot)$ tries to predict the velocity field of the flow model, and we can see that the term $\tilde{\xx}+\mmF(\tilde{\xx}, t_{i})\cdot \Delta t$ corresponds to the sampling rule of the Euler ODE solver.

\paragraph{Stochastic sampling.}
As for case when the stochastic ratio $\rho \neq 0$, follow the Euler-Maruyama numerical methods of SDE,
the noise injected should be a Gaussian with zero mean and variance proportional to $\Delta t$, so when the updating rule is
$\xx^{\prime} = \alpha(t_{i+1}) \cdot (\sqrt{1-\rho} \cdot \hat{\zz} + \sqrt{\rho} \cdot \zz ) + \gamma(t_{i+1}) \cdot \hat{\xx}$, the coefficient of $\zz$ should satisfy
\[
    \alpha(t_{i+1})\sqrt{\rho} \propto \sqrt{\Delta t}, \ \rho \propto \frac{\Delta t}{\alpha^2(t_{i+1})}
\]
In practice, we set
\[
    \rho = \frac{2\Delta t\cdot \alpha(t_{i})}{\alpha^2(t_{i+1})} \, .
\]
which corresponds to $g(t)=\sqrt{2\alpha(t)}$ for the SDE $\dm \xx = \mf(\xx,t)\dm t + g(t)\dm \boldsymbol{w}$.

\subsubsection{Extrapolating Estimation}
\label{app:extrapolating}

\begin{theorem}[Local Truncation error of the extrapolated update]
    Let \(\{\tilde{\xx}_i\}\) be the sequence defined by the extrapolated update
    \[
        \tilde{\xx}_{i+1}
        \;=\;
        \tilde{\xx}_i
        \;+\;
        h\bigl(\vv_i + \kappa(\vv_i - \vv_{i-1})\bigr)
        \;+\;
        h^2\,\mepsilon_i,
        \quad
        h = t_{i+1}-t_i,
    \]
    where
    \(\vv_i = \vv(\tilde{\xx}_i,t_i)\)
    and \(\mepsilon_i = O(1)\).  Denote by \(\xx(t_{i+1})\) the exact solution of
    \(\dot\xx = \vv(\xx,t)\) at time \(t_{i+1}\).  Then the local truncation error satisfies
    \[
        \xx(t_{i+1}) - \tilde{\xx}_{i+1}
        \;=\;
        h^2\Bigl[\bigl(\tfrac12 - \kappa\bigr)\,\vv'\!(\tilde{\xx}_i,t_i)\;-\;\mepsilon_i\Bigr]
        \;+\;O(h^3),
    \]
    where \(\vv'\!(\tilde{\xx}_i,t_i)\) denotes the total derivative of \(\vv\) along the trajectory.  In particular, choosing \(\kappa=\tfrac12\) cancels the \(O(h^2)\) term (up to \(\mepsilon_i\)), yielding a second-order method.
\end{theorem}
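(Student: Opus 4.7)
The plan is to derive the local truncation error by Taylor expanding both the exact solution $\xx(t_{i+1})$ and the numerical update $\tilde{\xx}_{i+1}$ around the current iterate $(\tilde{\xx}_i, t_i)$, under the standard local assumption that $\tilde{\xx}_i = \xx(t_i)$ (so that we are isolating the one-step error). Throughout, I write $\vv'(\xx,t)$ for the total derivative $\partial_t \vv + (\nabla_{\xx}\vv)\,\vv$ along the trajectory, so that for any exact solution $\ddot{\xx}(t) = \vv'(\xx(t),t)$.

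First, I would expand the exact trajectory:
\begin{equation*}
\xx(t_{i+1}) \;=\; \xx(t_i) + h\,\vv(\xx(t_i),t_i) + \tfrac{h^2}{2}\,\vv'(\xx(t_i),t_i) + O(h^3)
\;=\; \tilde{\xx}_i + h\,\vv_i + \tfrac{h^2}{2}\,\vv'_i + O(h^3),
\end{equation*}
where I abbreviate $\vv'_i := \vv'(\tilde{\xx}_i,t_i)$. Next I would expand the lagged velocity $\vv_{i-1} = \vv(\tilde{\xx}_{i-1}, t_{i-1})$ about $(\tilde{\xx}_i, t_i)$. Assuming a uniform step $h$ (so $t_{i-1} = t_i - h$) and using that $\tilde{\xx}_i - \tilde{\xx}_{i-1} = h\,\vv_{i-1} + O(h^2) = h\,\vv_i + O(h^2)$, a two‐variable Taylor expansion of $\vv$ gives
\begin{equation*}
\vv_{i-1} \;=\; \vv_i \;-\; h\bigl(\partial_t \vv + (\nabla_{\xx}\vv)\,\vv\bigr)(\tilde{\xx}_i,t_i) + O(h^2) \;=\; \vv_i - h\,\vv'_i + O(h^2),
\end{equation*}
so that the extrapolation correction satisfies $\vv_i - \vv_{i-1} = h\,\vv'_i + O(h^2)$.

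Substituting into the update rule gives
\begin{equation*}
\tilde{\xx}_{i+1} \;=\; \tilde{\xx}_i + h\,\vv_i + \kappa h\bigl(h\,\vv'_i + O(h^2)\bigr) + h^2\,\mepsilon_i \;=\; \tilde{\xx}_i + h\,\vv_i + \kappa h^2\,\vv'_i + h^2\,\mepsilon_i + O(h^3).
\end{equation*}
Subtracting from the exact expansion yields
\begin{equation*}
\xx(t_{i+1}) - \tilde{\xx}_{i+1} \;=\; \bigl(\tfrac12 - \kappa\bigr)\,h^2\,\vv'_i \;-\; h^2\,\mepsilon_i + O(h^3),
\end{equation*}
which is exactly the claimed expression, and the special choice $\kappa = \tfrac12$ cancels the leading $\vv'_i$ contribution (leaving only the perturbation $\mepsilon_i$), promoting the scheme to second order in the sense that the $O(h^2)$ coefficient is solely due to the exogenous perturbation.

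The main obstacle I anticipate is not the algebra but justifying the expansion $\vv_i - \vv_{i-1} = h\,\vv'_i + O(h^2)$ cleanly: one must assume a uniform (or slowly varying) step size so that $t_{i-1} = t_i - h + O(h^2)$, sufficient $C^2$ smoothness of $\vv$ in both arguments, and that the prior iterate $\tilde{\xx}_{i-1}$ lies within $O(h)$ of $\tilde{\xx}_i$ along the numerical trajectory. These are standard hypotheses for local truncation analysis of multistep-like methods, but they need to be stated (and the $O(h^2)$ remainder in $\tilde{\xx}_i - \tilde{\xx}_{i-1}$ needs to be tracked carefully through the chain rule so that no hidden $O(h)$ contamination of $\vv'_i$ survives in the error).
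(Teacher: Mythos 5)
Your proposal is correct and follows essentially the same route as the paper's own proof: Taylor-expand $\vv_{i-1}$ about $(\tilde{\xx}_i,t_i)$ to get $\vv_i - \vv_{i-1} = h\,\vv'(\tilde{\xx}_i,t_i) + O(h^2)$, substitute into the update rule, expand the exact solution to second order, and subtract. The paper is slightly more terse — it invokes ``Taylor's theorem in time'' without spelling out the uniform-step and local-accuracy hypotheses that you (rightly) flag as needed to justify the lagged-velocity expansion — but the argument is the same.
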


\begin{proof}
    1. By Taylor’s theorem in time,
    \[
        \vv_{i-1}
        = \vv(\tilde{\xx}_{i-1},t_{i-1})
        = \vv_i
        - h\,\vv'\!(\tilde{\xx}_i,t_i)
        + O(h^2).
    \]

    2. Substitute into the update rule:
    \[
        \begin{aligned}
            \tilde{\xx}_{i+1}
             & = \tilde{\xx}_i
            + h\bigl[\vv_i + \kappa(\vv_i - \vv_{i-1})\bigr]
            + h^2\,\mepsilon_i \\
             & = \tilde{\xx}_i
            + h\Bigl(\vv_i
            + \kappa\bigl[\vv_i - (\vv_i - h\,\vv' + O(h^2))\bigr]\Bigr)
            + h^2\,\mepsilon_i \\
             & = \tilde{\xx}_i
            + h\,\vv_i
            + \kappa\,h^2\,\vv'\!(\tilde{\xx}_i,t_i)
            + h^2\,\mepsilon_i
            + O(h^3).
        \end{aligned}
    \]

    3. The exact solution expands as
    \[
        \xx(t_{i+1})
        = \xx(t_i)
        + h\,\vv(\xx(t_i),t_i)
        + \tfrac{h^2}{2}\,\vv'\!(\xx(t_i),t_i)
        + O(h^3).
    \]
    Replacing \(\xx(t_i)\) by \(\tilde{\xx}_i\) in the leading terms gives
    \[
        \xx(t_{i+1})
        = \tilde{\xx}_i
        + h\,\vv_i
        + \tfrac{h^2}{2}\,\vv'\!(\tilde{\xx}_i,t_i)
        + O(h^3).
    \]

    4. Subtracting yields the local truncation error:
    \[
        \begin{aligned}
            \xx(t_{i+1}) - \tilde{\xx}_{i+1}
             & = \bigl[\tilde{\xx}_i + h\vv_i + \tfrac{h^2}{2}\vv' + O(h^3)\bigr]
            - \bigl[\tilde{\xx}_i + h\vv_i + \kappa\,h^2\vv' + h^2\mepsilon_i + O(h^3)\bigr]         \\
             & = h^2\Bigl[\bigl(\tfrac12 - \kappa\bigr)\vv'\!(\tilde{\xx}_i,t_i) - \mepsilon_i\Bigr]
            + O(h^3).
        \end{aligned}
    \]
    This completes the proof.
\end{proof}

\begin{remark}[Error reduction via the extrapolation ratio~$\kappa$]
    From the local truncation error estimate
    \[
        \xx(t_{i+1})-\tilde\xx_{i+1}
        = h^2\Bigl[\bigl(\tfrac12-\kappa\bigr)\,\vv'(\tilde\xx_i,t_i)-\mepsilon_i\Bigr]
        +O(h^3),
    \]
    define
    \[
        E(\kappa)
        \;=\;(\tfrac12-\kappa)\,\vv'(\tilde\xx_i,t_i)-\mepsilon_i,
        \quad
        E(0)=\tfrac12\,\vv'(\tilde\xx_i,t_i)-\mepsilon_i.
    \]
    Note that
    \[
        \min_{\kappa\in[0,1]}\|E(\kappa)\|
        \;\le\;\|E(0)\|.
    \]
    By selecting an appropriate $\kappa$ value, the $O(h^2)$ coefficient—and thus the leading part of the local truncation error—is is smaller (or at least not larger) in norm than in the case $\kappa=0$.
\end{remark}

\subsection{Other Techniques}

\subsubsection{Beta Transformation}
\label{app:beta_transformation}

\begin{figure}[t]
    \centering
    \begin{subfigure}[b]{0.325\textwidth}
        \centering
        \includegraphics[width=\linewidth]{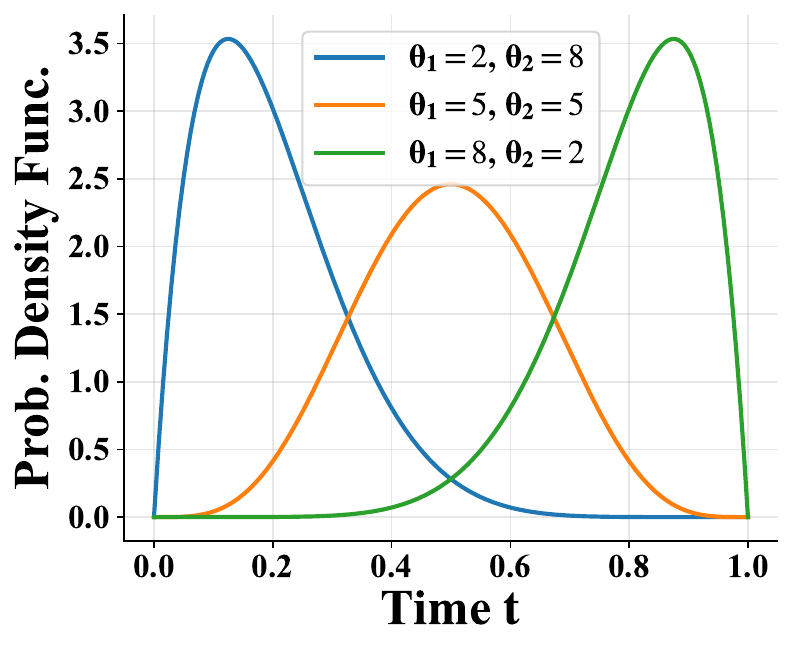}
        \caption{\textbf{Skewed and symmetric.}}
    \end{subfigure}
    \hfill
    \begin{subfigure}[b]{0.325\textwidth}
        \centering
        \includegraphics[width=\linewidth]{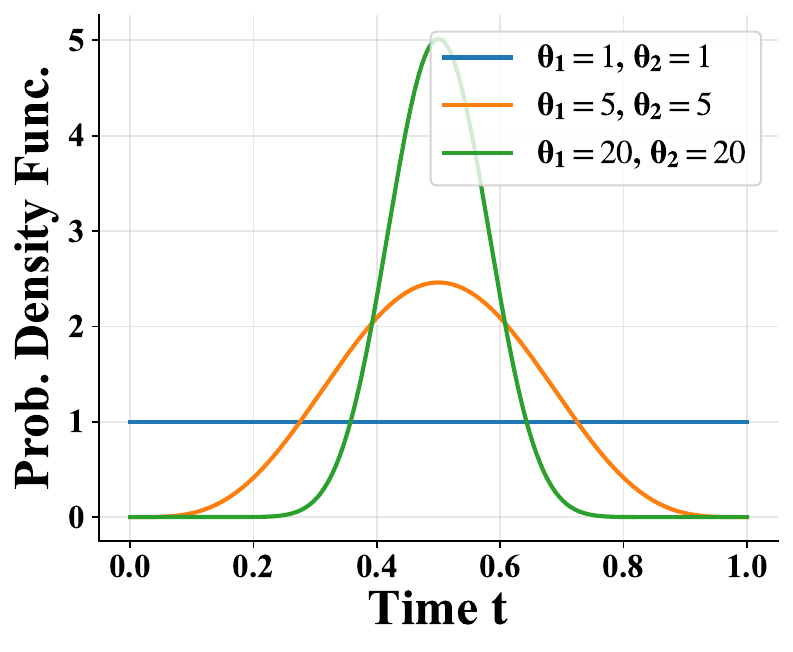}
        \caption{\textbf{Increasingly concentrated.}}
    \end{subfigure}
    \hfill
    \begin{subfigure}[b]{0.325\textwidth}
        \centering
        \includegraphics[width=\linewidth]{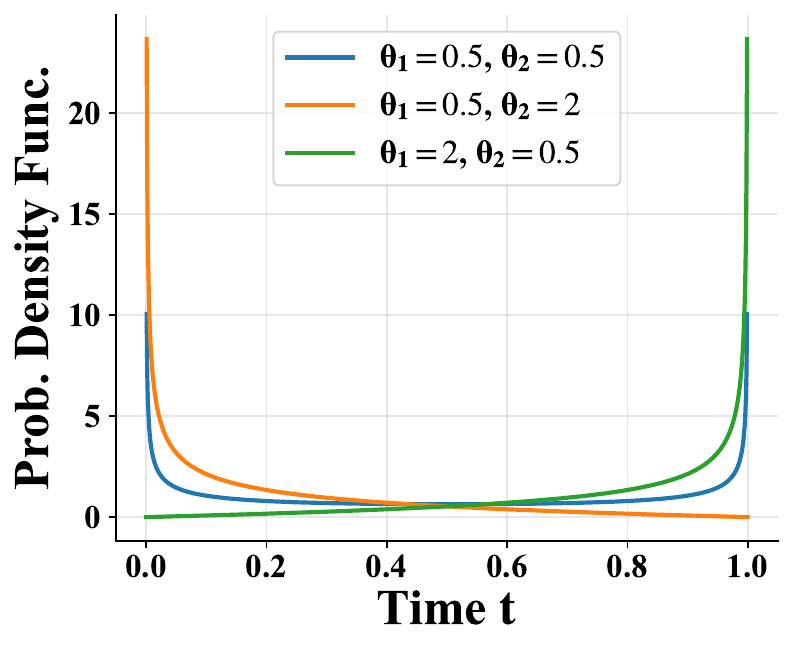}
        \caption{\textbf{J- and U-shaped.}}
    \end{subfigure}
    \caption{\small{
            \textbf{Probability density functions of the Beta distribution over the domain \(t\in[0,1]\) for various shape-parameter $\theta_1, \theta_2$.}
        }}
    \label{fig:beta_dists}
\end{figure}

We utilize three representative cases to illustrate how the Beta transformation \( f_{\mathrm{Beta}}(t; \theta_1, \theta_2) \) generalizes time warping mechanisms for \( t \in [0,1] \).

\paragraph{Standard logit-normal time transformation~\citep{yao2025reconstruction,esser2024scaling}.}
For \( t \sim \mathcal{U}(0,1) \), the logit-normal transformation \( f_{\mathrm{lognorm}}(t; 0, 1) = \frac{1}{1 + \exp(-\Phi^{-1}(t))} \) generates a symmetric density profile peaked at \( t = 0.5 \), consistent with the central maximum of the logistic-normal distribution. Analogously, the Beta transformation \( f_{\mathrm{Beta}}(t; \theta_1, \theta_2) \) (with \( \theta_1, \theta_2 > 1 \)) produces a density peak at \( t = \frac{\theta_1 - 1}{\theta_1 + \theta_2 - 2} \). When \( \theta_1 = \theta_2 > 1 \), this reduces to \( t = 0.5 \), mirroring the logit-normal case. Both transformations concentrate sampling density around critical time regions, enabling importance sampling for accelerated training. Notably, this effect can be equivalently achieved by directly sampling \( t \sim \mathrm{Beta}(\theta_1, \theta_2) \).

\paragraph{Uniform time distribution~\citep{yao2025reconstruction,yu2024representation,ma2024sit,lipman2022flow}.}
The uniform limit case emerges when \( \theta_1 = \theta_2 = 1 \), reducing \( f_{\mathrm{Beta}}(t; 1, 1) \) to an identity transformation. This corresponds to a flat density \( p(t) = 1 \), reflecting no temporal preference—a baseline configuration widely adopted in diffusion and flow-based models.

\paragraph{Approximately symmetrical time distribution~\citep{song2023consistency,song2023improved,karras2022elucidating,karras2024analyzing}.}
For near-symmetric configurations where \( \theta_1 \approx \theta_2 > 1 \), the Beta transformation induces quasi-symmetrical densities with tunable central sharpness. For instance, setting \( \theta_1 = \theta_2 = 2 \) yields a parabolic density peaking at \( t = 0.5 \), while \( \theta_1 = \theta_2 \to 1^+ \) asymptotically approaches uniformity. This flexibility allows practitioners to interpolate between uniform sampling and strongly peaked distributions, adapting to varying requirements for temporal resolution in training. Such approximate symmetry is particularly useful in consistency models where balanced gradient propagation across time steps is critical.

Furthermore, \figref{fig:beta_dists} further demonstrates the flexibility of the beta distribution.

\subsubsection{Kumaraswamy Transformation}
\label{app:kuma_transformation}

\begin{lemma}[Piecewise monotone error]
    \label{lemma:piecewise_monotone_error}
    Suppose \(f,g\) are continuous and nondecreasing on \([0,1]\), and agree at
    \[
        0 = x_0 < x_1 < \cdots < x_n = 1 \,,
    \]
    i.e.\ \(f(x_j)=g(x_j)\) for \(j=0,\dots,n\).  Let \(\Delta_j = g(x_j)-g(x_{j-1})\).  Then for every \(t\in[x_{j-1},x_j]\),
    \[
        |f(t)-g(t)|\le\Delta_j \,.
    \]
    In particular, if each \(\Delta_j\le\frac14\), then \(\|f-g\|_{L^\infty}\le\frac14\).
\end{lemma}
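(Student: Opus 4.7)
The plan is to exploit the fact that both $f$ and $g$ are nondecreasing and agree at the endpoints of each subinterval, which traps both values inside a common short interval. First I would fix an arbitrary $j \in \{1,\dots,n\}$ and an arbitrary $t \in [x_{j-1}, x_j]$. By monotonicity of $f$, we have $f(x_{j-1}) \le f(t) \le f(x_j)$; by monotonicity of $g$, we have $g(x_{j-1}) \le g(t) \le g(x_j)$. The hypothesis $f(x_j) = g(x_j)$ at every node then collapses these two chains of inequalities into a single statement: both $f(t)$ and $g(t)$ lie in the closed interval $[g(x_{j-1}), g(x_j)]$.

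The next step is purely arithmetic: any two points in an interval of length $\Delta_j := g(x_j) - g(x_{j-1})$ differ by at most $\Delta_j$, so
\[
|f(t) - g(t)| \;\le\; g(x_j) - g(x_{j-1}) \;=\; \Delta_j.
\]
This proves the pointwise bound on $[x_{j-1}, x_j]$.

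For the global consequence, I would observe that the intervals $[x_{j-1}, x_j]$ for $j=1,\dots,n$ cover $[0,1]$, so every $t \in [0,1]$ falls into at least one of them, yielding $|f(t) - g(t)| \le \max_j \Delta_j$. Under the hypothesis $\Delta_j \le \tfrac14$ for all $j$, this gives $\|f-g\|_{L^\infty} \le \tfrac14$ by taking the supremum in $t$. I do not anticipate any real obstacle here: the argument is essentially a one-line consequence of monotonicity plus the nodal coincidence, and no continuity modulus or refined estimate is needed beyond the basic fact that monotone functions are sandwiched by their values at endpoints.
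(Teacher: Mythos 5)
Your argument is correct and essentially identical to the paper's: both use monotonicity of $f$ and $g$ on $[x_{j-1},x_j]$ together with the nodal agreement $f(x_j)=g(x_j)$ to trap $f(t)$ and $g(t)$ in the common interval $[g(x_{j-1}),g(x_j)]$ of length $\Delta_j$. The paper simply states the two one-sided bounds $f(t)-g(t)\le f(x_j)-g(x_{j-1})=\Delta_j$ and $g(t)-f(t)\le\Delta_j$ directly, which is the same reasoning in slightly different words.
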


\begin{proof}
    On \([x_{j-1},x_j]\) monotonicity gives
    \[
        f(t)-g(t)\le f(x_j)-g(x_{j-1})
        = g(x_j)-g(x_{j-1})=\Delta_j,
    \]
    and similarly \(g(t)-f(t)\le\Delta_j\).
\end{proof}

\begin{theorem}[$L^2$ approximation bound of monotonic functions by generalized Kumaraswamy transformation]
    Let
    $
        \mathcal G = \bigl\{\,g\in C([0,1]):g\text{ nondecreasing},\;g(0)=0,\;g(1)=1\bigr\},
    $
    and define for \(a,b,c>0\),
    $
        f_{a,b,c}(t)=\bigl(1-(1-t^a)^b\bigr)^c
    $, $t\in[0,1]$.
    Then
    \[
        \sup_{g\in\mathcal G}\;\inf_{a,b,c>0}
        \int_{0}^{1}\bigl[f_{a,b,c}(t)-g(t)\bigr]^{2}\,\dm t
        \;\le\;\frac1{16} \,.
    \]
\end{theorem}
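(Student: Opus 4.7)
The plan is to apply Lemma~\ref{lemma:piecewise_monotone_error} with a partition of $[0,1]$ on which $g$ is equidistributed into four equal-mass pieces, and then use the three free parameters $a,b,c$ to match $f_{a,b,c}$ to $g$ exactly at the three interior partition points.

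First, given any $g\in\mathcal G$, the intermediate value theorem yields points $0<x_1<x_2<x_3<1$ with $g(x_k)=k/4$ for $k=1,2,3$; strict separation holds because a continuous nondecreasing function cannot attain two distinct levels at the same point. Setting $x_0=0$ and $x_4=1$, the partition satisfies $\Delta_j:=g(x_j)-g(x_{j-1})=1/4$ for all $j=1,\dots,4$. Next I would establish the key interpolation claim: for any such triple $(x_1,x_2,x_3)$, there exist $a,b,c>0$ with $f_{a,b,c}(x_k)=k/4$ for $k=1,2,3$. Granting this, the function $f:=f_{a,b,c}$ is continuous and nondecreasing, automatically satisfies $f(0)=0$ and $f(1)=1$, and agrees with $g$ at every $x_j$. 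Lemma~\ref{lemma:piecewise_monotone_error} then gives $\|f-g\|_{L^\infty}\le\max_j\Delta_j=1/4$, so
\[
\int_{0}^{1}\bigl[f_{a,b,c}(t)-g(t)\bigr]^{2}\,\dm t \;\le\; \|f-g\|_{L^\infty}^{2} \;\le\; \frac{1}{16},
\]
which, being uniform over $g\in\mathcal G$, proves the claim after taking infimum in $(a,b,c)$ and supremum in $g$.

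The main obstacle is the interpolation claim itself. My plan is to decompose $f_{a,b,c}=\phi_c\circ\psi_b\circ\chi_a$ where $\chi_a(t)=t^a$, $\psi_b(s)=1-(1-s)^b$, and $\phi_c(r)=r^c$ are continuous one-parameter families of increasing self-bijections of $[0,1]$, and then study the continuous evaluation map
\[
\Phi\colon(0,\infty)^{3}\to\Delta,\qquad \Phi(a,b,c)=\bigl(f_{a,b,c}(x_1),f_{a,b,c}(x_2),f_{a,b,c}(x_3)\bigr),
\]
where $\Delta:=\{(y_1,y_2,y_3):0<y_1<y_2<y_3<1\}$ is the open monotone simplex (the image lies in $\Delta$ by monotonicity of each $f_{a,b,c}$). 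The goal is to prove that $(1/4,1/2,3/4)\in\Phi((0,\infty)^3)$. I would do this via a topological degree argument, exploiting the known one-sided limits of each factor (e.g.\ $\chi_a\to\mathbf 1_{(0,1]}$ as $a\to 0^+$ and $\chi_a\to\mathbf 1_{\{1\}}$ as $a\to\infty$, with analogous dichotomies for $\psi_b$ and $\phi_c$) to control $\Phi$ on a suitable compactification of the parameter cube and to conclude that its degree at the target triple is nonzero; alternatively, one can first fix $c$ so that the middle condition $f(x_2)=1/2$ becomes compatible with a suitable one-parameter subfamily and then apply a monotonicity-plus-IVT argument in $(a,b)$ to satisfy the remaining two conditions. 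This three-dimensional surjectivity, where the richness of the Kumaraswamy family must be used in an essential way, is the one step I expect to require the most delicate book-keeping of limits; everything else in the argument is immediate from the piecewise monotone error lemma.
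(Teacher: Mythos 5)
Your reduction is exactly the paper's: pick $0<x_1<x_2<x_3<1$ with $g(x_k)=k/4$ via the intermediate value theorem, match $f_{a,b,c}$ to $g$ at those three points, feed the resulting five-point partition into Lem.~\ref{lemma:piecewise_monotone_error} to get $\|f_{a,b,c}-g\|_{L^\infty}\le 1/4$, and square. So the skeleton is the same, and the entire mathematical content of the theorem collapses to the interpolation claim: \emph{for every} $0<x_1<x_2<x_3<1$ there exist $a,b,c>0$ with $f_{a,b,c}(x_k)=k/4$.

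That claim is precisely where you stop proving and start planning, and that is a genuine gap. Your first route --- a topological degree argument on a compactification of $(0,\infty)^3$ --- is not carried out: to conclude nonzero degree you need to control the behavior of the evaluation map $\Phi$ on the \emph{entire} boundary of the compactified cube, including mixed limits where two or three of $(a,b,c)$ simultaneously go to $0$ or $\infty$; those limits are not simple tensor products of the one-factor limits you list, and the bookkeeping you defer is exactly the proof. The same degeneracy that the paper exposes (the derivatives in $a$ and $c$ of $f_{a,b,c}$ at $(1,1,1)$ coincide, $g_1=g_3$) already hints that the map is not regular in the naive sense, so a degree count that implicitly assumes a full-rank situation at the center could go wrong. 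Your second route, ``fix $c$, then IVT in $(a,b)$,'' is also only a one-line sketch; the two remaining conditions are coupled nonlinearly in $(a,b)$, and you would still need a surjectivity argument for that $2\times 2$ system for every admissible $c$.

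The paper closes this gap by a concrete computation rather than a soft topological argument. It sets $F(a,b,c)=(f_{a,b,c}(x_k)-k/4)_{k=1,2,3}$, notes $F(1,1,1)=0$ at the base triple $(1/4,1/2,3/4)$, and then explicitly computes
\[
\det\!\bigl(\partial_{(a,b,c)}F\bigr)
= b\,(u_1u_0u_2)^c\,\frac{(u_0-u_1)(u_2-u_1)(u_2-u_0)}{(1-u_1)(1-u_0)(1-u_2)}
\;>\;0
\]
for all $a,b,c>0$ and all $0<x_1<x_0<x_2<1$, where $u_j=1-(1-x_j^a)^b$. From the nowhere-vanishing Jacobian it invokes the implicit function theorem and continues the solution along a path of triples from the base point to an arbitrary target in the connected domain $\{0<x_1<x_0<x_2<1\}$. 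That explicit determinant identity is the actual work your proposal still owes; without it (or a completed degree argument with boundary control), the interpolation claim --- and hence the theorem --- is not yet proved.
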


\begin{proof}
    Let \(g\in\mathcal G\).  By continuity and the Intermediate-Value Theorem there exist
    \[
        0<t_1<t_0<t_2<1,\quad g(t_1)=\tfrac14,\;g(t_0)=\tfrac12,\;g(t_2)=\tfrac34.
    \]
    We will choose \((a,b,c)>0\) so that
    \[
        f_{a,b,c}(t_j)=g(t_j)\quad (j=1,0,2),
    \]
    and then apply the piecewise monotone~\lemref{lemma:piecewise_monotone_error} on the partition
    \[
        0,\;t_1,\;t_0,\;t_2,\;1
    \]
    to conclude \(\|f_{a,b,c}-g\|_{L^\infty}\le\frac14\) and hence
    \(\|f_{a,b,c}-g\|_{L^2}^2\le\frac1{16}\).

    \medskip
    \paragraph{Existence via the implicit function theorem.}
    Define
    \[
        F:\mathbb R_{>0}^3\longrightarrow\mathbb R^3,\qquad
        F(a,b,c)=
        \begin{pmatrix}
            f_{a,b,c}(t_1)-\tfrac14 \\
            f_{a,b,c}(t_0)-\tfrac12 \\
            f_{a,b,c}(t_2)-\tfrac34
        \end{pmatrix}.
    \]
    Then \(F\) is \(C^1\), and at the “base point” \((a,b,c)=(1,1,1)\) with
    \((t_1,t_0,t_2)=(\tfrac14,\tfrac12,\tfrac34)\) we have \(f_{1,1,1}(t)=t\)
    so \(F(1,1,1)=0\), and the Jacobian \(\partial F/\partial(a,b,c)\) there is invertible.
    By the Implicit Function Theorem, for each fixed \((t_1,t_0,t_2)\) near \((\tfrac14,\tfrac12,\tfrac34)\) there is a unique local solution \((a,b,c)\).

    \vspace{1ex}
    \paragraph{Global non-degeneracy of the Jacobian.}

    In order to continue this local solution to \emph{all} triples \(0<t_1<t_0<t_2<1\), we show
    \(\det\bigl(\partial_{(a,b,c)}F(a,b,c)\bigr)\) never vanishes.

    Set
    \[
        u(t)=1-(1-t^a)^b,\quad u_j=u(t_j)\in(0,1),\quad f_j=u_j^c.
    \]
    Then
    \[
        \partial_a f_j
        = c\,u_j^{\,c-1}\,\partial_a u_j,
        \quad
        \partial_b f_j
        = c\,u_j^{\,c-1}\,\partial_b u_j,
        \quad
        \partial_c f_j
        = u_j^c\ln u_j.
    \]
    Hence
    \[
        \det J
        =\det
        \begin{pmatrix}
            c\,u_1^{c-1}u_{1,a} & c\,u_1^{c-1}u_{1,b} & u_1^c\ln u_1 \\
            c\,u_0^{c-1}u_{0,a} & c\,u_0^{c-1}u_{0,b} & u_0^c\ln u_0 \\
            c\,u_2^{c-1}u_{2,a} & c\,u_2^{c-1}u_{2,b} & u_2^c\ln u_2
        \end{pmatrix}.
    \]
    Factor \(c\) from the first two columns and \(u_j^{c-1}\) from each row:
    \[
        \det J
        = c^2\,(u_1u_0u_2)^{c-1}
        \;\det
        \begin{pmatrix}
            u_{1,a} & u_{1,b} & u_1\ln u_1 \\
            u_{0,a} & u_{0,b} & u_0\ln u_0 \\
            u_{2,a} & u_{2,b} & u_2\ln u_2
        \end{pmatrix}.
    \]
    Now
    \[
        u_{j,b}
        =-(1-t_j^a)^b\ln(1-t_j^a)
        =-(1-u_j)\ln(1-t_j^a),
    \]
    \[
        u_{j,a}
        =b\,(1-t_j^a)^{b-1}t_j^a\ln t_j
        =-b\,(1-u_j)\,\frac{t_j^a\ln t_j}{1-t_j^a}.
    \]
    A direct—but straightforward—expansion shows
    \[
        \det
        \begin{pmatrix}
            u_{1,a} & u_{1,b} & u_1\ln u_1 \\
            u_{0,a} & u_{0,b} & u_0\ln u_0 \\
            u_{2,a} & u_{2,b} & u_2\ln u_2
        \end{pmatrix}
        =
        c^{-2}b\,
        \frac{u_1u_0u_2}{(1-u_1)(1-u_0)(1-u_2)}\,
        (u_0-u_1)(u_2-u_1)(u_2-u_0).
    \]
    Therefore
    \[
        \det J(a,b,c)
        = b\,(u_1u_0u_2)^c
        \,\frac{(u_0-u_1)(u_2-u_1)(u_2-u_0)}
        {(1-u_1)(1-u_0)(1-u_2)}
        >0,
    \]
    since \(0<u_1<u_0<u_2<1\) and \(a,b,c>0\).  Hence the Jacobian is everywhere non-zero, and the local solution by the Implicit Function Theorem extends along any path in the connected domain \(\{0<t_1<t_0<t_2<1\}\).  We obtain a unique \((a,b,c)>0\) solving
    \[
        f_{a,b,c}(t_j)=g(t_j),\quad j=1,0,2,
    \]
    for \emph{every} choice \(0<t_1<t_0<t_2<1\).

    \medskip
    \paragraph{Completing the error estimate.}
    By construction \(f_{a,b,c}(0)=0\), \(f_{a,b,c}(1)=1\), and
    \(f_{a,b,c}(t_j)=g(t_j)\) for \(j=1,0,2\).  On the partition
    \[
        0,\;t_1,\;t_0,\;t_2,\;1
    \]
    the increments of \(g\) are each \(1/4\).  The piecewise monotone error~\lemref{lemma:piecewise_monotone_error} yields
    \(\|f_{a,b,c}-g\|_{L^\infty}\le\tfrac14\), hence
    \[
        \int_0^1\bigl[f_{a,b,c}(t)-g(t)\bigr]^2\,\dm t
        \le\|f-g\|_{L^\infty}^2\le\frac1{16}.
    \]
    Since \(g\) was arbitrary in \(\mathcal G\), we conclude
    \[
        \sup_{g\in\mathcal G}\;\inf_{a,b,c>0}
        \int_0^1\bigl[f_{a,b,c}(t)-g(t)\bigr]^2\,\dm t
        \le\frac1{16}.
    \]
    This completes the proof.
\end{proof}

\paragraph{Setting and notation.}

Fix a positive real number \( s > 0 \) and consider the \emph{shift function}
\[
    f_{\mathrm{shift}}(t;s)=\frac{s\,t}{1+(s-1)t},\qquad t\in[0,1].
\]
For \( a,b,c>0 \), define the \emph{Kumaraswamy transform} as
\[
    f_{\mathrm{Kuma}}(t;a,b,c)=\left(1-\Bigl(1-t^{a}\Bigr)^{b}\right)^{c},\qquad t\in[0,1].
\]
Notice that when \( a=b=c=1 \) one obtains
\[
    f_{\mathrm{Kuma}}(t;1,1,1) = 1 - \bigl(1-t^1\bigr)^1 = t,
\]
so that the identity function appears as a special case.

We work in the Hilbert space \( L^{2}([0,1]) \) with the inner product
\[
    \langle f,g\rangle=\int_{0}^{1}f(t)g(t)\,\mathrm{d} t.
\]
Accordingly, we introduce the error functional
\[
    J(a,b,c):=\Bigl\|f_{\mathrm{Kuma}}(\cdot;a,b,c)-f_{\mathrm{shift}}(\cdot;s)\Bigr\|_{2}^{2}\quad
    \text{and}\quad
    J_{\mathrm{id}}:=\Bigl\|\mathrm{id}-f_{\mathrm{shift}}(\cdot;s)\Bigr\|_{2}^{2}.
\]
It is known that for \( s\neq1 \) one has
\[
    \inf_{a,b,c}J(a,b,c)<J_{\mathrm{id}}.
\]
The goal is to quantify this improvement by optimally adjusting all three parameters \( (a,b,c) \).

\paragraph{Quadratic approximation around the identity.}

Since the interesting behavior occurs near the identity \( (a,b,c)=(1,1,1) \), we reparameterize as
\[
    \theta:=
    \begin{pmatrix}
        \alpha \\[1mm]\beta\\[1mm]\gamma
    \end{pmatrix}
    :=
    \begin{pmatrix}
        a-1 \\[1mm] b-1\\[1mm] c-1
    \end{pmatrix},
    \qquad\text{with } \|\theta\|\lll1.
\]
Thus, we study the function
\[
    f_{\mathrm{Kuma}}(t;1+\alpha,1+\beta,1+\gamma)
\]
in a small neighborhood of \((1,1,1)\). Writing
\[
    F(a,b,c;t):=f_{\mathrm{Kuma}}(t;a,b,c)=\Bigl(1-(1-t^a)^b\Bigr)^c,
\]
a second--order Taylor expansion around \( (a,b,c)=(1,1,1) \) gives
\begin{equation}\label{eq:Taylor}
    f_{\mathrm{Kuma}}(t;1+\alpha,1+\beta,1+\gamma)
    = t \;+\; \sum_{i=1}^{3}\theta_{i}\,g_{i}(t)
    \;+\; \frac{1}{2}\sum_{i,j=1}^{3}\theta_{i}\theta_{j}\,h_{ij}(t)
    \;+\; \mathcal{O}(\|\theta\|^{3}),
\end{equation}
where
\[
    g_{i}(t)=\frac{\partial}{\partial \theta_{i}} f_{\mathrm{Kuma}}(t;1+\theta) \Big|_{\theta=0}
    \quad\text{and}\quad
    h_{ij}(t)=\frac{\partial^{2}}{\partial\theta_{i}\partial\theta_{j}} f_{\mathrm{Kuma}}(t;1+\theta)
    \Big|_{\theta=0}.
\]
A short calculation yields:
\begin{enumerate}[label=(\alph*), nosep, leftmargin=16pt]
    \item With respect to \( a \) (noting that for \( b=c=1 \) one has \( f_{\mathrm{Kuma}}(t;a,1,1)=t^a \)):
          \[
              g_{1}(t)=\frac{\partial f_{\mathrm{Kuma}}}{\partial a}(t;1,1,1)= \frac{\mathrm{d}}{\mathrm{d} a}t^a\Big|_{a=1}
              = t\ln t.
          \]
    \item With respect to \( b \) (since for \( a=1,\,c=1 \) we have \( f_{\mathrm{Kuma}}(t;1,b,1)=1-(1-t)^b \)):
          \[
              g_{2}(t)=\frac{\partial f_{\mathrm{Kuma}}}{\partial b}(t;1,1,1)
              = - (1-t)\,\ln(1-t).
          \]
    \item With respect to \( c \) (noting that for \( a=b=1 \) we have \( f_{\mathrm{Kuma}}(t;1,1,c)=t^c \)):
          \[
              g_{3}(t)=\frac{\partial f_{\mathrm{Kuma}}}{\partial c}(t;1,1,1)
              = t\ln t.
          \]
\end{enumerate}
Thus, we observe that
\[
    g_{1}(t)=g_{3}(t),
\]
which indicates an inherent redundancy in the three-parameter model. In consequence, the Gram matrix (defined below) will be of rank at most two.

Next, define the difference between the identity and the shift functions:
\[
    g(t):=\mathrm{id}(t)-f_{\mathrm{shift}}(t;s)
    = t-\frac{s\,t}{1+(s-1)t}
    = (1-s)\frac{t(1-t)}{1+(s-1)t}.
\]
Then, \( J_{\mathrm{id}}=\langle g,g \rangle \). Also, introduce the first-order moments and the Gram matrix:
\[
    v_i:=\langle g,g_i\rangle,\qquad
    G_{ij}:=\langle g_i,g_j\rangle,\quad i,j=1,2,3.
\]

Inserting the expansion \eqref{eq:Taylor} into the error functional gives
\[
    J(1+\theta)= \bigl\|f_{\mathrm{Kuma}}(\cdot;1+\theta)-f_{\mathrm{shift}}(\cdot;s)\bigr\|_2^2
    = J_{\mathrm{id}} -2\sum_{i=1}^{3}\theta_{i}\,v_i +
    \sum_{i,j=1}^{3}\theta_i\theta_j\,G_{ij} +\mathcal{O}(\|\theta\|^{3}).
\]
Thus, the quadratic approximation (or model) of the error is
\[
    \widehat{J}(\theta):=
    J_{\mathrm{id}}-2\,\theta^{\!\top}v+\theta^{\!\top}G\,\theta.
\]
Since the Gram matrix \( G \) is positive semidefinite (and has a nontrivial null-space due to \( g_1=g_3 \)), the minimizer is determined only up to the null-space. To select the unique (minimum--norm) minimizer, we choose
\[
    \theta^{\star}=G^{\dagger}v,
\]
where \( G^{\dagger} \) denotes the Moore-Penrose pseudoinverse. The quadratic model is then minimized at
\[
    \widehat{J}_{\min}=J_{\mathrm{id}}-v^{\!\top}G^{\dagger}v.
\]
A scaling argument now shows that for any sufficiently small \( \varepsilon>0 \) one has
\[
    J(1+\varepsilon\,\theta^{\star}) \le \widehat{J}(\varepsilon\,\theta^{\star})
    =J_{\mathrm{id}}-\varepsilon^{2}\,v^{\!\top}G^{\dagger}v < J_{\mathrm{id}},
\]
so that the full nonlinear functional is improved by following the direction of \( \theta^{\star} \).

For convenience we introduce the explicit improvement factor
\begin{equation}\label{eq:rho3_def}
    \rho_{3}(s):=\frac{v^{\!\top}G^{\dagger}v}{J_{\mathrm{id}}(s)} \in (0,1),
    \qquad s\neq1,
\end{equation}
so that our main bound can be written succinctly as
\begin{equation}\label{eq:main_bound}
    \min_{a,b,c>0}J(a,b,c)
    \;\le\;
    \Bigl(1-\rho_{3}(s)\Bigr)\,J_{\mathrm{id}}(s).
    \qquad (s>0,\; s\neq1)
\end{equation}

\paragraph{Computation of the Gram matrix \( G \).}

We now compute the inner products
\[
    G_{ij}=\langle g_i, g_j\rangle, \quad i,j=1,2,3.
\]
Since the functions \( g_1 \) and \( g_3 \) are identical, only two independent functions appear in the system.
A standard fact from Beta-function calculus is that
\[
    \int_{0}^{1} t^{n}\ln^{2}t\,\mathrm{d} t
    = \frac{2}{(n+1)^{3}}, \quad n>-1.
\]
Thus, one has
\begin{align*}
    \langle g_{1},g_{1}\rangle
     & =\int_{0}^{1}t^{2}\ln^{2}t\,\mathrm{d} t
    =\frac{2}{3^{3}}=\frac{2}{27},                        \\[2mm]
    \langle g_{2},g_{2}\rangle
     & =\int_{0}^{1}(1-t)^{2}\ln^{2}(1-t)\,\mathrm{d} t
    =\frac{2}{27},
    \intertext{since the change of variable \(u=1-t\) yields the same result.}
    \langle g_{1},g_{2}\rangle
     & = -\int_{0}^{1}t(1-t)\ln t\,\ln(1-t)\,\mathrm{d} t
    = \frac{3\pi^2-37}{108}.
\end{align*}
It is now convenient to express the Gram matrix with an overall factor:
\[
    G = \frac{2}{27}\begin{pmatrix}
        1 & r & 1 \\[2mm]
        r & 1 & r \\[2mm]
        1 & r & 1
    \end{pmatrix}, r= \frac{3\pi^2-37}{8}.
\]

Since \( g_{1}=g_{3} \), it is clear that the columns (and rows) corresponding to parameters \( a \) and \( c \) are identical, so that \( \mathrm{rank}(G)=2 \). One can compute the Moore-Penrose pseudoinverse \( G^\dagger \) by eliminating one of the redundant rows/columns, inverting the resulting \( 2 \times 2 \) block, and then re-embedding into \( \mathbb{R}^{3\times 3} \). One obtains
\[
    G^{\dagger} = \frac{27}{8(1-r^2)}\begin{pmatrix}
        1   & -2r & 1   \\[2mm]
        -2r & 4   & -2r \\[2mm]
        1   & -2r & 1
    \end{pmatrix}.
\]

\paragraph{Computation of the first-order moments \( v_i \).}

Recall that
\[
    g(t)=\mathrm{id}(t)-f_{\mathrm{shift}}(t;s)= t-\frac{s\,t}{1+(s-1)t}.
\]
This expression can be rewritten as
\[
    g(t) = (1-s) \, t(1-t)D_{s}(t),\qquad\text{with}\quad
    D_{s}(t):=\frac{1}{1+(s-1)t}.
\]
Then, the first--order moments read
\begin{align*}
    v_{1}=v_{3} & =(1-s)
    \int_{0}^{1} t(1-t)D_{s}(t)\; t\ln t\,\mathrm{d} t, \\[2mm]
    v_{2}       & =-(1-s)
    \int_{0}^{1} t(1-t)D_{s}(t)\; (1-t)\ln(1-t)\,\mathrm{d} t.
\end{align*}
These integrals can be expressed in closed form (involving logarithms and powers of \((s-1)\)); in the case \( s\neq1 \) at least one of the \(v_i\) is nonzero so that \( \rho_{3}(s)>0 \).

\paragraph{A universal numerical improvement.}

Since projecting onto the three-dimensional subspace spanned by \( \{g_1,g_2,g_3\} \) is at least as effective as projecting onto any one axis, we immediately deduce that
\[
    \rho_{3}(s)\ge \rho_{1}(s),
\]
where the one-parameter improvement factor is defined by
\[
    \rho_{1}(s):=\frac{v_{1}(s)^{2}}{\langle g_{1},g_{1}\rangle\,J_{\mathrm{id}}(s)}.
\]
By an elementary (albeit slightly tedious) estimate --- for example, using the bounds \(\frac{1}{2}\le D_{s}(t)\le2\) valid for \(|s-1|\le1\) --- one can show that
\[
    \rho_{1}(s)\ge\frac{49}{1536}.
\]
Hence, one deduces that
\[
    \rho_{3}(s)\ge\frac{49}{1536}\approx 0.0319,
    \qquad \text{for }|s-1|\le1.
\]
In particular, for \( s\in[0.5,2]\setminus\{1\} \) the optimal three-parameter Kumaraswamy transform reduces the squared \( L^{2} \) error by at least \( 3.19\% \) compared with the identity mapping. Analogous bounds can be obtained on any compact subset of \((0,\infty)\setminus\{1\}\).

\paragraph{Interpretation of the bound.}

Inequality \eqref{eq:main_bound} strengthens the known qualitative result (namely, that the three-parameter model can outperform the identity mapping) in two important respects:
\begin{enumerate}[label=(\alph*), nosep, leftmargin=16pt]
    \item {Quantitative improvement:} The explicit factor \(\rho_{3}(s)\) is computable via one-dimensional integrals, providing a concrete measure of the error reduction.
    \item {Utilization of all three parameters:} Even though the redundancy (i.e. \(g_1=g_3\)) implies that the Gram matrix is singular, the full three-parameter model still offers strict improvement; indeed, one has \(\rho_{3}(s)\ge\rho_{1}(s)>0\) for \( s\neq1 \). (Equality would require, hypothetically, that \( v_2(s)=0 \), which does not occur in practice.)
\end{enumerate}

\paragraph{Summary.}

For every shift parameter \( s>0 \) with \( s\neq1 \) there exist parameters \( (a,b,c) \) (in a neighborhood of \( (1,1,1) \)) such that
\[
    \Bigl\|f_{\mathrm{Kuma}}(\cdot;a,b,c)-f_{\mathrm{shift}}(\cdot;s)\Bigr\|_{2}^{2}
    \le
    \Bigl(1-\rho_{3}(s)\Bigr)\,\Bigl\|\mathrm{id}-f_{\mathrm{shift}}(\cdot;s)\Bigr\|_{2}^{2},
\]
with the improvement factor \(\rho_{3}(s)\) defined in \eqref{eq:rho3_def} and satisfying
\[
    \rho_{3}(s)\ge 0.0319\quad \text{on } s\in[0.5,2]\setminus\{1\}.
\]
Thus, the full three-parameter Kumaraswamy transform not only beats the identity mapping but does so by a quantifiable margin.

\subsubsection{Derivative Estimation}
\label{app:derivative_estimation}
\begin{proposition}[Error estimates for forward and central difference quotients]
    Let \(f\in C^3(I)\) where \(I\subset\mathbb{R}\) is an open interval, and let \(t\in I\).  For \(0<\varepsilon\) small enough that \([t-\varepsilon,t+\varepsilon]\subset I\), define the forward and central difference quotients
    \[
        D_+f(t)=\frac{f(t+\varepsilon)-f(t)}{\varepsilon},
        \qquad
        D_0f(t)=\frac{f(t+\varepsilon)-f(t-\varepsilon)}{2\varepsilon}.
    \]
    Then
    \begin{align*}
        D_+f(t) & =f'(t)+\frac{\varepsilon}{2}\,f''(t)
        +\frac{\varepsilon^2}{6}\,f^{(3)}(t+\theta_1\varepsilon),
                &                                      & \text{for some }0<\theta_1<1,          \\
        D_0f(t) & =f'(t)
        +\frac{\varepsilon^2}{12}\Bigl[f^{(3)}(t+\theta_2\varepsilon)
        +f^{(3)}(t-\theta_3\varepsilon)\Bigr],
                &                                      & \text{for some }0<\theta_2,\theta_3<1.
    \end{align*}
    In particular,
    \[
        D_+f(t)-f'(t)=O(\varepsilon),
        \qquad
        D_0f(t)-f'(t)=O(\varepsilon^2),
    \]
    so for sufficiently small \(\varepsilon\), the forward‐difference error exceeds the central‐difference error.
\end{proposition}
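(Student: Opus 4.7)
The plan is to apply Taylor's theorem with Lagrange remainder at the appropriate order in each of the two cases, then subtract and divide by the step size to read off the leading-order behavior. Since the hypothesis is $f\in C^3(I)$ and the two quotients are linear combinations of $f$ at $t\pm\varepsilon$ and $t$, everything reduces to bookkeeping of Taylor coefficients and remainder terms.

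First, for the forward quotient $D_+f(t)$, I would expand $f(t+\varepsilon)$ about $t$ to third order with Lagrange form of the remainder, obtaining
\[
f(t+\varepsilon)=f(t)+\varepsilon f'(t)+\tfrac{\varepsilon^{2}}{2}f''(t)+\tfrac{\varepsilon^{3}}{6}f^{(3)}(t+\theta_{1}\varepsilon)
\]
for some $\theta_{1}\in(0,1)$, which is valid because $[t,t+\varepsilon]\subset I$ and $f\in C^3$. Subtracting $f(t)$ and dividing by $\varepsilon$ yields the claimed identity immediately, and in particular the error $D_+f(t)-f'(t)=\tfrac{\varepsilon}{2}f''(t)+O(\varepsilon^{2})=O(\varepsilon)$.

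Second, for the central quotient $D_0f(t)$, I would perform two third-order Taylor expansions with Lagrange remainder, one forward and one backward:
\[
f(t+\varepsilon)=f(t)+\varepsilon f'(t)+\tfrac{\varepsilon^{2}}{2}f''(t)+\tfrac{\varepsilon^{3}}{6}f^{(3)}(t+\theta_{2}\varepsilon),
\]
\[
f(t-\varepsilon)=f(t)-\varepsilon f'(t)+\tfrac{\varepsilon^{2}}{2}f''(t)-\tfrac{\varepsilon^{3}}{6}f^{(3)}(t-\theta_{3}\varepsilon),
\]
with $\theta_{2},\theta_{3}\in(0,1)$. Subtracting the second from the first cancels the $f(t)$ and $f''(t)$ contributions exactly, leaving
\[
f(t+\varepsilon)-f(t-\varepsilon)=2\varepsilon f'(t)+\tfrac{\varepsilon^{3}}{6}\bigl[f^{(3)}(t+\theta_{2}\varepsilon)+f^{(3)}(t-\theta_{3}\varepsilon)\bigr];
\]
dividing by $2\varepsilon$ gives the stated formula, and the remainder is $O(\varepsilon^{2})$ because $f^{(3)}$ is continuous and hence bounded on the compact interval $[t-\varepsilon,t+\varepsilon]$.

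Finally, the comparison is purely asymptotic: on any compact subinterval $[t-\varepsilon_{0},t+\varepsilon_{0}]\subset I$ the bounds $M_{2}=\sup|f''|$ and $M_{3}=\sup|f^{(3)}|$ are finite, so $|D_+f(t)-f'(t)|\le \tfrac{\varepsilon}{2}M_{2}+\tfrac{\varepsilon^{2}}{6}M_{3}$ and $|D_0f(t)-f'(t)|\le \tfrac{\varepsilon^{2}}{6}M_{3}$, from which the forward error dominates once $\varepsilon<\min\{\varepsilon_{0},\,3 M_{2}/M_{3}\}$ (assuming $f''(t)\neq 0$; otherwise the comparison is trivial). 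The only subtle point to be careful with is that the evaluation points $\theta_{1},\theta_{2},\theta_{3}$ of the Lagrange remainder generally differ, so one cannot combine them algebraically; I would state this explicitly to avoid the common slip of writing a single $\theta$ across the central-difference identity. Beyond this bookkeeping, the argument is routine, and I do not anticipate any substantive obstacle.
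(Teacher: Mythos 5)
Your proposal is correct and takes essentially the same route as the paper: third-order Taylor expansion with Lagrange remainder about $t$ for both $t\pm\varepsilon$, then subtraction and division by the step. Your closing remarks on the sup-norm bounds and on keeping $\theta_1,\theta_2,\theta_3$ distinct are sound refinements but do not change the argument.
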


\begin{proof}
    By Taylor’s theorem with Lagrange remainder, for some \(0<\theta_1<1\),
    \[
        f(t+\varepsilon)
        = f(t)+f'(t)\,\varepsilon
        +\tfrac12f''(t)\,\varepsilon^2
        +\tfrac16f^{(3)}(t+\theta_1\varepsilon)\,\varepsilon^3.
    \]
    Dividing by \(\varepsilon\) gives the formula for \(D_+f(t)\).  Hence
    \[
        D_+f(t)-f'(t)
        =\frac12f''(t)\,\varepsilon
        +\frac{1}{6}f^{(3)}(t+\theta_1\varepsilon)\,\varepsilon^2
        =O(\varepsilon).
    \]

    Similarly, applying Taylor’s theorem at \(t+\varepsilon\) and \(t-\varepsilon\),
    \begin{align*}
        f(t+\varepsilon)
         & =f(t)+f'(t)\,\varepsilon+\tfrac12f''(t)\,\varepsilon^2
        +\tfrac16f^{(3)}(t+\theta_2\varepsilon)\,\varepsilon^3,   \\
        f(t-\varepsilon)
         & =f(t)-f'(t)\,\varepsilon+\tfrac12f''(t)\,\varepsilon^2
        -\tfrac16f^{(3)}(t-\theta_3\varepsilon)\,\varepsilon^3,
    \end{align*}
    for some \(0<\theta_2,\theta_3<1\).  Subtracting and dividing by \(2\varepsilon\) yields the formula for \(D_0f(t)\) and
    \[
        D_0f(t)-f'(t)
        =\frac{\varepsilon^2}{12}\bigl[f^{(3)}(t+\theta_2\varepsilon)
        +f^{(3)}(t-\theta_3\varepsilon)\bigr]
        =O(\varepsilon^2).
    \]
    This completes the proof.
\end{proof}

\begin{proposition}
    Let \(f:\mathbb{R}\to\mathbb{R}\) be differentiable, let \(t\in\mathbb{R}\) and \(\varepsilon>0\).  In BF16 arithmetic (1‐bit sign, 8‐bit exponent, 7‐bit significand) with unit roundoff
    $
        \eta = 2^{-7}
    $,
    define
    \[
        f_\pm = f(t\pm\varepsilon),\quad
        \Delta = f_+ - f_-,
    \]
    \[
        E_1 = \frac{\mathrm{fl}(f_+)\;-\;\mathrm{fl}(f_-)}{2\,\varepsilon},
        \qquad
        E_2 = \mathrm{fl}\!\Bigl(\frac{f_+}{2\varepsilon}\Bigr)
        -\mathrm{fl}\!\Bigl(\frac{f_-}{2\varepsilon}\Bigr).
    \]
    Suppose in addition that

    (1) \(|\Delta|<2^{-126}\), so that \(\Delta\) (and any nearby perturbation) lies in the BF16 subnormal range;

    (2) writing \(\mathrm{fl}(f_\pm)=f_\pm(1+\delta_\pm)\) with \(|\delta_\pm|\le\eta\), one has \(\bigl|f_+\delta_+ - f_-\delta_-\bigr|<2^{-126}\), so \(\tilde f_+-\tilde f_-\) remains subnormal;

    (3) \(\bigl|f_\pm/(2\varepsilon)\bigr|\ge2^{-126}\), so each product \(f_\pm/(2\varepsilon)\) lies in the normalized range;

    (4) \(|f_+| + |f_-| = O(|\Delta|)\), so that any rounding in the two multiplications is not amplified by a large subtraction.

    Then the “subtract‐then‐scale” formula \(E_1\) may incur a relative error of order \(O(1)\), whereas the “scale‐then‐subtract” formula \(E_2\) retains a relative error of order \(O(\eta)\).
\end{proposition}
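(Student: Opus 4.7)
The plan is to work directly with the BF16 rounding model under the two different regimes that occur here: on normalized inputs $\mathrm{fl}(x)=x(1+\delta)$ with $|\delta|\le\eta=2^{-7}$, while on subnormals the rounding has only a fixed absolute granularity of roughly $2^{-126-7}=2^{-133}$, so that the relative precision degrades as the magnitude approaches the smallest subnormal. The key structural observation is that $E_1$ forces us to represent the difference in the subnormal regime, whereas $E_2$ keeps every intermediate quantity in the normalized regime, and this is the single mechanism responsible for the gap between the two bounds.

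First I would handle $E_2$, which is the easier side. By assumption (3), the two scaled values $f_\pm/(2\varepsilon)$ lie in the normalized BF16 range, so one application of the standard relative-error model gives $\mathrm{fl}\!\bigl(f_\pm/(2\varepsilon)\bigr)=\bigl(f_\pm/(2\varepsilon)\bigr)(1+\mu_\pm)$ with $|\mu_\pm|\le\eta$. Subtracting, the error in the numerator is $(f_+\mu_+-f_-\mu_-)/(2\varepsilon)$, and by assumption (4) its magnitude is at most $\eta(|f_+|+|f_-|)/(2\varepsilon)=O(\eta|\Delta|/(2\varepsilon))$; dividing by the exact value $\Delta/(2\varepsilon)$ yields a relative error of order $\eta$, and any final rounding of the difference contributes at most another $\eta$ factor since the result is of normalized magnitude $|\Delta|/(2\varepsilon)$.

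Next I would treat $E_1$ in two stages. Write $\mathrm{fl}(f_+)-\mathrm{fl}(f_-)=\Delta+r$ with $r=f_+\delta_+-f_-\delta_-$; by assumptions (1)--(2), both $\Delta$ and $\Delta+r$ lie in the subnormal range. Two things can go wrong here. First, since both summands live in the subnormal range, $r$ can be of the same order as $\Delta$ itself: the perturbation $r$ is \emph{not} guaranteed to be subdominant, because relative-error bounds on $f_\pm$ cease to translate into relative-error bounds on $\Delta$ once catastrophic cancellation has shrunk $\Delta$ to subnormal magnitude. Second, the BF16 subnormal representation of $\Delta+r$ has only $7-k$ effective significand bits when $|\Delta+r|\in[2^{-126-k},2^{-126-k+1})$, so the subnormal rounding alone contributes a relative error that can approach $1$ as $k\to 7$. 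Combining these two effects and then dividing by $2\varepsilon$ (which by (3) is a normalized operation and hence preserves relative error up to $\eta$) gives a worst-case relative error of order $1$ for $E_1$, establishing the claim.

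The main obstacle I anticipate is making the first stage of the $E_1$ analysis quantitatively clean: in the normalized model the bound $|r|\le\eta(|f_+|+|f_-|)$ together with (4) would only give $O(\eta|\Delta|)$, which is not enough on its own. The resolution is to observe that (4) only constrains the benign case, and that the hypotheses (1)--(2) explicitly place us in the regime where the BF16 representation of $\mathrm{fl}(f_\pm)$ is itself subnormal, so the premise $\mathrm{fl}(f_\pm)=f_\pm(1+\delta_\pm)$ with $|\delta_\pm|\le\eta$ must be replaced by an absolute bound $\mathrm{fl}(f_\pm)=f_\pm+\epsilon_\pm$ with $|\epsilon_\pm|\lesssim 2^{-133}$; when $|\Delta|$ is only a few units in the last subnormal place, $\epsilon_+-\epsilon_-$ is genuinely of order $|\Delta|$, yielding the stated $O(1)$ relative error. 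Making this transition between the two rounding models rigorous, and exhibiting an explicit worst-case configuration of $f_\pm$ that saturates the bound, is where the technical care will be required.
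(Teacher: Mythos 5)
Your analysis of $E_2$ matches the paper's exactly: assumption (3) puts both scaled operands in the normalized range so the relative-error model applies, and assumption (4) prevents the two $O(\eta|f_\pm|/(2\varepsilon))$ perturbations from being amplified when expressed relative to $\Delta/(2\varepsilon)$.

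For $E_1$ you correctly identify both candidate error sources (the perturbation $r = f_+\delta_+ - f_-\delta_-$ from rounding $f_\pm$, and the subnormal rounding of the difference $\tilde f_+ - \tilde f_-$), and your second mechanism — that the subnormal representation of $\Delta + r$ has only $7-k$ effective significand bits, so the absolute rounding error $e_d$ with $|e_d|\le 2^{-134}$ becomes $O(1)$ relative to $|\Delta|$ as $|\Delta|$ approaches the bottom of the subnormal range — is precisely the argument the paper uses and is already sufficient on its own. Where you go astray is the ``main obstacle'' paragraph: you claim that hypotheses (1)--(2) place $\mathrm{fl}(f_\pm)$ itself in the subnormal range and that the relative model $\mathrm{fl}(f_\pm)=f_\pm(1+\delta_\pm)$ must be replaced by an absolute bound. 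But assumption (2) \emph{explicitly posits} the relative model for $\mathrm{fl}(f_\pm)$; what is subnormal under (1)--(2) is the \emph{difference} $\tilde f_+-\tilde f_-$, not the individual roundings $\tilde f_\pm$. The paper's argument does not need $f_\pm$ to be subnormal — it takes $\mathrm{fl}(f_\pm)$ at face value as normalized roundings and then charges the $O(1)$ loss entirely to the absolute rounding $e_d$ of the subtraction, which by the subnormal ulp bound $2^{-134}$ can dominate $|\Delta|$. Drop the attempted reinterpretation of (2), lean on the subnormal-ulp-of-the-difference argument you already stated, and your proof is essentially the paper's.
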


\begin{proof}
    We use two BF16 rounding models:
    (i) if \(x\in[2^{-126},2^{128})\) then \(\mathrm{fl}(x)=x(1+\delta)\), \(|\delta|\le\eta\);
    (ii) for any \(x\) (including subnormals), \(\bigl|\mathrm{fl}(x)-x\bigr|\le\tfrac12\,\mathrm{ulp}(x)\), where \(\mathrm{ulp}_{\rm sub}=2^{-133}\) for subnormals.

    Set \(\tilde f_\pm=\mathrm{fl}(f_\pm)=f_\pm(1+\delta_\pm)\), \(|\delta_\pm|\le\eta\).

    \medskip\noindent\textbf{Error in \(E_1\).}
    By (1) and (2),
    \(\tilde f_+-\tilde f_-=\Delta+(f_+\delta_+ - f_-\delta_-)\)
    lies in the subnormal range.  Hence
    \[
        d = \mathrm{fl}(\tilde f_+ - \tilde f_-)
        = (\tilde f_+ - \tilde f_-) + e_d,
        \quad |e_d|\le\tfrac12\,\mathrm{ulp}_{\rm sub}=2^{-134}.
    \]
    Thus
    \[
        d = \Delta + (f_+\delta_+ - f_-\delta_-) + e_d,
        \qquad
        |e_d|/|\Delta| = O(2^{-134}/|\Delta|) \gg \eta.
    \]
    Dividing by \(2\varepsilon\) and rounding gives
    \[
        E_1 = \mathrm{fl}\!\bigl(d/(2\varepsilon)\bigr)
        = \frac{d}{2\varepsilon}(1+\delta_q),
        \quad |\delta_q|\le\eta,
    \]
    so the relative error in \(E_1\) can be \(O(1)\).

    \medskip\noindent\textbf{Error in \(E_2\).}
    By (3), each \(f_\pm/(2\varepsilon)\) is normalized, so
    \[
        g_\pm = \mathrm{fl}\!\Bigl(\frac{f_\pm}{2\varepsilon}\Bigr)
        = \frac{f_\pm}{2\varepsilon}(1+\delta'_\pm),
        \quad |\delta'_\pm|\le\eta.
    \]
    Subtracting and rounding (still normalized) gives
    \[
        E_2 = \mathrm{fl}(g_+ - g_-)
        = (g_+ - g_-)(1+\delta'_d),
        \quad |\delta'_d|\le\eta.
    \]
    Since
    \[
        g_+ - g_- = \frac{\Delta}{2\varepsilon}
        + \frac{f_+\delta'_+ - f_-\delta'_-}{2\varepsilon},
    \]
    we obtain
    \[
        E_2
        = \frac{\Delta}{2\varepsilon}(1+\delta'_d)
        + \frac{f_+\delta'_+ - f_-\delta'_-}{2\varepsilon}(1+\delta'_d).
    \]
    The second term has magnitude \(\le\eta\,\frac{|f_+|+|f_-|}{2\varepsilon}(1+\eta)\),
    and by (4) its relative size to \(\Delta/(2\varepsilon)\) is
    \(\displaystyle O\bigl(\eta\,\tfrac{|f_+|+|f_-|}{|\Delta|}\bigr)=O(\eta).\)

    Hence \(E_1\) may suffer \(O(1)\) relative error, while \(E_2\) attains \(O(\eta)\) relative accuracy under (1)–(4).
\end{proof}

\subsubsection{Calcluation of Transport}
\label{app:cal_transport}
\paragraph{Transport transformation from EDM to \method.}
Take the formula (8) from EDM~\citep{karras2022elucidating}, one can deduce:
\small{
    \begin{align*}
        \E_{\sigma, \xx, \nn} & \left[\lambda(\sigma) c_{out}(\sigma)^2 \norm{\mF_\theta(c_{in}(\sigma) \cdot (\xx + \nn); c_{noise}(\sigma)) - \frac{1}{c_{out}(\sigma)}(\xx - c_{skip}(\sigma) \cdot (\xx + \nn))}_2^2 \right]                                                                                                                                                                \\
                              & = \E_{\sigma, \xx, \zz} \left[\norm{\mF_\theta(\frac{1}{\sqrt{\sigma^2 + \sigma^2_{data}}} \cdot (\xx + \sigma \zz); \frac{1}{4}\ln(\sigma)) - \frac{\sqrt{\sigma^2_{data} + \sigma^2}}{\sigma \cdot \sigma_{data}}(\xx - \frac{\sigma^2_{data}}{\sigma^2 + \sigma^2_{data}} \cdot (\xx + \sigma\zz))}_2^2 \right]                                              \\
                              & = \E_{\sigma, \xx, \zz} \left[\norm{\mF_\theta(\frac{1}{\sqrt{\sigma^2 + \sigma^2_{data}}} \cdot (\xx + \sigma \zz); \frac{1}{4}\ln(\sigma)) - \frac{\sqrt{\sigma^2_{data} + \sigma^2}}{\sigma \cdot \sigma_{data}}(\frac{\sigma^2}{\sigma^2 + \sigma^2_{data}} \cdot \xx - \frac{\sigma^2_{data}}{\sigma^2 + \sigma^2_{data}} \cdot \sigma\zz)}_2^2 \right]    \\
                              & = \E_{\sigma, \xx, \zz} \left[\norm{\mF_\theta(\frac{1}{\sqrt{\sigma^2 + \sigma^2_{data}}} \cdot (\xx + \sigma \zz); \frac{1}{4}\ln(\sigma)) - (\frac{\sigma \sigma_{data}^{-1}}{\sqrt{\sigma^2_{data} + \sigma^2}} \cdot \xx - \frac{\sigma_{data}}{\sqrt{\sigma^2 + \sigma^2_{data}}} \cdot \zz)}_2^2 \right]                                                 \\
                              & = \E_{\sigma, \xx, \zz} \left[\norm{\mF_\theta(\frac{\sigma}{\sqrt{\sigma^2 + \sigma^2_{data}}} \cdot \zz + \frac{1}{\sqrt{\sigma^2 + \sigma^2_{data}}} \cdot \xx ; \frac{1}{4}\ln(\sigma)) - (\frac{-\sigma_{data}}{\sqrt{\sigma^2 + \sigma^2_{data}}} \cdot \zz + \frac{\sigma \sigma_{data}^{-1}}{\sqrt{\sigma^2_{data} + \sigma^2}} \cdot \xx)}_2^2 \right] \\
                              & = \E_{\sigma, \xx, \zz} \left[\norm{\mF_\theta(\frac{\sigma}{\sqrt{\sigma^2 + \frac{1}{4}}} \cdot \zz + \frac{1}{\sqrt{\sigma^2 + \frac{1}{4}}} \cdot \xx ) - (\frac{-0.5}{\sqrt{\sigma^2 + \frac{1}{4}}} \cdot \zz + \frac{2\sigma}{\sqrt{\sigma^2 + \frac{1}{4}}} \cdot \xx)}_2^2 \right]
    \end{align*}}
where $\sigma_{data} = \frac{1}{2},\ \nn = \sigma \cdot \zz$.
\normalsize

\end{document}